\newcommand{\indep }{\perp\!\!\!\perp}
\newcommand{\ci}[3]{#1 \indep #2 \left\vert #3\right.}
\newcommand{\nci}[3]{#1 \not\!\perp\!\!\!\perp #2 \left\vert #3\right.}
\DeclareSymbolFont{symbolsC}{U}{txsyc}{m}{n}
\DeclareMathSymbol{\notniFromTxfonts}{\mathrel}{symbolsC}{61}
\newcommand{\C}[1]{\mathcal{C}_k(#1)}
\newcommand{\E}[1]{\varepsilon_k(#1)}
\newcommand{\R}[1]{$\mathcal{R}#1$}
\newcommand{\PAG}[1]{\text{PAG}(#1)}
\newcommand{\degree}[1]{degree-$#1$}
\newcommand{\srightarrow}{*\!\!\rightarrow}
\newcommand{\sleftarrow}{\leftarrow\!\!*}
\newcommand{\circlecircle}{o\mbox{---}o}
\newcommand{\circlestar}{o\mbox{---}\!*}
\newcommand{\starcircle}{*\!\mbox{---}o}
\newcommand{\tailstar}{\mbox{---}*}
\newcommand{\crightarrow}{o\!\!\rightarrow}
\newcommand{\cleftarrow}{\leftarrow\!\!o}
\newcommand{\kmarkov}{$k$-Markov }
\newcommand{\kclosure}{$k$-closure }
\newcommand{\kclosures}{$k$-closures}
\newcommand{\kcovered}{$k$-covered }
\newcommand{\kessential}{$k$-essential }
\newcommand{\dkci}{degree-$k$ d-separation }
\newcommand{\kPC}{$k$-PC }
\tikzset{
	circ/.style = {circle, draw, inner sep=0.04cm,node contents={}}
}
\theoremstyle{plain}
\newtheorem{theorem}{Theorem}[section]
\newtheorem{proposition}[theorem]{Proposition}
\newtheorem{lemma}[theorem]{Lemma}
\newtheorem{corollary}[theorem]{Corollary}
\theoremstyle{definition}
\newtheorem{definition}[theorem]{Definition}
\theoremstyle{remark}
\title{Characterization and Learning of Causal Graphs with \\Small Conditioning Sets}
\author{Murat Kocaoglu}
\affil{Purdue University}
\begin{document}
	\maketitle
	
\begin{abstract}
    Constraint-based causal discovery algorithms learn part of the causal graph structure by systematically testing conditional independences  observed in the data. These algorithms, such as the PC algorithm and its variants, rely on graphical characterizations of the so-called equivalence class of causal graphs proposed by Pearl. However, constraint-based causal discovery algorithms struggle when data is limited since conditional independence tests quickly lose their statistical power, especially when the conditioning set is large. To address this, we propose using conditional independence tests where the size of the conditioning set is upper bounded by some integer $k$ for robust causal discovery. The existing graphical characterizations of the equivalence classes of causal graphs are not applicable when we cannot leverage all the conditional independence statements. We first define the notion of \kmarkov equivalence: Two causal graphs are \kmarkov equivalent if they entail the same conditional independence constraints where the conditioning set size is upper bounded by $k$. We propose a novel representation that allows us to graphically characterize \kmarkov equivalence between two causal graphs. We propose a sound constraint-based algorithm called the \kPC algorithm for learning this equivalence class.  %
    Finally, we conduct synthetic, and semi-synthetic experiments to demonstrate that the $k$-PC algorithm enables more robust causal discovery in the small sample regime compared to the baseline algorithms. %
\end{abstract}

\section{Introduction}
Causal reasoning is a critical tool for machine learning and artificial intelligence research with benefits ranging from domain adaptation to planning, explainability and fairness~\cite{zhang2013domain,pearl2013probabilistic,zhang2018fairness}. Estimating the effect of an action or an intervention from observational data is called causal inference. A very rich literature of causal inference algorithms have been developed to address this task in the literature~\cite{pearl1995causal,tian2002testable,shpitser2008complete,bareinboim2016causal}. The function that is used to write an interventional distribution in terms of the observational distribution is called the estimand. Estimand depends on the causal relations between the system variables, which are represented in the form of a directed acyclic graph (DAG) called the causal graph. Thus, causal graph is required for solving most causal inference problems. 

For small and well-studied systems, it might be possible to construct a causal graph using expert knowledge. However, in modern complex systems with changing structure, we need to learn causal graphs from data. This is called causal discovery. In most domains, we have access to plenty of observational data, but no interventional data. An important task then is to understand how much causal knowledge we can extract from observational data about a system. 

The classical approach for addressing this problem is to use the conditional independence (CI) relations in the data %
to narrow down the space of plausible causal graphs%
~\cite{spirtes2000causation,lauritzen1996graphical}. These are called constraint-based methods. Even though the number of causal graphs that entail the same set of conditional independence relations %
are typically exponentially many, we can use a graphical characterization of \emph{equivalence} between causal graphs to compactly represent all of them using a single mixed graph called the \emph{essential graph}. This notion of equivalence is called Markov equivalence. 

Even though such causal discovery algorithms are consistent, i.e., they output the correct essential graph in the large sample limit, in practice, they %
struggle due to finite number of  samples since  %
not all CI tests can be performed accurately~\cite{shah2020hardness}. For constraint-based algorithms, however, it is important for every test to be accurate since previously learned edge orientations are used to orient more edges due to their sequential nature. %
Thus, they may output very different causal graphs compared to the ground truth with just 
a few %
incorrect CI statements. %
Despite the efforts to stabilize PC output~\cite{bromberg2009improving,colombo2012modification}, this fundamental issue still causes problems today. %
Furthermore, the existing Markov equivalence class characterization and causal discovery algorithms that rely on this characterization, such as the PC/IC algorithms~\cite{spirtes2000causation, verma1990equivalence}, require access to every CI relation, which is a significant practical limitation for causal discovery from data. 

There are several alternatives to constraint-based causal discovery. For example, score-based approaches, such as GES~\cite{chickering2002optimal,chickering2020statistically} optimize a regularized score function by greedily searching over the space of DAGs to output a graph within the Markov equivalence class. A line of works, such as NOTEARS~\cite{zheng2018dags} converts the graph learning problem to a continuous optimization problem by converting the acyclicity constraint to a continuous constraint via trace formulation. Note that our goal in this paper is not to beat the state-of-the-art causal discovery algorithm, but provide a theoretical basis to characterize what is learnable on a fundamental level by using conditional independence tests with restricted cardinality conditioning sets. 

Variations of the causal discovery problem with limited-size conditioning sets have been considered in the literature. The special case of marginal dependence is considered in \cite{pearl1994can} and \cite{textor2015learning}. The most related existing work is \cite{wienobst2020recovering}, where the authors aim learning the graph union of all causal graphs that are consistent with a set of conditional independence statements up to a fixed cardinality conditioning set. They propose a concise algorithm that modifies the steps of PC which they show recovers the union of all equivalent graphs. Similarly in the case with latent variables, AnytimeFCI~\cite{spirtes2001anytime} shows that one can stop FCI algorithm after exhausting all conditional independence tests up to a fixed cardinality conditioning set and the output of the algorithm is still sound for learning parts of the partial ancestral graph (PAG). We propose an alternative route: First, we formally define the equivalence class of causal graphs and propose a graphical object to capture this equivalence class called the \kclosure graphs. We identify a necessary and sufficient graphical condition to test equivalence between \kclosure graphs. Finally, we develop a learning algorithm that leverages the representative power of partial ancestral graphs (PAGs), which are typically used in the case with latents, to obtain a provably finer characterization of the set of equivalent causal structures than \cite{wienobst2020recovering}. 

In this paper, we propose learning causal graphs using CI tests with bounded conditioning set size. %
This allows us to ignore  %
CI tests that become unreliable with limited data, and avoid some of the mistakes made by constraint-based causal discovery algorithms. 
We call CI constraints where the conditioning set size is upper bounded by $k$ as the degree-$k$ CI constraints. 
We call two causal graphs \kmarkov equivalent if they entail the same degree-$k$ CI constraints. We propose \kclosure graphs that entail the same degree-$k$ CI relations as the causal graph, and show that we can characterize \kmarkov equivalence via Markov equivalence between \kclosure graphs. We then propose a constraint-based learning algorithm for learning this equivalence class from data, represented by the so-called \kessential graph. Finally, we demonstrate that our algorithm can help alleviate the finite-sample issues faced by the PC algorithm, especially while orienting arrowhead marks and for correctly learning the adjacencies in the graph. We also compare our algorithm with NOTEARS~\cite{zheng2018dags} and LOCI~\cite{wienobst2020recovering}. %

\section{Background}
In this section, we give some basic graph terminology as well as background on constraint-based causal discovery: $An(x)$ represents the set of ancestors of $x$ in a given graph that will be clear from context. $De(x)$ similarly represents the descendants of $x$. $Ne(x)$ represents the neighbors of $x$, i.e., the nodes that are adjacent to $x$.  
\begin{definition}
    A path in a causal graph is a sequence of nodes where every pair of consecutive nodes are adjacent in the graph and no node appears more than once. %
\end{definition}

\begin{definition} [Skeleton]
    The skeleton of a causal graph $D$ is the undirected graph obtained by making every adjacent pair connected via an undirected edge. 
\end{definition}

\begin{definition} [(Unshielded) Collider]
    A path of three nodes $(a,c,b)$ is a collider if the edges adjacent to $c$ along the path are into $c$. A collider is called an unshielded collider if in addition the endpoints of the path $a,b$ are non-adjacent. 
\end{definition}

\begin{definition} [d-connecting path]
    A path $p$ is called d-connecting given a set $c$ iff every collider along $p$ is an ancestor of some node in $c$, and every non-collider is not in $c$.
\end{definition}

\begin{definition}[d-separation]
    Two nodes $a,b$ on a causal graph $D$ are said to be d-separated given a set $c$ of nodes iff there is no d-connecting path between $a,b$ given $c$, shown by $(\ci{a}{b}{c})_D$, or $\ci{a}{b}{c}$ if clear from context.
\end{definition}

\begin{definition} [Causal Markov assumption]
    A distribution $p$ is called Markov relative to a graph %
    $D\!=\!(V,E)$, if each variable is independent from its non-descendants conditioned on its parents in $D$. 
\end{definition}

There are local, global and pairwise Markov conditions that can all be shown as a consequence of the Causal Markov condition above~\cite{pearl1988probabilistic}. This gives us the following:

\begin{proposition}[\cite{pearl1988probabilistic}]
Let $p$ be any joint distribution between varibles on a causal model with the graph $D$. 
If $(\ci{a}{b}{c})_D$ then $(\ci{a}{b}{c})_{p}$, i.e., d-separation implies conditional independence under the Causal Markov condition. 
\end{proposition}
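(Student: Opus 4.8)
The plan is to derive the global Markov property (soundness of d-separation) from the local condition supplied by the Causal Markov assumption, via the classical chain of implications: local Markov condition $\Rightarrow$ recursive factorization $\Rightarrow$ factorization over the moralized ancestral subgraph $\Rightarrow$ undirected separation $\Rightarrow$ conditional independence. Note that only this one direction is needed here; I am not asserting the converse (faithfulness), which would require extra assumptions.

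First I would show that the Causal Markov condition forces $p$ to factorize as $p(v) = \prod_{x \in V} p(x \mid \mathrm{pa}(x))$, where $\mathrm{pa}(x)$ are the parents of $x$ in $D$. Fix a topological ordering $x_1,\dots,x_n$ consistent with $D$ and apply the chain rule, $p(v) = \prod_i p(x_i \mid x_1,\dots,x_{i-1})$. Since the predecessors $\{x_1,\dots,x_{i-1}\}$ are all non-descendants of $x_i$, the local independence of $x_i$ from its non-descendants given its parents collapses each conditional to $p(x_i \mid \mathrm{pa}(x_i))$, yielding the factorization.

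Next I would restrict attention to the ancestral set $A = An(\{a,b\}\cup c)$ (including $a,b,c$ themselves). Because $A$ is closed under taking parents, summing the factorization over $V\setminus A$ removes factors from the bottom of the topological order one at a time and leaves $p(A) = \prod_{x\in A} p(x \mid \mathrm{pa}(x))$, a clean factorization over the induced subgraph $D_A$; this is exactly the step where being ancestral is essential, as marginalizing a non-ancestral set would spoil the factored form. Grouping each factor $p(x\mid \mathrm{pa}(x))$ as a potential on the clique $\{x\}\cup \mathrm{pa}(x)$ of the moral graph $(D_A)^m$ (formed by joining co-parents and dropping orientations) then exhibits $p(A)$ as an undirected graphical model over $(D_A)^m$. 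Invoking the moralization criterion, $(\ci{a}{b}{c})_D$ holds iff $c$ separates $a$ from $b$ in $(D_A)^m$; under such separation the cliques partition across the two sides of the cut (none straddles, since every crossing path meets $c$), so $p(A)$ splits as a function of the $a$-side and $c$ times a function of the $b$-side and $c$. Marginalizing and conditioning on $c$ gives $p(a,b\mid c) = p(a\mid c)\,p(b\mid c)$, i.e. $(\ci{a}{b}{c})_p$.

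I expect the moralization criterion to be the main obstacle: translating d-separation on $D$ into ordinary vertex separation on the moralized ancestral graph $(D_A)^m$. The delicate point is the collider rule, where a path $\cdots \srightarrow w \sleftarrow \cdots$ is blocked unless $w$ or a descendant of $w$ lies in $c$; passing to the ancestral set and then marrying parents is precisely the construction that converts this collider-opening behavior into plain adjacency, and checking that the correspondence is exact in both directions is the technical heart of the argument. The remaining steps are routine once the undirected global Markov property (factorization implies separation-based independence) is in hand.
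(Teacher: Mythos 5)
Your proposed proof is correct in outline, but note that the paper itself never proves this proposition: it is imported wholesale by citation to Pearl (1988), with only the remark that the local, global, and pairwise Markov conditions all follow from the Causal Markov assumption. So there is no in-paper argument to compare against; what you have reconstructed is the classical chain that the citation stands for, namely local Markov condition $\Rightarrow$ recursive factorization $\Rightarrow$ factorization of the marginal over the ancestral set $A = An(\{a,b\}\cup c)$ $\Rightarrow$ clique factorization over the moral graph $(D_A)^m$ $\Rightarrow$ undirected global Markov property $\Rightarrow$ $(\ci{a}{b}{c})_p$. Each step you give is sound: the topological-order collapse of the chain rule, the observation that ancestral sets are closed under parents so marginalization preserves the factored form, the clique argument that no clique $\{x\}\cup \mathrm{pa}(x)$ can straddle a separating cut, and the resulting split $p(a,b\mid c)=p(a\mid c)p(b\mid c)$.

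The one place where your argument is a reduction rather than a proof is exactly the step you flag yourself: the moralization criterion equating d-separation in $D$ with vertex separation in $(D_A)^m$. That lemma (due to Lauritzen, Dawid, Larsen, and Leimer) is where all of the collider bookkeeping lives, and it is the genuine mathematical content of the proposition; everything else is routine. If a self-contained proof were demanded, you would still owe that argument, and note that you only need one direction of it here (d-separation in $D$ implies separation in the moral ancestral graph), not the full equivalence you state. As a proof that leans on a standard, citable lemma --- precisely as the paper leans on Pearl (1988) for the whole proposition --- your plan is complete and correct.
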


\begin{definition} [Markov equivalence]
    Two causal graphs $D_1,D_2$ are called Markov equivalent, shown by $D_1\sim D_2$, if they entail the same d-separation constraints. 
\end{definition}

\begin{definition} [Causal Faithfulness assumption]
    A distribution $p$ is called faithful to a causal graph $D=(V,E)$ iff the following holds $\forall a,b\in V,c\subset V$: $(\ci{a}{b}{c})_p\Rightarrow (\ci{a}{b}{c})_D$.    
\end{definition}

Constraint-based causal discovery is not possible without some form of faithfulness assumption, since otherwise CI statements do not inform us of the graph structure. We will later see that restricting ourselves to a small set of CI tests allow us to also weaken the faithfulness assumption we need for our causal discovery algorithm. 

\begin{theorem} [\cite{verma1990equivalence}]
\label{thm:vermapearl}
    Two DAGs are Markov equivalent if and only if they have the same skeleton and the same unshielded colliders. 
\end{theorem}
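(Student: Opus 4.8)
The plan is to prove the two directions separately. The forward (``only if'') direction is the easier one and follows from the observation that d-separation statements themselves already encode both the adjacency pattern and the collider pattern of a DAG; the reverse (``if'') direction is where the real work lies and calls for a path-based, minimality argument.

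For the forward direction, assume $D_1 \sim D_2$. I would first recover the skeleton from the d-separation relation alone: two nodes $a,b$ are adjacent in a DAG if and only if no conditioning set d-separates them, since a direct edge is a d-connecting path under every $S$, whereas non-adjacent nodes are always separated by, say, the parents of one of them. Hence adjacency is a function of the entailed d-separations, so equal d-separations force equal skeletons and therefore equal unshielded triples. Next, for each unshielded triple $(a,c,b)$ with $a,b$ nonadjacent (a set of triples now common to both graphs), I would characterize its collider status purely in terms of d-separation: the triple is a collider exactly when there exists $S \subseteq V \setminus \{a,c,b\}$ with $(\ci{a}{b}{S})$ but $(\nci{a}{b}{S \cup \{c\}})$. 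The point is that conditioning on a collider (or a descendant) opens the path $a$--$c$--$b$, so adding $c$ breaks an existing separation, whereas for a non-collider every separating set must already contain $c$, so no such $S$ can exist. Since this condition is phrased entirely in d-separations, equal d-separations force the unshielded colliders to agree.

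For the reverse direction, assume $D_1$ and $D_2$ have the same skeleton and the same unshielded colliders, and aim to show they entail the same d-separations. By symmetry it suffices to prove that if $a,b$ are d-connected given $S$ in $D_1$ then they are d-connected given $S$ in $D_2$. The strategy is to take a shortest d-connecting path $p = (v_0, \ldots, v_n)$ between $a$ and $b$ given $S$ in $D_1$; the same vertex sequence exists in $D_2$ because the skeletons coincide. There are then only two ways this sequence can fail to be d-connecting in $D_2$: either (i) some interior triple is a collider in one graph and a non-collider in the other, or (ii) some vertex is a collider in both graphs but lies in $An(S)$ in $D_1$ and not in $D_2$. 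In case (i), the agreement of unshielded colliders forces the offending triple $(v_{i-1}, v_i, v_{i+1})$ to be shielded, i.e.\ $v_{i-1}$ and $v_{i+1}$ are adjacent; I would then collapse it to the single edge between $v_{i-1}$ and $v_{i+1}$, verify the shortened sequence is still d-connecting in $D_1$, and obtain a contradiction with the minimality of $p$.

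The hard part will be the reverse direction, and within it case (ii) is the main obstacle: Markov equivalent graphs need not preserve ancestry, so the ``collider lies in $An(S)$'' requirement can genuinely hold in $D_1$ and fail in $D_2$ even for a vertex that is a collider in both. I expect to handle this by exploiting both the minimality of $p$ and the shared v-structures: a collider $v_i \in An(S)$ in $D_1$ witnesses a directed route from $v_i$ into $S$, whose underlying edges are present (possibly reversed) in $D_2$, and the reversals permitted while preserving unshielded colliders are constrained enough that one can either recover a directed route into $S$ in $D_2$ or splice an alternative shorter d-connecting path, again contradicting minimality. Case (i)'s shortcut bookkeeping—checking that collapsing the shielded triple does not destroy d-connectivity at the new endpoints $v_{i-1}, v_{i+1}$—is routine once the ancestry issue of case (ii) is resolved, so I would concentrate the technical effort there.
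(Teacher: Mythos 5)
A preliminary remark: the paper does not prove Theorem \ref{thm:vermapearl} at all --- it is imported as background from Verma and Pearl --- so there is no in-paper proof to compare against, and your attempt must stand on its own. Your forward direction does stand: adjacency in a DAG holds iff no conditioning set d-separates the pair (non-adjacent nodes are separated by the parents of whichever endpoint is not an ancestor of the other), and for an unshielded triple $\langle a,c,b\rangle$ your criterion --- existence of $S\subseteq V\setminus\{a,c,b\}$ with $\ci{a}{b}{S}$ and $\nci{a}{b}{S\cup\{c\}}$ --- holds precisely when the triple is a collider, since a collider path through $c$ is opened by adding $c$, while a non-collider $c$ must belong to every separating set. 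Both features are thus functions of the entailed d-separation relation, so Markov equivalence forces them to agree.

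The reverse direction, however, contains a genuine gap, and it is not confined to the case you flag. For your case (ii), the proposed resolution (``the reversals permitted while preserving unshielded colliders are constrained enough that one can either recover a directed route into $S$ in $D_2$ or splice a shorter path'') is a restatement of what must be proved, not an argument: ancestry is genuinely not preserved under Markov equivalence (already $a\rightarrow b\rightarrow c$ versus $a\leftarrow b\leftarrow c$ changes $An(c)$), and showing that the orientation of a directed path from a collider into $S$ is forced in $D_2$ requires its own induction, using shared unshielded \emph{non}-colliders edge by edge, with every shielded triple encountered along that path spawning a fresh shortcut case. Moreover, your case (i) is not ``routine'' and is entangled with exactly the same issue: when you collapse a shielded triple to the edge $v_{i-1}\mbox{---}v_{i+1}$, the junction nodes can change collider status on the shortened path; e.g.\ if the triple is $v_{i-1}\rightarrow v_i\leftarrow v_{i+1}$ in $D_1$ and the shortcut edge is oriented $v_{i+1}\rightarrow v_{i-1}$, then $v_{i-1}$ flips from non-collider on $p$ to collider on the shortened path, and the shortened path stays d-connecting only by invoking the ancestry fact $v_{i-1}\in An(v_i)\subseteq An(S)$. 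So the two cases do not decouple in the way your outline assumes, which is precisely why the complete proofs in the literature take different routes (e.g.\ Chickering's covered-edge-reversal characterization, or Meek-style inductions). A repair closest to your plan --- and consistent with machinery this paper itself uses in the proof of Lemma \ref{lem:no_overlap} --- is to replace d-connecting paths by active walks, on which every collider must lie \emph{in} $S$ rather than in $An(S)$: membership in $S$ is graph-independent, so your case (ii) vanishes entirely, and the entire burden moves to a correct, exhaustive treatment of the shielded-triple shortcuts.
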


\begin{definition} [Degree-$k$ d-separations]
The collection of d-separation statements entailed by a DAG where the conditioning set has size of at most $k$ is called \dkci statements.
\end{definition}

\section{Markov Equivalence of Causal Graphs with Small Conditioning Sets}
We are interested in characterizing the collection of causal graphs that entail the same d-separation constraints for all conditioning sets of size up to $k$ for some $k\in\mathbb{N}$. We call this \kmarkov equivalence:
\begin{definition}
    Two DAGs $D_1,D_2$ are called \kmarkov equivalent, shown by $D_1\sim_k D_2$ if they entail the same d-separation constraints $\ci{a}{b}{c}$ for all $c\subset V:\lvert c\rvert\leq k$.
\end{definition}

The notion of \kmarkov equivalence partitions the set of causal graphs into equivalence classes:
\begin{definition}
    \kmarkov equivalence class of a DAG $D$ is defined as the set of causal graphs that are \kmarkov equivalent to $D$, shown by $[D]_k$, i.e., $D'\!\sim_k\!D, \forall D'\in [D]_k$.
\end{definition}

We first discuss how Verma and Pearl's characterization fails when we cannot test all d-separation statements. Note that if two DAGs have the same skeleton and unshielded colliders, then they entail the same d-separation constraints by Theorem \ref{thm:vermapearl}. Therefore, they also entail the same \dkci relations. However, the other direction is not true: Two graphs with different unshielded colliders or different skeletons may still entail the same \dkci relations. This is expected since we are checking less constraints, which increases the size of the equivalence class: More and more graphs become indistinguishable as we do not rely on certain d-separation constraints. 

For example, consider the causal graphs in Figure \ref{fig:warmup} (a, b). In $D_1$, even though $a,b$ are non-adjacent, they cannot be made conditionally independent unless both $c$ and $d$ are conditioned on. Similarly in $D_2$, $c,b$ cannot be made independent unless both $a,d$ are conditioned on. In both graphs, $a,d$ are independent, and become dependent conditioned on $c$ or $b$. Therefore, $D_1,D_2$ are \kmarkov equivalent for $k=1$: They entail the same conditional independence relations for conditioning sets of size up to $1$. Similarly in Figure \ref{fig:warmup} (c, d), the flipped edge between $c,b$ induces different unshielded colliders in $D_1$ and $D_2$. However, the endpoints of these colliders $(f,c)$, $(d,c)$ and $(e,b)$, $(a,b)$ are all dependent conditioned on subsets of size at most $1$. Then it is easy to verify that $D_1, D_2$ entail the same \dkci relations for $k=1$ despite having different unshielded colliders. 

\begin{figure}[t!]
	\centering
	\begin{subfigure}[t]{.23\textwidth}
        \centering
		\begin{tikzpicture}
		\tikzset{vertex/.style = {shape=circle,draw,minimum size=1.5em}}
		\tikzset{edge/.style = {->,> = latex'}}
		
		\node (a) at (0,0) {$a$};
		\node (c) at (1.,0) {$c$};
		\node (b) at (2,0) {$b$};
		\node (d) at (1.5,1.)  {$d$};
		\node (e) at (1,-0.55) {$$};
  
		\draw[->] (a) to (c);
		\draw[->] (c) to (b);
		\draw[->] (d) to (c);
		\draw[->] (d) to (b);		
	\end{tikzpicture}		
		\caption{$D_1$}
	\end{subfigure}
    \begin{subfigure}[t]{.23\textwidth}
    \centering
    \begin{tikzpicture}
		\tikzset{vertex/.style = {shape=circle,draw,minimum size=1.5em}}
		\tikzset{edge/.style = {->,> = latex'}}
		
		\node (a) at (0,0) {$a$};
		\node (c) at (1,0) {$c$};
		\node (b) at (2,0) {$b$};
		\node (d) at (1.5,1.)  {$d$};
		
		\draw[->] (a) to (c);
		\draw[->] (d) to (c);
		\draw[->] (d) to (b);		
			\draw[->] (a.south)to [out=330,in=210](b.south);
		\end{tikzpicture}	
            \caption{$D_2$}
	\end{subfigure} 
	\begin{subfigure}[t]{.23\textwidth}
        \centering
		\begin{tikzpicture}
		\tikzset{vertex/.style = {shape=circle,draw,minimum size=1.5em}}
		\tikzset{edge/.style = {->,> = latex'}}
		
		\node[below] (a) at (0,1.5) {$a$};
		\node[below] (c) at (1.,0) {$c$};
		\node[below] (b) at (2,1) {$b$};
		\node[below] (d) at (1,1.)  {$d$};
		\node[below] (f) at (1,2.)  {$f$};
		\node[below] (e) at (0,0.75)  {$e$};
		
		\draw[->] (a) to (c);
		\draw[->] (c) to (b);
		\draw[->] (d) to (a);
		\draw[->] (d) to (b);		
		\draw[->] (d) to (e);		
            \draw[->] (f) to (a);
            \draw[->] (f) to (e);
            \draw[->] (e) to (c);
            \draw[->] (f) to (b);
	\end{tikzpicture}		
		\caption{$D_3$}
	\end{subfigure}
    \begin{subfigure}[t]{.23\textwidth}
    \begin{tikzpicture}
		\tikzset{vertex/.style = {shape=circle,draw,minimum size=1.5em}}
		\tikzset{edge/.style = {->,> = latex'}}
		
		\node[below] (a) at (0,1.5) {$a$};
		\node[below] (c) at (1.,0) {$c$};
		\node[below] (b) at (2,1) {$b$};
		\node[below] (d) at (1,1.)  {$d$};
		\node[below] (f) at (1,2.)  {$f$};
		\node[below] (e) at (0,0.75)  {$e$};
		
		\draw[->] (a) to (c);
		\draw[->] (d) to (a);
		\draw[->] (d) to (b);		
            \draw[->] (b) to (c);
            \draw[->] (f) to (a);
            \draw[->] (d) to (e);		
            \draw[->] (f) to (b);
            \draw[->] (f) to (a);
            \draw[->] (f) to (e);                 \draw[->] (e) to (c);
		\end{tikzpicture}	
            \caption{$D_4$}
	\end{subfigure} 
 \caption{ (a), (b): Both $D_1$ and $D_2$ entail the same \degree{1} CI relations although they have different skeletons, thus they are in the same \kmarkov equivalence class. (c), (d): Both $D_3$ and $D_4$ entail the same \degree{1} CI relations although they have different unshielded colliders, thus they are in the same \kmarkov equivalence class. }
	\label{fig:warmup}
\end{figure}
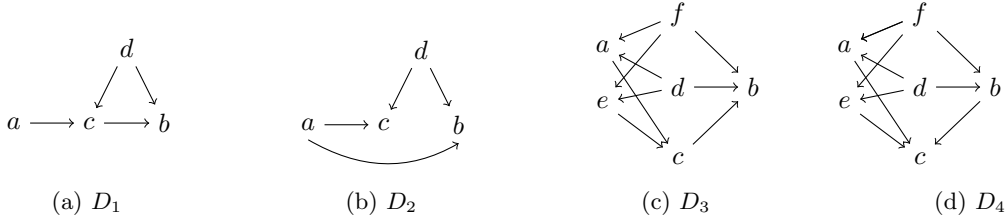

These examples show that different structures may induce the same \dkci relations. One might think that, if a collider actually changes the \kmarkov equivalence class, then perhaps there is hope that a local characterization around all such equivalence class-changing colliders might be possible. However, this is not true. Our example in Section \ref{sec:app-non-local} shows that a local characterization similar to Theorem \ref{thm:vermapearl} is not possible for \kmarkov equivalence.

\subsection{\kclosure Graphs}
Our goal is to come up with a graphical representation of \kmarkov equivalence class that captures the invariant causal information across equivalent DAGs. This will later be useful for learning identifiable parts of the causal structure from data by CI tests.  %
First, we introduce the notion of a \kcovered pair. 
\begin{definition}
    Given a DAG $D=(V,E)$ and an integer $k$, a pair of nodes $a,b$ are said to be \kcovered if $\nexists c\subset V:\lvert c\rvert \leq k$ and $\ci{a}{b}{c}$.
\end{definition}

Therefore, \kcovered pairs are pairs of variables that cannot be made independent by conditioning on subsets of size at most $k$. For example, $a,b$ in Figure \ref{fig:warmup} are \kcovered for $k=1$. In any graphical representation of \kmarkov equivalence class, \kcovered pairs should be adjacent. This is because it is not always possible to distinguish if they are adjacent or not by \dkci relations. 

We propose \kclosure graphs as a useful representation to characterize \kmarkov equivalence class. 
\begin{definition}
\label{def:kclosure}
    Given a DAG $D=(V,E)$ and an integer $k$, the \kclosure of $D$ is defined as the graph shown by $\C{D}$ that satisfies the following:
    \begin{enumerate}
        \item If: $a,b$ are \kcovered in $D$\\
        \hspace{-2em} $i)$ if $a\in An(b)$, then $a\rightarrow b$ in $\C{D}$,
        \space$ii)$  if $b\in An(a)$, then $a\leftarrow b$ in $\C{D}$,
         \\  $iii)$ else $a\leftrightarrow b$ in $\C{D}$
        \item Else: $a, b$ are non-adjacent in $\C{D}$
    \end{enumerate}
\end{definition}

The definition of \kclosure trivially implies the following:
\begin{lemma}
    Given a DAG $D$ and an integer $k$, the k-closure graph $\C{D}$ is unique. 
\end{lemma}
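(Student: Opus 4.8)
The plan is to show that the rules in Definition \ref{def:kclosure} assign to every unordered pair of distinct nodes $\{a,b\}$ exactly one of four possible local configurations --- non-adjacent, $a\rightarrow b$, $a\leftarrow b$, or $a\leftrightarrow b$ --- with no freedom of choice, so that the entire edge set together with all endpoint marks of $\C{D}$ is forced. Since a graph on the fixed vertex set $V$ is completely determined by the configuration assigned to each pair of vertices, establishing per-pair determinacy immediately yields uniqueness of $\C{D}$.

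First I would fix an arbitrary pair $a,b\in V$ and argue that the predicate ``$a,b$ are \kcovered in $D$'' is a well-defined property depending only on $D$ and $k$: by definition it is the negation of the existence of a set $c\subset V$ with $\lvert c\rvert\leq k$ and $\ci{a}{b}{c}$, and d-separation is itself a deterministic relation read off from $D$. Hence the top-level branch of Definition \ref{def:kclosure} (case 1 versus case 2) is unambiguous for this pair. In case 2 the pair is declared non-adjacent, which is a single, fully specified outcome with no residual choice.

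The only step requiring a genuine observation is case 1, where I must check that the three subcases $i)$, $ii)$, $iii)$ are mutually exclusive and jointly exhaustive. Exhaustiveness is immediate because subcase $iii)$ is defined precisely as the complement of the ancestral conditions appearing in $i)$ and $ii)$. For mutual exclusivity the key fact is acyclicity: since $D$ is a DAG, one cannot simultaneously have $a\in An(b)$ and $b\in An(a)$, as this would produce a directed cycle through $a$ and $b$. Therefore at most one of the conditions $a\in An(b)$, $b\in An(a)$ can hold, and the orientation prescribed for the (necessarily present) edge between $a$ and $b$ is uniquely pinned down by exactly one of the three subcases.

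Combining these observations, every pair of nodes receives exactly one configuration under the definition, so no two distinct graphs can both satisfy Definition \ref{def:kclosure}, and $\C{D}$ is unique. I expect the acyclicity argument in case 1 to be the only point deserving explicit mention; everything else reduces to routine bookkeeping over the branches of the definition, which is why the statement can fairly be called a trivial consequence.
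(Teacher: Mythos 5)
Your proposal is correct and matches the paper's treatment: the paper offers no separate argument, stating only that the definition of \kclosure ``trivially implies'' uniqueness, and your write-up simply makes that triviality explicit (per-pair determinism of the \kcovered test plus mutual exclusivity of the ancestral subcases via acyclicity). Nothing in your argument departs from, adds to, or is missing relative to what the paper intends.
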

Furthermore, there is a straightforward algorithm one can employ to recover a \kclosure from the DAG: Make the non-adjacent pairs that cannot be d-separated with conditioning sets of size at most $k$ adjacent according to the definition. Even though this may not be a poly-time operation, it is unavoidable to characterize the \kmarkov equivalence class. 

The following lemma shows that \kclosure graphs can be used to capture all the conditional independence statements with bounded-size conditioning sets imposed by a DAG.

\begin{lemma}
\label{lem:kclosure}
$k$-closure $\C{D}$ of a DAG $D$ entails the same degree-k d-separation statements as the DAG, i.e., 
 $   (\ci{a}{b}{c})_D\iff (\ci{a}{b}{c})_{\C{D}}, \forall c\subset V:\lvert c\rvert\leq k.$
\end{lemma}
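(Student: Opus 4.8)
The plan is to prove the biconditional by relating \emph{connecting} paths in $D$ and in $\C{D}$, after first showing that the two graphs have identical ancestry. Throughout I use the paper's notion of a d-connecting path in both graphs, noting that in $\C{D}$ a collider is a node at which both incident edges carry arrowheads. I would begin by recording two structural facts. First, any adjacent pair in a DAG is $k$-covered, because adjacent nodes are never d-separated; hence every edge $a\to b$ of $D$ reappears in $\C{D}$ with the same orientation (since $a\in An(b)$ and $b\notin An(a)$), so $D$ is an orientation-consistent subgraph of $\C{D}$. Second, every directed edge of $\C{D}$ is oriented along an ancestry relation of $D$ by Definition~\ref{def:kclosure}. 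Tracing directed paths in both directions then yields $An_D(x)=An_{\C{D}}(x)$ for all $x$. I would also observe that marks in $\C{D}$ encode ancestry: an arrowhead at $b$ on an edge $\{a,b\}$ forces $b\notin An_D(a)$, and a tail at $a$ forces $a\in An_D(b)$.

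The implication $(\ci{a}{b}{c})_{\C{D}}\Rightarrow(\ci{a}{b}{c})_D$ (which in fact holds for every $c$) I would prove by contraposition. Take a d-connecting path $p$ for $a,b$ given $c$ in $D$. Since $D$ is an orientation-consistent subgraph of $\C{D}$, the same sequence $p$ is a path in $\C{D}$ with exactly the same colliders and non-colliders. Its non-colliders avoid $c$ by hypothesis, and each of its colliders is an ancestor of $c$ in $D$, hence in $\C{D}$ by the ancestry equality; therefore $p$ is d-connecting in $\C{D}$, so $a,b$ are not d-separated there.

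The reverse implication $(\ci{a}{b}{c})_D\Rightarrow(\ci{a}{b}{c})_{\C{D}}$ is where $\lvert c\rvert\le k$ is essential and is the heart of the lemma. Again arguing the contrapositive, I would start from a d-connecting path $q=(v_0,\dots,v_m)$ for $a,b$ given $c$ in $\C{D}$ and build one in $D$. Each consecutive pair $v_i,v_{i+1}$ is an edge of $\C{D}$, hence $k$-covered, so because $\lvert c\rvert\le k$ the set $c$ cannot d-separate $v_i,v_{i+1}$ in $D$; thus there is a d-connecting $D$-path $P_i$ between them given $c$. I would splice the $P_i$ along $q$. The internal nodes of each $P_i$ already satisfy the connecting conditions, so the only places where the spliced object can fail are the junctions $v_i$, where the collider status is decided by the marks that $P_{i-1}$ and $P_i$ carry at $v_i$. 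Using the ancestry equality together with the facts that colliders of $q$ are ancestors of $c$ and non-colliders of $q$ lie outside $c$, the spliced object is d-connecting in $D$ \emph{provided} each $P_i$ can be chosen with endpoint marks agreeing with those of the closure edge $\{v_i,v_{i+1}\}$.

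This mark-matching requirement is the main obstacle. I must show that between a $k$-covered pair, with $\lvert c\rvert\le k$, there is a d-connecting $D$-path given $c$ whose orientation at each endpoint matches the ancestry-encoding mark of $\C{D}$: a tail at an endpoint that is an ancestor of the other, an arrowhead otherwise. When the closure edge is directed and some directed $D$-path avoids $c$, that directed path already supplies the required tail-and-arrowhead pattern. The work is to handle the cases where every directed path meets $c$, and the $\leftrightarrow$ case where arrowheads are required at both ends. I expect to resolve these by rerouting through an alternative connecting path—whose existence is guaranteed by $k$-coveredness—and by exploiting that an active interior collider of such a path is an ancestor of $c$ in $D$, which lets me steer the terminal segment into the desired endpoint with an arrowhead. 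Finally, since splicing may repeat vertices, I would close the argument with the standard reduction from a d-connecting walk to a d-connecting path.
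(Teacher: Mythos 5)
Your architecture coincides with the paper's: the easy direction exploits that $D$ is an orientation-preserving subgraph of $\C{D}$ with unchanged ancestor sets (the paper's Lemma~\ref{lem:sameancestors}), and the hard direction replaces every edge of a d-connecting path in $\C{D}$ by a d-connecting path of $D$ given $c$ (the paper's ``replacement paths'') and splices, closing with a walk-to-path reduction (the paper's Lemma~\ref{lem:no_overlap}). The gap sits exactly at what you call the main obstacle, and one piece of your plan is not merely unproven but false: for a directed closure edge $u\rightarrow v$ you require a replacement path with a \emph{tail} at the ancestor endpoint $u$, and such a path need not exist. Let $D$ have edges $u\rightarrow m\rightarrow v$, $w\rightarrow u$, $w\rightarrow v$, and take $k=1$, $c=\{m\}$. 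The pair $u,v$ is $k$-covered ($u\rightarrow m\rightarrow v$ is open given $\emptyset$ and given $\{w\}$, while $u\leftarrow w\rightarrow v$ is open given $\{m\}$), so $u\rightarrow v$ is an edge of $\C{D}$; yet the only d-connecting path between $u$ and $v$ in $D$ given $c$ is $u\leftarrow w\rightarrow v$, which has an arrowhead at $u$. Hence mark-matching at tail endpoints is unattainable, and your splice cannot be completed as specified. The paper never asks for the tail: its Lemma~\ref{lem:replacement} guarantees an arrowhead only at the non-ancestor endpoint, and when the chosen replacement path for $u\rightarrow v$ carries an arrowhead at $u$ --- which can happen only when every directed path from $u$ to $v$ is blocked by $c$ --- that blocking forces $u\in An(c)$, so the junction $u$ remains active even if splicing turns it from a non-collider of the $\C{D}$-path into a collider of the spliced walk. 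This relaxation is the idea your proposal is missing.

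Second, even for the endpoints where you correctly want arrowheads --- the non-ancestor endpoint of a directed closure edge and both endpoints of a bidirected one --- the existence of such replacement paths is the technical heart of the lemma, and your ``rerouting/steering'' sentence does not supply it. The paper proves these statements (Lemmas~\ref{lem:replacement} and~\ref{lem:replacement_bidirected}) by contradiction through a recursion on the conditioning set: if every d-connecting path given $T$ had a tail at an endpoint, remove from $T$ all descendants of the collider nearest that endpoint; Lemma~\ref{lem:main} shows this opens no new path with an arrowhead at that endpoint, so iterating either d-separates the pair by a set of size at most $k$ or empties the conditioning set, and either outcome contradicts $k$-coveredness. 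A one-shot local modification of a given tail path, using that its interior colliders are ancestors of $c$, is not known to produce the required arrowhead path, and you offer no argument for it. Without proofs of these two existence facts and without the $An(c)$ relaxation at mismatched junctions, the hard direction is not established.
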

\begin{proof}[Proof Sketch]
    The proof relies on two crucial lemmas. If $a\leftrightarrow b$ %
    in $\C{D}$, then in $D$, conditioned on any subset of size at most $k$, there is a d-connecting path with arrowheads on both $a$ and $b$. Similarly, if $a\rightarrow b$ in $\C{D}$, then in $D$, conditioned on any subset of size at most $k$ there is a d-connecting path with an arrowhead at $b$. Using these, we can show that any d-connecting path in $\C{D}$ implies a d-connecting path in $D$. The other direction is straightforward as the d-connection statements in $D$ hold in $\C{D}$ since it is obtained from $D$ by adding edges. Please see  Section \ref{sec:proof-lem:kclosure} for the  proof.
\end{proof}

Inspired by ancestral graphs~\cite{richardson2002ancestral}, \kclosure graphs are mixed graphs with directed and bidirected edges, where \kcovered pairs in the DAG are made adjacent in $\C{D}$ based on their ancestrality.  %
\begin{definition}
    A mixed graph is a graph that contains directed edges $a\rightarrow b, a\leftarrow b$ or bidirected edges $a\leftrightarrow b$ where every pair of nodes is connected by at most one edge.
\end{definition}

Due to bidirected edges, \kclosure graphs are not DAGs. However, we can show that they are still acyclic. In fact, it is worth noting that \kclosure graphs are a special class of ancestral graphs~\cite{richardson2002ancestral}. %
\begin{lemma}
\label{lem:kclosureMAG}
    For any DAG $D$, the \kclosure graph $\C{D}$ is a maximal ancestral graph (MAG). 
\end{lemma}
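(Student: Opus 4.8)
The plan is to verify the two defining properties of a maximal ancestral graph in turn: first that $\C{D}$ is ancestral (no directed cycles and no almost-directed cycles), and then that it is maximal (every non-adjacent pair admits a separating set). Since $\C{D}$ is by construction a mixed graph containing only directed and bidirected edges, the conditions that ancestral graphs impose on undirected edges are vacuous, so these are the only obligations I need to discharge.

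For ancestrality, the key observation is that every directed edge of $\C{D}$ records an ancestral relation of $D$: by cases $i)$ and $ii)$ of Definition \ref{def:kclosure}, $a\rightarrow b$ appears in $\C{D}$ only when $a\in An(b)$ in $D$. Hence any directed path in $\C{D}$ witnesses, by transitivity of ancestry, the corresponding ancestral relation in $D$. A directed cycle in $\C{D}$ would then force $x\in An(x)$ strictly in the DAG $D$, a contradiction, which rules out directed cycles. For almost-directed cycles, suppose $a\leftrightarrow b$ in $\C{D}$; by case $iii)$ this bidirected edge is introduced precisely when $a\notin An(b)$ and $b\notin An(a)$ in $D$. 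If there were a directed path from $a$ to $b$ in $\C{D}$, the previous observation would give $a\in An(b)$ in $D$, contradicting this, and the symmetric argument excludes a directed path from $b$ to $a$. So $\C{D}$ has no almost-directed cycle and is therefore ancestral.

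For maximality, I would invoke the standard characterization~\cite{richardson2002ancestral} that an ancestral graph is maximal if and only if every pair of non-adjacent vertices can be m-separated by some conditioning set. Let $a,b$ be non-adjacent in $\C{D}$. By item 2 (the \emph{else} clause) of Definition \ref{def:kclosure}, non-adjacency means $a,b$ are not \kcovered in $D$, so there exists $c\subset V$ with $\lvert c\rvert\leq k$ and $(\ci{a}{b}{c})_D$. Lemma \ref{lem:kclosure} then transfers this separation to the closure, giving $(\ci{a}{b}{c})_{\C{D}}$, i.e., $a$ and $b$ are m-separated given $c$ in $\C{D}$ (with $c$ disjoint from $\{a,b\}$, as it d-separates them). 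This produces the required separating set for every missing edge, establishing maximality.

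The conceptual content is light and concentrated in two places. Ancestrality rests entirely on the bookkeeping that directed edges of $\C{D}$ never violate the ancestral order of $D$, which is immediate from the construction, so I expect this part to be routine. The real leverage—and the only step not directly visible from Definition \ref{def:kclosure}—is maximality, which I essentially get for free from Lemma \ref{lem:kclosure}: the bounded-size separations of $D$ are reproduced exactly in $\C{D}$, so the very witness that makes $a,b$ non-adjacent is the witness that m-separates them. The one point I would handle with care is that the separation notion in the mixed graph $\C{D}$ is m-separation rather than ordinary d-separation, so I would state explicitly that Lemma \ref{lem:kclosure} is to be read with respect to m-separation in $\C{D}$.
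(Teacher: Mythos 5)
Your proof is correct and follows essentially the same route as the paper's: acyclicity (no directed or almost-directed cycles) is derived from the fact that every directed edge of $\C{D}$ reflects an ancestral relation in $D$, and maximality is obtained by combining the definition of non-adjacency in $\C{D}$ (the pair is not \kcovered, hence separable by some $c$ with $\lvert c\rvert\leq k$ in $D$) with Lemma \ref{lem:kclosure} to transfer that separation to $\C{D}$. Your explicit remark that separation in the mixed graph is m-separation is a fine clarification but does not change the substance; the argument is the same as the paper's.
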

MAGs have been successfully applied for learning causal graphs with latent variables~\cite{richardson2002ancestral}. Similar to MAGs, \kclosure graphs are simply graph objects to help us compactly represent the \kmarkov equivalence class of causal DAGs, rather than expressing the underlying physical system directly. They do, however, represent ancestrality relations between variables by construction.  

The relation between \kclosure graphs and MAGs is not an if and only if relation. %
In a MAG, one can have a bidirected edge between any pair of nodes as long as it does not create a (almost) directed cycle. This is because they represent latent confounders and one might have latent confounders between any pair of observed nodes. However, bidirected edges in \kclosure graphs represent \kcovered pairs and cannot be added arbitrarily. Accordingly, there are MAGs that are not valid \kclosure graphs. An example is given in Section \ref{sec:app-MAG_not_kclosure}. We have the following characterization:
\begin{theorem}
\label{thm:kclosure-char}
    A mixed graph $K=(V,E)$ is a \kclosure graph if and only if it is a maximal ancestral graph and for any bidirected edge $a\leftrightarrow b\in E$ the following is true:
    \begin{itemize}
        \item $\nexists c\subset V:\lvert c\rvert \leq k, (\ci{a}{b}{c})_{K'}$, where $K'=(V,E-\{a\leftrightarrow b\})$. 
    \end{itemize}
\end{theorem}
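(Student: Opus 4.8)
The plan is to prove the two implications separately, writing $K=(V,E)$ for the mixed graph and, for a fixed bidirected edge $a\leftrightarrow b\in E$, $K'=(V,E-\{a\leftrightarrow b\})$. For the forward direction, suppose $K=\C{D}$ for some DAG $D$. That $K$ is a MAG is exactly Lemma~\ref{lem:kclosureMAG}, so only the bidirected-edge condition needs checking. By Definition~\ref{def:kclosure}, a bidirected edge $a\leftrightarrow b$ of $\C{D}$ means $a,b$ are \kcovered in $D$ with neither an ancestor of the other, so in particular $a,b$ are non-adjacent in $D$. Being \kcovered, for every $c\subset V$ with $|c|\leq k$ there is a d-connecting path $p$ between $a$ and $b$ in $D$; since $a,b$ are non-adjacent in $D$ this path uses only directed edges of $D$ and (trivially) not the edge $a\leftrightarrow b$. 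I would then verify that $p$ survives as a d-connecting path in $K'$: every directed edge of $D$ is present in $\C{D}$ and hence in $K'$ (deleting one bidirected edge destroys no directed edge), the collider/non-collider pattern of $p$ is unchanged, and every collider of $p$ that is an ancestor of $c$ in $D$ stays an ancestor in $K'$ because the witnessing directed path is preserved. Thus $a,b$ cannot be d-separated in $K'$ by any $c$ with $|c|\leq k$, giving the condition. This direction is the easy one.

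For the converse, suppose $K$ is a MAG satisfying the bidirected-edge condition; I must exhibit a DAG $D$ with $\C{D}=K$. The natural candidate is to keep the directed structure of $K$ and realize each bidirected edge through shared-ancestor structure on the existing vertex set, so that its endpoints become non-adjacent yet remain \kcovered. A first attempt is the graph $D_0$ obtained by deleting all bidirected edges of $K$: it is acyclic because $K$ is ancestral, and it carries the same ancestry relation as $K$ since bidirected edges contribute no directed paths. I would then try to show $\C{D_0}=K$, adjusting the construction where needed. The verification splits into orientations and adjacencies. Orientations are forced once adjacencies are correct: Definition~\ref{def:kclosure} reads the edge marks off the ancestry relation, which coincides in $D$ and $K$, and the ancestral property of $K$ guarantees that a bidirected edge joins a pair neither of which is an ancestor of the other, producing a bidirected closure edge. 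For adjacencies, each non-edge of $K$ must become non-adjacent in the closure, i.e.\ separable by some $c$ with $|c|\leq k$, and each edge must correspond to a \kcovered pair.

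The main obstacle is precisely this adjacency-matching step in the backward direction. For the non-edges I would invoke maximality of the MAG to produce a separating set and then argue it can be taken of size at most $k$; for each bidirected edge I would use the hypothesis that $a,b$ are inseparable by small $c$ in $K'$ together with the path-lifting half of Lemma~\ref{lem:kclosure} to conclude that $a,b$ stay \kcovered in the constructed DAG, so that the closure re-inserts the edge. The delicate point is that deleting several bidirected edges simultaneously to form $D_0$ removes more connecting paths than deleting a single one, so the per-edge hypothesis stated on $K'$ does not transfer verbatim; reconciling this interaction — whether by reinstating shared-ancestor gadgets on the existing vertices, or by an induction that removes bidirected edges one at a time while preserving \kcoveredness — is where the real work lies, and it is exactly here that ancestrality and maximality must be combined to certify that the recovered graph is $K$ itself rather than a proper supergraph.
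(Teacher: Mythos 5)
Your forward direction is correct and is essentially the paper's own argument, just phrased contrapositively: since every edge of $D$ appears as a directed edge of $\C{D}$, the DAG $D$ is a subgraph of $K'=\C{D}-(a\leftrightarrow b)$, so each degree-$k$ d-connecting path witnessing that $a,b$ are \kcovered in $D$ survives in the supergraph $K'$ (equivalently, any small separating set in $K'$ would already separate $a,b$ in $D$, contradicting the construction of $\C{D}$). Together with Lemma~\ref{lem:kclosureMAG} this disposes of the ``only if'' part exactly as the paper does.

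The backward direction, however, is where the substance of the theorem lies, and your proposal stops at the crux without resolving it, so this is a genuine gap rather than a proof. The missing step is precisely the one you name: showing that each bidirected pair, inseparable by small sets after deleting \emph{that one} edge, remains \kcovered in the DAG $D_0$ obtained by deleting \emph{all} bidirected edges simultaneously. Neither of your suggested repairs works as stated. Appealing to ``the path-lifting half of Lemma~\ref{lem:kclosure}'' is circular: that lemma relates a DAG $D$ to its closure $\C{D}$, i.e., it presupposes that $K$ is a \kclosure graph, which is what you are trying to prove. And removing bidirected edges one at a time does not obviously preserve the hypotheses: deleting a bidirected edge can destroy maximality, and the per-edge condition for the remaining bidirected edges is stated relative to $K$, not to the shrunken graph. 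The paper's resolution is to prove the needed path lemmas \emph{directly for any mixed graph satisfying the theorem's two hypotheses}, with no reference to an underlying DAG: Lemma~\ref{lem:replacementbidirected-mixed} (built on Lemma~\ref{lem:helper-MAGcolliderremoval}, a mixed-graph analogue of Lemma~\ref{lem:main}) shows that for every bidirected edge $X\leftrightarrow Y$ and every $T$ with $\lvert T\rvert\leq k$ there is a d-connecting path with arrowheads into \emph{both} $X$ and $Y$. Because these replacement paths carry arrowheads at both ends, splicing one in for each bidirected edge along any d-connecting path of $K$ leaves the collider status of the junction nodes unchanged, and Lemma~\ref{lem:no_overlap} handles overlaps; hence every degree-$k$ d-connection of $K$ survives in $D_0$. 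This yields Lemma~\ref{lem:helper-theorem-kclosure-char} ($D_0$ is a DAG with $D_0\sim_k K$) in one simultaneous step, after which the bidirected pairs are still \kcovered in $D_0$ and the closure operation restores exactly the edges of $K$. Finally, be aware that your plan for the non-edges (``invoke maximality, then argue the separating set can be taken of size at most $k$'') is also not automatic: maximality of a MAG only provides \emph{some} separating set, and a size-$\leq k$ bound of the kind in Lemma~\ref{lem:nonadjindepsize} is only available for graphs already known to be \kclosures, so this step too must be routed through the equivalence $D_0\sim_k K$ rather than through maximality alone.
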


The Markov equivalence between MAGs has been characterized in the literature. This relies on not only skeletons and unshielded colliders, but also colliders on discriminating paths being identical.
\begin{definition}
    A path $p=\langle a,z_1,\hdots z_m, u, Y, v\rangle$ is called a discriminating path for $u,Y,v$ if $a,v$ are not adjacent and every vertex $\{z_i\}_i, u$ are colliders on $p$ and parents of $v$.  
\end{definition}
\begin{theorem}
\label{thm:richardsonspirtes}[\cite{richardson2002ancestral}]
    Two MAGs $M_1,M_2$ are Markov equivalent if and only if 
        $i)$ They have the same skeleton,
        $ii)$ They have the same unshielded colliders, 
        $iii)$ For any node $Y$ for which there is a discriminating path $p$, $Y$ has the same collider status on $p$ in $M_1,M_2$.  
\end{theorem}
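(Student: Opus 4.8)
The plan is to prove the biconditional in two halves, treating the forward implication (Markov equivalence $\Rightarrow$ (i)--(iii)) as a \emph{detectability} argument and the reverse implication as a \emph{path-transformation} argument. Throughout I work with the m-separation relation on MAGs, which is exactly the d-separation notion of the earlier definitions applied to mixed graphs (a collider is still a vertex with two arrowheads into it along the path). Two MAGs are Markov equivalent precisely when they entail the same separation statements $\ci{a}{b}{S}$ over all $a,b$ and all $S\subset V$, so both halves amount to relating separation statements to the three combinatorial features.

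For necessity I would show that each of (i)--(iii) is recoverable from the family of separation statements alone, so that two graphs entailing the same statements must agree on all three. For the \emph{skeleton}, I invoke maximality: in a MAG a pair $a,b$ is adjacent if and only if it cannot be separated by any $S\subset V$, so equal separation relations force equal adjacencies. For \emph{unshielded colliders}, I would fix an unshielded triple $\langle a,c,b\rangle$ with $a,b$ non-adjacent and argue that its collider status is a deterministic function of the separating sets of $a,b$: when $c$ is a non-collider it lies in \emph{every} separating set (the path is active unless $c$ is conditioned on), whereas when $c$ is a collider some separating set \emph{excludes} $c$ and all of $De(c)$; this dichotomy transfers under equal separations. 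For \emph{discriminating paths}, I would show that along a discriminating path for $u,Y,v$ the separability of its non-adjacent endpoints hinges on $Y$'s collider status, by exhibiting an explicit conditioning set that separates the endpoints under one status but not the other, so that (iii) is again pinned down by the separation relation.

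For sufficiency I assume (i)--(iii) and prove the two graphs entail the same separations. Suppose not: without loss of generality there are $a,b,S$ and a path $p$ that is m-connecting in $M_2$ given $S$ while $\ci{a}{b}{S}$ holds in $M_1$. I would take $p$ to be a shortest such connecting path; because (i) gives a common skeleton, $p$ is also a path in $M_1$, so the only way it can fail to connect in $M_1$ is a mismatch of collider/non-collider status at some interior triple. At each \emph{unshielded} triple the status agrees by (ii). The remaining danger is a \emph{shielded} triple whose two outer vertices are non-adjacent to some common vertex reachable through a chain of colliders-into-a-common-node; I would argue that any such status-mismatch forces that triple to be the terminal triple of a discriminating path, which (iii) forbids. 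Hence no mismatch can occur and $p$ connects in $M_1$ as well, a contradiction.

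The main obstacle is exactly this sufficiency direction, and within it the discriminating-path bookkeeping. Matching unshielded colliders controls only local structure; the genuinely non-local phenomenon is that a shielded triple's endpoints may be tied together through a discriminating configuration whose collider status is \emph{not} fixed by (i)--(ii) alone. I expect the technical crux to be an induction (on path length, or on the number of colliders along the connecting path) that either reroutes the connecting path around a would-be mismatched triple or descends to a strictly shorter discriminating subconfiguration, bottoming out in a base case governed by (iii); keeping the ancestrality and maximality invariants intact under each local surgery is where the care is needed. Since the statement is the classical Spirtes--Richardson / Ali--Richardson--Spirtes characterization, I would in effect be reconstructing their argument, with my emphasis on organizing the path surgery so that every point at which local collider-matching fails is provably a discriminating-path endpoint.
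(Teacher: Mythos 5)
First, note that the paper does not prove this statement at all: Theorem~\ref{thm:richardsonspirtes} is imported as background with the citation to Richardson and Spirtes, so there is no in-paper proof to compare against, and your attempt has to be judged on its own merits. Your necessity direction is essentially sound: adjacency is pinned down by maximality (non-adjacent iff separable), the collider status of an unshielded triple $\langle a,c,b\rangle$ is pinned down by whether $c$ lies in every separating set for $a,b$ or in none, and the collider status of the terminal vertex $Y$ of a discriminating path is pinned down once one shows every separating set for the endpoints must contain all the interior colliders (they are parents of $v$) and that $Y$'s membership then decides separation. These are the standard detectability arguments and they go through.

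The genuine gap is in your sufficiency direction, and it sits exactly where you place the ``technical crux.'' Your plan is to take a path $p$ that is m-connecting in $M_2$ given $S$ but blocked in $M_1$, and to argue that any collider-status mismatch along $p$ at a shielded triple ``forces that triple to be the terminal triple of a discriminating path, which (iii) forbids,'' concluding that $p$ itself must connect in $M_1$. Both the intermediate claim and the conclusion are false. Take the two complete DAGs (hence MAGs) $M_1$: $a\rightarrow b\leftarrow c$, $a\rightarrow c$ and $M_2$: $a\rightarrow b\rightarrow c$, $a\rightarrow c$. They are Markov equivalent (no separations hold in either), they trivially satisfy (i)--(iii) (no unshielded triples, no discriminating paths, since a discriminating path requires non-adjacent endpoints), yet the shielded triple $\langle a,b,c\rangle$ is a collider in $M_1$ and a non-collider in $M_2$, and the path $\langle a,b,c\rangle$ is m-connecting given $\emptyset$ in $M_2$ while blocked in $M_1$. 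So a mismatched shielded triple need not terminate any discriminating path, and the same path $p$ need not remain connecting: the correct statement is only that \emph{some possibly different} m-connecting path exists in $M_1$ (here, the edge $a\rightarrow c$). Consequently the whole sufficiency argument must be organized around rerouting rather than around ruling mismatches out, which is why the published proofs (Spirtes--Richardson, and in full detail Ali--Richardson--Spirtes) work with more global machinery --- inducing paths, canonical m-connecting paths, or sequences of equivalence-preserving mark changes --- instead of local triple-by-triple matching along a fixed path. Your hedge about ``rerouting the connecting path'' gestures at the right repair, but as proposed the induction has no base: (iii) cannot forbid the mismatches you need it to forbid.
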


One important observation for our characterization is that Markov equivalence of \kclosure graphs do not rely on discriminating paths unlike arbitrary ancestral graphs. 
\begin{lemma}
\label{lem:kclosurenodisc}
Suppose  two \kclosure graphs $K_1,K_2$ have the same skeleton and unshielded colliders. Then along any discriminating path $p$ for a node $Y$, $Y$ has the same collider status in $K_1$ and $K_2$. %
\end{lemma}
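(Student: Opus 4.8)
The plan is to show that, inside a \kclosure graph, the collider status of $Y$ on a discriminating path is completely determined by data that the two graphs share, namely the skeleton, the unshielded colliders, and the defining \kclosure property. Since $K_1,K_2$ have the same skeleton, $a$ and $v$ are non-adjacent in both, and the consecutive colliders along $p$ are forced to carry the structure $a\srightarrow z_1\leftrightarrow\cdots\leftrightarrow z_m\leftrightarrow u$ with $z_1,\dots,z_m,u\rightarrow v$ and an arrowhead into $u$ on the $u$--$Y$ edge; in particular $\langle u,Y,v\rangle$ is a triangle, and the only freedom is the pair of marks at $Y$, i.e.\ whether $\langle u,Y,v\rangle$ is a collider. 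First I would record these structural facts (they follow from the definition of a discriminating path together with acyclicity of MAGs, and acyclicity further forces the collider orientation at $Y$ to coincide with the bidirected edge $Y\leftrightarrow v$) and reduce the lemma to comparing this single collider status across $K_1$ and $K_2$.

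The engine of the argument is a separating-set characterization of collider status that holds in any MAG. I claim that for every set $S$ with $a,v\notin S$ that m-separates $a$ and $v$, we must have $\{z_1,\dots,z_m,u\}\subseteq S$, and moreover $Y\in S$ if and only if $Y$ is a non-collider on $p$. The inclusion is proved by induction along the sub-paths $\langle a,z_1,\dots,z_j,v\rangle$: on such a sub-path $z_1,\dots,z_{j-1}$ are colliders while $z_j$ is a non-collider into $v$ (because every $z_i$ is a parent of $v$), so once $z_1,\dots,z_{j-1}\in S$ make those colliders active, the only remaining way to block the sub-path is $z_j\in S$; the base case $j=1$ is immediate. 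With all of $z_1,\dots,z_m,u$ active in $S$, the full path $p$ has $Y$ as its only node whose status matters: if $Y$ is a collider it must be kept inactive, forcing $Y\notin S$, whereas if $Y$ is a non-collider it is the unique remaining blocking opportunity, forcing $Y\in S$.

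Combining this with the \kclosure property gives the key reduction. Because $a,v$ are non-adjacent in the \kclosure graph they are not \kcovered, hence m-separable by some $S$ with $\lvert S\rvert\le k$. Together with the characterization above this yields: $Y$ is a collider on $p$ if and only if $a$ and $v$ can be m-separated by a set of size at most $k$ that excludes $Y$. This is exactly a degree-$k$ d-separation statement, so the lemma becomes equivalent to showing that this statement has the same truth value in $K_1$ and in $K_2$.

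The main obstacle is precisely this last transfer, since in general two MAGs with the same skeleton and unshielded colliders can disagree on degree-$k$ separations through a discriminating path — this is exactly why discriminating paths appear in \Cref{thm:richardsonspirtes}. To close the gap I would argue by a minimal-length counterexample: take a shortest discriminating path $p$, over all \kclosure pairs sharing skeleton and unshielded colliders, on which $Y$'s status differs. Each proper sub-path $\langle a,z_1,\dots,z_j,v\rangle$ is itself a discriminating path, so minimality forces the collider and parent-of-$v$ structure along $p$ to coincide in the two graphs; the disagreement is thereby localized to the triangle $\langle u,Y,v\rangle$, where the collider orientation produces the bidirected edge $Y\leftrightarrow v$. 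Invoking the \kclosure characterization of \Cref{thm:kclosure-char} on this bidirected edge — which demands that $Y,v$ be inseparable by conditioning sets of size at most $k$ — against the degree-$k$ separation produced by the competing non-collider orientation should yield the contradiction. I expect the careful bookkeeping here, verifying that a size-$\le k$ separating set transported from one graph genuinely conflicts with the \kcovered requirement in the other, to be the most delicate part of the proof.
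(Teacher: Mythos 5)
Your first three paragraphs are essentially correct and coincide with the opening of the paper's own proof: the forced configuration $a\srightarrow z_1\leftrightarrow\cdots\leftrightarrow z_m\leftrightarrow u\leftrightarrow Y\leftrightarrow v$ in $K_1$ (with $Y\rightarrow v$ in $K_2$), and the observation that any separator $S$ of $a,v$ must contain $z_1,\dots,z_m,u$ and contains $Y$ exactly when $Y$ is a non-collider, both appear there. The genuine gap is your last paragraph, which is where the entire proof lives. The contradiction you propose pits two statements that live in \emph{different graphs} and concern \emph{different vertex pairs}: Theorem~\ref{thm:kclosure-char} applied to $K_1$ requires $Y,v$ to be inseparable by sets of size at most $k$ in $K_1-(Y\leftrightarrow v)$, while the non-collider orientation yields a separation $\ci{a}{v}{S_2}$ (with $Y\in S_2$) in $K_2$. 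Neither statement constrains the other: $K_2$ contains $Y\rightarrow v$, not a bidirected edge, so Theorem~\ref{thm:kclosure-char} imposes no inseparability requirement on $Y,v$ in $K_2$; and d-separation facts do not transfer between $K_1$ and $K_2$ precisely because their orientations at $Y$ differ---that transfer is the very thing to be proved, as you yourself note. Your minimality device also buys nothing: when $p$ is discriminating in both graphs, the collider and parent-of-$v$ structure of $z_1,\dots,z_m,u$ coincides automatically by the definition of a discriminating path, so no induction on path length is needed for that, and minimality of $p$ gives no handle on the remaining disagreement at $Y$.

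What actually closes the gap in the paper is a different mechanism, driven by Lemma~\ref{lem:replacement_bidirected} rather than by Theorem~\ref{thm:kclosure-char} directly. Since $u\leftrightarrow Y$ in $K_1$, that lemma produces, for the separator $S$ of $a,v$ in $K_1$, a d-connecting path $q$ in the underlying DAG $D_1$ from $u$ into $Y$ (arrowhead at $Y$), consisting of DAG edges only; one takes $q$ shortest and walks along it away from $Y$. At each step, concatenating the prefix $a\srightarrow\cdots\leftrightarrow u$ of $p$ with a piece of $q$ and a single edge to $v$ must remain blocked given $S$ (else $\ci{a}{v}{S}$ fails in $K_1$), which forces an arrowhead and hence a collider with $v$; that collider is either unshielded---in which case it transfers to $K_2$ by hypothesis and an acyclicity chase produces a d-connecting path into $Y$ shorter than $q$, a contradiction---or shielded, and the walk continues to the next node of $q$. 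When the walk exhausts $q$, it has produced a directed path from $u$ to $Y$, contradicting the ancestrality encoded by $u\leftrightarrow Y$. None of this machinery---the replacement-path lemma, the shortest-path minimality, the shielded/unshielded dichotomy propagated through the shared skeleton and unshielded colliders---appears in your sketch, and without it the ``careful bookkeeping'' you defer to is not bookkeeping but the proof itself.
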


The above lemma shows that discriminating paths, although may exist in \kclosure graphs,  do not alter the equivalence class by themselves. Hence, for the graphical characterization of the equivalence between \kclosure graphs, we can drop the discriminating path condition. %
\begin{corollary}
\label{cor:kclosureequivalence}
    Two \kclosure graphs $K_1, K_2$ are Markov equivalent if and only if 
    \begin{enumerate}
        \item They have the same skeleton and
        \item They have the same unshielded colliders.
    \end{enumerate}
\end{corollary}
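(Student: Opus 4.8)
The plan is to derive the corollary as an immediate specialization of the Richardson--Spirtes characterization in Theorem \ref{thm:richardsonspirtes}, using Lemma \ref{lem:kclosurenodisc} to discharge its third condition. First I would recall that by Lemma \ref{lem:kclosureMAG} both $K_1$ and $K_2$ are maximal ancestral graphs, so Theorem \ref{thm:richardsonspirtes} applies verbatim to the pair $(K_1,K_2)$: they are Markov equivalent precisely when they share (i) the same skeleton, (ii) the same unshielded colliders, and (iii) the same collider status along every discriminating path. The corollary asserts that for \kclosure graphs conditions (i) and (ii) alone already characterize Markov equivalence, so essentially all the work is in showing that (iii) is automatic once (i) and (ii) hold.

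For the forward direction, suppose $K_1 \sim K_2$. Then Theorem \ref{thm:richardsonspirtes} gives all three conditions, and in particular conditions (i) and (ii), which is exactly the claim in this direction; no further argument is needed.

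For the converse, suppose $K_1$ and $K_2$ have the same skeleton and the same unshielded colliders. Conditions (i) and (ii) of Theorem \ref{thm:richardsonspirtes} hold by hypothesis, so it remains only to verify condition (iii). This is precisely the content of Lemma \ref{lem:kclosurenodisc}: under the assumption of a common skeleton and common unshielded colliders, every node $Y$ lying on a discriminating path has the same collider status in $K_1$ and $K_2$. Feeding this into Theorem \ref{thm:richardsonspirtes} yields $K_1 \sim K_2$, completing the proof.

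The genuine obstacle is not in this corollary but in Lemma \ref{lem:kclosurenodisc}, which I am entitled to assume here and which is exactly what rules out discriminating paths as an independent source of inequivalence. The only point requiring care in the write-up is to ensure that condition (iii) is checked against every node lying on a discriminating path in \emph{either} graph, and that a path which is discriminating in one graph has a well-defined, matching collider structure in the other. Since the skeleton and unshielded colliders coincide, the interior-collider-and-parent-of-endpoint structure that defines a discriminating path is preserved across $K_1$ and $K_2$, so Lemma \ref{lem:kclosurenodisc} applies to exactly the family of paths that Theorem \ref{thm:richardsonspirtes} requires, and the reduction goes through cleanly.
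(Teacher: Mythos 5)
Your proposal is correct and follows essentially the same route as the paper's own proof: invoke Lemma \ref{lem:kclosureMAG} so that Theorem \ref{thm:richardsonspirtes} applies, read off the forward direction directly, and discharge the discriminating-path condition in the converse via Lemma \ref{lem:kclosurenodisc}. Your closing remark about checking condition (iii) for paths discriminating in either graph is a reasonable point of care, but it does not change the argument, which matches the paper's.
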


In the next section, we will prove a \kmarkov equivalence characterization based on %
Lemma \ref{lem:kclosure}, which will later be useful for learning, since we can employ algorithms devised for learning MAGs. %

\subsection{\kmarkov Equivalence}
Our main result in this section is the following theorem that characterizes  \kmarkov equivalence: %
\begin{theorem}
\label{thm:kmarkov}
Two DAGs $D_1,D_2$ are k-Markov equivalent if and only if $\C{D_1}$ and $\C{D_2}$ are Markov equivalent.
\end{theorem}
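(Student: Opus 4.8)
The plan is to prove both implications by reducing \kmarkov equivalence of the DAGs to full Markov equivalence of their \kclosure graphs, moving d-separation information between the two representations through Lemma \ref{lem:kclosure}. The ($\Leftarrow$) direction is immediate: if \C{D_1} and \C{D_2} are Markov equivalent then, being MAGs (Lemma \ref{lem:kclosureMAG}), they entail the same d-separations for \emph{all} conditioning sets, in particular the same \dkci statements; by Lemma \ref{lem:kclosure} each \C{D_i} agrees with $D_i$ on \dkci statements, so chaining these gives that $D_1$ and $D_2$ agree on all \dkci statements, i.e. $D_1\sim_k D_2$. No new combinatorics is needed here.

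For the ($\Rightarrow$) direction, assume $D_1\sim_k D_2$. By Lemma \ref{lem:kclosure} the two closures entail the same \dkci statements, and by Corollary \ref{cor:kclosureequivalence} it suffices to show they have the same skeleton and the same unshielded colliders (the discriminating-path condition having already been discharged for \kclosure graphs). The skeleton is easy: by Definition \ref{def:kclosure}, $a,b$ are adjacent in \C{D_i} iff they are \kcovered in $D_i$, i.e. iff there is no $c$ with $\lvert c\rvert\le k$ and $(\ci{a}{b}{c})_{D_i}$. This predicate depends only on the \dkci statements, so equality of those statements forces identical skeletons, and hence a common set of unshielded triples.

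The substance is the unshielded-collider comparison. Fix an unshielded triple $(a,c,b)$; since $a,b$ are non-adjacent they are not \kcovered, so at least one separating set $s$ with $\lvert s\rvert\le k$ exists. I would establish the bounded sepset criterion: $(a,c,b)$ is a collider in \C{D_i} iff every $s$ with $\lvert s\rvert\le k$ and $(\ci{a}{b}{s})_{\C{D_i}}$ excludes $c$, equivalently it is a non-collider iff some such $s$ contains $c$. Once this is proven, collider status is expressed purely in terms of \dkci statements (by Lemma \ref{lem:kclosure}, separators of size $\le k$ are shared by $D_i$ and \C{D_i}), so equal \dkci statements force identical collider labels and Corollary \ref{cor:kclosureequivalence} finishes the proof. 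The collider half of the criterion is trivial: if $c$ is a collider then the length-two path $\langle a,c,b\rangle$ is rendered m-connecting by any $s\ni c$, so no separator can contain $c$.

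The hard part, which I expect to be the main obstacle, is the non-collider half: exhibiting a separating set of size at most $k$ that actually contains $c$. Here the \kclosure structure is essential. By Definition \ref{def:kclosure} the mark at $c$ on its edge to $a$ is a tail exactly when $c\in An(a)$ in $D$, so a non-collider at $c$ means $c\in An_D(\{a,b\})$, say $c\to a$ in \C{D_i}. I would start from a separator $s_0$ with $\lvert s_0\rvert\le k$ and $c\notin s_0$ and run a path-surgery / minimal-separator exchange argument: every $a$--$b$ path using the edge $c\to a$ has a tail at $c$ and is therefore blocked by conditioning on $c$, while the $a$--$b$ paths avoiding $c$ can be controlled by a subset of $s_0$, letting me trade nodes of $s_0$ for $c$ without breaking separation or exceeding the budget $k$. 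The delicacy is precisely the cardinality bound, since naively inserting $c$ may both open collider paths through descendants of $c$ and overshoot $k$; making the exchange size-preserving is the crux. This parallels Lemma \ref{lem:kclosurenodisc}: it is the orientation--ancestrality coupling baked into \kclosure graphs, absent in arbitrary MAGs, that pins down how small separators interact with adjacent vertices.
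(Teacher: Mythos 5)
Your overall route coincides with the paper's: both directions reduce to Corollary \ref{cor:kclosureequivalence}, skeleton agreement follows because adjacency in $\C{D_i}$ holds exactly when the pair is \kcovered in $D_i$ (a property of the \dkci statements alone), collider status of unshielded triples is decided by which size-$\le k$ separating sets contain the middle node, and Lemma \ref{lem:kclosure} transfers all such statements between $D_i$ and $\C{D_i}$. Your ``bounded sepset criterion'' is correct and is exactly what the paper's argument uses, only phrased as a determinacy condition rather than as a contradiction: the paper takes a size-$\le k$ separator $S_1$ of $a,b$ in $\C{D_1}$ (Lemma \ref{lem:nonadjindepsize}), notes $c\notin S_1$ when $c$ is a collider there, and observes that $S_1$ then fails to separate $a,b$ in $\C{D_2}$ where $c$ is a non-collider, which contradicts $D_1\sim_k D_2$ after transferring both statements to the DAGs.

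The gap is in your plan for the non-collider half, which you call the crux: there is nothing to construct. If $\langle a,c,b\rangle$ is an unshielded triple and $c$ is a non-collider on it in $\C{D_i}$, the two-edge path $a\,\starstar\, c\,\starstar\, b$ can only be blocked at $c$, and since $c$ is a non-collider this forces $c$ into the conditioning set; hence \emph{every} set d-separating $a,b$ in $\C{D_i}$, of any size, contains $c$ (the paper dismisses this with a single ``Clearly''). Combined with the existence of some separator of size at most $k$ --- which you already established from non-adjacency --- this yields the required separator containing $c$ in one line. Your proposed ``path-surgery / minimal-separator exchange'' starts from ``a separator $s_0$ with $\lvert s_0\rvert\le k$ and $c\notin s_0$,'' but in the non-collider case no such $s_0$ exists, precisely because the path $a\,\starstar\, c\,\starstar\, b$ would be open given $s_0$. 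So the planned argument is vacuous, the worry about preserving the cardinality budget is a non-issue, and the ancestrality bookkeeping ($c\in An_D(a)$, tails at $c$) is not needed at this step. With that one-line replacement your proof closes and is essentially the paper's.
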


Thus, \kmarkov equivalence of two DAGs can be reduced to checking Markov equivalence of their \kclosures, which can be checked locally using the equivalence condition in Corollary \ref{cor:kclosureequivalence}. Based on this result, it is clear that, just by using conditional independence tests, we can only hope to narrow down our search up to the equivalence class of \kclosure graphs. 

Note that when all the CI tests can be conducted, we can learn the arrowheads and tails that consistently appear in all Markov equivalent DAGs. By operating at the \kclosure graph-level, we can attain a similar objective and hope to learn the invariant arrowheads and tails.

\begin{definition}[Edge union]
    For our purposes, we define the edge union operation as follows:
    \begin{equation*}
        a\rightarrow b \cup a\leftarrow b =\quad a\mbox{ --- }b, \quad a\rightarrow b \cup a\,\leftarrow b \cup a\leftrightarrow b =\quad a\, o\mbox{---}o \, b, \quad a\rightarrow b \cup a\leftrightarrow b = \quad a\,\crightarrow \, b
\end{equation*}
\end{definition}
\begin{figure}[t!]
	\centering
	\begin{subfigure}[t]{.2\textwidth}
        \centering
		\begin{tikzpicture}
		\tikzset{vertex/.style = {shape=circle,draw,minimum size=1.5em}}
		\tikzset{edge/.style = {->,> = latex'}}
		\node (a) at (0,0) {$a$};
		\node (c) at (1.,0) {$c$};
		\node (b) at (2,0) {$b$};
		\node (d) at (0.5,1.)  {$d$};
		
		\draw[->] (b) to (c);
		\draw[->] (d) to (c);
		\draw[->] (d) to (a);		
	\end{tikzpicture}		
		\caption{$D_1$}
	\end{subfigure}
    \begin{subfigure}[t]{.2\textwidth}
    \centering    		\begin{tikzpicture}
		\tikzset{vertex/.style = {shape=circle,draw,minimum size=1.5em}}
		\tikzset{edge/.style = {->,> = latex'}}
		\node (a) at (0,0) {$a$};
		\node (c) at (1.,0) {$c$};
		\node (b) at (2,0) {$b$};
		\node (d) at (0.5,1.)  {$d$};
	
		\draw[<->] (a) to (c);
		\draw[->] (b) to (c);
		\draw[->] (d) to (c);
		\draw[->] (d) to (a);		
	\end{tikzpicture}	
            \caption{$\C{D_1}$}
      \end{subfigure}
 	\begin{subfigure}[t]{.2\textwidth}
        \centering
		\begin{tikzpicture}
		\tikzset{vertex/.style = {shape=circle,draw,minimum size=1.5em}}
		\tikzset{edge/.style = {->,> = latex'}}
		\node (a) at (0,0) {$a$};
		\node (c) at (1.,0) {$c$};
		\node (b) at (2,0) {$b$};
		\node (d) at (0.5,1.)  {$d$};
		
		\draw[->] (a) to (c);
		\draw[->] (b) to (c);
		\draw[->] (a) to (d);		
	\end{tikzpicture}		
		\caption{$D_2$}
	\end{subfigure}
    \begin{subfigure}[t]{.2\textwidth}
    \centering    		\begin{tikzpicture}
		\tikzset{vertex/.style = {shape=circle,draw,minimum size=1.5em}}
		\tikzset{edge/.style = {->,> = latex'}}
		\node (a) at (0,0) {$a$};
		\node (c) at (1.,0) {$c$};
		\node (b) at (2,0) {$b$};
		\node (d) at (0.5,1.)  {$d$};
	
		\draw[->] (a) to (c);
		\draw[->] (b) to (c);
		\draw[<->] (d) to (c);
		\draw[->] (a) to (d);		
	\end{tikzpicture}	
            \caption{$\C{D_2}$}
      \end{subfigure}
      \begin{subfigure}[t]{.15\textwidth}
        \begin{tikzpicture}
		\tikzset{vertex/.style = {shape=circle,draw,minimum size=1.5em}}
		\tikzset{edge/.style = {->,> = latex'}}
		\node (a) at (0,0) {$a$};
		\node (c) at (1.,0) {$c$};
		\node (b) at (2,0) {$b$};
		\node (d) at (0.5,1.)  {$d$};
        \node (ae) at (0.17,0) [label=left:{$$},circ];		
        \node (de) at (0.6,0.8) [label=left:{$$},circ];		
	
		\draw[->] (a) to (c);
		\draw[->] (b) to (c);
		\draw[->] (d) to (c);
		\draw[-] (a) to (d);		
	\end{tikzpicture}	\caption{ $\varepsilon_k$}
	\end{subfigure} 
 \caption{Two \kmarkov equivalent DAGs for $k=0$ with the same \kessential graph. %
 $D_1\!\sim_k\!D_2$ for $k=0$. Thus, $\C{D_1}\!\sim\! \C{D_2}$. Thus, they have the same %
 \kessential graphs $\E{D_1}\!=\!\E{D_2}\!=\!\varepsilon_k$, obtained as the edge union of their \kclosures. Note that there are no Markov equivalent \kclosures, where $a,d$ are connected with a bidirected edge since removing that edge from the \kclosure graph would make $a,d$ separable by empty set, which means it would not be a valid \kclosure graph by Theorem \ref{thm:kclosure-char}. Thus, $a,d$ is connected via undirected edge. Similarly, there is no Markov equivalent \kclosure where %
 $c\leftrightarrow b$ since $c,b$ would not be \kcovered in any Markov equivalent \kclosure graph.
 }
	\label{fig:kess}
\end{figure}

\begin{definition}[\kessential graph] 
\label{def:kessential}
For any DAG $D$, the edge union of all \kclosure graphs that are Markov equivalent to $\C{D}$ is called the \kessential graph\footnote{The reader will notice \kessential graph visually resembles a partial ancestral graph (PAG)~\cite{zhang2008completeness} more than an essential graph due to the circle marks. Our choice of the name \kessential is motivated by the fact that we assume no latent confounders in the system and thus it positions our work better with respect to the related work.} of $D$, shown by $\E{D}$. %
\end{definition}

For example, among Markov equivalent \kclosures, if $a,b$ are adjacent only as $a\rightarrow b$ the \kessential graph will have the edge $a\rightarrow b$. Thus, the \kessential graph will preserve the invariant arrowhead and tail marks. The difference between $a\mbox{ --- }b$ and $a\, o\mbox{---}o \, b$ is significant from a causal perspective: In the %
former, two variables cause each other. In the  %
latter, there is some \kmarkov equivalent DAG where $a,b$ do not cause each other and are simply not separable by conditioning sets of size at most $k$. 

For any edge type proposed %
in the edge union definition, there are relevant instances of \kessential graphs. In Figure \ref{fig:kess}, the edges $a$\mbox{---}$b$, $a\crightarrow c$, $c\leftarrow b$ appear in the \kessential graph. %
Similarly, the \kessential graph of $D$ in Figure~\ref{fig:validkclosure} will contain the edge $c\,\circlecircle\, d$ since $c\rightarrow d, c\leftarrow d, c\leftrightarrow d$ are all possible edges in different Markov equivalent \kclosures. Figure \ref{fig:size1kmec} in Section \ref{sec:app-size1kess} shows an instance where $\leftrightarrow$ appears in the \kessential graph. This example also demonstrates that our representation is strictly richer than the graph union that is recovered by LOCI~\cite{wienobst2020recovering}, which orients $d\rightarrow c, a\rightarrow c$, and hence cannot distinguish them from the possible edges between $c\leftarrow b$ unlike our representation. Similarly, consider the simple graph $u\rightarrow v$ with $k=0$. Our representation yields $u-v$ since there is no DAG where $u,v$ are confounded in any way other than the direct edge. In a triangle graph $w\rightarrow u, w\rightarrow v,u\rightarrow v$, our representation recovers $u\circlecircle v$ which shows there are graphs where $u,v$ do not cause each other. LOCI cannot make such distinction as it uses $u-v$ in both graphs.

Therefore, the variant edges that can be arrowheads or tails among different \kmarkov equivalent graphs are replaced with circles and invariant arrowheads and invariant tails are preserved in the \kessential graph. Thus, \kessential graph captures the causal information that is preserved across all \kmarkov equivalent causal graphs. Different from essential graphs where we can test all the conditional independence relations, existence of an arrow $a\rightarrow b$ in a \kessential graph does not mean $a$ causes $b$ in every \kmarkov equivalent DAG. Rather, it means that in any \kmarkov equivalent DAG where $a,b$ are adjacent, $a$ causes $b$.

It is worth noting that \kessential graphs are in general more informative than partial ancestral graphs (PAGs), which are defined as the edge union of all Markov equivalent MAGs, where the union of $\leftarrow, \rightarrow $ is defined to give $\circlecircle$ instead of the undirected edge. %
Since every Markov equivalent \kclosure graph is a MAG but not every Markov equivalent MAG is a \kclosure graph, in general \kessential graphs may have invariant arrowheads and tails where PAGs only have circles. We call any graph that contains arrowheads, tails, or circles as edge marks a partial mixed graph (PMG). 
\begin{definition}
\label{def:subset}
    For two partial mixed graphs $A,B$ with the same skeleton, $A$ is said to be a subset of $B$, shown by $A\subseteq B$, iff the following conditions hold for any pair $a,b$
    \begin{equation}
        1.\hspace{0.2em} (a\tailstar b)_{A} \Leftarrow(a\tailstar b)_{B},\quad\quad 2. \hspace{0.2em} (a\sleftarrow b)_{A} \Leftarrow (a\sleftarrow b)_{B}.
        \end{equation}
\end{definition}

Note that asterisk $*$ stands for a wild-card which can either be an arrowhead, tail, or circle. 
According to the definition, any circle mark in $A$ is also a circle mark in $B$. We have the following lemma that relates \kessential graphs to partial ancestral graphs of \kclosures. 
\begin{lemma}
\label{lem:kesssubsetPAG}
    $\E{D}\subseteq \PAG{\C{D}}$.
\end{lemma}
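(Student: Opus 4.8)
The plan is to realize both $\E{D}$ and $\PAG{\C{D}}$ as edge unions over nested families of MAGs and then argue mark-by-mark. Write $\mathcal{M}$ for the set of all MAGs Markov equivalent to $\C{D}$ and $\mathcal{K}$ for the set of all \kclosure graphs Markov equivalent to $\C{D}$. By Lemma \ref{lem:kclosureMAG} every member of $\mathcal{K}$ is a MAG, and since each is Markov equivalent to $\C{D}$ we obtain the key containment $\mathcal{K}\subseteq\mathcal{M}$; both families are nonempty since $\C{D}$ itself lies in each. By definition $\PAG{\C{D}}$ is the standard edge union over $\mathcal{M}$ and $\E{D}$ is the edge union over $\mathcal{K}$. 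Since Markov equivalent MAGs share a skeleton (Theorem \ref{thm:richardsonspirtes}), every graph in $\mathcal{M}\cup\mathcal{K}$ has the skeleton of $\C{D}$, so both unions have that same skeleton; this discharges the precondition of Definition \ref{def:subset}, and it remains only to compare endpoint marks on each common edge.

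First I would record the principle driving the argument: a mark at an endpoint of the PAG is a definite mark (tail or arrowhead) precisely when every member of $\mathcal{M}$ carries that same mark there, and it is a circle otherwise. Consequently, whenever $\PAG{\C{D}}$ has a definite mark at $a$ on an edge $\{a,b\}$, all of $\mathcal{M}$---and hence, by $\mathcal{K}\subseteq\mathcal{M}$, all of $\mathcal{K}$---agree on that mark at $a$. The content of the lemma is then to check that the (non-standard) edge-union operation defining $\E{D}$ preserves such agreed-upon definite marks, using that each \kclosure graph carries only marks coming from $a\rightarrow b$, $a\leftarrow b$, or $a\leftrightarrow b$.

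I would verify this by a short case analysis driven by the edge-union definition. If $\PAG{\C{D}}$ has an arrowhead at $a$, then no member of $\mathcal{K}$ uses $a\rightarrow b$, so the only edges occurring among $\mathcal{K}$ are $a\leftarrow b$ and $a\leftrightarrow b$; the edge union of any nonempty subfamily of these is $\leftarrow$, $\leftrightarrow$, or $\cleftarrow$ (the mirror of $\rightarrow\cup\leftrightarrow=\crightarrow$), each of which has an arrowhead at $a$, giving $(a\sleftarrow b)_{\E{D}}$. If instead $\PAG{\C{D}}$ has a tail at $a$, then every member of $\mathcal{K}$ has a tail at $a$, which forces the edge $a\rightarrow b$ in each (the only edge type with a tail at $a$); the edge union is then simply $a\rightarrow b$, giving $(a\tailstar b)_{\E{D}}$. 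Both implications of Definition \ref{def:subset} follow, establishing $\E{D}\subseteq\PAG{\C{D}}$.

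The one place demanding care---and the natural candidate for the main obstacle---is the non-standard clause $a\rightarrow b\cup a\leftarrow b = a\mbox{---}b$ in the edge union for $\E{D}$, which produces two tails where the standard PAG convention would produce $\circlecircle$. The observation that dissolves this difficulty is that this clause fires only when $\mathcal{K}$ contains both orientations $a\rightarrow b$ and $a\leftarrow b$, in which case $\mathcal{M}$ contains them too and $\PAG{\C{D}}$ shows circles at both endpoints; the subset conditions are then vacuous on this edge, since both antecedents fail. Thus the non-standard rule can only make $\E{D}$ strictly more informative than the PAG on such edges, which is exactly consistent with $\E{D}\subseteq\PAG{\C{D}}$ rather than in tension with it, because Definition \ref{def:subset} constrains only the direction from PAG marks to \kessential marks and never the converse.
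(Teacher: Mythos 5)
Your proposal is correct and follows essentially the same route as the paper: both proofs rest on the containment that every Markov equivalent \kclosure graph is a Markov equivalent MAG (Lemma \ref{lem:kclosureMAG} / Theorem \ref{thm:kclosure-char}), so that marks invariant over the larger family of MAGs remain invariant over the subfamily of \kclosure graphs, and the conclusion follows from Definition \ref{def:subset}. The only difference is that you spell out the mark-by-mark case analysis of the edge-union operation and the vacuousness of Definition \ref{def:subset} on edges where the non-standard clause $a\rightarrow b \cup a\leftarrow b = a\mbox{---}b$ fires, details the paper leaves implicit.
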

\begin{proof}
    By Theorem \ref{thm:kclosure-char}, %
    every \kclosure graph is a MAG whereas every MAG is not a \kclosure graph. Thus the set of Markov equivalent \kclosure graphs form a subset of the set of Markov equivalent MAGs. Thus, the arrowheads and tails that appear in all Markov equivalent MAGs must also appear in all the Markov equivalent \kclosure graphs. The result follows from Definition \ref{def:subset}. 
\end{proof}

In the next section, we propose a sound algorithm for learning \kessential graphs from data.

\section{Learning \kessential Graphs}
Constraint-based causal discovery algorithms use CI tests to extract all the causal knowledge that can be identified from data. In the previous section, we proposed a compact graphical characterization of what is learnable from such statistical tests. In this section, we propose a constraint-based learning algorithm. Since \kclosure graphs are a special class of maximal ancestral graphs, we can use FCI algorithm that is devised for learning the invariant arrowheads and tails of a maximal ancestral graph. 

\begin{algorithm}[tb]
   \caption{\kPC Algorithm}
   \label{alg:kPC}
\begin{algorithmic}
   \STATE \textbf{Input:} Observational Data, $\mathcal{V}$, $k$, CI Tester $(\ci{.}{.}{.})$.
    \STATE \textbf{Step 0:} Initiate a complete graph $K$ between $\mathcal{V}$ with circle edges $\circlecircle$. 
    \STATE \textbf{Step 1:} Find separating sets $S_{a,b}$ for every pair $a,b\in \mathcal{V}$ by conditioning on subsets of size at most $k$ . 
    \STATE \textbf{Step 2:} Update $K$ by removing the edges between pairs that are separable. 
    \STATE \textbf{Step 3:} Orient unshielded colliders of $K$: For any induced subgraph $a\,\circlecircle\, c \,\circlecircle\, b$, set $a\,\crightarrow\, c\,\cleftarrow\, b$ for any non-adjacent pair $a,b$  where $S_{a,b}$ does not contain $c$. 
    \STATE \textbf{Step 4:} $K\leftarrow \text{FCI}\_\text{Orient}(K)$ \quad\quad \# \emph{See Algorithm $2$ in Section \ref{sec:FCI-rules}}.
    \STATE \textbf{Step 5:} 
    For any node $a$ that has no incoming edges (i.e., $a\leftarrow, a\leftrightarrow, a\cleftarrow $) %
    construct the sets $\mathcal{B},\mathcal{C}$:
    \begin{equation*}
        \mathcal{B}=\{b\in Ne(a): a\crightarrow b\}, \quad
        \mathcal{C}=\{c\in Ne(a): a\circlecircle c\}
    \end{equation*}
    and define sets $\mathcal{B}^*$ as the set of nodes that are non-adjacent to any of the nodes in $C$ and  $\mathcal{C}^*$ as the set of nodes that are non-adjacent to other nodes in $C$:
    \begin{align*}
        \mathcal{B}^*=\{b\in \mathcal{B}: b,c\text{ are non-adjacent }\forall c\in C \}, \quad
        \mathcal{C}^*=\{c'\in \mathcal{C}: c',c\text{ are non-adjacent }\forall c'\neq c, c'\in \mathcal{C}\}
    \end{align*}    
    \begin{itemize}
    
    \item \R{11}: Orient $a\crightarrow b$ as $a\rightarrow b, \forall b\in \mathcal{B}^*$.
    \item \R{12}: Orient $a\circlecircle c$ as $a\mbox{---}c,\forall c\in \mathcal{C}^*$.
    \end{itemize}

    \textbf{Output:} %
    $K$
\end{algorithmic}
\end{algorithm}

FCI algorithm is sound and complete for learning PAGs: It can recover all invariant arrowheads and all invariant tails. One might think that FCI algorithm may also be sound and complete for our task. However, this is not true. Although sound, FCI is not complete for learning \kessential graphs.\footnote{It is worth noting that FCI uses undirected edges to represent selection bias. We use undirected edges for a different purpose. Thus, orientation rules of FCI aimed at orienting undirected edges should not be used here.} %

For soundness, we need the following lemma, which shows that discriminating paths do not carry extra information about the underlying causal structure.

\begin{lemma}
\label{lem:nodiscpath}
    In any \kclosure graph, if there is a discriminating path $p$ for $\langle u,Y,v\rangle$ and $u\leftrightarrow Y\sleftarrow v$ is a collider along $p$, then the orientations $u\srightarrow Y$ and $Y\sleftarrow v$ can be learned by first finding all unshielded colliders, and then applying the orientation rules \R{1} and \R{2} of $FCI$.
\end{lemma}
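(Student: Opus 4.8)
The plan is to work entirely inside the \kclosure graph $K$, which is a MAG by Lemma \ref{lem:kclosureMAG}, and first to pin down the exact local shape of the path. Writing $p=\langle a,z_1,\dots,z_m,u,Y,v\rangle$, the definition of a discriminating path forces every $z_i$ and $u$ to be a collider on $p$ and a parent of $v$, so $z_i\rightarrow v$ and $u\rightarrow v$, the internal segment is a chain $a\srightarrow z_1\leftrightarrow\cdots\leftrightarrow z_m\leftrightarrow u\leftrightarrow Y$, and $a,v$ are non-adjacent. The first substantive step I would carry out is to determine the $Y$--$v$ edge in the collider case: since $u\rightarrow v$ and $u\leftrightarrow Y$, acyclicity of the MAG excludes $v\rightarrow Y$ (it would give $u\in An(Y)$, contradicting $u\leftrightarrow Y$), while the given arrowhead at $Y$ excludes $Y\rightarrow v$; hence $Y\leftrightarrow v$. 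This is the key preliminary fact, because it shows the arrowhead I must produce at $Y$ on the $Y$--$v$ edge cannot be obtained from \R{1}, which always deposits a \emph{tail} at the far endpoint.

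Next I would handle the ``easy'' propagation. After Step 3 orients all unshielded colliders, the unshielded triples along the initial segment of $p$ deposit arrowheads into the chain (taking $p$ to be a shortest discriminating path removes chords among the $z_i$, so the consecutive triples are genuinely unshielded), and \R{1} applied to the unshielded non-collider triples through $v$ orients the edges $z_i\rightarrow v$ and $u\rightarrow v$ with tails at the $z_i$ and at $u$. I would then record the intended reduction between the two target arrowheads: once $Y\sleftarrow v$ is available, the arrowhead $u\srightarrow Y$ follows from \R{2}, for instance through the pattern $u\rightarrow v\srightarrow Y$ with a circle still standing at $Y$ on the $u$--$Y$ edge. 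So the whole lemma reduces to producing the single arrowhead $Y\sleftarrow v$.

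This last arrowhead is \textbf{the main obstacle}. The triple $\langle u,Y,v\rangle$ is shielded (because $u\rightarrow v$), so Step 3 cannot orient it directly; because $Y\leftrightarrow v$ also carries an arrowhead at $v$, neither \R{1} nor an \R{2} anchored at $v$ can create the arrowhead at $Y$, since $v$ has no outgoing mark along $p$. The resolution must invoke the defining property of \kclosure graphs in Theorem \ref{thm:kclosure-char}: because $Y\leftrightarrow v$ is a genuine \kclosure bidirected edge, deleting it leaves $Y,v$ inseparable by every conditioning set of size at most $k$, which forces a d-connecting route from $Y$ to $v$ beyond the chain through $u$. I would extract from this forced route an auxiliary neighbor $w$ of $Y$ carrying an arrowhead into $Y$ with $w$ non-adjacent to $v$, so that $\langle w,Y,v\rangle$ is an unshielded collider whose orientation yields $Y\sleftarrow v$; here \R{1} and \R{2} are used to expose the incoming arrowhead $w\srightarrow Y$. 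The natural way to organize this is induction on the length of the discriminating path, peeling off the forced route as a shorter discriminating configuration for $Y$ and $v$. The genuinely delicate point, where I expect the real work to lie, is verifying that the \kclosure inseparability condition supplies such a $w$ with the correct non-adjacency to $v$ and an orientable arrowhead at $Y$, rather than only guaranteeing abstract inseparability.

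With $Y\sleftarrow v$ secured, the \R{2} step recorded above delivers $u\srightarrow Y$, giving both orientations using only unshielded colliders together with \R{1} and \R{2}, as claimed. I would close by checking consistency with Lemma \ref{lem:kclosurenodisc}, which already guarantees that the collider status of $Y$ is determined by the skeleton and the unshielded colliders; this confirms that the arrowheads produced are the invariant ones and hence that the derivation is sound.
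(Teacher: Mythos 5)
Your opening moves coincide with the paper's: you pin down $Y\leftrightarrow v$ via the almost-directed-cycle argument, you observe correctly that \R{1} can never deposit the needed arrowhead at $Y$, and you identify correctly that the arrowhead $Y\sleftarrow v$ must come from an unshielded collider $X\srightarrow Y\sleftarrow v$ manufactured out of the \kclosure property of bidirected edges. But your proof stops exactly where the paper's proof starts doing work: what you defer as ``the genuinely delicate point, where I expect the real work to lie'' is not a verification left for later --- it \emph{is} the proof, and you have not carried it out. The paper's argument runs as follows: fix $S$ with $\lvert S\rvert\le k$ and $(\ci{a}{v}{S})$ (it exists by Lemma \ref{lem:nonadjindepsize}), and note that every collider of $p$, including $u$, lies in $S$ while $Y\notin S$. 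Apply Lemma \ref{lem:replacement_bidirected} to the edge $u\leftrightarrow Y$ to obtain, conditioned on $S$, a d-connecting path in $D$ between $u$ and $Y$ with an arrow into $Y$; let $q$ be a \emph{shortest} such path and $X$ the node before $Y$, so $X\rightarrow Y$. If $X,v$ are non-adjacent, the unshielded collider $X\rightarrow Y\leftrightarrow v$ finishes the job. Otherwise one must show $X\leftrightarrow v$ by concatenating the discriminating segment from $a$ to $u$, the subpath of $q$ from $u$ to $X$, and the edge $X$--$v$, and arguing from $(\ci{a}{v}{S})$ that only $X$ can block this path, hence $X$ is a collider into $v$; then one moves to the node $V$ before $X$, uses minimality of $q$ to obtain the non-adjacencies that make the relevant triples unshielded (a chord would let the path ``jump over'' $X$, yielding a shorter $q$), and terminates the recursion by noting that if every such collider were shielded, $q$ would become a directed path from $u$ to $Y$, contradicting $u\leftrightarrow Y$. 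This induction along the replacement path, powered jointly by the separator $S$ and acyclicity, is entirely absent from your proposal.

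Beyond the omission, the two reductions you do commit to are themselves defective. First, you run the inseparability argument on the edge $Y\leftrightarrow v$ itself, but then no independence fact is available to handle the case where your witness $w$ (with $w\rightarrow Y$) is adjacent to $v$: the only usable statement is $(\ci{a}{v}{S})$, and the natural concatenation from $a$ through $u\leftrightarrow Y\leftarrow w$ onward to $v$ is blocked at $Y$ itself ($Y$ is a collider there and need not lie in $An(S)$), so nothing forces $w\leftrightarrow v$ and the induction cannot advance. This is precisely why the paper extracts the arrowhead $Y\sleftarrow v$ from the \emph{other} bidirected edge $u\leftrightarrow Y$, whose replacement path stays away from $v$ and can be extended by the single edge $X$--$v$ so that the separator fact bites. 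Second, your \R{2} shortcut ($u\rightarrow v\srightarrow Y$ yields $u\srightarrow Y$) presupposes that the edge between $u$ and $v$ has already been fully oriented as $u\rightarrow v$, tail included; your justification via \R{1} does not go through, because the only nodes carrying an arrowhead into $u$ along $p$ are $z_m$ and $Y$, and both are adjacent to $v$, so the non-adjacency precondition of \R{1} can never be met from within the path (your side claim that a shortest discriminating path has no chords is likewise unjustified, since skipping along a chord need not preserve the discriminating property). The paper avoids this dependence altogether by proving $u\srightarrow Y$ through a second, symmetric walk: the shortest d-connecting path into $Y$ supplied by the edge $Y\leftrightarrow v$, with termination now provided by the impossibility of a directed path from $v$ to $Y$.
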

The above lemma shows that the colliders that are part of discriminating paths can be learned by simply orienting the unshielded colliders and then applying the orientation rules of FCI. This is useful since we do not need to search for colliders along discriminating paths during learning.%

Our constraint-based causal discovery algorithm, called \kPC, is given in Algorithm $1$. It uses FCI Orient algorithm in Section \ref{sec:FCI-rules}, with two extra rules specific for learning \kessential graph. 
\begin{corollary}
\label{cor:kPCstep4}
    \kPC without Step $5$ is sound and complete for learning $\PAG{\C{D}}$ of any DAG $D$.%
\end{corollary}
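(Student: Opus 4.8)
The plan is to read Steps $0$--$4$ of \kPC as the FCI algorithm equipped with a conditional-independence oracle whose conditioning sets are capped at size $k$, and then to show that this cap costs nothing when the target graph is $\C{D}$. Since $\C{D}$ is a maximal ancestral graph by Lemma \ref{lem:kclosureMAG}, and FCI is sound and complete for recovering the PAG of a MAG, it is enough to check that (i) the skeleton and unshielded colliders of $\C{D}$ are identified using only \degree{k} CI statements, and (ii) the orientation phase then reproduces $\PAG{\C{D}}$ exactly. The fact underpinning everything is that \emph{every} non-adjacent pair of $\C{D}$ is d-separable by a set of size at most $k$: if $a,b$ are non-adjacent in $\C{D}$ they are not \kcovered in $D$, so $(\ci{a}{b}{c})_D$ for some $c$ with $\lvert c\rvert\leq k$, whence $(\ci{a}{b}{c})_{\C{D}}$ by Lemma \ref{lem:kclosure}. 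Consequently the bounded budget never hides a separating set for a genuinely non-adjacent pair.

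First I would recover the skeleton. Lemma \ref{lem:kclosure} makes the \degree{k} d-separations of $D$ and of $\C{D}$ coincide, so a pair has a separating set of size at most $k$ in one graph iff it does in the other; together with the structural fact above, Steps $1$--$2$ delete an edge precisely for the non-\kcovered pairs and retain exactly the skeleton of $\C{D}$. I would then verify Step $3$ by noting that, for an unshielded triple $\langle a,c,b\rangle$, the collider status of $c$ in the MAG $\C{D}$ is independent of which separating set one uses: a non-collider $c$ on this length-two path can only be blocked by including $c$ in the conditioning set, while a collider $c$ can only be blocked by excluding $c$ and its descendants. Hence for the size-at-most-$k$ set $S_{a,b}$ returned in Step $1$ we have $c\notin S_{a,b}$ iff $c$ is a collider, so Step $3$ orients exactly the unshielded colliders of $\C{D}$.

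With the correct skeleton and unshielded colliders fixed, the orientation phase is purely graphical and requires no further CI tests, so soundness of Step $4$ reduces to the individual soundness of the FCI rules it applies. The delicate issue is completeness through discriminating paths: the textbook completeness argument for FCI leans on the discriminating-path rule \R{4}, whose firing consults separating-set information. Here I would invoke Lemma \ref{lem:nodiscpath}, which shows that in a \kclosure graph every collider lying on a discriminating path is already oriented once the unshielded colliders are set and rules \R{1}, \R{2} are applied; Lemma \ref{lem:kclosurenodisc} (and hence Corollary \ref{cor:kclosureequivalence}) further guarantees that the collider status along discriminating paths in \kclosure graphs is pinned down by skeleton and unshielded colliders alone. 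Together these say that dropping \R{4} sacrifices no orientation, so the configuration reached after Step $4$ coincides with the output of full FCI, namely $\PAG{\C{D}}$; this yields both soundness and completeness.

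I expect the discriminating-path handling to be the main obstacle. I must make sure that (a) omitting the CI-dependent rule \R{4} forfeits no invariant mark, and (b) none of the surviving orientations secretly relies on a separating set whose true cardinality exceeds $k$. Part (a) is exactly what Lemmas \ref{lem:kclosurenodisc} and \ref{lem:nodiscpath} deliver, and part (b) reduces to the separating-set invariance of collider status used in Step $3$, which is why the bounded-budget separating sets are adequate. A final bookkeeping point is to confirm that the two extra rules \R{11} and \R{12} live entirely in Step $5$, so that deleting Step $5$ leaves precisely the FCI-style orientation behaviour analysed above and nothing more.
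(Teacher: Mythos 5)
Your proposal is correct and follows essentially the same route as the paper's proof: the separability of every non-adjacent pair by a set of size at most $k$ (the paper cites its Lemma \ref{lem:nonadjindepsize}, which you re-derive directly and cleanly from Lemma \ref{lem:kclosure} and the definition of \kcovered pairs) secures the skeleton and unshielded colliders from Steps $1$--$3$, and Lemma \ref{lem:nodiscpath} shows that dropping the discriminating-path rule \R{4} forfeits no orientation, so the completeness of rules \R{1}--\R{10} from \cite{zhang2008completeness} carries over to Step $4$. The only bookkeeping you leave implicit is why omitting the remaining rules \R{5}, \R{6}, \R{7} is also harmless: those rules fire only in the presence of selection bias (undirected edges in the MAG), which never occurs in a \kclosure graph --- a one-line observation the paper makes explicitly and that is needed to conclude that Step $4$ behaves identically to full FCI.
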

\begin{proof}
    By Lemma \ref{lem:nonadjindepsize}, any non-adjacent pair are separable by a set of size at most $k$. Thus, a valid separating set for any non-adjacent pair will be found in Step $1$, and will be used to learn the skeleton in Step $2$, and to orient all unshielded colliders in Step $3$.   \cite{zhang2008completeness} proved arrowhead and tail completeness of FCI with orientation rules \R{1} to \R{10}. 
    By Lemma~\ref{lem:nodiscpath}, colliders on discriminating paths will be oriented at the end of Step $4$. Thus, \R{4} that is concerned with discriminating path colliders is not applicable. Similarly, \R{5}, \R{6}, \R{7} are only applicable in graphs with selection bias. %
    Thus, they are not applicable. Since the rules that we omit are never applicable during the execution of the algorithm, Step $4$ correctly returns the $\PAG{\C{D}}$ since the algorithm at that point is identical to the FCI algorithm for learning PAGs. 
\end{proof}

\begin{definition}
    A distribution $p$ is said to be $k$-faithful to a causal graph $D=(V,E)$, iff $(\ci{a}{b}{c})_p$ implies $(\ci{a}{b}{c})_D$ for all $c\subset V:\lvert c\rvert\leq k$. 
\end{definition}
\begin{theorem}
\label{thm:kPCsoundness}
    \kPC algorithm is sound for learning \kessential graph given a conditional independence oracle under the causal Markov and $k$-faithfulness assumptions, i.e.,  %
    if \kPC returns $K$, we have 
    $\E{D}\subseteq K \subseteq \PAG{\C{D}}$
\end{theorem}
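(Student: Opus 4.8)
The plan is to establish the two inclusions $\E{D}\subseteq K$ and $K\subseteq \PAG{\C{D}}$ separately. Under the causal Markov and $k$-faithfulness assumptions the CI oracle returns precisely the degree-$k$ d-separations of $D$, which by Lemma~\ref{lem:kclosure} are exactly the degree-$k$ d-separations of $\C{D}$. Corollary~\ref{cor:kPCstep4} then guarantees that the graph $K_4$ produced at the end of Step 4 equals $\PAG{\C{D}}$, so the entire argument reduces to analyzing the effect of Step 5, which applies only the rules \R{11} and \R{12}.

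For the inclusion $K\subseteq \PAG{\C{D}}$ I would note that \R{11} and \R{12} only ever replace a circle mark at $a$ by a tail (turning $a\crightarrow b$ into $a\rightarrow b$ and $a\circlecircle c$ into $a\mbox{---}c$); they never delete an edge, flip an arrowhead, or introduce a circle. Hence the skeleton is unchanged and every arrowhead and tail of $K_4=\PAG{\C{D}}$ survives in $K$, which is exactly the statement $K\subseteq \PAG{\C{D}}$ by Definition~\ref{def:subset}.

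For the inclusion $\E{D}\subseteq K$ I would show that every definite mark of $K$ is also a definite mark of $\E{D}$. The marks inherited from $K_4=\PAG{\C{D}}$ cause no trouble: by Lemma~\ref{lem:kesssubsetPAG} we have $\E{D}\subseteq \PAG{\C{D}}$, so every arrowhead and tail of $\PAG{\C{D}}$ already appears in $\E{D}$. It remains to account for the tails created in Step 5. For \R{11} the edge $a\crightarrow b$ has an invariant arrowhead at $b$, so in every equivalent \kclosure it is either $a\rightarrow b$ or $a\leftrightarrow b$; forbidding the bidirected case makes the tail at $a$ invariant, so $\E{D}$ records $a\rightarrow b$, matching the rule. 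For \R{12} the edge $a\circlecircle c$ can a priori be $a\rightarrow c$, $a\leftarrow c$, or $a\leftrightarrow c$ across equivalent \kclosure graphs; forbidding the bidirected case leaves only the two directed orientations, whose edge union is the undirected edge $a\mbox{---}c$ that the rule outputs. Thus in both cases soundness reduces to forbidding a bidirected edge at $a$. By Theorem~\ref{thm:kclosure-char}, $a\leftrightarrow b$ (resp.\ $a\leftrightarrow c$) is admissible in a \kclosure only if the pair stays \kcovered after that edge is deleted, and I would contradict this by exhibiting a conditioning set of size at most $k$ that d-separates the pair in the edge-deleted graph, built from the non-adjacency of $b$ to all of $\mathcal{C}$ (resp.\ of $c$ to the rest of $\mathcal{C}$) together with the fact that $a$ has no incoming arrowhead.

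I expect the real difficulty to lie in this separating-set construction and, for \R{12}, in certifying that $\E{D}$ stores an undirected edge rather than a one-sided arrow. For the separator I must argue that the non-adjacency conditions defining $\mathcal{B}^*$ and $\mathcal{C}^*$, together with the absence of incoming arrowheads at $a$, let one d-separate the edge-deleted pair with at most $k$ conditioned nodes --- a careful accounting of which neighbors of $a$ must be conditioned on and which may be omitted because they are not shared with the partner. For \R{12} I must additionally exhibit two equivalent \kclosure graphs realizing $a\rightarrow c$ and $a\leftarrow c$: orienting $a$ as a source yields one, orienting $c$ as a source yields the other, and in each case Corollary~\ref{cor:kclosureequivalence} must be invoked to verify that reversing the edge creates no new unshielded collider --- which is precisely where $c\in\mathcal{C}^*$, the non-adjacency of $c$ to the remaining members of $\mathcal{C}$, is used. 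Reconciling these constructions with the marks already fixed in Step 4, and with the possibility that Step 5 orients several edges in one pass, is where the bulk of the technical case analysis resides.
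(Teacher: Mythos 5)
Your scaffolding is the same as the paper's: under $k$-faithfulness the oracle returns exactly the \dkci relations of $D$ (equivalently of $\C{D}$ by Lemma~\ref{lem:kclosure}), Step~4 outputs $\PAG{\C{D}}$ by Corollary~\ref{cor:kPCstep4}, the inclusion $K\subseteq \PAG{\C{D}}$ is immediate because \R{11}/\R{12} only turn circles into tails, $\E{D}\subseteq \PAG{\C{D}}$ is Lemma~\ref{lem:kesssubsetPAG}, and everything reduces to showing that no Markov-equivalent \kclosure graph consistent with $K$ carries the bidirected edge $a\leftrightarrow b$ (resp.\ $a\leftrightarrow c$). The genuine gap is in how you rule that edge out. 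You propose to contradict Theorem~\ref{thm:kclosure-char} by exhibiting a conditioning set of size at most $k$ that d-separates the pair in the edge-deleted hypothetical closure, ``built from \ldots the fact that $a$ has no incoming arrowhead.'' But $a$ has no incoming arrowhead only in $K$; in the hypothetical closure $\C{D'}$ the circle marks at $a$ may become arrowheads, and in fact \emph{must}: this is exactly Lemma~\ref{lem:replacement_bidirected}, which the paper's proof invokes and yours never does. If $a\leftrightarrow b$ is in $\C{D'}$, then $a,b$ are \kcovered in $D'$, so conditioned on any set of size at most $k$ there is a d-connecting path into both endpoints, giving some edge $w\srightarrow a$ in $\C{D'}$. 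The paths through such incoming edges are precisely the ones your separator would have to block, and their unblockability by sets of size at most $k$ is the closure-validity hypothesis you are assuming for $\C{D'}$ --- so the contradiction cannot be extracted from separability at all. The paper locates it in Markov equivalence instead: mark-consistency with $K$ forces $w\in\mathcal{C}$ (an arrowhead at $a$ can only sit where $K$ has a circle at $a$, and the tail at $w$ rules out $w\in\mathcal{B}$); the defining non-adjacency of $\mathcal{B}^*$ (resp.\ $\mathcal{C}^*$) makes $w$ and $b$ non-adjacent; hence $w\srightarrow a\leftrightarrow b$ is an unshielded collider of $\C{D'}$ that $\C{D}$ lacks (the PAG has circles at $a$ on both edges), contradicting Corollary~\ref{cor:kclosureequivalence}. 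That collider argument is the missing key idea; without it your separating-set construction cannot be completed, and once you have it the separator is superfluous.

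One point in your favor: you correctly observe that soundness of \R{12} requires more than excluding $a\leftrightarrow c$ --- for $\E{D}$ to contain the undirected edge $a\mbox{---}c$, both orientations $a\rightarrow c$ and $a\leftarrow c$ must actually occur among Markov-equivalent \kclosures, otherwise $\E{D}$ would carry an arrowhead that the rule's tail contradicts. The paper's proof of Lemma~\ref{lem:R12} is silent on this point, so you have identified a real subtlety. However, your sketch for it (reorient $a$, then $c$, as a source and check unshielded colliders via Corollary~\ref{cor:kclosureequivalence}) is itself only a sketch: one must also exclude directed and almost directed cycles and re-verify closure validity of the reoriented graph through Theorem~\ref{thm:kclosure-char}. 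So this observation, while sharp, does not repair the central gap above.
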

\begin{proof}[Proof Sketch]
    From Corollary \ref{cor:kPCstep4}, $K$ obtained at the end of Step $4$ is $\PAG{\C{D}}$ and by Lemma \ref{lem:kesssubsetPAG} we know $\E{D}\subseteq \PAG{\C{D}}$, i.e., every edge and tail orientation of $K$ is consistent with $\E{D}$. Thus, we only need to show that orientation rules \R{11}, \R{12} are sound, i.e., there exists no \kclosure graph in the Markov equivalence class that is inconsistent with these orientation rules. By Lemma \ref{lem:replacement_bidirected}, we know that if $a\leftrightarrow x$ for some $x$, then conditioned on any subset of size $k$, there exists a d-connecting path that starts with arrow at $a$ and $x$. This means that, irrespective of $k$, $a$ must have some incoming edge. The rules use this fact together with the fact that if there were two incoming edges from two non-adjacent neighbors, this would create an unshielded collider, which would change the Markov equivalence class. Please see Section \ref{sec:proof-thm:kPCsoundness} for the proof.  
\end{proof}

For two sample runs of the algorithm, please see Figure \ref{fig:R11example} and Figure \ref{fig:R12example} in Section \ref{sec:alg-examples} of Appendix. Note also that our algorithm can be seen as an improved version of AnytimeFCI algorithm~\cite{spirtes2001anytime} for causally sufficient systems, which the author shows can be stopped once every conditioning set of size at most $k$ are tested. While FCI aims at learning arbitrary PAGs partially, \kPC learns \kclosure graphs for causally sufficient systems. This allows extracting more causal information, evidenced by the additional orientation rules employed by \kPC. For more details on the differences and an example where AnytimeFCI is less informative than \kPC, please see Section \ref{sec:app-AnytimeFCI}.

\begin{figure}[t!]
	\centering
	\begin{subfigure}[b]{0.32\textwidth}
		\includegraphics[width=\textwidth]{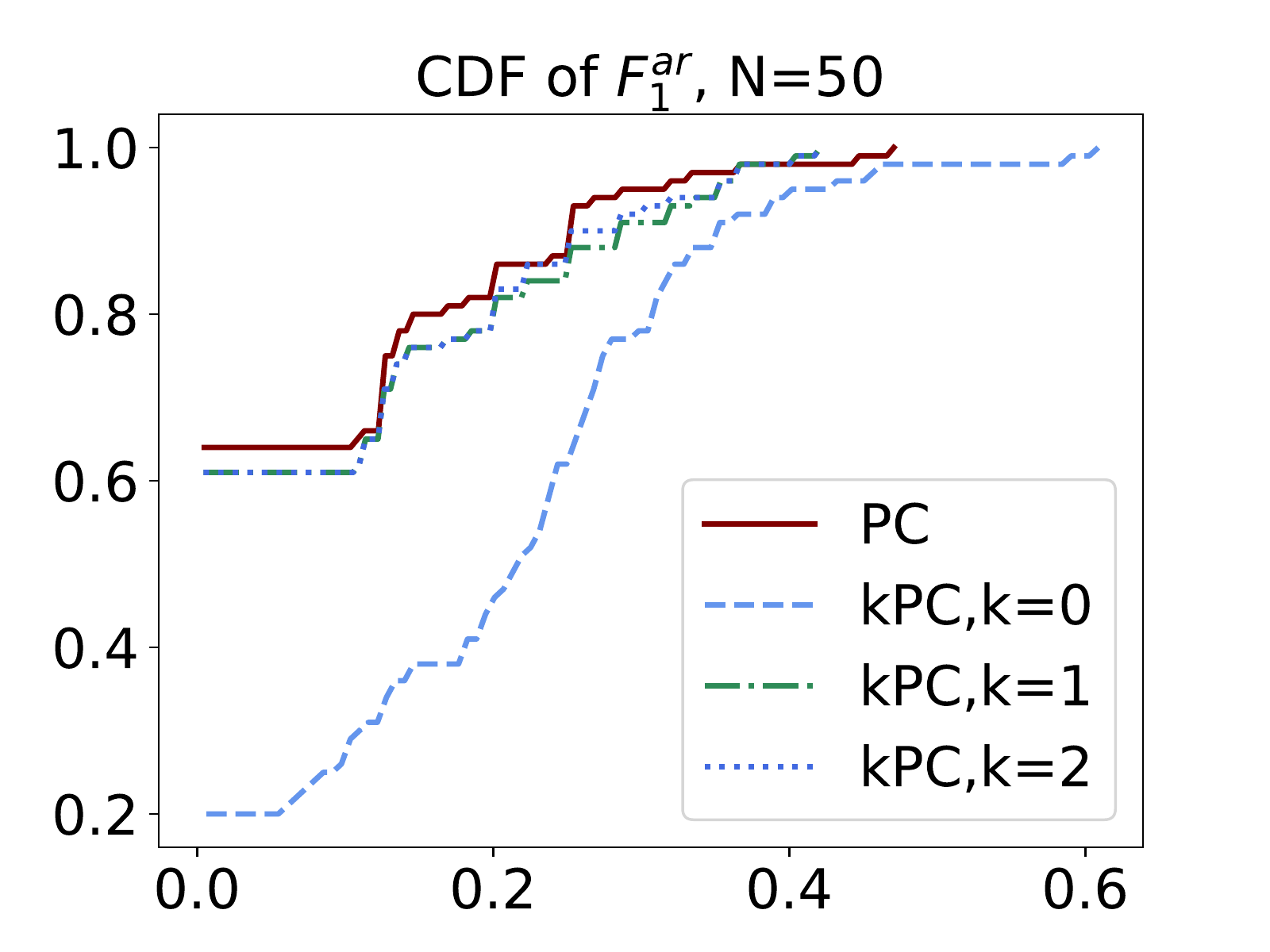}
		\caption{Arrowhead $F_1$ Score}
		\label{fig:latentsearch_performance}
	\end{subfigure}
	\begin{subfigure}[b]{0.32\textwidth}
		\includegraphics[width=\textwidth]{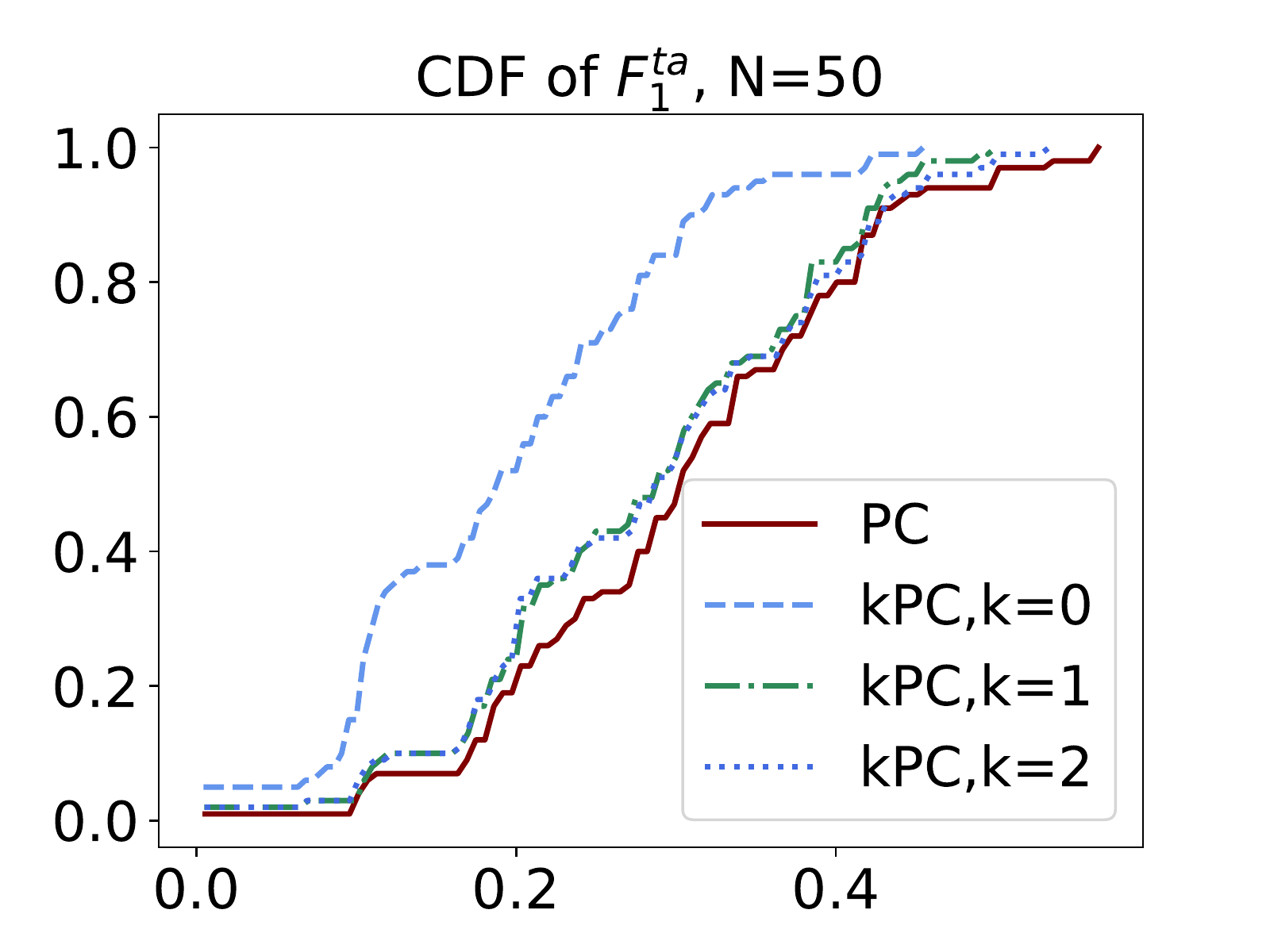}
		\caption{Tail $F_1$ Score}
		\label{fig:synthetic_conjecture}
	\end{subfigure}
	\begin{subfigure}[b]{0.32\textwidth}
		\includegraphics[width=\textwidth]{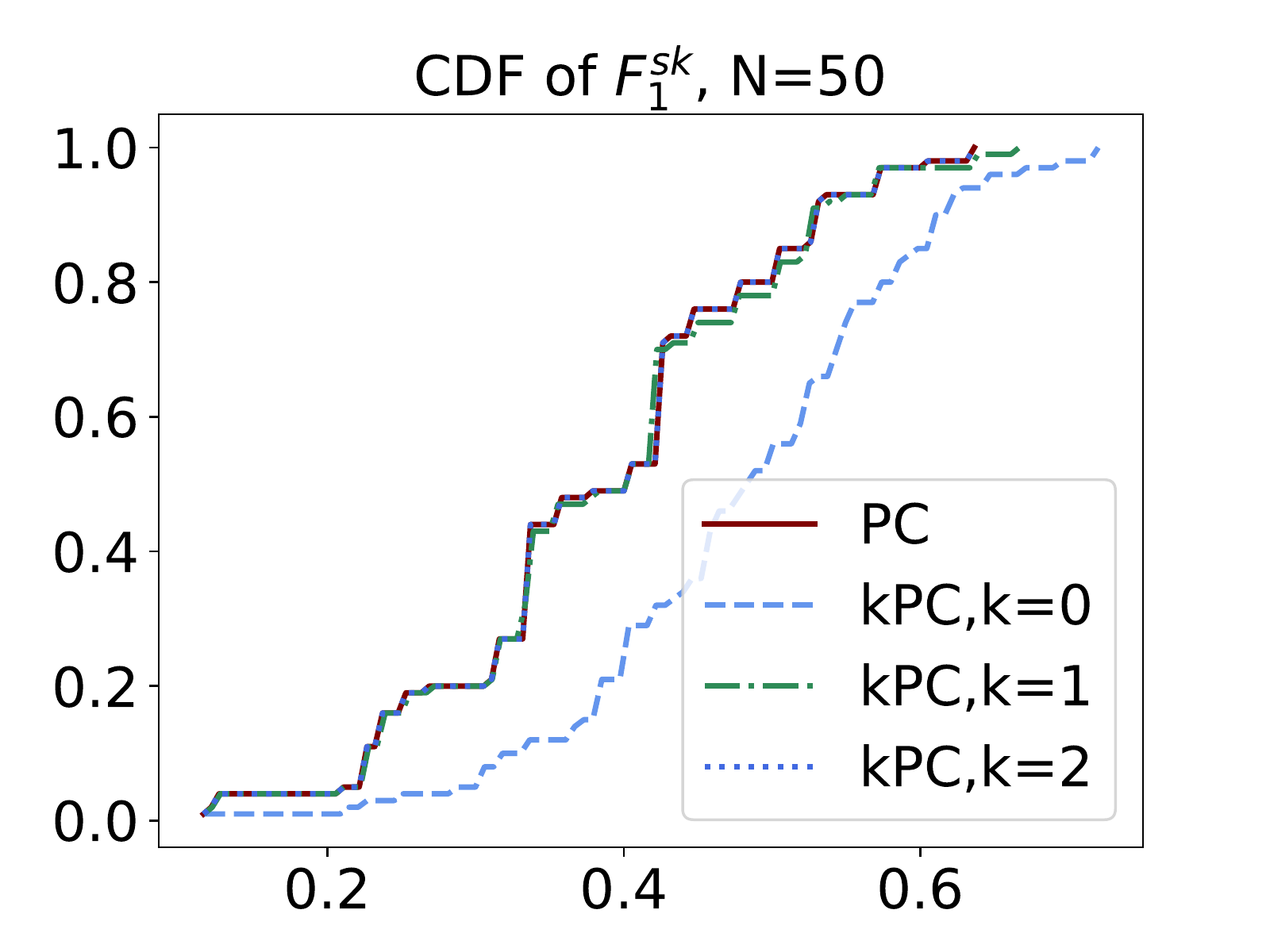}
		\caption{Skeleton $F_1$ Score}
		\label{fig:synthetic_identifiability}
	\end{subfigure}
	\begin{subfigure}[b]{0.32\textwidth}
		\includegraphics[width=\textwidth]{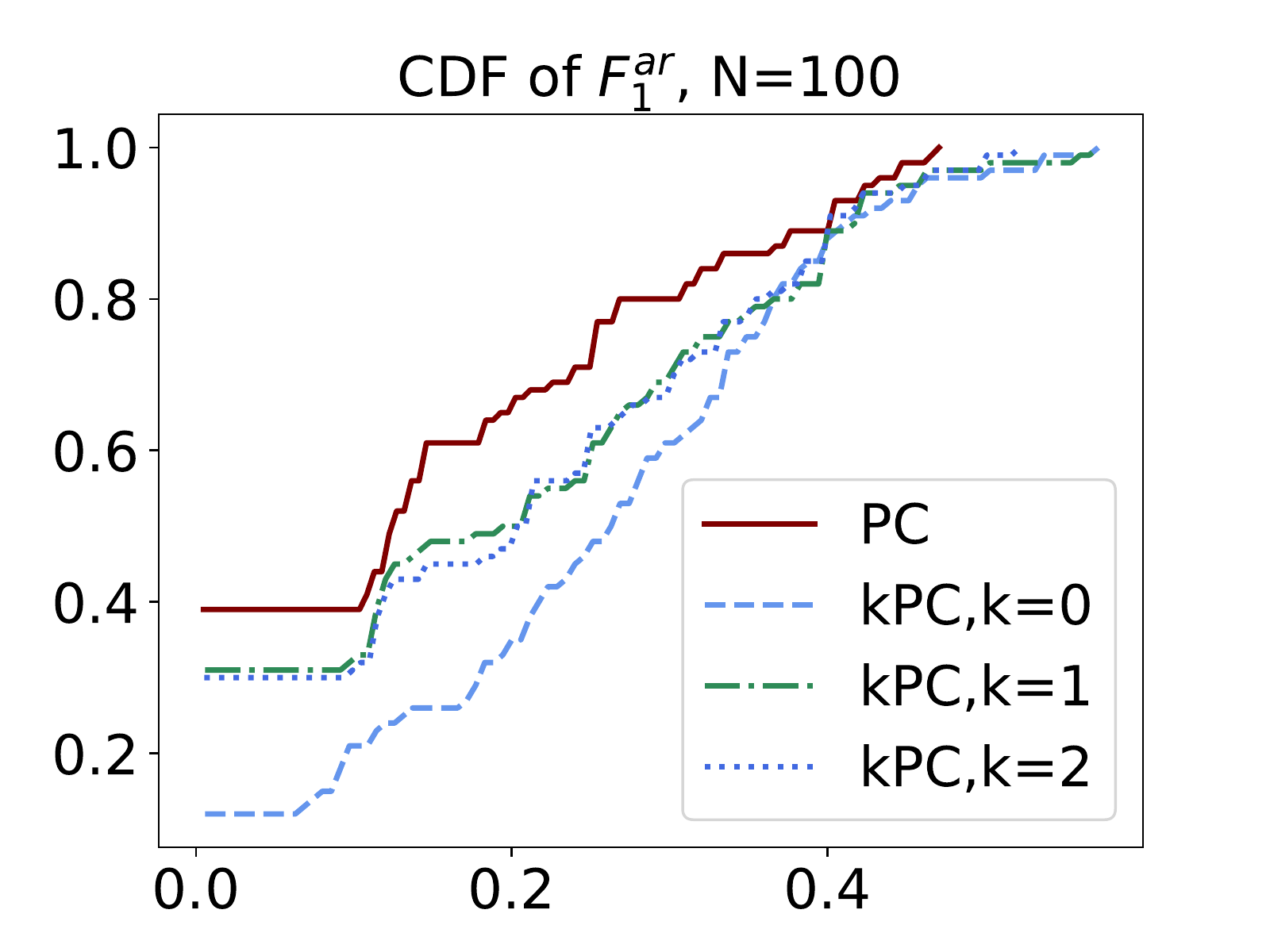}
		\caption{Arrowhead $F_1$ Score}
		\label{fig:latentsearch_performance}
	\end{subfigure}
	\begin{subfigure}[b]{0.32\textwidth}
		\includegraphics[width=\textwidth]{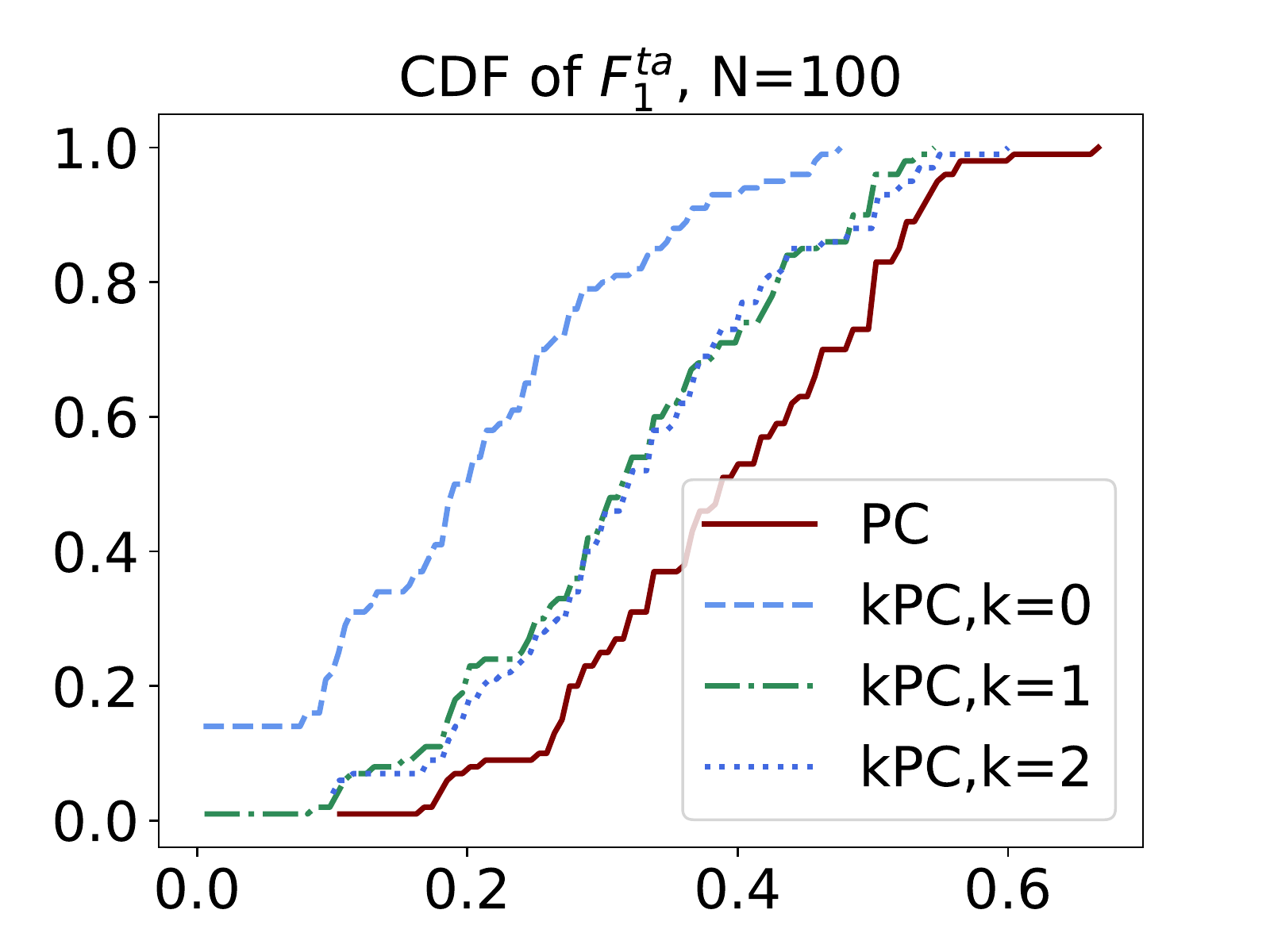}
		\caption{Tail $F_1$ Score}
		\label{fig:synthetic_conjecture}
	\end{subfigure}
	\begin{subfigure}[b]{0.32\textwidth}
		\includegraphics[width=\textwidth]{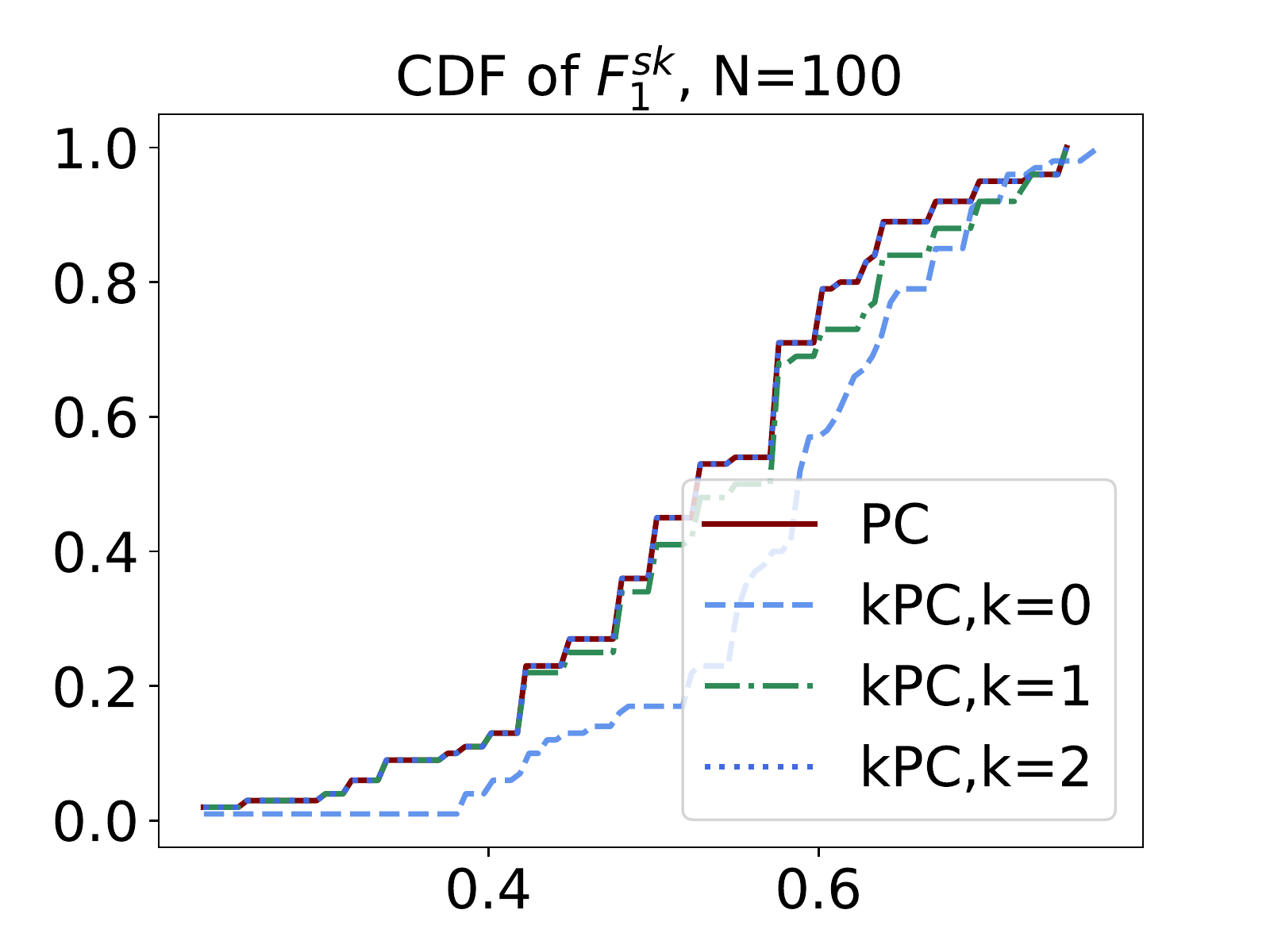}
		\caption{Skeleton $F_1$ Score}
		\label{fig:synthetic_identifiability}
	\end{subfigure}
	\caption{Empirical cumulative distribution function of various $F_1$ scores on $100$ random DAGs on $10$ nodes. For each DAG, conditional probability tables are independently and uniformly randomly filled from the corresponding probability simplex. Three datasets are sampled per model instance. The lower the curve the better. The maximum number of edges is $15$. $N$ is the number of data samples.}
	\label{fig:vsPC}
\end{figure}

\textbf{Computational Complexity.} Suppose we are given two causal graphs. Our characterization gives an algorithmic way of testing \kmarkov equivalence:  Construct the \kclosure  graphs given the two DAGs and check whether they are Markov equivalent. The step to make two variables adjacent requires one to loop through all conditioning sets of size at most $k$. This would take $\mathcal{O}(n^k)$. This is the main time-consuming step. Afterward, one can test Markov equivalence using the existing approaches. For example, \cite{hu2020faster} show that this can be done in $\mathcal{O}(ne^2+n^2e)$ time, where $e$ is the number of edges. Thus, the overall algorithm will indeed be polynomial-time when $k=\mathcal{O}(1)$.

The complexity of the learning algorithm, \kPC, will be similar to Anytime FCI, an early-stopped version of FCI~\cite{spirtes2001anytime}. Although this is more complicated and would depend on other parameters in the graph, such as primitive inducing paths, and thus the number and location of unobserved confounders, we can also roughly bound this by $\mathcal{O}(n^{k+2})$ since for any pair, we will not be searching for separating sets beyond the $\mathcal{O}(n^k)$ subsets of size at most $k$. Further runtime improvements, such as RFCI by \cite{colombo2012learning} might be possible, but this requires a further understanding of the structure of \kclosure graphs. %

\section{Experiments}
\subsection{Synthetic Data}
In this section, we test our algorithm against the stable version of PC algorithm in causal-learn package. We randomly sample DAGs using the pyAgrum package. Variation of this experiment with a different sparsity level can be found in Section \ref{sec:vsMorePC} and gives similar results. All variables are binary with conditional probability tables filled randomly uniformly from the corresponding dimensional probability simplex, except the linear SCM experiments for comparing with NOTEARS. We observed similar results with a larger number of states, which are not reported. We report the $F_1$ scores for identifying arrowheads, tails and skeleton correctly. The results are given in Figure \ref{fig:vsPC}.  We observe that \kPC provides significant improvement to the arrowhead $F_1$ score in the small-sample regime at little cost to the tail score. \kPC also helps improve skeleton discovery. Similar improvements are seen even for $10$ samples whereas the advantage disappears for more than $500$ samples (see Section \ref{sec:vsMorePC}) since PC becomes reliable enough and \kPC does not use some of the informative CI tests in that high-sample regime. We report combined metrics such as the sum of all $F_1$ scores in Section~\ref{sec:app-vsPC_combined}.

In Section \ref{sec:app-ConsPC}, we compare with the conservative version of PC~\cite{ramsey2006adjacency} and observe similar results. In Section \ref{sec:app-NOTEARS} we generate linear SCMs randomly and compare the performance of our algorithm with NOTEARS in addition to PC. Even though NOTEARS performs slightly better than PC in general, \kPC maintains an advantage over both in the small-sample regime. In Section \ref{sec:app-LOCI}, we compare our algorithm with LOCI~\cite{wienobst2020recovering}. As expected, the two algorithms perform similarly. We observe that our algorithm shows better arrowhead and skeleton $F_1$ score performance.  %
As a side note, we show that \kPC output is at least as informative as LOCI in Section \ref{sec:app-vsLOCI}.  See Section \ref{sec:app-discussions} for further discussions. The Python code is provided at \url{https://github.com/CausalML-Lab/kPC}. %

\begin{figure}
\vspace{-1.5em}
\centering
 \begin{minipage}{0.28\textwidth}
\subcaptionbox{Ground Truth}{   \includegraphics[width=\textwidth]{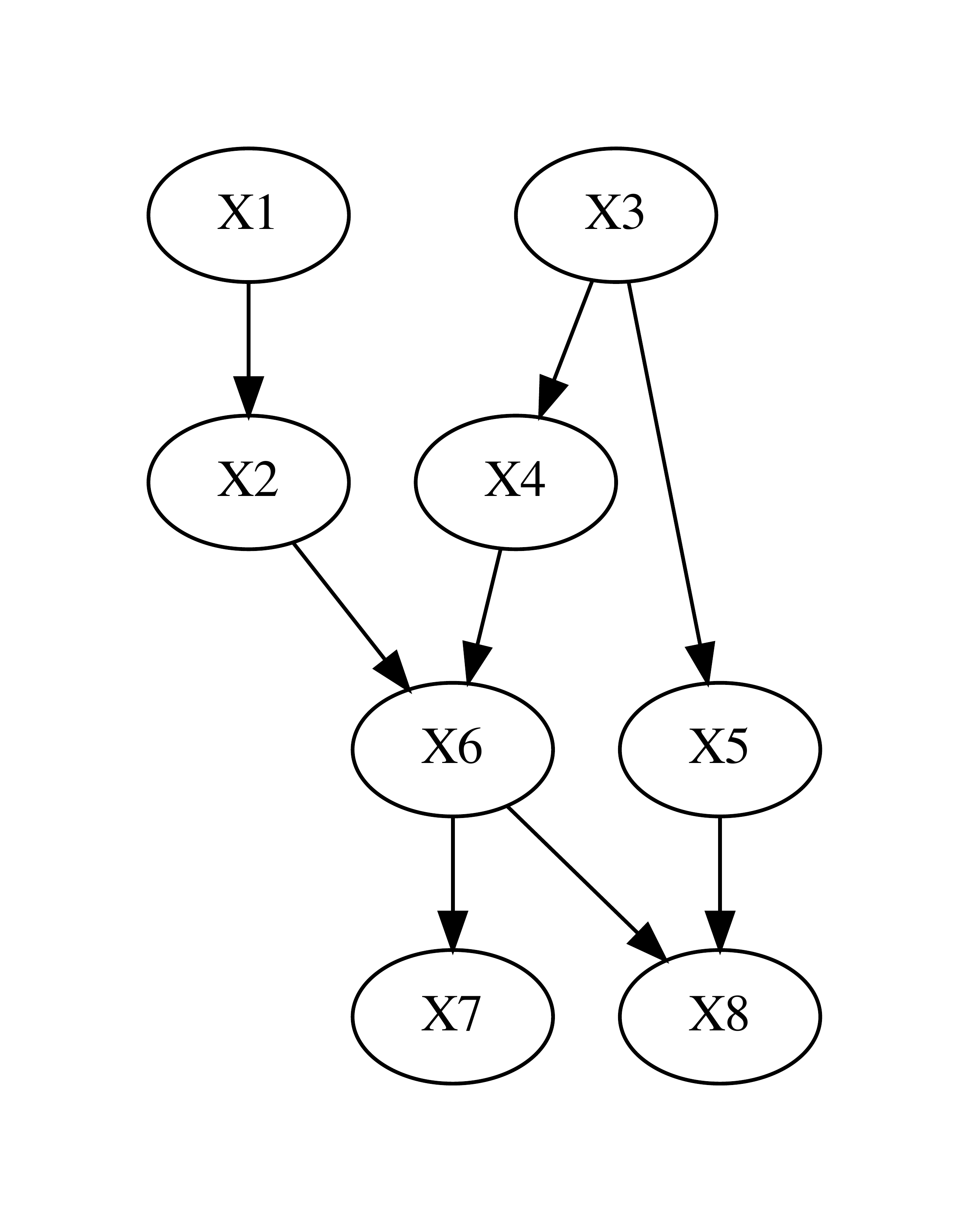}}
\end{minipage}
  \begin{minipage}{0.30\textwidth}
\subcaptionbox{\kPC, $k=0$}{   \includegraphics[width=\textwidth]{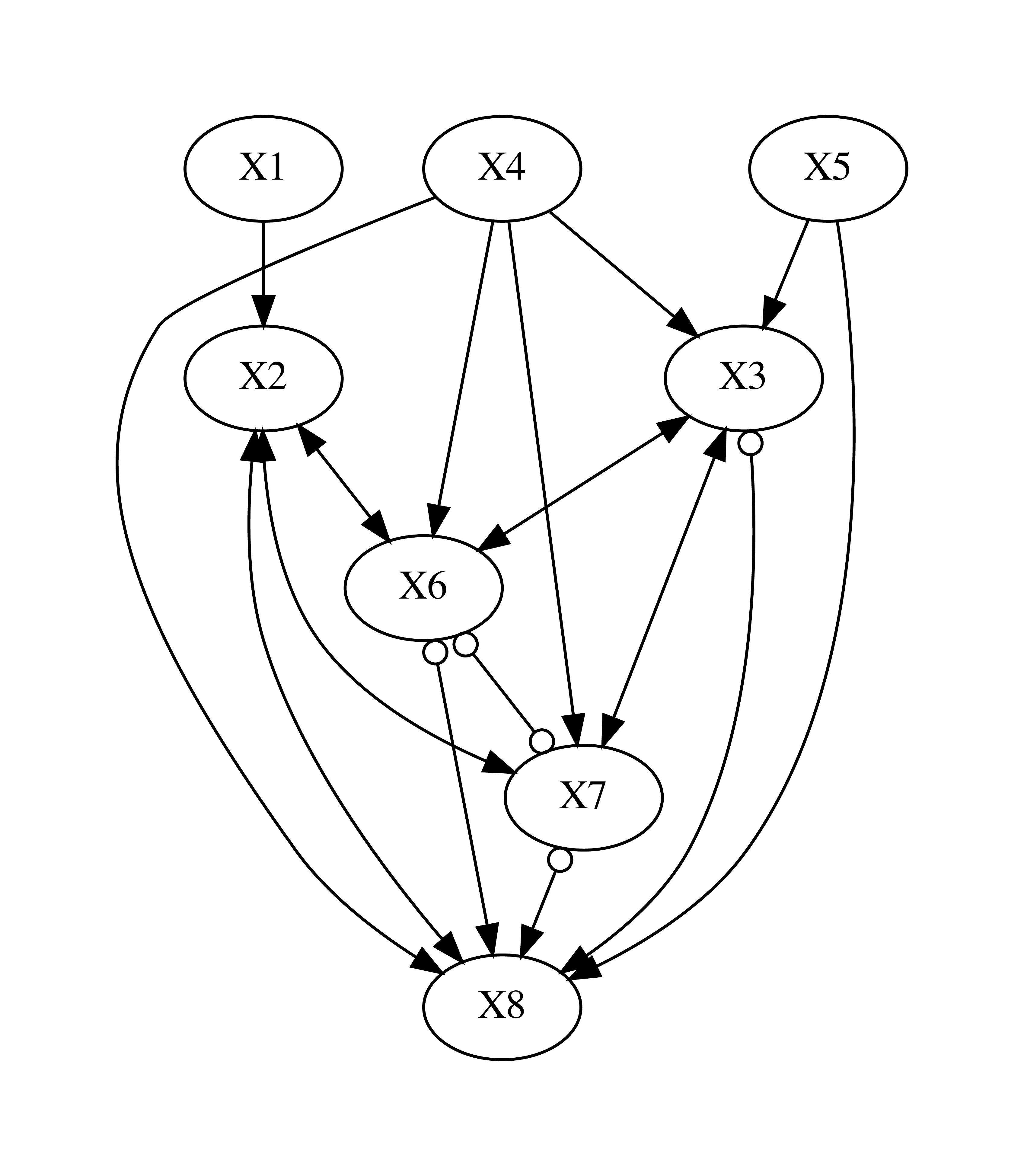}}
\end{minipage}
\begin{minipage}{0.36\textwidth}
\begin{subfigure}{\textwidth}
\includegraphics[width=0.8\textwidth]{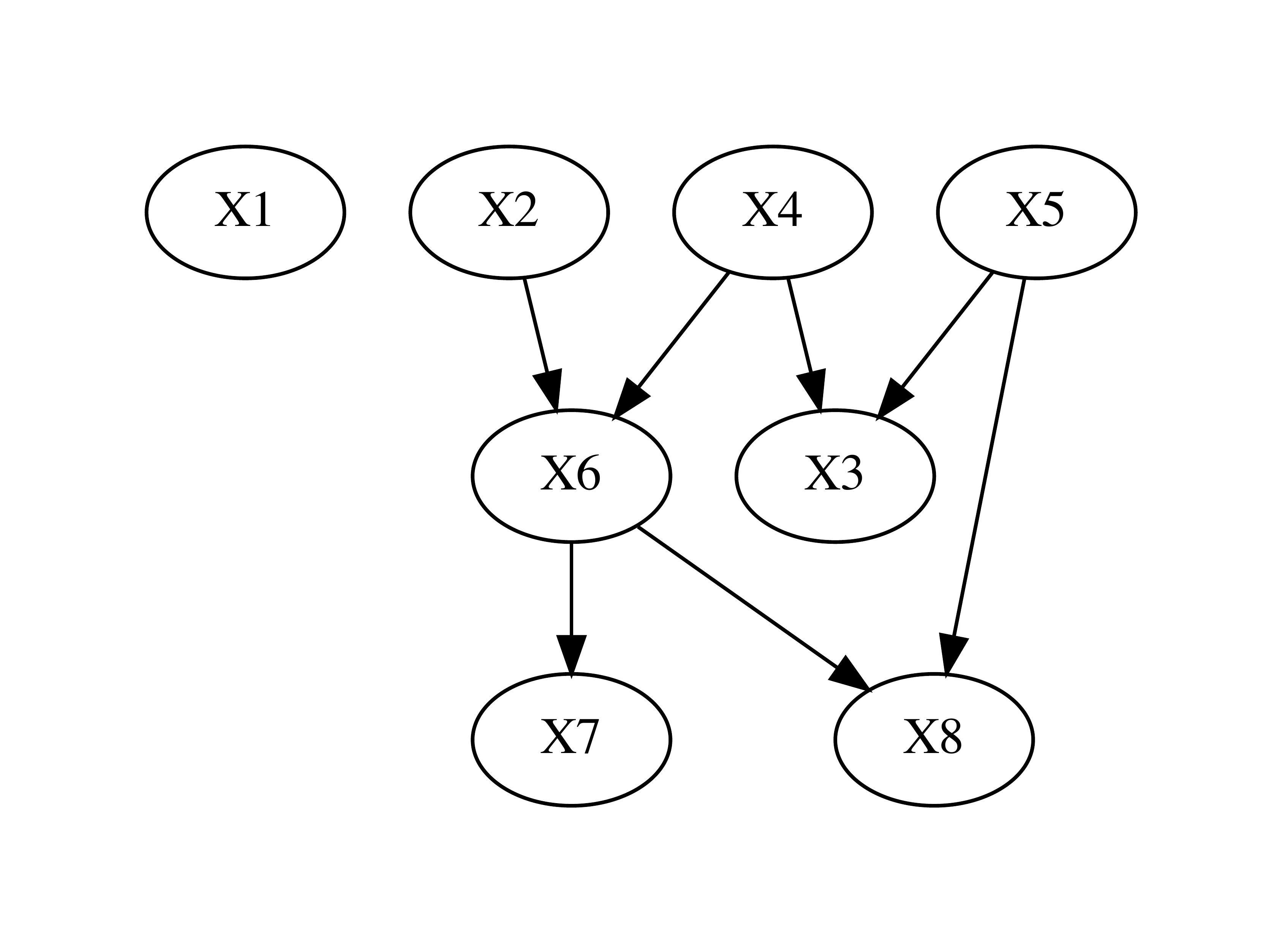}
\caption{\kPC, $k=1$}
\end{subfigure}
\begin{subfigure}{\textwidth}
\includegraphics[width=\textwidth]{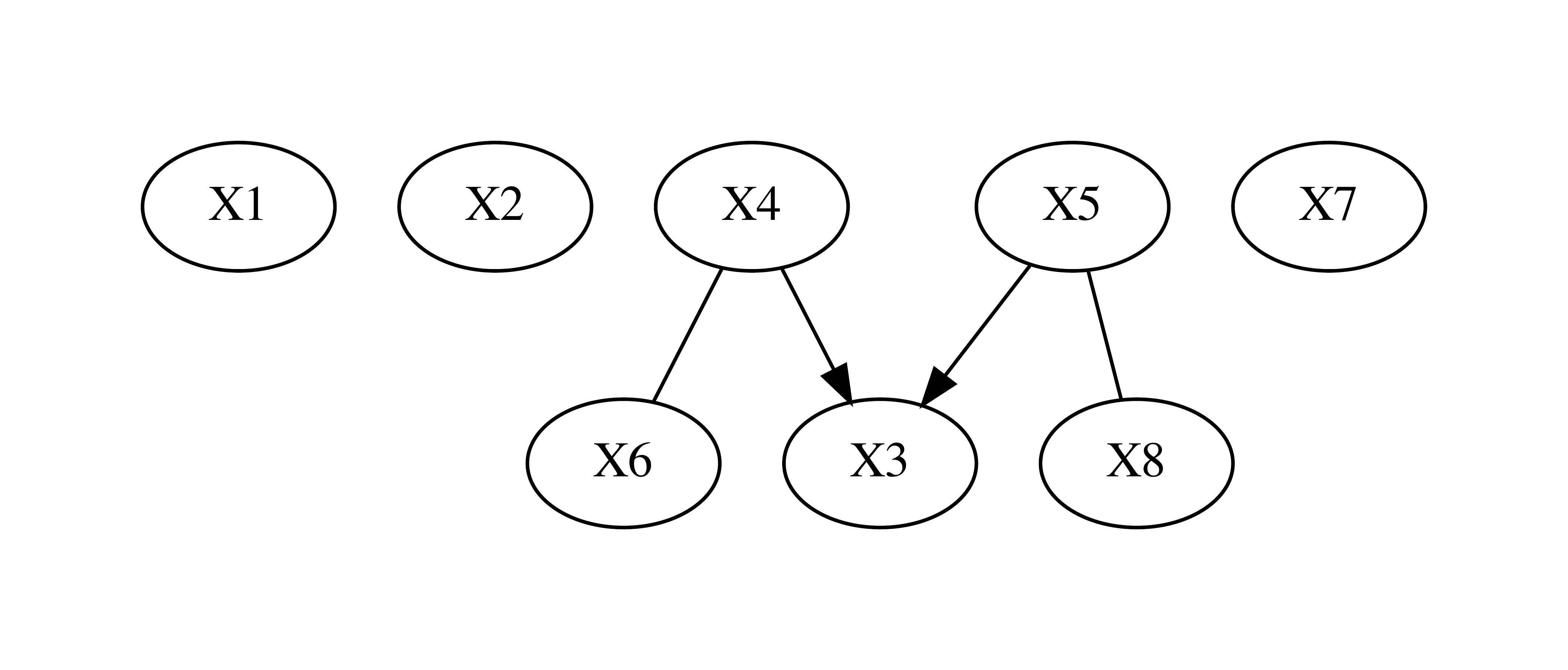}
\caption{PC Alg.}
\end{subfigure}
\end{minipage}

\caption{Causal discovery on \emph{Asia} dataset using $500$ random samples. \kPC can learn the approximate causal order and some edges accurately, whereas PC outputs a very sparse graph since it conditions on subsets of size $2$, which over-sparsifies the graph. For example, \kPC with $k=1$ can recover that $X_2$ is an ancestor of $X_6$, which is an ancestor of $X_7$, whereas PC cannot. %
}
\label{fig:asia}
\end{figure}

\subsection{Semi-synthetic Data}
We test our algorithm on the semi-synthetic \emph{Asia} dataset from bnlearn repository %
and compare it with %
PC algorithm. The dataset contains $8$ %
binary variables. We randomly sample 500 datapoints from the observational distribution and run PC as well as \kPC for $k=0$ and $k=1$. \kPC for $k=0$ correctly gets some of the causal and ancestral relations (Figure \ref{fig:asia}). PC, on the other hand, recovers a very sparse graph as it utilizes unreliable conditional independence tests with conditioning size of $2$. \kPC with $k=1$ recovers a graph in-between the two, still more informative about the structure than PC.

\section{Discussion}
A limitation of the method is that it assumes all independence statements up to degree $k$ can be tested for some $k$. In some cases, the set of available, or the set of reliable CI statements might not have such a structure. Our method is not directly applicable in such scenarios. Another limitation is that we assume that we can run CI tests for the tests that are deemed reliable. This is a non-trivial problem when the data is non IID, such as time-series data. We also make some other usual assumptions that are commonly made in causal discovery, such as acyclicity. 

For more remarks, such as a demonstration of incompleteness of \kPC, please see Section~\ref{sec:app-discussions}.

\section{Conclusion}
We proposed a new notion of equivalence between causal graphs called the \kmarkov equivalence. This new equivalence allows us to learn parts of the causal structure from data without relying on CI tests with large conditioning sets that tend to be unreliable. %
We showed that our learning algorithm is sound, and  demonstrated that it can help correct some errors induced by other algorithms in practice. 

\section*{Acknowledgements}
This research has been supported in part by NSF  CAREER 2239375. We would like to thank Marcel Wien\"{o}bst for his insightful comments and suggestions on an earlier version of the manuscript that led to a more thorough comparison with LOCI~\cite{wienobst2020recovering}. We  would like to also thank the anonymous NeurIPS'23 reviewers for their constructive comments and feedback during the review process.

\bibliography{causal}

\begin{thebibliography}{10}

\bibitem{bareinboim2016causal}
Elias Bareinboim and Judea Pearl.
\newblock Causal inference and the data-fusion problem.
\newblock {\em Proceedings of the National Academy of Sciences},
  113(27):7345--7352, 2016.

\bibitem{bromberg2009improving}
Facundo Bromberg and Dimitris Margaritis.
\newblock Improving the reliability of causal discovery from small data sets
  using argumentation.
\newblock {\em Journal of Machine Learning Research}, 10(2), 2009.

\bibitem{chickering2002optimal}
David~Maxwell Chickering.
\newblock Optimal structure identification with greedy search.
\newblock {\em Journal of machine learning research}, 3(Nov):507--554, 2002.

\bibitem{chickering2020statistically}
Max Chickering.
\newblock Statistically efficient greedy equivalence search.
\newblock In {\em Conference on Uncertainty in Artificial Intelligence}, pages
  241--249. PMLR, 2020.

\bibitem{colombo2012modification}
Diego Colombo and Marloes~H Maathuis.
\newblock A modification of the pc algorithm yielding order-independent
  skeletons.
\newblock {\em CoRR, abs/1211.3295}, 2012.

\bibitem{colombo2012learning}
Diego Colombo, Marloes~H Maathuis, Markus Kalisch, and Thomas~S Richardson.
\newblock Learning high-dimensional directed acyclic graphs with latent and
  selection variables.
\newblock {\em The Annals of Statistics}, pages 294--321, 2012.

\bibitem{hu2020faster}
Zhongyi Hu and Robin Evans.
\newblock Faster algorithms for markov equivalence.
\newblock In {\em Conference on Uncertainty in Artificial Intelligence}, pages
  739--748. PMLR, 2020.

\bibitem{lauritzen2018unifying}
Steffen Lauritzen and Kayvan Sadeghi.
\newblock Unifying markov properties for graphical models.
\newblock {\em The Annals of Statistics}, 46(5):2251--2278, 2018.

\bibitem{lauritzen1996graphical}
Steffen~L Lauritzen.
\newblock {\em Graphical models}, volume~17.
\newblock Clarendon Press, 1996.

\bibitem{pearl1988probabilistic}
Judea Pearl.
\newblock {\em Probabilistic reasoning in intelligent systems: networks of
  plausible inference}.
\newblock Morgan kaufmann, 1988.

\bibitem{pearl1995causal}
Judea Pearl.
\newblock Causal diagrams for empirical research.
\newblock {\em Biometrika}, pages 669--688, 1995.

\bibitem{pearl2013probabilistic}
Judea Pearl and James~M Robins.
\newblock Probabilistic evaluation of sequential plans from causal models with
  hidden variables.
\newblock {\em arXiv preprint arXiv:1302.4977}, 2013.

\bibitem{pearl1994can}
Judea Pearl and Nanny Wermuth.
\newblock When can association graphs admit a causal interpretation.
\newblock {\em Selecting Models from Data: Artificial Intelligence and
  Statistics IV}, 89:205--214, 1994.

\bibitem{ramsey2006adjacency}
Joseph Ramsey, Peter Spirtes, and Jiji Zhang.
\newblock Adjacency-faithfulness and conservative causal inference.
\newblock In {\em Proceedings of the Twenty-Second Conference on Uncertainty in
  Artificial Intelligence}, pages 401--408, 2006.

\bibitem{richardson2002ancestral}
Thomas Richardson and Peter Spirtes.
\newblock Ancestral graph markov models.
\newblock {\em The Annals of Statistics}, 30(4):962--1030, 2002.

\bibitem{shah2020hardness}
{Rajen D.} Shah and Jonas Peters.
\newblock The hardness of conditional independence testing and the generalised
  covariance measure.
\newblock {\em Annals of Statistics}, 48(3):1514--1538, 2020.

\bibitem{shpitser2008complete}
Ilya Shpitser and Judea Pearl.
\newblock Complete identification methods for the causal hierarchy.
\newblock {\em Journal of Machine Learning Research}, 9:1941--1979, 2008.

\bibitem{spirtes2001anytime}
Peter Spirtes.
\newblock An anytime algorithm for causal inference.
\newblock In {\em International Workshop on Artificial Intelligence and
  Statistics}, pages 278--285. PMLR, 2001.

\bibitem{spirtes2000causation}
Peter Spirtes, Clark~N Glymour, Richard Scheines, and David Heckerman.
\newblock {\em Causation, prediction, and search}.
\newblock MIT press, 2000.

\bibitem{textor2015learning}
Johannes Textor, Alexander Idelberger, and Maciej Li{\'s}kiewicz.
\newblock Learning from pairwise marginal independencies.
\newblock In {\em Proceedings of the Thirty-First Conference on Uncertainty in
  Artificial Intelligence}, pages 882--891, 2015.

\bibitem{tian2002testable}
Jin Tian and Judea Pearl.
\newblock On the testable implications of causal models with hidden variables.
\newblock In {\em Proceedings of the Eighteenth conference on Uncertainty in
  artificial intelligence}, pages 519--527, 2002.

\bibitem{verma1990equivalence}
Tom~S Verma and Judea Pearl.
\newblock Equivalence of causal models.
\newblock 1990.

\bibitem{wienobst2020recovering}
Marcel Wien{\"o}bst and Maciej Liskiewicz.
\newblock Recovering causal structures from low-order conditional
  independencies.
\newblock In {\em Proceedings of the AAAI Conference on Artificial
  Intelligence}, volume~34, pages 10302--10309, 2020.

\bibitem{zhang2008causal}
Jiji Zhang.
\newblock Causal reasoning with ancestral graphs.
\newblock {\em Journal of Machine Learning Research}, 9:1437--1474, 2008.

\bibitem{zhang2008completeness}
Jiji Zhang.
\newblock On the completeness of orientation rules for causal discovery in the
  presence of latent confounders and selection bias.
\newblock {\em Artificial Intelligence}, 172(16-17):1873--1896, 2008.

\bibitem{zhang2018fairness}
Junzhe Zhang and Elias Bareinboim.
\newblock Fairness in decision-making—the causal explanation formula.
\newblock In {\em Proceedings of the AAAI Conference on Artificial
  Intelligence}, volume~32, 2018.

\bibitem{zhang2013domain}
Kun Zhang, Bernhard Sch{\"o}lkopf, Krikamol Muandet, and Zhikun Wang.
\newblock Domain adaptation under target and conditional shift.
\newblock In {\em International Conference on Machine Learning}, pages
  819--827. PMLR, 2013.

\bibitem{zheng2018dags}
Xun Zheng, Bryon Aragam, Pradeep~K Ravikumar, and Eric~P Xing.
\newblock Dags with no tears: Continuous optimization for structure learning.
\newblock {\em Advances in neural information processing systems}, 31, 2018.

\end{thebibliography}
\bibliographystyle{plain}

\newpage
\appendix
\onecolumn
\noindent
{\Large \textbf{Appendix}}

\section{Proofs}
\label{sec:app-proofs}
In this section, we provide proofs for the lemmas and theorems in the paper. We also present FCI orientation rules for completeness, demonstrate why \kPC is incomplete, and give two sample runs of the \kPC algorithm. Each subsection starts from a new page to clearly separate the proofs of different lemmas/theorems. 
\subsection{Proof of Lemma \ref{lem:kclosure}}
\label{sec:proof-lem:kclosure}
We will crucially use the following three lemmas to prove our main results. We say a collider $\langle u,v,w\rangle$ is closed, or blocks the path in context if no node in $De(v)$ is in the conditioning set. Similarly, a path is called closed if it is not d-connecting, and open otherwise.  
\begin{lemma}
\label{lem:main}
    Consider a DAG where $X\notin An(Y)$. Suppose there is a d-connecting path $p$ between $X,Y$ given $T$ that starts with an arrow out of $X$. 
    \begin{enumerate}
        \item There is at least one collider along $p$. 
        \item Let $K$ be the collider closest to $X$ on $p$. Then conditioning on $T'=T-De(K)$ instead of $T$ does not introduce new d-connecting paths that start with an arrowhead at $X$. 
    \end{enumerate}
\end{lemma}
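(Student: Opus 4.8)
The plan is to dispatch the two parts in turn, treating Part~1 as a warm-up that legitimizes the definition of $K$ in Part~2, and proving Part~2 by contradiction built on a careful path-splicing argument.

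\emph{Part 1.} I would argue by contradiction: suppose $p$ has no collider. Then every internal node is a non-collider, and I claim $p$ is a directed path out of $X$. The first edge is $X\rightarrow v_1$ by hypothesis; inductively, if the edge entering an internal node $v_i$ from the $X$-side is $v_{i-1}\rightarrow v_i$, then because $v_i$ is a non-collider the opposite edge must leave $v_i$, i.e.\ $v_i\rightarrow v_{i+1}$. Propagating to the end gives a directed path $X\rightarrow\cdots\rightarrow Y$, so $X\in An(Y)$, contradicting the hypothesis. Hence $p$ has at least one collider and the collider $K$ closest to $X$ is well defined.

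\emph{Part 2.} I would begin with two observations. First, the same non-collider propagation applied to the sub-path of $p$ from $X$ up to $K$ (which has no collider, since $K$ is the closest) shows this sub-path is a directed path $X\rightarrow\cdots\rightarrow K$; hence $X\in An(K)$ and its interior nodes avoid $T$. Second, a monotonicity principle: since $T'\subseteq T$, shrinking the conditioning set can only \emph{close} colliders (a collider $c$ is open under $S$ iff $De(c)\cap S\neq\emptyset$) and can only \emph{open} non-colliders that were in $T$. Consequently, any path that is d-connecting given $T'$ but not given $T$ must owe its blockage under $T$ to a non-collider lying in $T\setminus T'=T\cap De(K)$.

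Now suppose, for contradiction, that passing to $T'$ creates a new d-connecting path $q$ between $X$ and $Y$ with an arrowhead at $X$. By the monotonicity principle, $q$ has a non-collider $w\in T\cap De(K)$; let $w$ be the one closest to $X$ along $q$. The prefix $q_1$ of $q$ from $X$ to $w$ is then d-connecting given $T$ (its interior non-colliders avoid $T$ by the minimality of $w$, and its colliders keep their $T'\subseteq T$ witnesses) and still begins with an arrowhead at $X$. Because $w\in De(K)$ and $X\in An(K)$, there is a directed path from $X$ through $K$ to $w$. The idea is to splice $q_1$ (to preserve the arrowhead at $X$), the reverse of the directed path from $K$ to $w$, and the $K$-to-$Y$ tail segment of $p$, into a single path that is d-connecting given the \emph{full} set $T$: the junction at $w$ is turned into a collider that stays open precisely because $w\in De(K)\cap T$. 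Producing such a path contradicts the assumption that the $X$-arrowhead d-connection to $Y$ was new.

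The main obstacle is the splicing itself: I must verify the collider/non-collider status at each junction (notably at $w$ and at $K$) across all admissible edge orientations, choose the directed path from $K$ to $w$ so that it avoids $T$ (by selecting a minimal such descendant), and enforce node-simplicity by shortcutting at first re-intersections. The hypothesis $X\notin An(Y)$ is exactly what rules out the degenerate orientations, since any configuration in which the flow through $w$ continues toward $Y$ by a directed sub-path would combine with $X\in An(w)$ to yield $X\in An(Y)$. I expect the bulk of the work to be this junction case analysis, whereas the monotonicity principle and the directed-prefix observation are the conceptual core.
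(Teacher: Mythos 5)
Your Part~1, your monotonicity principle, your choice of $w$, and your observation that the prefix $q_1$ is d-connecting given $T$ all match the paper's proof and are correct. The divergence --- and the gap --- is in what you do next. The lemma is a statement about \emph{every} path: no path with an arrowhead at $X$ that was blocked given $T$ becomes d-connecting given $T'$. The paper proves exactly this by deriving an \emph{absolute} contradiction from the existence of $q$ (a directed cycle, or $K\in An(T')$, which is impossible since $T'=T-De(K)$). Your argument instead manufactures a \emph{different} path that is d-connecting given $T$ with an arrowhead at $X$, and declares this to contradict ``newness.'' That only proves the weaker, existence-level statement: if some arrowhead-at-$X$ path is open given $T'$, then some arrowhead-at-$X$ path is open given $T$. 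This weak form suffices for Lemma~\ref{lem:replacement}, whose standing hypothesis forbids all $T$-open arrowhead-at-$X$ paths, but not for Lemma~\ref{lem:replacement_bidirected}, where only paths with arrowheads at \emph{both} ends are forbidden given $T$: there, a newly opened path with arrowheads at both ends is not excluded by your statement, because $T$-open paths with an arrowhead at $X$ and a tail at $Y$ are allowed to exist. So even if your construction were airtight, you would have proved less than the lemma asserts and less than the paper's downstream recursion needs.

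The construction itself also has an unhandled failure case. For the junction at $w$ to be a collider, both incident edges of the spliced path must point into $w$; the reversed $K$-to-$w$ directed path supplies one arrowhead, but all you know about $w$ is that it is a \emph{non-collider on $q$}, so the edge of $q$ on the $X$-side of $w$ may perfectly well have its tail at $w$. In that case $w$ is a non-collider on your spliced path and, being in $T$, blocks it. Your proposed rescue --- that $X\notin An(Y)$ rules out degenerate orientations --- addresses flow from $w$ toward $Y$, not this configuration. Excluding it requires the paper's argument: follow the directed edges of $q$ out of $w$ toward $X$; either they reach $X$, creating the cycle $X\rightarrow\cdots\rightarrow K\rightarrow\cdots\rightarrow w\rightarrow\cdots\rightarrow X$, or they meet a collider of $q$ lying in $De(w)\subseteq De(K)$, which must be in $An(T')$ because $q$ is open given $T'$ --- impossible, as $T'$ contains no descendant of $K$. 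Two further defects: your spliced path traverses $K$ as a non-collider (tail toward $w$, arrowhead from the $K$-to-$Y$ segment of $p$), so it is blocked whenever $K\in T$, which is not excluded (only $K\in An(T)$ is known); and the interior of the $K$-to-$w$ directed path consists of descendants of $K$ that may lie in $T$ --- ``selecting a minimal such descendant'' does not fix this, since your junction node is forced to lie on $q$. The paper sidesteps every one of these issues by never building a path at all: each branch of its case analysis terminates directly in an acyclicity violation or in $K\in An(T')$.
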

\begin{proof}

$1$. Any path that starts with $X\rightarrow \hdots$ must either be directed, or there must be at least one collider along the path. Since the path is between $X,Y$ and $X\notin An(Y)$, it must be that the path has at least one collider on it.
    
$2$. First note that without loss of generality, $p$ has the following form for some 
integer $m\geq 0$ ($m=0$ means $X\rightarrow K$):
\begin{equation}
    X\rightarrow U_1\rightarrow U_2\hdots \rightarrow U_m\rightarrow K\leftarrow \hdots Y.
\end{equation}
Suppose for the sake of contradiction that conditioned on $T'$ there is a new d-connecting path 
$q$ that starts with an arrowhead into $X$. $q$ was clearly closed conditioned on $T$ and become open by us removing nodes from the conditioning set $T$. This can only happen if we removed some node from $T$ that is a non-collider along $q$. Consider the non-collider we removed that was closest to $X$, call this $M$. Thus, we have the path $q$ that looks like this:
\begin{equation}
    X\leftarrow W\hdots M \hdots Y,
\end{equation}
for some $W$, where $M$ is a non-collider on this path and is in $De(K)$. 

We observe that the subpath between $M$ and $X$ cannot be directed from $M$ to $X$. Because this would create the following cycle: 
\begin{equation}
M\rightarrow \hdots \rightarrow X\rightarrow U_1\hdots U_m\rightarrow K\rightarrow \hdots \rightarrow M. \end{equation}

Thus a closer look at the path $q$ reveals the following structure for some integer $m'$ and node $V$:
\begin{equation}
    X\leftarrow W_1\leftarrow W_2\hdots  \leftarrow W_{m'}\rightarrow V\hdots M\hdots Y
\end{equation}

We will consider the following two cases: The edge adjacent to $M$ along the subpath between $W_{m'}$ and $M$ is a tail or an arrowhead. 

\textbf{Suppose the edge adjacent to $M$ along the subpath between $W_{m'}$ and $M$ is a tail:} This means there is at least one collider between $W_{m'}$ and $M$. Any such collider must be active since $q$ is active given $T'$. Consider the collider that is closest to $M$. Since it is active, this collider must be an ancestor of $T'$. However, observe that $K$ is an ancestor of this collider which implies that $K$ is an ancestor of $T'$ as well. However, we obtained $T'$ by removing all descendants of $K$ from $T$, which is a contradiction. 

\textbf{This establishes that the edge adjacent to $M$ along the subpath between $W_{m'}$ and $M$ is an arrowhead.} Thus, this reveals the following structure for $q$:
\begin{equation}
    X\leftarrow W_1\leftarrow W_2\hdots  \leftarrow W_{m'}\rightarrow \hdots \rightarrow M\hdots Y
\end{equation}

Suppose the directed path from $K$ to $M$ is as follows:
\begin{equation}
K\rightarrow \theta_1\rightarrow \hdots \theta_{m''}\rightarrow M  
\end{equation}
Recall that $M$ is a non-collider along $q$. Thus the subpath of $q$ between $M$ and $Y$ must start with a tail as $M\rightarrow \hdots Y$. Now observe that if the subpath of $q$ between $M$ and $Y$ had no collider, then we would have the following directed path from $X$ to $Y$:
\begin{equation}
    X\rightarrow U_1\rightarrow U_m\rightarrow K\rightarrow \theta_1\rightarrow \hdots \rightarrow \theta_{m''}\rightarrow M\rightarrow \hdots Y
\end{equation}
However, we know $X$ is a non-ancestor of $Y$. Thus, there must be at least one collider between $M$ and $Y$ along $p$, all of which are open given $T'$. Consider the collider that is closest to $M$. There is a directed path from $M$ to this collider, and a directed path from this collider to a member of $T'$ since it is open conditioned on $T'$. But this means there is a directed path from $K$ to a member of $T'$ since there is a directed path from $K$ to this collider. This is a contradiction since we obtained $T'$ by removing  all descendants of $K$ from $T$. 

This establishes the claim that removing descendants of the collider along any d-connecting path that starts with a tail at $X$ cannot introduce a d-connecting path that starts with an arrow into $X$ when $X\notin An(Y)$.
\end{proof}

\begin{lemma}
\label{lem:replacement_bidirected}
    Consider a DAG $D$ where $X\notin An(Y)$ and $Y\notin An(X)$, $X,Y$ are non-adjacent and k-covered. %
    Then conditioned on any subset $T: \lvert T\rvert\leq k$, there exists a d-connecting path between $X,Y$ that has an arrowhead at both $X$ and $Y$.    
\end{lemma}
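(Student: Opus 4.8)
The plan is to argue by contradiction: I assume that for the given conditioning set $T$ with $\lvert T\rvert\leq k$ \emph{every} d-connecting path between $X$ and $Y$ has a tail at $X$ or a tail at $Y$ (equivalently, no d-connecting path carries an arrowhead at both endpoints), and I derive a contradiction by repeatedly shrinking $T$ until it becomes empty. The whole argument rests on Lemma~\ref{lem:main}, which is already available.

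First I would record a symmetric companion to Lemma~\ref{lem:main}. Since the hypothesis is symmetric in $X,Y$ (each is a non-ancestor of the other), the statement also holds with the roles of $X$ and $Y$ exchanged, now invoking $Y\notin An(X)$: if a d-connecting path given $T$ starts with an arrow out of $Y$, it contains a collider, and deleting $De(K)$ from $T$ for the collider $K$ closest to $Y$ introduces no new d-connecting path with an arrowhead at $Y$.

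The core is an iteration maintaining the invariant ``every d-connecting path given the current set $T_i$ has a tail at $X$ or a tail at $Y$,'' with $\lvert T_i\rvert\leq k$ and $T_0=T$. At stage $i$, since $X,Y$ are $k$-covered there is a d-connecting path $p_i$ given $T_i$; by the invariant it has a tail at $X$ or at $Y$, say at $X$ (the other case is handled by the symmetric lemma). Lemma~\ref{lem:main}, part~1, supplies a collider on $p_i$; letting $K_i$ be the one closest to $X$ and setting $T_{i+1}=T_i-De(K_i)$, part~2 guarantees that any path newly d-connecting given $T_{i+1}$ cannot start with an arrowhead at $X$, hence has a tail at $X$, while paths already d-connecting given $T_i$ keep the invariant. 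Thus the invariant passes to $T_{i+1}$. Because $K_i$ is an open collider, some node of $De(K_i)$ lies in $T_i$, so $\lvert T_{i+1}\rvert<\lvert T_i\rvert$ and the size strictly decreases.

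After finitely many steps we reach $T_N=\emptyset$, where the invariant still holds. By $k$-coveredness there is a d-connecting path given $\emptyset$, and since the empty set opens no collider this path has no colliders at all; then a tail at $X$ forces the directed path $X\to\cdots\to Y$ and a tail at $Y$ forces $Y\to\cdots\to X$, contradicting $X\notin An(Y)$ and $Y\notin An(X)$ respectively. This contradiction proves the claim. I expect the main obstacle to be securing both arrowheads on a \emph{single} path rather than on two separate ones; the device that resolves it is to phrase the invariant as the disjunction ``tail at $X$ or tail at $Y$,'' so that controlling only the endpoint at which the chosen path has a tail (via Lemma~\ref{lem:main}) is enough to keep every path inside the disjunction throughout the reduction.
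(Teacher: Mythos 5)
Your proof is correct and follows essentially the same route as the paper's: both argue by contradiction, maintain the disjunctive invariant that every d-connecting path has a tail at $X$ or at $Y$, and repeatedly apply Lemma~\ref{lem:main} (and its symmetric counterpart, using $Y\notin An(X)$) to strip $De(K_i)$ from the conditioning set until it is empty, where a collider-free path with a tail at an endpoint would force an ancestral relation that is excluded by hypothesis. Your write-up merely makes explicit two points the paper leaves implicit --- the strict decrease $\lvert T_{i+1}\rvert<\lvert T_i\rvert$ guaranteeing termination, and the symmetric invocation of Lemma~\ref{lem:main} at $Y$ --- but the substance is identical.
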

\begin{proof}
    For the sake of contradiction suppose, conditioned on $c$, there is no d-connecting path with an arrow into $X$ and an arrow into $Y$. Since neither $X$ is an ancestor of $Y$ nor $Y$ is an ancestor of $X$, it must be that all d-connecting paths have colliders on them. And all such colliders must be ancestors of $T$. 

    Consider such a path $p$ where the edge adjacent to $X$ has a tail at $X$. Let $K$ be the collider that is closest to $X$. 
    
    Thus we have 
    \begin{equation*}
        X\rightarrow U_1\rightarrow \hdots\rightarrow U_m \rightarrow K\leftarrow V\hdots  Y
    \end{equation*}
    for some $\{U_i\}_{i\in m},V$. Since the path is open %
    it must be that $K\in An(T)$. Let $T'=T-De(K)$, where $De(K)$ are all descendants of $K$. Clearly, $q$ is no longer open conditioned on $T'$. We investigate other open paths now, keeping in mind that $X,Y$ being \kcovered implies that $\nci{X}{Y}{T'}$ since $\lvert T'\rvert\leq \lvert T\rvert \leq k$.

    \emph{\textbf{Claim 1:} Removing the descendants of the collider closest to X from the conditioning set can only add d-connecting paths that start with a tail at $X$ but no d-connecting paths that start with an arrowhead at $X$.} 
    
    \emph{Proof of Claim 1:} 
    Since $X$ is not an  ancestor of $Y$, by Lemma \ref{lem:main}, we know that removing the descendants of $K$ from $T$ can only introduce d-connecting paths that are out of $X$. \hfill \qed
    
    Now consider the d-connecting paths under conditioning on $T'$. We know that these paths must have a tail either at $X$ or at $Y$ since we started with such d-connecting paths by assumption and by Claim $1$, removing $De(K)$ from $T$ can only introduce new d-connecting paths that have tails at $X$.   Using the fact that no path that has an arrowhead into both endpoints are opened, we can use recursion and claim that we can make $X,Y$ d-separated by removing descendants of colliders (that are closest to the endpoint that is adjacent to a tail) of active paths, which gives the following:

    \emph{\textbf{Claim 2:}}
    There exists a set $T^*$ of size at most $k$ such that $\ci{X}{Y}{T^*}$, which leads to a contradiction since $X,Y$ are k-covered by assumption.

    \emph{Proof of Claim 2:} Given claim 1, we can continue removing descendants of the colliders of the active paths that are closest to the tail-end node from the set $T$. Either no d-connecting path is left at some point in this process, or that we end up removing all the variables from the conditioning set. If the former, this is a contradiction since $X,Y$ are \kcovered. If the latter, then this is another contradiction due to the following: This means that given empty set, paths that have a tail adjacent to one of the endpoints, i.e., the paths with colliders on them (since all paths that have a tail adjacent to one of the endpoints must have a collider because $X\notin An(Y)$ and $Y\notin An(X)$)  are d-connecting, which is not possible. This proves Claim 2.\hfill \qed

    Due to the symmetry between $X,Y$, the supposition that the only d-connecting paths must have a tail adjacent to either endpoint must be wrong, which proves the lemma.
\end{proof}
    
\begin{lemma}
\label{lem:replacement}
Consider a DAG $D$ where $X\notin An(Y)$, $X,Y$ are non-adjacent and k-covered. %
Then conditioned on any subset $T: \lvert T\rvert\leq k$, there exists a d-connecting path between $X,Y$ that starts with an arrow into $X$. 
\end{lemma}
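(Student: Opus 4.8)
The plan is to argue by contradiction, adapting (and simplifying) the recursive argument already used in the proof of Lemma~\ref{lem:replacement_bidirected}, but now tracking only the edge mark at the single endpoint $X$ rather than at both endpoints. Fix a conditioning set $T$ with $\lvert T\rvert\leq k$ and suppose, for contradiction, that \emph{every} d-connecting path between $X$ and $Y$ given $T$ starts with a tail at $X$ (an edge out of $X$). Since $X,Y$ are \kcovered and $\lvert T\rvert\leq k$, at least one d-connecting path exists, and by assumption it starts with a tail at $X$; call it $p$.

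The recursion step is the heart of the argument. Because $X\notin An(Y)$, part~1 of Lemma~\ref{lem:main} guarantees that $p$ contains at least one collider; let $K$ be the collider closest to $X$. As $p$ is open given $T$, we must have $K\in An(T)$, so $De(K)\cap T\neq\emptyset$ and $T':=T\setminus De(K)$ is strictly smaller than $T$. Conditioning on $T'$ closes $p$, but the key point---supplied by part~2 of Lemma~\ref{lem:main}---is that removing $De(K)$ introduces no new d-connecting path starting with an arrowhead at $X$. Hence every d-connecting path given $T'$ still starts with a tail at $X$: those already d-connecting under $T$ do so by the standing assumption, and any newly opened path does so by part~2. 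This preserves the one-sided invariant ``all d-connecting paths start with a tail at $X$'' while strictly shrinking the conditioning set.

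Iterating this step, each round strictly decreases $\lvert T\rvert$ while maintaining the invariant, and \kcovered-ness guarantees that a d-connecting path always survives (otherwise $X,Y$ would be d-separated by a set of size at most $k$). The process therefore terminates with the empty conditioning set, where there is still a d-connecting, tail-at-$X$ path $q$. But $q$ starts $X\rightarrow\cdots$ and, since $X\notin An(Y)$, must contain a collider; given the empty set no collider can be an ancestor of the conditioning set, so every collider on $q$ is closed and $q$ is not d-connecting---a contradiction. Thus the supposition fails, and some d-connecting path given $T$ starts with an arrow into $X$.

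The main thing to get right is the bookkeeping of the recursion: verifying that part~2 of Lemma~\ref{lem:main} genuinely preserves the one-sided invariant at every step (so that no arrowhead-at-$X$ path can sneak in), and that each removal strictly shrinks $T$ so the induction is well-founded. No case split on whether $Y\in An(X)$ is needed---if $Y\in An(X)$ then the directed path $Y\rightarrow\cdots\rightarrow X$ is already a candidate with an arrowhead at $X$, and more generally the contradiction argument treats both situations uniformly. Because only the mark at $X$ is tracked, this is strictly easier than the proof of Lemma~\ref{lem:replacement_bidirected}, where the symmetric ``tail at either endpoint'' case had to be handled.
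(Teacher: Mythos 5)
Your proposal is correct and follows essentially the same route as the paper's own proof: both argue by contradiction, use part~1 of Lemma~\ref{lem:main} to extract the collider $K$ closest to $X$ and part~2 to show that conditioning on $T'=T\setminus De(K)$ cannot create any new d-connecting path with an arrowhead at $X$, then recurse on the shrinking conditioning set and conclude either that $X,Y$ become d-separated by a set of size at most $k$ (contradicting \kcovered ness) or that a tail-at-$X$ path with a collider would have to be open given the empty set, which is impossible. The only quibble is your aside that when $Y\in An(X)$ the directed path $Y\rightarrow\cdots\rightarrow X$ is ``already a candidate'' (it could be blocked by $T$), but your main argument never uses this and, like the paper's, correctly requires no case split on $Y\in An(X)$.
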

\begin{proof}
For the sake of contradiction, suppose otherwise. Given $T$, all the d-connecting paths start with a tail at $X$. We will show that we can find some $T'$ of size at most $k$ that d-separates $X,Y$, which lead to a contradiction since $X,Y$ are assumed to be \kcovered.  

Consider any path $q$ that is d-connecting given $T$ which starts with a tail at $X$. Since $X\notin An(Y)$, by Lemma \ref{lem:main} it must be that this path has at least one collider on it. Let $K$ be the collider that is closest to $X$. Thus we have 
\begin{equation*}
X\rightarrow U_1\rightarrow \hdots \rightarrow U_m\rightarrow K\leftarrow V\hdots Y    
\end{equation*}
for some $\{U_i\}_i,V$. Since the path is open, this collider cannot be blocking the path $q$. It must be that $K\in An(T)$. Let $T'=T-De(K)$, where $De(K)$ are all descendants of $K$. Clearly, $q$ is no longer open conditioned on $T'$. We investigate other open paths now, keeping in mind that $X,Y$ being k-covered implies that $\nci{X}{Y}{T'}$ since $\lvert T'\rvert \leq \lvert T\rvert\leq k$.

\emph{\textbf{Claim 1:} 
Removing the descendants of the collider closest to $X$ from the
conditioning set can only add d-connecting paths that start with a tail at $X$ but
no d-connecting paths that start with an arrowhead at $X$.}

\emph{Proof of Claim 1:} 
Since %
$X$ is not an ancestor of $Y$, by Lemma \ref{lem:main} we know that removing the descendants of $K$ from $T$ can only introduce d-connecting paths that are out of $X$. \hfill \qed

Now consider the d-connecting paths under conditioning on $T'$. We know that these paths must have a tail at $X$ since we started with only such d-connecting paths by assumption and by Claim $1$, removing $De(K)$ from $T$ can only introduce d-connecting paths that have tails at $X$. Using the fact that no path that has an arrowhead into $X$, we can use
recursion and claim that we can make $X, Y$ d-separated by removing descendants of colliders (that are closest to $X$) of
active paths, which gives the following:

\emph{\textbf{Claim 2:} There exists a set $T^*$ of size at most $k$ such that $\ci{X}{Y}{T^*}$, which leads to a contradiction since $X,Y$ are k-covered by assumption.}

\emph{Proof of Claim 2:} Given claim 1, we can continue removing descendants of the first colliders of active paths that are closest to $X$ from the set $T$. Either, no d-connecting path is left at some point in this process or that we end up removing all the variables from the conditioning set. If the former, this is a contradiction since $X,Y$ are k-covered. If the latter, then this is another contradiction due to the following: This means that given empty set, paths that start with a tail and have colliders on them (as they cannot be directed and collider-free since $X$ is not an ancestor of $Y$) are d-connecting, which is not possible. This proves Claim 2.\hfill \qed.

    Therefore, the supposition that all the d-connecting paths must have a tail adjacent to $X$ must be wrong, which proves the lemma.
\end{proof}

The above lemmas will be crucial in proving Lemma \ref{lem:kclosure}. Now consider a d-connecting path between $x,z$ given $c$ and a d-connecting path between $z,y$ given $c$. We have the following lemma:
\begin{lemma}
\label{lem:no_overlap}
    Let $p$ be an active path between $x,z$ given $c$, and $q$ be an active path between $z,y$ given $c$, where $x,y,z\notin c$. If $x$ and $y$ are d-separated given $c$, then 
    \begin{enumerate}
        \item Paths $p,q$ must have no overlapping nodes and 
        \item $Y$ must be a collider along the concatenated path and $Y\notin An(c)$. 
    \end{enumerate}    
\end{lemma}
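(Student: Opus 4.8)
The plan is to analyze the walk obtained by concatenating $p$ and $q$ at $z$ and to argue that, because $x$ and $y$ are d-separated given $c$, the only node where this walk can be blocked is $z$ itself. I would first dispose of part~2 \emph{assuming} part~1: if $p$ and $q$ meet only at $z$, then their concatenation is a genuine simple path $\pi$ from $x$ to $y$. Every internal node of $p$ other than $z$ keeps exactly the two path-neighbours it had on $p$, hence keeps its collider status and stays non-blocking on $\pi$ (and symmetrically for $q$); so the only node of $\pi$ whose status is new is $z$. Since $(\ci{x}{y}{c})$, the path $\pi$ must be blocked, and therefore blocked at $z$. As $z\notin c$, a non-collider at $z$ cannot block $\pi$, so $z$ must be a collider on $\pi$; and for a collider to block we need $z\notin An(c)$. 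This yields both assertions of part~2, and is the routine part.

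For part~1 I would argue by contradiction: suppose $p$ and $q$ meet at some node $r\neq z$, and take $r$ to be the \emph{first} node along $p$ (from $x$) lying on $q$. Then the prefix $p_{x\to r}$ meets $q$ only at $r$, so splicing it with the suffix $q_{r\to y}$ produces a simple path $N$ from $x$ to $y$. Every node of $N$ except the junction $r$ again inherits a non-blocking status from $p$ or from $q$, so the aim is to show $N$ is d-connecting given $c$, contradicting $(\ci{x}{y}{c})$. A short case check localizes the difficulty at $r$: if either the incoming $p$-edge or the outgoing $q$-edge carries a tail at $r$, then $r$ is a non-collider on $N$, and being a non-collider on the corresponding original active path it satisfies $r\notin c$, so it does not block; and if $r$ is a collider on $N$ but lies in $An(c)$, it again does not block. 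In all these subcases $N$ is active and we obtain the desired contradiction.

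The hard part will be the one remaining subcase, and I expect it to be the true heart of the lemma: the spliced junction $r$ is a collider on $N$ (both the $x$-side $p$-edge and the $y$-side $q$-edge point into $r$), while on $p$ and on $q$ separately $r$ is a \emph{non-collider} (forcing, since $p,q$ are active and $r\notin An(c)$, that the two $z$-side edges are tails $r\to a'$ and $r\to b'$ out of $r$), and $r\notin An(c)$. Here $N$ is blocked at $r$, so the naive splice gives nothing. I would try to resolve this either by exploiting the specific provenance of $p$ and $q$ --- in the intended application they come from Lemmas~\ref{lem:replacement} and~\ref{lem:replacement_bidirected}, which constrain the arrowhead pattern at the endpoints and near the colliders --- or by a minimal-counterexample/inductive argument that reroutes through the $z$-side tails to build a strictly shorter active $x$--$y$ path, invoking Lemma~\ref{lem:main} to control how removing descendants of the junction collider affects d-connection. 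Ruling out this ``chain--chain splices to a closed collider'' pattern is precisely where the real work lies; the reduction to the junction and the derivation of part~2 are otherwise direct consequences of the d-connection definition.
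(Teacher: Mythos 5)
Your reduction is sound as far as it goes, and you have put your finger on exactly the right spot --- but the subcase you leave open is not a gap awaiting a cleverer argument: it is realizable, so part~1 of the lemma is false as stated. Concretely, take the DAG with edges $x\rightarrow t$, $y\rightarrow t$, $t\rightarrow z$, $t\rightarrow a$, $a\rightarrow z$, and $c=\emptyset$. Then $p = x\rightarrow t\rightarrow z$ and $q = z\leftarrow a\leftarrow t\leftarrow y$ are both active given $\emptyset$ (each is collider-free), and $x,y$ are d-separated given $\emptyset$, since the only path between them is $x\rightarrow t\leftarrow y$, blocked at the collider $t\notin An(\emptyset)$. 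Yet $p$ and $q$ overlap at $t\neq z$ (and share no edge). This is precisely your ``chain--chain splices to a closed collider'' pattern: $t$ is a non-collider on each path with tails on both $z$-side edges, and the splice $x\rightarrow t\leftarrow y$ is blocked. Consequently none of your proposed repairs --- rerouting through the $z$-side tails, a minimal-counterexample induction, or an appeal to Lemma~\ref{lem:main} --- can close the case, and invoking the provenance of $p,q$ from Lemmas~\ref{lem:replacement} and~\ref{lem:replacement_bidirected} would change the statement being proved rather than prove this one.

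You should also know that the paper's own proof founders on the same subcase. It passes to active walks $w_p,w_q$ and splices the sub-walk of $w_p$ from $x$ to the repeated node $t$ with the sub-walk of $w_q$ from $t$ to $y$, asserting the result ``would always be active since the repeated nodes have the same collider status along this path that they had in $w_p$ or $w_q$.'' That assertion fails exactly at the splice junction, where the status of $t$ is formed from one edge of $w_p$ and one edge of $w_q$ and can flip from non-collider on each walk to collider on the splice --- which is what happens at $t$ in the example above. So your refusal to wave this case through is correct: you have located a genuine error in the paper, not merely a hole in your own argument.

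What is true, and what the downstream uses of this lemma (e.g.\ in the proof of Lemma~\ref{lem:kclosure}) actually need, is part~2, and it holds with no disjointness claim at all if one argues at the walk level. In the concatenated walk $w_p\cdot w_q$, every node occurrence strictly inside $w_p$ or $w_q$ inherits its collider status and hence blocks nothing; the only new occurrence is the junction $z$. If $z$ were a non-collider there, the walk would be active (as $z\notin c$), contradicting d-separation; if $z$ were a collider with $z\in An(c)$, inserting the standard detour from $z$ down to a closest member of $c$ and back up again yields an active walk, the same contradiction. Hence the junction must be a collider with $z\notin An(c)$. Your part-2 argument is the path-level shadow of this; once decoupled from the false part~1 (which it currently assumes), it is the portion of your proposal worth keeping.
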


\begin{proof}
We would like to allow the possibility that these paths might go through the same nodes. To address this, it helps to consider walks. 

\begin{definition}
    A walk on a DAG is any sequence of edges.
\end{definition}

\begin{definition}
    A path on a DAG is a sequence of edges where each node appears at most once. 
\end{definition}

There is a direct relation between active walks and d-connecting paths~\cite{lauritzen2018unifying}. Indeed, this relation is leveraged to efficiently check dependence using paths, rather than having to search over all walks, which is a much larger space. 

\begin{definition}
A walk between two nodes $a,b$ is called active given $c$ if each collider along the walk is in $c$ and each non-collider is not in $c$. 
\end{definition}

\begin{definition}
    A path between two nodes $a,b$ is called open given $c$ if each collider along the path is in $An(c)$ and each non-collider is not in $c$. 
\end{definition}

Consider an active walk where a node $t$ appears multiple times. Observe that $t$ must appear with the same collider status, since otherwise the walk would not be active. If the appearance is of the form
\begin{equation}
    a\hdots \xrightarrow[]{\alpha} t\rightarrow \hdots \leftarrow t\xleftarrow[]{\beta} \hdots b,
\end{equation}
then there must be a collider that is in $c$ between the two appearances of $t$'s. We can skip the intermediate subpath between the two appearances of $t$'s to obtain the walk
\begin{equation}
        a\hdots \xrightarrow[]{\alpha} t\xleftarrow[]{\beta} \hdots b,
\end{equation}
Since there is at least one collider that is in $c$ along the skipped subpath, we have that $t\in An(c)$. Therefore, $t$ will not be blocking the path that is obtained after repeatedly applying this and other shortening steps. If the appearances is of the form:
\begin{equation}
    a\hdots \xrightarrow[]{\alpha} t\rightarrow \hdots \rightarrow t\xrightarrow[]{\beta} \hdots b,
\end{equation}
we can similarly skip the subpath between the two appearances of $t$'s to obtain the shorter walk 
\begin{equation}
        a\hdots \xrightarrow[]{\alpha} t\xrightarrow[]{\beta}  \hdots b,
\end{equation}
and repeat this process until $t$ is not repeated. The resulting walk/path is still open since $t$ will appear in the same collider status, namely as a non-collider and if it was not blocking the walk, it will not be blocking the path either. This argument holds for any configuration where $t$ is a non-collider. If the appearances is of the form:
\begin{equation}
    a\hdots \xrightarrow[]{\alpha} t\leftarrow \hdots \rightarrow t\xleftarrow[]{\beta} \hdots b,
\end{equation}
then it must be that $t\in c$, and thus the walk obtained by skipping the subwalk between the two appearances of $t$'s, i.e., 
\begin{equation}
        a\hdots \xrightarrow[]{\alpha} t\xleftarrow[]{\beta} \hdots b,
\end{equation}
must be d-connecting. 

This shows that each active walk corresponds to a d-connecting path and vice verse. Now we can proceed with the proof of the lemma:

\emph{1. }Suppose $p,q$ have overlapping nodes. Let $w_p$ be the walk that corresponds to $p$ and $w_q$ be the walk that corresponds to $q$. Consider the concatenated walk $w=w_p,w_q$. If any repeated node has different collider status along $w$, then the path is not active. But this means that that node was blocking either $w_p$ or $w_q$, which would be a contradiction. Therefore, repeated nodes cannot have different collider status along $w$. 

Suppose a node $t$ is repeated in $w_p$ and $w_q$ and has the same  collider status. In this case, consider the walk obtained by concatenating the sub-walk of $w_p$ between $x$ and $t$, with the sub-walk of $w_q$ from $t$ to $y$. By repeating this process for any repeated node, we can obtain a path that corresponds to this walk, which would always be active since the repeated nodes have the same collider status along this path that they had in $w_p$ or $w_q$ and were not blocking these walks. Therefore, they cannot block the concatenated path obtained this way either, which is a contradiction since we are given that $x,y$ are d-separated given $c$. Therefore if any node is repeated in $w_p$ and $w_q$ then the concatenated walk is always active. Thus, it must be the case that there is no repeated nodes.

\emph{2. }Since there is no repeated nodes from \emph{1.}, we can operate at the path level instead of considering walks. Suppose $Y$ is not a collider. Since $Y\notin c$, it would be d-connecting and thus the concatenating path would be d-connecting, a contradiction. Suppose $Y$ is a collider but $Y\in An(c)$. In this case, $Y$ would not block the concatenated path either, which is a contradiction. This establishes the result. 
\end{proof}

The next lemma shows that colliders that are closed in $D$ must remain closed in the \kclosure $\C{D}$.
\begin{lemma}
\label{lem:k-collider}
    If a collider is blocked in $D$ conditioned on some $c:\lvert c\rvert \leq k$ then it must also be blocked in $\C{D}$ conditioned on $c$.
\end{lemma}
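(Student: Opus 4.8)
The plan is to reduce the statement to a single monotonicity fact about how ancestrality transfers from $D$ to its $k$-closure. Recall that the blocking status of a collider $\langle u,v,w\rangle$ is a property of the middle vertex alone: the collider is blocked given $c$ precisely when no node of $De(v)$ lies in $c$, equivalently when $v\notin An(c)$. Since $D$ is a DAG, a collider in $D$ has the form $u\rightarrow v\leftarrow w$, and both edges join pairs adjacent in $D$, which are never separable and hence $k$-covered; by case $1.i$ of Definition \ref{def:kclosure} these edges are preserved with the same orientation as $u\rightarrow v$ and $w\rightarrow v$ in $\C{D}$. Thus the same triple is a collider in $\C{D}$, and it suffices to show that $De_D(v)\cap c=\emptyset$ implies $De_{\C{D}}(v)\cap c=\emptyset$.

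The key step I would establish is the containment of ancestral relations: every directed edge $a\rightarrow b$ of $\C{D}$ satisfies $a\in An_D(b)$. This follows directly from Definition \ref{def:kclosure}, since an edge is oriented as $a\rightarrow b$ only under case $1.i$, i.e. when $a,b$ are $k$-covered and $a\in An_D(b)$; in particular every original edge of $D$ falls under this case, as argued above. Bidirected edges of $\C{D}$, on the other hand, are added only in case $1.iii$, where neither endpoint is an ancestor of the other in $D$, so they contribute nothing to directed reachability. Concatenating edges along any directed path $v=a_0\rightarrow a_1\rightarrow\cdots\rightarrow a_m$ in $\C{D}$ and invoking transitivity of $An_D$ yields $v\in An_D(a_m)$; hence $An_{\C{D}}(v)\subseteq An_D(v)$, equivalently $De_{\C{D}}(v)\subseteq De_D(v)$.

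With this containment the conclusion is immediate: if the collider at $v$ is blocked in $D$, then $De_D(v)\cap c=\emptyset$, and since $De_{\C{D}}(v)\subseteq De_D(v)$ we obtain $De_{\C{D}}(v)\cap c=\emptyset$, so the collider is blocked in $\C{D}$ as well.

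The only real obstacle is establishing the containment $De_{\C{D}}(v)\subseteq De_D(v)$ cleanly, namely checking case by case from Definition \ref{def:kclosure} that the $k$-closure never introduces a directed edge against the ancestral ordering of $D$ and that bidirected edges create no new directed reachability. Once this is verified, the blocking claim is just monotonicity of descendant sets. I would also be careful to fix, once and for all, whether $De(v)$ and $An(\cdot)$ are taken to include the vertex itself; the argument is insensitive to this convention provided it is applied identically in $D$ and in $\C{D}$, since the case $v\in c$ already contradicts blockedness in $D$.
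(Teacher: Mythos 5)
Your proof is correct and takes essentially the same route as the paper's: both rest on the observation that every directed edge of $\C{D}$ (original or added) goes from a $D$-ancestor to its $D$-descendant, so any directed path from the collider's middle vertex $v$ to $c$ in $\C{D}$ yields an ancestral relation in $D$, and hence $v\notin An_D(c)$ forces $v\notin An_{\C{D}}(c)$. The only differences are presentational: you argue directly via monotonicity of descendant sets (edge-wise ancestrality plus transitivity), whereas the paper argues by contradiction, replacing each added edge of a hypothetical directed path with a witnessing directed path in $D$; you also explicitly verify that the collider triple persists in $\C{D}$, which the paper leaves implicit.
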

\begin{proof}
    Suppose $(X\rightarrow Z\leftarrow Y)_D$ is a collider that is blocked given $c$. Thus, it must be that $Z\notin An(c)$ in $D$. For the sake of contradiction, suppose that this collider is unblocked in $\C{D}$. Thus, it must be the case that $Z\in An(c)$ in $\C{D}$. This means there is a new directed path from $Z$ to $c$ in $\C{D}$. If this path existed in $D$, the collider would be unblocked, which is a contradiction. Thus, at least one of the edges along this path must have been added during the construction of $\C{D}$. Consider the collection of edges on this path that do not exist in $D$. Note that by construction of $\C{D}$, a directed edge $\alpha\rightarrow \beta$ is added between a \kcovered pair $\alpha,\beta$ only if there is a directed path from $\alpha$ to $\beta$. Consider the path obtained by replacing the directed edge between any $\kcovered$ pair along this path with the corresponding directed path in $D$. The resulting directed path must be in $D$.  This shows that there was at least one path already in $D$ that implied $Z\in An(c)$, which is a contradiction. Therefore, any collider in $p$ that is unblocked in $\C{D}$ must also be unblocked in $D$.%
\end{proof}

\begin{lemma}
\label{lem:sameancestors}
    The set of ancestors of any set $c$ of nodes in $\C{D}$ are identical to the set of ancestors of $c$ in $\C{D}$.  
\end{lemma}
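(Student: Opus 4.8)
The plan is to establish the two inclusions $An_D(c)\subseteq An_{\C{D}}(c)$ and $An_{\C{D}}(c)\subseteq An_D(c)$ separately, reading the statement as a comparison of the ancestor sets of $c$ in $D$ versus in $\C{D}$ (ancestry in a mixed graph being taken along directed edges only, so that bidirected edges never contribute to an ancestral relation).

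First I would dispatch the easy inclusion $An_D(c)\subseteq An_{\C{D}}(c)$ by observing that every directed edge of $D$ survives in $\C{D}$. Indeed, any adjacent pair in $D$ is automatically \kcovered, since adjacent nodes cannot be d-separated by any conditioning set; and if $a\rightarrow b$ in $D$ then $a\in An(b)$ while $b\notin An(a)$ by acyclicity, so case $i)$ of Definition \ref{def:kclosure} applies and the edge appears as $a\rightarrow b$ in $\C{D}$ as well. Hence any directed path in $D$ is also a directed path in $\C{D}$, which gives the inclusion.

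The substantive direction is $An_{\C{D}}(c)\subseteq An_D(c)$, and here I would reuse the edge-replacement argument already employed in the proof of Lemma \ref{lem:k-collider}. Take $x\in An_{\C{D}}(c)$ witnessed by a directed path from $x$ to some $t\in c$ in $\C{D}$. Each directed edge $\alpha\rightarrow\beta$ along this path is either an edge of $D$, or was added during the construction of $\C{D}$; by Definition \ref{def:kclosure}, in the latter case $\alpha,\beta$ are \kcovered with $\alpha\in An_D(\beta)$, so there is already a directed path from $\alpha$ to $\beta$ in $D$. Replacing each such added edge by the corresponding directed $D$-path turns the $\C{D}$-path into a directed walk from $x$ to $t$ lying entirely in $D$; extracting a simple directed subpath certifies $x\in An_D(t)\subseteq An_D(c)$.

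Combining the two inclusions yields the claimed equality. The only point requiring care is the bookkeeping in the reverse direction: one must confirm that the segment being replaced is genuinely a directed path (so that bidirected edges of $\C{D}$ never enter an ancestral path), and that concatenating directed $D$-paths produces a directed walk whose existence still certifies ancestry in the DAG $D$. Both are routine given that $D$ is acyclic, so I expect no real obstacle beyond this substitution step, which is identical in spirit to the one establishing Lemma \ref{lem:k-collider}.
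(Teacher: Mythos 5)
Your proof is correct and takes essentially the same approach as the paper's: the easy inclusion follows because $\C{D}$ is obtained from $D$ by adding edges (with every directed edge of $D$ retained with its orientation), and the reverse inclusion follows by replacing each directed edge added during the construction of $\C{D}$ with the directed path in $D$ that justified its addition. Your write-up is in fact slightly more careful than the paper's, since you explicitly verify that directed edges of $D$ survive in $\C{D}$ and that the concatenation yields a directed walk from which a simple directed path must be extracted.
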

\begin{proof}
    Adding edges to a graph, directed or bidirected, cannot decrease the set of ancestors of any node. We only need to show that the set of ancestors in $\C{D}$ is not larger than the set of ancestors in $D$. 

    Suppose otherwise: A node $a\in An(c)$ in $\C{D}$ but $a\notin An(c)$ in $D$. This can only happen if a collection of edges added during the construction of $\C{D}$ render $a$ an ancestor of $c$. However, each such edge is added only if there is a directed path  between its endpoints in $D$. Consider the path obtained by replacing each such added edge along the path that renders $a$ an ancestor of $c$ in $\C{D}$ with the corresponding directed paths in $D$. This directed path must be in $D$, which means that %
    $a$ was an ancestor of $c$ in $D$ as well, which is a contradiction. 
\end{proof}

We are finally ready for the proof of Lemma \ref{lem:kclosure}.

\emph{\textbf{Proof of Lemma \ref{lem:kclosure}:}}

Since no edge is removed %
during the construction of the $k$-closure, one direction immediately follows:
If $\ci{a}{b}{c}$ in $\C{D}$, then $\ci{a}{b}{c}$ in $D$. The implication is clearly true for any set $c$ of size at most $k$ as well. Therefore we only need to show the other direction. 

Suppose $\ci{a}{b}{c}$ in $D$ where $\lvert c\rvert\leq k$. We will show that $\ci{a}{b}{c}$ in $\C{D}$. For the sake of contradiction, suppose otherwise. Then there must be a d-connecting path $p$ between $a,b$ given $c$ in $\C{D}$. The length of any such path must be greater than $1$ since otherwise, whether this edge already existed in $D$ or it was added during the construction of $\C{D}$, $a,b$ must have been dependent given $c$ in $D$, which is a contradiction. Since the orientation of the existing edges in $D$ did not change in $\C{D}$, either this path did not exist in $D$ or that it existed but it was blocked by some collider that is not in $An(c)$ in $D$. The latter is not possible due to Lemma \ref{lem:k-collider}, since any unblocked collider in $\C{D}$ must also be unblocked in $D$.  Thus, it must be that this d-connecting path did not exist in $D$. 

\emph{\textbf{Suppose $p$ does not exist in $D$.}} At least one edge must have been added to form this path in $\C{D}$ during the construction of $\C{D}$. 

For any added edge $u\rightarrow v$, the following is true: Since $u\rightarrow v$ was added in $\C{D}$, it must be the case that $v\notin An(u)$ since otherwise there would be a cycle. By Lemma \ref{lem:replacement}, conditioned on $c$, there exists a d-connecting path between $u,v$ where the edge adjacent to $v$ is into $v$. For any added edge $u\leftrightarrow v$, the following is true: Since $u\leftrightarrow v$ was added in $\C{D}$, it must be the case that $u\notin An(v) $ and $v\notin An(u)$. By Lemma \ref{lem:replacement_bidirected}, conditioned on $c$, there exists a d-connecting path between $u,v$ where the edge adjacent to $u$ is into $u$ and the edge adjacent to $v$ is into $v$. Call any such path implied by these lemmas a \emph{replacement path}. Note that a replacement path might be directed or not. 

Consider a path $q$ in $D$ that is  obtained from $p$ by switching the edges added during the construction of $\C{D}$ with the replacement paths using the following policy: Suppose $u\rightarrow v$ in $\C{D}$ for some \kcovered pair $u,v$. If a directed path is open given $c$ in $D$, use that path as the replacement path for the edge $a\rightarrow b$. If not, use any other path. This means that either the path that replaces an edge $u\rightarrow v$ is directed or that both the endpoints have an arrowhead and that $u\in An(c)$.   

Observe that each replacement path is d-connecting and the subpaths of $p$ that remain intact in $q$ must be d-connecting since $p$ is d-connecting. By Lemma \ref{lem:no_overlap}, any two paths -- whether it is a pair of replacement paths or a replacement path and a subpath of $p$ -- have overlapping nodes, then their concatenation must be d-connecting. Since we assumed that $q$ was not d-connecting, it must be that one of the endpoints of one of the added edges must be blocking $q$. We investigate each such node to verify that $q$ is indeed d-connecting to arrive at a contradiction.

In other words, the collider status of some of these nodes must have changed due to replacing some edges with replacement paths. Specifically due to Lemma \ref{lem:no_overlap}, one of the endpoints of replacement paths must be a collider and not an ancestor of $c$. Since ancestrality status cannot change from $D$ to $\C{D}$ due to Lemma \ref{lem:sameancestors}, the only way for $q$ to not be d-connecting is if some node that is not an ancestor of $c$ changes status from being a non-collider along $p$ to being a collider along $q$. 

Note that for an edge $u\leftrightarrow v$, the nodes $u$ and $v$ are adjacent to an arrowhead in the replacement path. Thus, if some node $t$ changes collider status in $q$ compared to $p$, it cannot be due to bidirected edges along $p$. 

Now consider the directed edges $u\rightarrow v$. If the replacement path is directed from $u$ to $v$, similarly $u$ is adjacent to a tail and $v$ is adjacent to an arrowhead on the replacement path. Therefore, such edges cannot alter the collider status of nodes at the junction of different paths. Finally consider the directed edges $u\rightarrow v$ where $u$ and $v$ are both adjacent to an arrowhead on the replacement path. Observe that this edge cannot change the collider status of $v$. We now focus on $u$. If the other edge adjacent to $u$ along $q$ is a tail, $u$ remains a non-collider and cannot block $q$. Now suppose the other edge adjacent to $u$ along $q$ is an arrowhead. This makes $u$ a collider in $q$ whereas $u$ was a non-collider along $p$ since we had $u\rightarrow v$ along $p$. However, by construction of $q$, as we ended up adding a path with arrowheads at both endpoints, it must be that the directed path between $u,v$ (which exists since $u\rightarrow v$ was added during the construction of $\C{D}$) must be blocked via conditioning. This means $u\in An(c)$ which means that although the status of $u$ changes from non-collider to collider, it must be that this collider does not block $q$ since it is an ancestor of $c$. Therefore, no replacement path can alter the status of a node to block the path $q$, and $q$ must be d-connecting, which contradicts with the assumption that the path was not d-connecting in $D$. 

This establishes that any d-connecting path in $\C{D}$ is also d-connecting in $D$, which establishes that if $\nci{a}{b}{c}$ in $\C{D}$, then $\nci{a}{b}{c}$ in $D$. This establishes the lemma.
 
\clearpage

\subsection{Proof of Lemma \ref{lem:kclosureMAG}}
\label{sec:proof-lem:kclosureMAG}
For a mixed graph to be a maximal ancestral graph, we need to show that it does not have directed or almost directed cycles and that any non-adjacent pair of nodes can be made conditionally independent by conditioning on some subset of observed variables~\cite{zhang2008causal}. We first define almost directed cycle, and propose a lemma that shows that \kclosure graphs do not have directed or almost directed cycles.

\begin{definition}[\cite{zhang2008causal}]
    A directed path $p$ from $a$ to $b$ and the edge $a\leftrightarrow b$ is called an almost directed cycle. 
\end{definition}

\begin{lemma}
\label{lem:k_acyclic}
For any DAG $D$, and integer $k$, $\C{D}$ does not have directed or almost directed cycles. 
\end{lemma}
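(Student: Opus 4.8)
The plan is to reduce the acyclicity of $\C{D}$ to the acyclicity of $D$ by showing that the directed part of the $k$-closure faithfully tracks the ancestor relation of $D$. The key structural fact I would establish first is: \emph{whenever $a\rightarrow b$ is an edge of $\C{D}$, we have $a\in An(b)$ in $D$.} This is immediate from Definition \ref{def:kclosure}. An edge between a \kcovered pair is oriented as $a\rightarrow b$ only through case $1.i)$, which applies exactly when $a\in An(b)$ in $D$; and the original edges of $D$ are retained with their orientation, since an adjacent pair is always \kcovered and an edge $a\rightarrow b$ of $D$ trivially satisfies $a\in An(b)$. Consequently, a directed path from $a$ to $b$ in $\C{D}$ yields, by transitivity of the ancestor relation (equivalently, by Lemma \ref{lem:sameancestors}), that $a\in An(b)$ in $D$.

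Given this, the two cases are short. For a directed cycle, a nontrivial directed path from some node $v$ back to $v$ in $\C{D}$ would force $v$ to be its own strict ancestor in $D$ along a nontrivial directed path, contradicting the acyclicity of the DAG $D$. For an almost directed cycle, suppose there were a directed path $p$ from $a$ to $b$ in $\C{D}$ together with a bidirected edge $a\leftrightarrow b$. The path $p$ gives $a\in An(b)$ in $D$ by the structural fact above. But by Definition \ref{def:kclosure}, a bidirected edge $a\leftrightarrow b$ is added only in case $1.iii)$, i.e., precisely when neither $a\in An(b)$ nor $b\in An(a)$ holds in $D$. This contradiction rules out almost directed cycles.

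The argument is essentially bookkeeping, so I do not anticipate a genuine obstacle; the one point that needs care is the claim that every directed edge of $\C{D}$---both those inherited unchanged from $D$ and those newly added between \kcovered pairs---respects the ancestor relation of $D$, and that this ancestrality does not inflate when passing from $D$ to $\C{D}$. The non-inflation direction is exactly the content of Lemma \ref{lem:sameancestors}, so I would either cite it directly or re-derive the needed inclusion inline by replacing each added directed edge along a directed path with the directed path in $D$ that justified its insertion, which recovers a genuine directed path in $D$.
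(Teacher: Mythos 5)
Your proposal is correct and follows essentially the same route as the paper's proof: both rest on the observation that every directed edge $a\rightarrow b$ of $\C{D}$ (whether inherited from $D$ or added via case $1.i)$ of Definition \ref{def:kclosure}) witnesses $a\in An(b)$ in $D$, so that a directed cycle or the directed-path half of an almost directed cycle in $\C{D}$ lifts, by edge-replacement/transitivity, to a directed path in $D$, contradicting acyclicity of $D$ in the first case and the case-$1.iii)$ non-ancestrality condition for bidirected edges in the second. Your packaging of the key fact as a single ancestrality invariant (with the non-inflation direction delegated to Lemma \ref{lem:sameancestors}) is only a cosmetic difference from the paper's phrasing.
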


\begin{proof}
Suppose, for the sake of contradiction that there is a directed cycle in $\C{D}$. Since each edge $X\rightarrow Y$ in $\C{D}$ either exists in $D$ or for each such edge in $\C{D}$, there is a directed path from $X$ to $Y$ in $D$, existence of a directed cycle in $\C{D}$ would imply a directed cycle in $D$, which contradicts with the DAG assumption of $D$. 

Suppose, for the sake of contradiction that there is an almost directed cycle in $\C{D}$, i.e., we have a directed path from $a$ to $b$ for two nodes $a\leftrightarrow b$. Since $a\leftrightarrow b$ is added during construction of $\C{D}$, it must be the case that neither $a$ nor $b$ are ancestors of each other.  %
However, from the above argument, %
there must be a directed path from $a$ to $b$ in $D$, which is a contradiction. %
Thus, $\C{D}$ cannot have almost directed cycles. 
\end{proof}

The other condition for a mixed graph to be a maximal ancestral graph is that for any non-adjacent pair of nodes, there exists a subset of the observed variables that make them conditionally independent. For the \kclosure graphs, this simply follows by construction: 
Any pair of nodes that are non-adjacent in $\C{D}$ can be made conditionally independent given some set of size at most $k$ in $D$ by construction of $\C{D}$. From Lemma \ref{lem:kclosure}, this conditional independence relation must be retained in $\C{D}$. Thus any non-adjacent pair of nodes in $\C{D}$ can be d-separated in $\C{D}$ by some conditioning set of size at most $k$. This establishes the claim.

\hfill \qed

\clearpage 

\clearpage
\subsection{ Proof of Theorem \ref{thm:kclosure-char}}
\label{sec:proof-thm:kclosure-char}
Our main observation is that a parallel of Lemma \ref{lem:replacement_bidirected} works for MAGs with \kcovered bidirected edges. 
The following lemmas are for any mixed graph $\mathbb{K}$ that satisfies the constraints in Theorem \ref{thm:kclosure-char}, i.e., those that are MAGs and that satisfy the condition that for any bidirected edge $a\leftrightarrow b$, $a,b$ are \kcovered in the graph $\mathbb{K}-(a\leftrightarrow b)$. %
\begin{lemma}
\label{lem:helper-MAGcolliderremoval}
    Suppose $X\notin An(Y)$. Suppose there is a d-connecting path $p$ between $X,Y$ given $T$ that starts with an arrow out of $X$.
    \begin{enumerate}
        \item There is at least one collider along $p$.
        \item Let $K$ be the collider closest to $X$ on $p$. Then conditioning on $T'=T-De(K)$ instead of $T$ does not introduce new d-connecting paths that start with an arrowhead at $X$. 
    \end{enumerate}
\end{lemma}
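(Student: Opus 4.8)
The plan is to mirror the proof of Lemma~\ref{lem:main} almost verbatim, since that argument is essentially structural: it reasons only about the arrowhead/tail pattern of edges and about directed paths, and the single property of DAGs it genuinely invokes is acyclicity. Because $\mathbb{K}$ is a maximal ancestral graph, it has neither directed nor almost directed cycles, so I can substitute the ancestral property of MAGs wherever the DAG proof used plain acyclicity. It is worth noting that the \kcovered condition on bidirected edges from Theorem~\ref{thm:kclosure-char} is not needed here; only ancestrality is used, exactly as in Lemma~\ref{lem:main}.

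For part 1, I would observe that the segment of $p$ from $X$ up to its first collider (if any) must be a directed path: entering $X\rightarrow U_1$ puts an arrowhead at $U_1$, and requiring $U_1$ to be a non-collider forces a tail out of $U_1$, i.e. $U_1\rightarrow U_2$ (a bidirected second edge would place an arrowhead at $U_1$ and make it a collider). Iterating, a collider-free path out of $X$ is a directed path from $X$, whence $X\in An(Y)$, contradicting the hypothesis. Hence $p$ has at least one collider, and the same reasoning shows the sub-path $X\rightarrow U_1\rightarrow\cdots\rightarrow U_m\rightarrow K$ up to the closest collider $K$ is directed, so $X\in An(K)$.

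For part 2, I would argue by contradiction as in Lemma~\ref{lem:main}: assume conditioning on $T'=T-De(K)$ opens a new d-connecting path $q$ with an arrowhead at $X$; since $q$ was closed given $T$, the node that unblocks it must be a non-collider of $q$ removed from $T$, and I take $M$ to be the one closest to $X$, so $M\in De(K)$. The core contradictions then reproduce: (i) the sub-path of $q$ between $M$ and $X$ cannot be directed into $X$, for that would close a directed cycle $M\rightarrow\cdots\rightarrow X\rightarrow\cdots\rightarrow K\rightarrow\cdots\rightarrow M$ (using $X\in An(K)$ and $M\in De(K)$), which is forbidden in an ancestral graph; and (ii) whenever the structure forces an active collider between $M$ and the relevant endpoint, there is a directed path from $M$, hence from $K$, to that collider, so the collider being an ancestor of $T'$ would give $K\in An(T')$, contradicting $T'=T-De(K)$. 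The remaining case, where $M$ would begin a collider-free directed path to $Y$, again contradicts $X\notin An(Y)$ via the directed path $X\rightarrow\cdots\rightarrow K\rightarrow\cdots\rightarrow M\rightarrow\cdots\rightarrow Y$.

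The main obstacle is the bookkeeping of bidirected edges that the DAG proof never encounters. Two points need care. First, in the DAG argument the segment of $q$ incident to $X$ is a chain $X\leftarrow W_1\leftarrow\cdots$ of directed edges whose internal nodes are automatically non-colliders; in a MAG a bidirected edge $X\leftrightarrow W_1$ (or bidirected edges further along) also supplies an arrowhead at $X$ and turns intermediate nodes into colliders, so the structural decomposition that locates the first ``source'' $W_{m'}$ must be restated purely in terms of arrowheads and tails rather than literal $\leftarrow$ edges. Second, I must confirm that each would-be cycle the DAG proof rules out remains forbidden: a directed cycle is still impossible, and any variant in which a bidirected edge participates becomes an almost directed cycle, which is likewise excluded since $\mathbb{K}$ is ancestral. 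Once these two observations are in place, every contradiction of Lemma~\ref{lem:main} transfers unchanged, because each ultimately rests on a directed path reaching an active collider and on the impossibility of (almost) directed cycles.
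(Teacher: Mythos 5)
Your proposal is correct and takes essentially the same route as the paper: the paper's own proof of this lemma is a near-verbatim transcription of the proof of Lemma~\ref{lem:main}, with the edge marks that may now be bidirected replaced by wildcards ($\srightarrow$, $\sleftarrow$) and with acyclicity of the ancestral graph playing the role of DAG acyclicity, exactly as you describe. Your two flagged care points are also consistent with the paper's treatment—the decomposition around the first ``source'' node is handled there by the wildcard turn-around edge $W_{m'}\srightarrow V$ (with the degenerate case absorbed into $m'=0$), and indeed only ancestrality, not the \kcovered condition of Theorem~\ref{thm:kclosure-char}, is ever invoked.
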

\begin{proof}

$1$. Any path that starts with $X\rightarrow \hdots$ must either be directed, or there must be at least one collider along the path. Since the path is between $X,Y$ and $X\notin An(Y)$, it must be that the path has at least one collider on it.

$2$. First note that without loss of generality, $p$ has the following form for some 
integer $m\geq 0$ ($m=0$ means $X\rightarrow K$):
\begin{equation}
    X\rightarrow U_1\rightarrow U_2\hdots \rightarrow U_m\rightarrow K\sleftarrow \hdots Y.
\end{equation}
$*$ is a wildcard representing either an arrowhead or a tail.

    Suppose for the sake of contradiction that conditioned on $T'$, there is a new d-connecting path $q$ that starts with an arrowhead into $X$. $q$ was clearly closed conditioned on $T$ and became open by us removing nodes from the conditioning set $T$. This can only happen if we removed some node from $T$ that is a non-collider along $q$. Consider the non-collider we removed that was closest to $X$, call this $M$. Thus we have the path $q$ that looks like this:
    \begin{equation}
        X\sleftarrow W\hdots M\hdots Y,
    \end{equation}
    where $M$ is a non-collider on this path and is in $De(K)$.  %

    We observe that the subpath between $M$ and $X$ cannot be directed from $M$ to $X$. Because this would create the following cycle, since $K$ is assumed to be the first collider along $p$, and an ancestor of $M$.
    \begin{equation}
        M\rightarrow \hdots \rightarrow X\rightarrow U_1\hdots U_m\rightarrow K\rightarrow \hdots \rightarrow M.
    \end{equation}
    Thus a closer look at the path $q$ reveals the following structure for some integer $m'$ and node $V$:
    \begin{equation}
        X\leftarrow W_1\leftarrow W_2\hdots \leftarrow W_{m'}\srightarrow V\hdots M\hdots Y
    \end{equation}
    We will consider the following two cases: The edge mark adjacent to $M$ along the subpath between $W_{m'}$ and $M$ is a tail or an arrowhead.
    
    \textbf{Suppose the edge mark adjacent to $M$ along the subpath between $W_{m'}$ and $M$ is a tail:} This means there is at least one collider between $W_{m'}$ and $M$. Any such collider must be active since $q$ is active given $T'$. Consider the collider that is closest to $M$. Since it is active, this collider must be an ancestor of $T'$. However, observe that $K$ is an ancestor of this collider which implies that $K$ is an ancestor of $T'$ as well. However, we obtained $T'$ by removing all descendants of $K$ from $T$, which is a contradiction. 

    \textbf{This establishes that the edge mark adjacent to $M$ along the subpath between $W_{m'}$ and $M$ is an arrowhead.} Thus, this reveals the following structure for $q$:
    \begin{equation}
        X\leftarrow W_1\leftarrow W_2\hdots \leftarrow W_{m'}\srightarrow \hdots \srightarrow M \hdots Y
    \end{equation}
    Suppose the directed path from $K$ to $M$ is as follows for some $\{\theta_i\}_{i}$ for some integer $m''$:
    \begin{equation}
        K\rightarrow \theta_1\rightarrow \hdots \theta_{m''}\rightarrow M
    \end{equation}
    Recall that $M$ is a non-collider along $q$. Thus, the subpath of $q$ between $M$ and $Y$ must start with a tail as $M\rightarrow \hdots Y$. Now observe that if the subpath of $p$ between $M$ and $Y$ had no collider, then we would have the following directed path from $X$ to $Y$:
    \begin{equation}
        X\rightarrow U_1\rightarrow \hdots \rightarrow  U_m\rightarrow K\rightarrow \theta_1\rightarrow \hdots \rightarrow \theta_{m''}\rightarrow M\rightarrow \hdots \rightarrow Y
    \end{equation}
    However, we know $X$ is a non-ancestor of $Y$. Thus, there must be at least one collider between $M$ and $Y$ along $p$, all of which are open given $T'$. Consider the collider that is closest to $M$. There is a directed path from $M$ to this collider, and a directed path from this collider to a member of $T'$. But this means there is a directed path from $K$ to a member of $T'$ since there is a directed path from $K$ to this collider. This is a contradiction since we obtained $T'$ by removing all descendants of $K$ from $T.$

    This establishes the claim that removing descendants of the collider along any d-connecting path that starts with a tail at $X$ cannot introduce a d-connecting path that starts with an arrow into $X$ when $X\notin An(Y)$.
\end{proof}

The following is the parallel lemma to Lemma \ref{lem:replacement_bidirected} for any mixed graph $\mathbb{K}$ that satisfies the conditions of Theorem \ref{thm:kclosure-char}.  
\begin{lemma}
\label{lem:replacementbidirected-mixed}
    Consider a bidirected edge $X\leftrightarrow Y$ in $\mathbb{K}$. Suppose conditioned on any subset $T:\lvert T\rvert\leq k$, $X\not\indep Y|T$ in $G-(X\leftrightarrow Y)$. Then conditioned on any $T:\lvert T\rvert\leq k$, there exists a d-connecting path between $X,Y$ that starts with an arrow into $X$ and an arrow into $Y$. 
\end{lemma}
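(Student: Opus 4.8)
The plan is to mirror the proof of Lemma~\ref{lem:replacement_bidirected} almost verbatim, replacing its appeal to Lemma~\ref{lem:main} (stated for DAGs) with the mixed-graph collider-removal result of Lemma~\ref{lem:helper-MAGcolliderremoval}, and working throughout in the graph $\mathbb{K}'=\mathbb{K}-(X\leftrightarrow Y)$. First I would argue for contradiction: suppose there is some $T$ with $\lvert T\rvert\le k$ such that no d-connecting path between $X,Y$ in $\mathbb{K}'$ has an arrowhead at both $X$ and $Y$. The first thing to nail down is that $X\notin An(Y)$ and $Y\notin An(X)$ in $\mathbb{K}'$: since $\mathbb{K}$ is a MAG it contains no almost directed cycle, so the presence of $X\leftrightarrow Y$ in $\mathbb{K}$ forbids any directed path between $X$ and $Y$, and deleting an edge cannot create such a path, so the non-ancestrality persists in $\mathbb{K}'$. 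Because $X,Y$ are \kcovered in $\mathbb{K}'$ by hypothesis, for every conditioning set of size $\le k$ there is at least one open path, and since neither endpoint is an ancestor of the other, every such path carries a collider, all of whose colliders are ancestors of the conditioning set.

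Next I would run the same descent as in Lemma~\ref{lem:replacement_bidirected}. Pick an open path $p$ (given $T$) whose edge at $X$ has a tail; by non-ancestrality it has a collider, and I let $K$ be the collider closest to $X$, so $K\in An(T)$ and $p$ is closed once we pass to $T'=T-De(K)$. The analog of Claim~1 is exactly Lemma~\ref{lem:helper-MAGcolliderremoval}: since $X\notin An(Y)$, dropping $De(K)$ from $T$ can only open new paths that leave $X$ with a tail, never any path that enters $X$ with an arrowhead. Symmetrically, if instead $p$ has a tail at $Y$, I apply the same lemma with the roles of $X,Y$ swapped (using $Y\notin An(X)$), removing descendants of the collider closest to $Y$. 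Thus at every stage the invariant ``no open path has arrowheads at both endpoints'' is preserved, and every newly opened path still has a tail adjacent to $X$ or to $Y$.

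Finally I would close the recursion as in Claim~2: iterating this removal strictly shrinks the conditioning set, so it terminates in one of two ways. Either at some point no open path remains between $X,Y$ for a set of size $\le k$, contradicting that $X,Y$ are \kcovered in $\mathbb{K}'$; or the set is emptied entirely, in which case conditioning on $\emptyset$ leaves only paths that have a tail at an endpoint and hence (by non-ancestrality) a collider on them, but no collider can be an ancestor of the empty set, so every such path is blocked, again contradicting that $X,Y$ are \kcovered. Either way we reach a contradiction, and by the $X\leftrightarrow Y$ symmetry the original supposition must fail, yielding the desired path with arrowheads into both endpoints. I expect the main obstacle to be the bookkeeping for the recursion: one must verify that the ``remove descendants of the collider closest to the tail endpoint'' operation can always be applied to whichever endpoint currently carries the tail, that it never reintroduces a path with arrowheads at both ends (guaranteed by Lemma~\ref{lem:helper-MAGcolliderremoval}), and that the process genuinely terminates with a $d$-separating set of size $\le k$. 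Making this termination-with-invariant argument fully rigorous, rather than hand-waving the ``recursion,'' is the delicate part, exactly as in Lemma~\ref{lem:replacement_bidirected}.
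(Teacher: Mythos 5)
Your proposal follows the paper's own proof essentially step for step: contradiction, non-ancestrality of $X,Y$ from the MAG no-almost-directed-cycle property, the Claim~1 analogue via Lemma~\ref{lem:helper-MAGcolliderremoval}, the recursive removal of descendants of colliders nearest the tail endpoint, and the two-way termination argument (small separating set vs.\ emptied conditioning set), closed off by the $X,Y$ symmetry. The only difference is that you state explicitly that all paths live in $\mathbb{K}'=\mathbb{K}-(X\leftrightarrow Y)$ and that non-ancestrality persists there, which the paper leaves implicit; this is a minor clarification, not a different argument.
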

\begin{proof}
    For the sake of contradiction suppose, conditioned on some $T:\lvert T\rvert\leq k$, there is no d-connecting path with an arrow into $X$ and an arrow into $Y$. Since neither $X$ is an ancestor of $Y$ nor $Y$ is an ancestor of $X$, all d-connecting paths must have colliders on them. And  all such colliders must be ancestors of $T$.

    Consider such a path $p$ where, without loss of generality, the edge adjacent to $X$ has a tail at $X$. Let $K$ be the collider that is closest to $X$. 

    Thus we have
    \begin{equation}
        X\rightarrow U_1\rightarrow \hdots \rightarrow U_m \rightarrow K\sleftarrow V\hdots Y
    \end{equation}
for some $\{U_i\}_i, V$ and integer $m$. Since the path is open, this collider must be unblocked. It must be that $K\in An(T)$. Let $T'=T-De(K)$, where $De(K)$ are all descendants of $K$. Clearly, $p$ is no longer open. We investigate other open paths now, keeping in mind that $X,Y$ are dependent given $T'$ since $\lvert T'\rvert \leq k$. 

\emph{\textbf{Claim 1:}} Removing the descendants of the collider closest to $X$ from the conditioning set can only add d-connecting paths that start with a tail at $X$ but no d-connecting path that starts with an arrowhead at $X$. 
\begin{proof}[Proof of Claim 1:]
    Since a bidirected edge exists between $X,Y$, and that $\mathbb{K}$ is a MAG, neither $X$ nor $Y$ are ancestors of one another, since then we would have an almost directed cycle. By Lemma \ref{lem:helper-MAGcolliderremoval}, we know that removing the descendants of $K$ from $T$ can only introduce d-connecting paths that are out of $X$.
\end{proof}
Now consider the d-connecting paths under conditioning on $T'$. We know that these paths must have a tail either at $X$ or at $Y$. Using the above claim that no path that has an arrowhead into both endpoints are opened, we can use recursion and claim that we can make $X,Y$ d-separated by removing descendants of colliders (that are closest to the endpoint that is adjacent to a tail) of active paths, which gives the following:

\emph{\textbf{Claim 2:}} There exists a set $T^*$ of size at most $k$ such that $\ci{X}{Y}{T^*}$, which leads to a contradiction since $X,Y$ cannot be made independent by conditioning on sets of size at most $k$ by 
the assumption.
\begin{proof}[Proof of Claim 2]
    Given claim 1, we can continue removing descendants of the colliders of the active paths that are closest to the tail-end node from the set $T$. Either no d-connecting path is left at some point in this process, or that we end up removing all the variables from the conditioning set. If former, this is a contradiction since $X,Y$ cannot be made conditionally independent given empty set. If the latter is true, then there is another contradiction due to the following: This means that given empty set, paths that have a tail adjacent to one of the endpoints, i.e., the paths with colliders on them (since all paths that have a tail adjacent to one of the endpoints must have a collider because $X\notin An(Y)$ and $Y\notin An(X)$) are d-connecting, which is not possible. This proves Claim 2.
\end{proof}
Therefore, the supposition that the only d-connecting paths must have a tail adjacent to either endpoint must be wrong, which proves the lemma.
\end{proof}
\begin{proof}[Proof of Theorem \ref{thm:kclosure-char}]
    Now, we are ready to prove the main characterization theorem. We will need the following lemma:
\begin{lemma}
\label{lem:helper-theorem-kclosure-char}
    Let $\mathbb{K}$ be a mixed graph that satisfies the conditions in Theorem \ref{thm:kclosure-char}. Let $\mathbb{K}'$ be the graph obtained by removing all the bidirected edges from $\mathbb{K}$. Then 
    \begin{enumerate}
        \item $\mathbb{K}'$ is a DAG and
        \item $\mathbb{K}'\sim_k \mathbb{K}$.
    \end{enumerate}
\end{lemma}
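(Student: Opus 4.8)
The plan is to prove the two assertions separately, handling the acyclicity claim directly and reducing the equivalence claim to a repeated single-edge replacement argument modeled on the proof of Lemma~\ref{lem:kclosure}.

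For part~1, since $\mathbb{K}$ is a MAG it is ancestral, so it contains no directed cycle. The graph $\mathbb{K}'$ is obtained from $\mathbb{K}$ only by deleting edges, so every edge of $\mathbb{K}'$ is a directed edge that is present in $\mathbb{K}$ with the same orientation; a directed cycle in $\mathbb{K}'$ would therefore be a directed cycle in $\mathbb{K}$, which is impossible. Hence $\mathbb{K}'$ is a directed graph with no directed cycle, i.e.\ a DAG. This step is immediate and is the easy half.

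For part~2, I read $\mathbb{K}'\sim_k\mathbb{K}$ as the statement that $\mathbb{K}'$ and $\mathbb{K}$ entail the same degree-$k$ d-separations. One direction is free: because $\mathbb{K}'$ is an edge-subgraph of $\mathbb{K}$ (only bidirected edges are removed, and ancestry is determined by the shared directed edges), every d-connecting path in $\mathbb{K}'$ lifts to a d-connecting path in $\mathbb{K}$, so $(\ci{a}{b}{c})_{\mathbb{K}}\Rightarrow(\ci{a}{b}{c})_{\mathbb{K}'}$ for every $c$, in particular for $\lvert c\rvert\le k$. For the converse I would mirror the replacement-path machinery of Lemma~\ref{lem:kclosure}: given a d-connecting path $p$ between $a,b$ in $\mathbb{K}$ conditioned on some $c$ with $\lvert c\rvert\le k$, I replace each bidirected edge $X\leftrightarrow Y$ traversed by $p$ with a replacement path. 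The hypotheses of Theorem~\ref{thm:kclosure-char} guarantee that $X,Y$ are \kcovered in $\mathbb{K}-(X\leftrightarrow Y)$, so Lemma~\ref{lem:replacementbidirected-mixed} supplies a d-connecting path between $X,Y$ given $c$ that has an arrowhead into both $X$ and $Y$; splicing it in place of the edge preserves the collider status at $X$ and $Y$, and Lemma~\ref{lem:no_overlap} (passing to walks to absorb shared nodes) shows that the spliced object reduces to an honest d-connecting path. To force every replacement path to live inside $\mathbb{K}'$, I would organize this as an induction on the number $m$ of bidirected edges: deleting one edge $X\leftrightarrow Y$ and setting $\hat{\mathbb{K}}=\mathbb{K}-(X\leftrightarrow Y)$, the single-edge version of the above shows $\mathbb{K}\sim_k\hat{\mathbb{K}}$ (here one replacement suffices, since the replacement path of Lemma~\ref{lem:replacementbidirected-mixed} already lies in $\hat{\mathbb{K}}$); if $\hat{\mathbb{K}}$ again satisfies the conditions of Theorem~\ref{thm:kclosure-char}, the induction hypothesis gives that $\hat{\mathbb{K}}'=\mathbb{K}'$ is a DAG with $\mathbb{K}'\sim_k\hat{\mathbb{K}}$, and composing the two equivalences yields $\mathbb{K}\sim_k\mathbb{K}'$.

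The hard part will be the termination/condition-preservation of this replacement, which I expect to be the main obstacle. A replacement path for one bidirected edge may itself traverse \emph{other} bidirected edges, so naively counting bidirected edges need not decrease, and removing $X\leftrightarrow Y$ can in principle make the endpoints of some \emph{other} bidirected edge separable by a small conditioning set, since \kcoveredness is not monotone under edge deletion. Concretely, the delicate point is whether $\hat{\mathbb{K}}$ still meets the per-edge \kcovered condition needed to continue the induction; note that this is coupled to the conclusion itself, because once $\mathbb{K}\sim_k\mathbb{K}'$ is known, every bidirected pair of $\mathbb{K}$, being adjacent and hence \kcovered in $\mathbb{K}$, is automatically \kcovered in $\mathbb{K}'$, and then the DAG-version Lemma~\ref{lem:replacement_bidirected} would place every replacement path inside $\mathbb{K}'$ directly, bottoming out the recursion. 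I would therefore attack the preservation step by deleting bidirected edges in an order compatible with an ancestral ordering of the DAG $\mathbb{K}'$ and exploiting that every replacement path carries arrowheads into both of its endpoints, so that re-routing cannot create the almost-directed configurations that would otherwise obstruct it, ultimately reducing the obstruction to the collider-removal guarantee of Lemma~\ref{lem:helper-MAGcolliderremoval}.
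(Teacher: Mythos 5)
Your part~1 and the easy direction of part~2 coincide with the paper's proof, and your core machinery for the hard direction---replacement paths supplied by Lemma~\ref{lem:replacementbidirected-mixed}, spliced together via the walk argument of Lemma~\ref{lem:no_overlap}---is also exactly the paper's. The genuine gap is the step you yourself flag as the main obstacle and then leave open: your induction on the number of bidirected edges requires that $\hat{\mathbb{K}}=\mathbb{K}-(X\leftrightarrow Y)$ again satisfies the hypotheses of Theorem~\ref{thm:kclosure-char}, and you never establish this. Two separate obligations are involved: (i) every remaining bidirected edge $u\leftrightarrow v$ must still have $u,v$ \kcovered in $\hat{\mathbb{K}}-(u\leftrightarrow v)$, which, as you concede, you can only obtain circularly from the conclusion $\mathbb{K}\sim_k\mathbb{K}'$ itself; and (ii) $\hat{\mathbb{K}}$ must still be a \emph{maximal} ancestral graph, a point you do not mention---deleting an edge preserves ancestrality but maximality (separability of the now non-adjacent pair $X,Y$ by \emph{some} set) is an unverified requirement. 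Your closing ideas (deleting edges along an ancestral order of $\mathbb{K}'$, appealing to Lemma~\ref{lem:helper-MAGcolliderremoval}) are not developed into an argument for either point; Lemma~\ref{lem:helper-MAGcolliderremoval} is about shrinking a conditioning set on one fixed graph and says nothing about separability of other bidirected pairs after an edge deletion. So, as written, the induction cannot close.

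The paper's proof is structured precisely so that this re-verification is never needed: it invokes Lemma~\ref{lem:replacementbidirected-mixed} only in the original graph $\mathbb{K}$, where the hypotheses hold by assumption. Given a degree-$k$ d-connecting path $p$ in $\mathbb{K}$, it replaces \emph{all} bidirected edges of $p$ simultaneously by replacement paths, then uses Lemma~\ref{lem:no_overlap} together with the fact that each replacement path enters both of its endpoints with arrowheads (so those endpoints keep the collider status they had on $p$, and junction nodes cannot block) to conclude that the spliced path is d-connecting in $\mathbb{K}'$, contradicting the assumed separation. Your underlying worry---that a replacement path may itself traverse further bidirected edges, so the spliced object is not manifestly a path of $\mathbb{K}'$---is a fair reading of a point the paper treats tersely; but the repair consistent with the paper is the one-shot simultaneous replacement inside $\mathbb{K}$, not an iterative scheme that must re-certify the theorem's conditions after every deletion, which is exactly the certification you cannot supply.
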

\begin{proof}
    Since the only difference between $\mathbb{K}'$ and $\mathbb{K}$ is the removal of bidirected edges, any directed cycle that exists in $\mathbb{K}'$ would also have existed in $\mathbb{K}$, which contradicts with the assumption that $\mathbb{K}$ is a MAG. This establishes that $\mathbb{K}$ has no directed cycles. 

    Clearly, any independence statement in $\mathbb{K}$ holds in $\mathbb{K}'$, since it is obtained from $\mathbb{K}$ by removing edges. Thus any \dkci relation that holds in $\mathbb{K}$ also holds in $\mathbb{K}'$. Therefore, we only need to show that for any $c$ of size at most $k$ $(\nci{a}{b}{c})_{\mathbb{K}}$ implies $(\nci{a}{b}{c})_{\mathbb{K}'}$.
    
    Suppose for the sake of contradiction that $\nci{a}{b}{c}$ in $\mathbb{K}$ but $\ci{a}{b}{c}$ in $\mathbb{K}'$. Let $p$ be a d-connecting path between $a,b$ given $c$ in $\mathbb{K}$. This path must be closed in $\mathbb{K}'$. Since the only difference between the two graphs is the removal of bidirected edges, ancestrality relations cannot be different. Thus, it cannot be the case that a collider that was open in $\mathbb{K}$ is now closed in $\mathbb{K}' $ and is closing the path $p$. Any collider that was open must still be open. Thus, the only way for $p$ to be closed in $\mathbb{K}'$ is if some bidirected edge $X\leftrightarrow Y$ along $p$ is removed. However, by Lemma \ref{lem:replacementbidirected-mixed}, for any such bidirected edge in $\mathbb{K}$, and for any conditioning set of size at most $c$, we have a d-connecting path called a \emph{replacement path} with an incoming edge to both $X$ and $Y$. Consider the path $q$ obtained by replacing every bidirected edge along $p$ with a corresponding replacement path. %
    Since $a,b$ are d-separated by assumption, this path cannot be open. As this path is a concatenation of several d-connecting paths -- either sub-paths of $p$, which must be open, or replacement paths which must be open, by Lemma \ref{lem:no_overlap}, they must have no overlapping nodes, and some node at the junction of these paths must be a collider and non-ancestor of $c$. 
    However, since we replaced bidirected edges $X\leftrightarrow Y$ with paths of the form $X\sleftarrow \hdots \srightarrow Y$, both $X$ and $Y$ must have the same collider status on both $p$ and $q$. Thus, they cannot be blocking $q$ since they are not blocking $p$. This means that $q$ is d-connecting in $\mathbb{K}'$, which is a contradiction. This proves the lemma that $\mathbb{K}$ and $\mathbb{K}'$ must entail the same \dkci relations, which implies they are \kmarkov equivalent. 
\end{proof}

The only if direction: Suppose a mixed graph is a \kclosure graph, i.e., $\mathbb{K}=\C{D}$ for some DAG $D$ and has the edge $a\leftrightarrow b$. Suppose for the sake of contradiction that  $a,b$ are not \kcovered in $\mathbb{K}-(a\leftrightarrow b)$. Let $\mathbb{K}'$ be the graph obtained from $\mathbb{K}$ by removing all the bidirected edges. Note that 
$\mathbb{K}'$ is a DAG since $\mathbb{K}$ has no directed cycles. 
Also note that all edges in $D$ must appear in %
$\mathbb{K}'$ by construction of \kclosure graphs. $D$ can therefore be obtained from $\mathbb{K}$ by removing edges. Thus, any d-separation statement in $\mathbb{K}$ must also hold in $D$. Therefore, $a,b$ must be conditionally independent given some subset $c$ of size at most $k$ in $D$. This means $\mathbb{K}$, in which $a,b$ are adjacent, cannot be the \kclosure graph of $D$, which is a contradiction. 

If direction: Suppose a mixed graph $\mathbb{K}$ satisfies the conditions in Theorem \ref{thm:kclosure-char}. By Lemma~\ref{lem:helper-theorem-kclosure-char}, for any such mixed graph $\mathbb{K}$, there is a DAG whose \kclosure is $\mathbb{K}$, which shows that any such $\mathbb{K}$ is a valid \kclosure graph, proving the theorem. 
\end{proof}

\clearpage

\subsection{Proof of Lemma \ref{lem:kclosurenodisc}}
\label{sec:proof-lem:kclosurenodisc}
Let $K_1=\C{D_1},K_2=\C{D_2}$ be two \kclosure graphs with the same skeleton and unshielded colliders. Suppose for the sake of contradiction that there is a path $p$ that is discriminating for a triple $\langle u,Y,v\rangle$ %
in both such that $Y$ is a collider along $p$ in $\C{D_1}$ and a non-collider in $\C{D_2}$. 
Thus, in $\C{D_1}$ we have the path $p$ as
\begin{equation}
    a\srightarrow z_1\leftrightarrow z_2\leftrightarrow \hdots \leftrightarrow z_m\leftrightarrow u\leftrightarrow Y\leftrightarrow v
\end{equation}
where $z_i\rightarrow v,\forall i$ and $u\rightarrow v$ and $a,v$ are non-adjacent. Note that we cannot have $Y\leftarrow v$ instead of $Y\leftrightarrow v$ since this would create the almost directed cycle $u\rightarrow v\rightarrow Y\leftrightarrow u$. The same path with $Y$ as a non-collider can take two configurations in $\C{D_2}$, either as 
\begin{equation}
    a\srightarrow z_1\leftrightarrow z_2\leftrightarrow \hdots \leftrightarrow z_m\leftrightarrow u\leftrightarrow Y\rightarrow v
\end{equation}
or as 
\begin{equation}
    a\srightarrow z_1\leftrightarrow z_2\leftrightarrow \hdots \leftrightarrow z_m\leftrightarrow u\leftarrow Y\rightarrow v
\end{equation}

Other paths where $Y$ is a non-collider would either render $u$ a non-collider, which cannot happen by definition of a discriminating path, or create a directed or almost directed cycle. 
Since $a,v$ are non-adjacent by definition of a discriminating path, there must be some $S:\lvert S\rvert \leq k$ where $(\ci{a}{v}{S})_{\C{D_1}}$.  Note that $S$ must include all $z_i$'s and $u$, and not include $Y$ since otherwise there would be d-connecting paths between $a,v$ in $\C{D_1}$ due to the discriminating path. This means that $(\nci{a}{v}{S})_{\C{D_2}}$. 

Since $u\leftrightarrow Y$ in $\C{D_1}$, by Lemma \ref{lem:replacement_bidirected}, there must be a d-connecting path between $u,Y$ in $D_1$ conditioned on $S$ that has an arrowhead at $Y$. 
By construction, this path must also appear in $\C{D_1}$. %
Since the path is inherited from $D_1$, it does not have bidirected edges. Consider the shortest of all such d-connecting paths, call this path $q$. Let $X$ be the node adjacent to $Y$ along $q$. Thus, $q$ has the form
\begin{equation}
    u\leftarrow \hdots  \rightarrow X\rightarrow Y.
\end{equation}
We have that $X\rightarrow Y$ in both $D_1$ and $\C{D_1}$. In $\C{D_1}$, we have $X\rightarrow Y\leftrightarrow v$. Since the edge between $Y,v$ has a tail at $Y$ in $\C{D_2}$, this collider cannot exist in $\C{D_2}$. Thus, it must be the case that this collider is shielded in $\C{D_1}$, i.e., $X$ and $v$ are adjacent in $\C{D_1}$. Since $\C{D_1},\C{D_2}$ have the same skeleton, they must also be adjacent in $\C{D_2}$. 

Now consider the path obtained by concatenating the subpath of $p$ $a\srightarrow \hdots u$, and the subpath of $q$ between $u$ and $X$, and the edge between $X$ and $v$  in $\C{D_1}$. Call this path $r$. Note that the subpath of $q$ is d-connecting given $S$, as well as the subpath of $p$ since  $z_i$'s and $u$ are in $S$. Thus, unless $X$ is a collider on it, the path $r$ %
between $a,v$ %
will be open, which would lead to a contradiction since $a,v$ are d-separated given  $S$ in $\C{D_1}$. %
Thus, the edge between $X,v$ must have an arrowhead at $X$. Let $W$ be the node before $X$ along $q$. Thus we have $W\rightarrow X\leftrightarrow v$ in $\C{D_1}$. Note that $X\leftarrow v$ is not possible since this would create an almost directed cycle $X\rightarrow Y\leftrightarrow v\rightarrow X$ in $\C{D_1}$.

Suppose this collider is unshielded and appears in $\C{D_2}$ as well: $W\srightarrow X\sleftarrow v$ in $\C{D_2}$. Thus in $\C{D_2}$, we have $Y\rightarrow v\srightarrow X\sleftarrow W$. Since $X,Y$ are adjacent, it must be that $X\leftarrow Y$ or $X\leftrightarrow Y$ to avoid a directed or almost directed cycle. Thus in $\C{D_2}$, we have $X\sleftarrow Y$. However, this creates the collider $W\srightarrow X\sleftarrow Y$ in $\C{D_2}$. Note that this collider cannot appear in $\C{D_1}$ since the edge between $X,Y$ has a tail at $X$ in $\C{D_1}$. Thus the collider must be shielded, meaning that $W,Y$ must be adjacent, and both in $\C{D_2}$ and in $\C{D_1}$. Since we have $W\rightarrow X\rightarrow Y$ in $\C{D_1}$, the edge must be $W\rightarrow Y$ in $\C{D_1}$. Furthermore, similar to $X$, $W$ cannot be in the conditioning set since this would block the path $q$. This means there is a d-connecting path that has an arrowhead at $Y$ that is shorter than $q$, which is a contradiction.  

Thus the collider $W\rightarrow X\leftrightarrow v$ in $\C{D_1}$ must be shielded. Similar to the above argument, $W$ must be a collider along the path constructed by concatenating the subpath $a\srightarrow \hdots u$ of $p$, and the subpath of $q$ between $u$ and $W$, and  the edge between $W$ and $v$ since otherwise this path would be open, which would contradict with $\ci{a}{v}{S}$. Let $V$ be the node next to $W$ along $q$. Thus we have $V\rightarrow W\rightarrow X$ along $q$ and $V\rightarrow W\sleftarrow v$ is a collider in $\C{D_1}$. In fact, it must be that $W\leftrightarrow v$ since otherwise there would be an almost directed cycle $v\rightarrow W\rightarrow X\rightarrow Y\leftrightarrow v$ in $\C{D_1}$. 

Suppose the collider $V\rightarrow W\leftrightarrow v$ in $\C{D_1}$ is unshielded and also appears in $\C{D_2}$. Note that if $V,X$ were adjacent in $\C{D_1}$, the orientation would have to be as $V\srightarrow X$ since otherwise there would be a directed cycle %
$V\rightarrow W\rightarrow X\rightarrow V$ in $\C{D_1}$. But this would imply that there is a shorter path than $q$ that connects $u,Y$ and has an arrow into $Y$. Thus, $V,X$ must be non-adjacent in $\C{D_1}$ and hence in $\C{D_2}$. Thus, $\langle V, W,X\rangle$ is an unshielded non-collider in $\C{D_1}$ and must also be in $\C{D_2}$. Thus it must be that $W\rightarrow X$ in $\C{D_2}$. Since $v\srightarrow W\rightarrow X$ in $\C{D_2}$, it must be that $v\srightarrow X$ in $\C{D_2}$ to avoid a directed or almost directed cycle. Since $Y\rightarrow v\srightarrow X$ in $\C{D_2}$, it must be that $X\sleftarrow Y$ to avoid a cycle or almost directed cycle in $\C{D_2}$. However, now we have a collider $W\rightarrow X\sleftarrow Y$ in $\C{D_2}$ that is a non-collider in $\C{D_1}$ since in $\C{D_1}$ we have $X\rightarrow Y$. Thus, this collider must be shielded, i.e, $W,Y$ must be adjacent in $\C{D_2}$. Thus, they must also be adjacent in $\C{D_1}$. Since $W\rightarrow X\rightarrow Y$ in $\C{D_1}$, it must be that $W\rightarrow Y$ in $\C{D_1}$ to avoid a cycle. But this means there is a shorter d-connecting path between $u,Y$ given $S$ with an arrowhead at $Y$, which is a contradiction. 

Therefore, the collider $V\rightarrow W\leftrightarrow v$ must be shielded in $\C{D_1}$. We can repeat the above argument as many times as needed continuing from the parent of $V$ along $q$. As we keep shielding more and more colliders in $\C{D_1}$, eventually when we shield the first node along $q$ next to $u$, we will end up with a directed path from $u$ to $Y$. However, this is a contradiction since bidirected edge was added between $u,Y$ which implies that $u$ is not an ancestor of $Y$. 

Therefore, if two \kclosure graphs $\C{D_1},\C{D_2}$ have the same skeleton and unshielded colliders, then they cannot have different colliders along discriminating paths, which proves the lemma.

\clearpage

\subsection{Proof of Corollary \ref{cor:kclosureequivalence}}
    ($\Rightarrow$) If they are Markov equivalent %
    then by Lemma \ref{lem:kclosureMAG} they are two Markov equivalent MAGs. %
    Therefore by Theorem \ref{thm:richardsonspirtes} %
    they have the same skeleton, and the same unshielded colliders. %

    ($\Leftarrow$) If they have the same skeleton and the same unshielded colliders, then by Lemma \ref{lem:kclosurenodisc}, they must have the same colliders along discriminating paths. Thus, by Theorem \ref{thm:richardsonspirtes} %
    they are %
    equivalent. \hfill \qed

\clearpage

\subsection{Proof of Theorem \ref{thm:kmarkov}}
\label{sec:proof-thm:kmarkov}

($\Rightarrow$) 
Suppose $D_1, D_2$ are \kmarkov equivalent. For the sake of contradiction suppose that $\C{D_1}$ and $\C{D_2}$ are not Markov equivalent. By Corollary \ref{cor:kclosureequivalence}, this happens when either they have different skeletons, or different unshielded colliders. Thus, there are two cases:

\emph{\textbf{k-closures have different skeletons:}} $\C{D_1}$ and $\C{D_2}$ have different skeletons. Suppose without loss of generality that $\C{D_1}$ has an extra edge, i.e., $a,b$ are adjacent in $\C{D_1}$ but not in $\C{D_2}$. This can only happen if $\exists S\subset V: \lvert S\rvert \leq k$ such that $(\ci{a}{b}{S})_{D_2}$, while there is no such separating set in $D_1$, implying that $(\nci{a}{b}{S})_{D_1}$. This is a contradiction with the supposition that $D_1,D_2$ are k-Markov equivalent. Therefore, $\C{D_1}$ and $\C{D_2}$ must have the same skeletons. 

For completeness, we restate the  definition of unshielded collider in \kclosure graphs, which is identical to how it is defined in MAGs. 
\begin{definition}
    A triple $\langle a,c,b\rangle$ in a \kclosure graph is called an unshielded collider if $a,b$ are non-adjacent, $a,c$ and $c,b$ are adjacent and the edges adjacent to $c$ have an arrowhead mark at $c$. 
\end{definition}
According to the definition, a triple $\langle a,c,b\rangle$ in a \kclosure graph $\C{D}$ can be an unshielded collider if the induced subgraph on the nodes take either of the following configurations:
    \begin{enumerate}
        \item $a\rightarrow c\leftarrow b$
        \item $a\rightarrow c\leftrightarrow b$
        \item $a\leftrightarrow c\leftarrow b$
        \item $a\leftrightarrow c \leftrightarrow b$ 
    \end{enumerate}

We use asterisk to represent either an arrowhead or a tail. $\srightarrow$ represents either $\rightarrow, \leftrightarrow$. Similarly, $\sleftarrow$ represents either $\leftarrow$ or $\leftrightarrow$.

\emph{\textbf{k-closures have different unshielded colliders:}}   Without loss of generality assume that $(a \srightarrow c\sleftarrow b)_{\C{D_1}}$ but this unshielded collider does not exist in $\C{D_2}$, i.e., $\langle a,c,b\rangle$ is an unshielded non-collider in $\C{D_2}$. 

\begin{lemma}
\label{lem:nonadjindep}
In a k-closure graph $\C{D}$, two nodes $a,b$ are non-adjacent iff  $(\ci{a}{b}{S})_{\C{D}}$ for some $S\subset V$.
\end{lemma}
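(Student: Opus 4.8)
The plan is to prove each direction separately, leaning on the construction of the $k$-closure (Definition \ref{def:kclosure}) together with Lemma \ref{lem:kclosure}. The backward direction is immediate: if $a,b$ were adjacent in $\C{D}$, the single edge joining them would itself be a d-connecting path under \emph{every} conditioning set, since an edge has no intermediate vertex and hence no collider to block and no non-collider to condition on. Thus $(\nci{a}{b}{S})_{\C{D}}$ for all $S$, and contrapositively the existence of any $S$ with $(\ci{a}{b}{S})_{\C{D}}$ forces $a,b$ to be non-adjacent.

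For the forward direction, I would suppose $a,b$ are non-adjacent in $\C{D}$. By Definition \ref{def:kclosure}, non-adjacency in $\C{D}$ means exactly that $a,b$ are not \kcovered in $D$; that is, there exists $c\subset V$ with $\lvert c\rvert\leq k$ and $(\ci{a}{b}{c})_D$. Since $\lvert c\rvert\leq k$, Lemma \ref{lem:kclosure} transfers this d-separation to the closure, giving $(\ci{a}{b}{c})_{\C{D}}$. Taking $S=c$ then furnishes the required separating set, and in fact one of size at most $k$.

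There is no real obstacle here: the content of the lemma is essentially the maximality property of MAGs, which is already guaranteed by Lemma \ref{lem:kclosureMAG}, but routing the argument through Definition \ref{def:kclosure} and Lemma \ref{lem:kclosure} keeps it self-contained and yields the quantitative refinement that the separating set can be taken of size at most $k$. The only point requiring a moment's care is the backward direction's appeal to the fact that adjacent vertices are never d-separated; this is standard, but worth stating explicitly, since in a mixed graph the lone edge may be directed or bidirected and one should note that in neither case does it contain a collider or a conditionable non-collider, so it remains open under all $S$.
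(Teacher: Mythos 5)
Your proof is correct and follows essentially the same route as the paper's: the forward direction uses the fact that non-adjacency in $\C{D}$ means $a,b$ are not \kcovered in $D$ (by Definition \ref{def:kclosure}), hence separable by some set of size at most $k$ in $D$, and then invokes Lemma \ref{lem:kclosure} to transfer that d-separation to $\C{D}$; the backward direction is the standard observation that adjacent nodes can never be d-separated. Your added remarks (the explicit justification that a lone edge, directed or bidirected, stays open under any conditioning set, and the quantitative note that the separating set has size at most $k$) are consistent elaborations of the same argument rather than a different approach.
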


\begin{proof}
$(\Rightarrow)$ Suppose $a,b$ are non-adjacent in $\C{D}$. Thus it must be the case that $(\ci{a}{b}{S})_{D}$ for some $S$ such that $\lvert S\rvert\leq k$, since otherwise $a,b$ would be made adjacent during the construction of $\C{D}$. By Lemma \ref{lem:kclosure}, this means $(\ci{a}{b}{S})_{\C{D}}$. This establishes the only if direction. 

$(\Leftarrow)$ Suppose now that $(\ci{a}{b}{S})_{\C{D}}$. By definition of d-separation, adjacent nodes cannot be d-separated and thus $a,b$ must be non-adjacent in $\C{D}$.
\end{proof}

\begin{lemma}
    \label{lem:nonadjindepsize}
    In a k-closure graph $\C{D}$, any pair of non-adjacent nodes $a,b$ are separable by a set of size at most $k$, i.e., $\exists S:\lvert S\rvert\leq k, (\ci{a}{b}{S})_{\C{D}}$.
\end{lemma}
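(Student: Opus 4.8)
The plan is to strengthen Lemma \ref{lem:nonadjindep} by tracking the \emph{size} of the separating set through the construction of the \kclosure, rather than merely asserting the existence of some (possibly large) separator. First I would start from the hypothesis that $a,b$ are non-adjacent in $\C{D}$ and unwind what this non-adjacency means back at the level of the original DAG $D$, where the cardinality constraint is built into the definitions.

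By case 2 of Definition \ref{def:kclosure}, $a,b$ are non-adjacent in $\C{D}$ precisely when they are not \kcovered in $D$. Unwinding the definition of a \kcovered pair, the negation states exactly that there exists a conditioning set $c\subset V$ with $\lvert c\rvert\leq k$ for which $(\ci{a}{b}{c})_D$ holds. This is the witnessing set whose cardinality is already bounded by $k$ by construction; the entire content of the lemma is that this size bound survives the passage from $D$ to $\C{D}$.

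To transfer this separation into the \kclosure itself, I would invoke Lemma \ref{lem:kclosure}, which guarantees that $D$ and $\C{D}$ entail the same \dkci statements, i.e. $(\ci{a}{b}{c})_D\iff(\ci{a}{b}{c})_{\C{D}}$ for every $c$ with $\lvert c\rvert\leq k$. Applying the forward direction to the witnessing set $c$ of size at most $k$ yields $(\ci{a}{b}{c})_{\C{D}}$, and taking $S=c$ gives a separator of size at most $k$ inside $\C{D}$, as required.

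There is no substantive obstacle here: the real work has already been absorbed into Lemma \ref{lem:kclosure}, whose proof establishes that adding the closure edges neither creates nor destroys separations realizable by small conditioning sets. The only care needed is bookkeeping, namely ensuring that the separator extracted from the definition of \kcovered is the very set fed into Lemma \ref{lem:kclosure}, so that the bound $\lvert c\rvert\leq k$ is carried through unchanged. This alignment of definitions is exactly the improvement over Lemma \ref{lem:nonadjindep}: that lemma only promises some separator of unrestricted size, whereas here the cardinality bound comes for free once the \kcovered characterization and Lemma \ref{lem:kclosure} are combined.
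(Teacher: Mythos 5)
Your proof is correct and uses essentially the same ingredients as the paper's: non-adjacency in $\C{D}$ is unwound via the \kclosure definition into the existence of a separator of size at most $k$ in $D$, which Lemma~\ref{lem:kclosure} then transfers unchanged to $\C{D}$. The only difference is presentational --- the paper argues by contradiction (taking a minimal separator in $\C{D}$, whose existence it gets from Lemma~\ref{lem:nonadjindep}), whereas you give the direct argument with the same two facts, which is if anything cleaner.
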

\begin{proof}
    Suppose otherwise: For some non-adjacent pair $a,b$ all d-separating sets in $\C{D}$ have size greater than $k$. Let $S$ be the smallest subset that makes $a,b$ d-separated, i.e., $(\ci{a}{b}{S})_{\C{D}}$ -- one exists by Lemma \ref{lem:nonadjindep}. Clearly, $\lvert S\rvert >k$. Note that non-adjacency of $a,b$ in $\C{D}$ implies that $a,b$ are separable in $D$ with some set $T$ of size at most $k$: $(\ci{a}{b}{T})_D$. Since $\lvert T\rvert \leq k<\lvert S\rvert$ and  $S$ is the smallest subset that d-separates $a,b$ in $\C{D}$, it must be that $(\nci{a}{b}{T})_{\C{D}}$. However, this contradicts with Lemma \ref{lem:kclosure} which says that $D$ and $\C{D}$ must entail the same d-separation constraints for conditioning sets of size up to $k$.     
\end{proof}

Since $a,b$ are non-adjacent in both graphs, by Lemma \ref{lem:nonadjindepsize} there are two subsets $S_1,S_2$ of size at most $k$ such that 
\begin{equation}
(\ci{a}{b}{S_1})_{\C{D_1}}, (\ci{a}{b}{S_2})_{\C{D_2}}.
\end{equation}
Clearly, $S_1\notniFromTxfonts c, S_2\ni c$ since $c$ is a collider between $a,b$ in $\C{D_1}$ and a non-collider in $\C{D_2}$. If we switch the conditioning sets, due to the different collider status of $c$ in both graphs the d-separation statements will switch to d-connection statements: 
\begin{equation}
  (\nci{a}{b}{S_1})_{\C{D_2}},   (\nci{a}{b}{S_2})_{\C{D_1}}.
\end{equation}

Since $S_1$ and $S_2$ have size of at most $k$, then from Lemma \ref{lem:kclosure} we have that:
\begin{equation}
    (\ci{a}{b}{S_1})_{D_1}, (\nci{a}{b}{S_1})_{D_2}
\end{equation}
This implies that $D_1, D_2$ are not \kmarkov equivalent which is a contradiction. 

This establishes that if $D_1,D_2$ are \kmarkov equivalent then $\C{D_1},\C{D_2}$ must have the same skeleton and the same unshielded colliders. By Corollary \ref{cor:kclosureequivalence}, $\C{D_1},\C{D_2}$ are Markov equivalent.

($\Leftarrow$)

Suppose that $\C{D_1}$ and $\C{D_2}$ are Markov equivalent. Then they impose the same d-separation statements. Therefore they impose the same d-separation statements when the conditioning set is restricted to size at most $k$. By Lemma \ref{lem:kclosure}, this means that $D_1,D_2$ must also impose the same d-separation statements for conditioning sets of size of at most $k$. This establishes that $D_1,D_2$ are \kmarkov equivalent. 
\hfill \qed
  
\clearpage

\subsection{Proof of Lemma \ref{lem:nodiscpath}}
\label{sec:proof-lem:nodiscpath}
Suppose in the \kclosure graph $\C{D}$ for some DAG $D$ and integer $k$, we have a discriminating path for $Y$ between the nodes $a,v$ of the form
\begin{equation*}
    a\srightarrow \leftrightarrow \hdots \leftrightarrow u\leftrightarrow Y\leftrightarrow v.
\end{equation*}

By definition of discriminating path, $u$ must be a collider along the path, and $u\rightarrow v$. If $Y\leftarrow v$, then we would have an almost directed cycle $u\rightarrow v\rightarrow Y\leftrightarrow u$. Thus, we have $u\leftrightarrow Y\leftrightarrow v$.

First, we show that the arrowhead at $Y$ of the edge $Y\leftrightarrow v$ can be learned by first orienting unshielded colliders and then applying \R{1} and \R{2}. Consider the bidirected edge $u\leftrightarrow Y$. By definition of discriminating path $a,v$ must be non-adjacent and thus separable by a set of size at most $k$ by Lemma \ref{lem:nonadjindepsize}. Therefore, we have a set $S:\lvert S\rvert \leq k$ such that $\ci{a}{v}{S}$. By the discriminating path definition, every collider along the path must be a parent of $v$, and therefore it must be the case that every collider along the discriminating path including $u$ must be in $S$, since otherwise there would be a d-connecting path. From Lemma \ref{lem:replacement_bidirected}, conditioned on any set of size at most $k$, we have a d-connecting path that starts with an edge into $Y$. Consider the shortest such path $q$. Let $X$ be the node immediately before $Y$ along $q$. Since this path exists in the DAG by the lemma, we have $X\rightarrow Y$. If $X, v$ are non-adjacent, then $X\rightarrow Y\leftrightarrow v$ would be an unshielded collider and we are done. 

Suppose $X,v$ are adjacent. Note that conditioned on $S$, $a$ and $X$ are d-connected. If $X$ is a non-collider along the path obtained by concatenating the subpath of $q$ between $a,X$ and the edge between $X,v$, then $a,v$ would be d-connected given $S$, which is a contradiction. Therefore, $X$ must be a collider along this path. Thus we have $X\leftrightarrow v$. Note that we cannot have $X\leftarrow v$ since this would create an almost directed cycle in $\C{D}$. Let $V$ be the node immediately before $X$ along $q$. Thus we have $V\rightarrow X\leftrightarrow v$ (not $V\leftrightarrow X$ since this edge exists in $D$). 

Suppose $V,v$ are non-adjacent. Thus, the collider $V\rightarrow X\sleftarrow v$ is unshielded, and therefore can be learned. Furthermore, $V, Y$ must be non-adjacent since otherwise, we must have $V\rightarrow Y$ to avoid a cycle and there would be a path that is shorter than $q$, which ``jumps over" the node $X$ along $q$. Thus, we can learn that $X\rightarrow Y$ from \R{1}. Finally, since now we have learned $v\srightarrow X\rightarrow Y$ and that $Y,v$ are adjacent, by \R{2}, we must have $Y\sleftarrow v$. Thus the arrowhead mark at $Y$ of the edge $Y\leftrightarrow v$ can be learned, and we are done. 

Suppose $V,v$ are adjacent. Following a similar argument, we either have some unshielded collider that can be propagated using the argument above to orient $Y\sleftarrow v$, or we can continue covering unshielded colliders, which would imply the previous nodes are always parents along $q$. But this implies that $u$ has a directed path to $Y$, which cannot happen since we have $u\leftrightarrow Y$ in $\C{D}$. This establishes that $Y\sleftarrow v$ can be learned by orienting unshielded colliders and applying the rules \R{1} and \R{2}. 

For the arrowhead mark at $Y$ of the edge $u\leftrightarrow Y$, similarly consider the shortest d-connecting path $q$ between $Y,v$ in $D$ given $S$ that starts with an arrow into $Y$. The argument follows similarly that either there would be  directed path from $v$ to $Y$, which is a contradiction with the existence of the edge $Y\leftrightarrow v$ in the \kclosure graph, or that there exists an unshielded collider along $q$ that can be learned by orienting unshielded colliders, which can be propagated to learn $u\srightarrow Y$ using rules \R{1}, and \R{2}. This establishes the lemma.
\hfill \qed

\clearpage

\subsection{Proof of Theorem \ref{thm:kPCsoundness}}
\label{sec:proof-thm:kPCsoundness}
We show soundness of the two new rules with the following two lemmas:

\begin{lemma}
\label{lem:R11}
    Let $K$ be a mixed graph that is sandwiched between $\E{D}$ and $\PAG{\C{D}}$, i.e., $\E{D}\subseteq K\subseteq \PAG{\C{D}}$. \R{11} is sound on $K$ for learning the \kessential graph, i.e., %
    if $K'=$\R{11}$(K)$, then $\E{D}\subseteq K'\subseteq K$. 
\end{lemma}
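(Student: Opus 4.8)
The plan is to prove the two inclusions separately; $K'\subseteq K$ is immediate and $\E{D}\subseteq K'$ carries all the content. For $K'\subseteq K$, note that \R{11} only rewrites an edge $a\crightarrow b$ as $a\rightarrow b$: it converts the circle at $a$ into a tail and leaves the arrowhead at $b$ untouched. Hence every definite (arrowhead or tail) mark of $K$ remains a definite mark of $K'$, while $K'$ has strictly fewer circles, so $K'\subseteq K$ by Definition \ref{def:subset}. For $\E{D}\subseteq K'$, since $\E{D}\subseteq K$ already holds, every definite mark of $K$ is consistent with $\E{D}$, and the only mark in which $K'$ differs from $K$ is the new tail at $a$ on the edge $a\rightarrow b$. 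It therefore suffices to show that $\E{D}$ also has a tail at $a$ there. Because $b\in\mathcal{B}$, $K$ contains $a\crightarrow b$, whose arrowhead at $b$ propagates to $\E{D}$ via $\E{D}\subseteq K$; thus in every \kclosure graph Markov equivalent to $\C{D}$ the edge between $a,b$ has an arrowhead at $b$ and is hence either $a\rightarrow b$ or $a\leftrightarrow b$. As $\E{D}$ is the edge union of these \kclosures, it is enough to rule out $a\leftrightarrow b$ in every such graph.

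Suppose for contradiction that some \kclosure $M\sim\C{D}$ contains $a\leftrightarrow b$, and let $D_M$ be a DAG with $\C{D_M}=M$. By Definition \ref{def:kclosure}, $a\notin An(b)$, $b\notin An(a)$ and $a,b$ are \kcovered in $D_M$, so Lemma \ref{lem:replacement_bidirected} (applied with the empty conditioning set) yields a d-connecting path between $a,b$ in $D_M$ with an arrowhead at $a$; its first edge is $w\rightarrow a$ for some parent $w\neq b$ of $a$. Since $w\in An(a)$, the graph $M=\C{D_M}$ contains $w\rightarrow a$, an arrowhead at $a$ and a tail at $w$. I then locate this edge inside $K$, using that $K$ shares the common skeleton so $w\in Ne(a)$. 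At $a$, $K$ has no arrowhead by hypothesis and cannot have a tail either (a tail at $a$ in $K$ would be an invariant tail, hence present in $\E{D}$ and in $M$, contradicting the arrowhead of $w\rightarrow a$), so $K$ has a circle at $a$. At $w$, $K$ cannot have an arrowhead (it would be an invariant arrowhead forced into $M$, contradicting the tail of $w\rightarrow a$); ruling out a tail at $w$ as well then forces a circle at $w$, i.e.\ $a\circlecircle w$ and $w\in\mathcal{C}$.

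With $w\in\mathcal{C}$, the hypothesis $b\in\mathcal{B}^*$ gives that $w,b$ are non-adjacent, so $\langle w,a,b\rangle$ with $w\rightarrow a\leftrightarrow b$ is an unshielded collider in $M$. By Corollary \ref{cor:kclosureequivalence}, all \kclosures in the class — in particular $\C{D}$ — share this unshielded collider, and then by Theorem \ref{thm:richardsonspirtes} so does every MAG Markov equivalent to $\C{D}$; therefore $\PAG{\C{D}}$ has an arrowhead at $a$ on the edge $a\,\mbox{---}\,b$. Since $K\subseteq\PAG{\C{D}}$, this arrowhead must also appear in $K$, contradicting the hypothesis that $a$ has no incoming arrowhead in $K$. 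Hence no \kclosure contains $a\leftrightarrow b$, every \kclosure has $a\rightarrow b$, and so $\E{D}$ has a tail at $a$, giving $\E{D}\subseteq K'$ and completing the argument.

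I expect the delicate step to be the edge-type bookkeeping at $w$ in the second paragraph, specifically ruling out a \emph{tail at $w$ together with a circle at $a$}: such a mixed ``tail--circle'' edge would leave $w$ outside $\mathcal{B}\cup\mathcal{C}$ and thus escape condition (ii) on $\mathcal{B}^*$. The justification is that this edge type never occurs in $\PAG{\C{D}}$ and is created by neither \R{11} nor \R{12}, so it is absent from every partial mixed graph $K$ that arises between the end of Step $4$ and the end of the algorithm; making this precise — for instance via an invariant showing that the sandwiched graphs contain only the standard PAG edge types, together with the fact that an invariant tail at $w$ would (k-closures being MAGs without undirected edges) force an invariant arrowhead at $a$ — is the part that requires the most care.
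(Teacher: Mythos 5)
Your argument follows the paper's own proof of Lemma \ref{lem:R11} essentially step for step: the same appeal to Lemma \ref{lem:replacement_bidirected} to extract an incoming edge $w\rightarrow a$ of the hypothetical \kclosure $M$ containing $a\leftrightarrow b$, the same bookkeeping to force $w\in\mathcal{C}$, and the same unshielded-collider contradiction; the only cosmetic difference is that the paper phrases the contradiction as $\langle w,a,b\rangle$ being a non-collider in $\C{D}$, while you push the invariant arrowhead of $\PAG{\C{D}}$ back into $K$ against the precondition of \R{11}. These are interchangeable.

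However, the difficulty you flag at the end is a genuine gap, it is equally present in the paper's proof (which simply asserts ``since $a$ has no incoming edges, it must be the case that $w\in\mathcal{C}$''), and it is broader than your diagnosis. The weak point is not only the circle--tail edge type: it is the inference ``tail at $a$ (or at $w$) in $K$ $\Rightarrow$ tail in $\E{D}$ $\Rightarrow$ tail in every Markov equivalent \kclosure.'' The second implication is false, because by the edge-union definition the undirected edge of a \kessential graph is the union of $a\rightarrow w$ and $a\leftarrow w$; its tail marks are \emph{not} invariant marks, and an equivalent \kclosure may carry an arrowhead there (arrowheads of $\E{D}$ are invariant, tails on undirected edges are not). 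Concretely, take $D_1$ of Figure \ref{fig:kess} with $k=0$ and let $K=\E{D_1}$ itself, which is sandwiched and contains only the edge types $\crightarrow$, $\rightarrow$, $\mbox{ --- }$, so your proposed ``no circle--tail edges'' invariant does not exclude it. At node $a$ the precondition of \R{11} holds, $\mathcal{B}=\{c\}$, and $\mathcal{C}=\emptyset$ because the edge between $a$ and $d$ is undirected rather than $\circlecircle$; hence $d$ escapes $\mathcal{C}$, $\mathcal{B}^*=\{c\}$, and \R{11} orients $a\rightarrow c$ --- even though $\C{D_1}$ is a Markov equivalent \kclosure containing $a\leftrightarrow c$, so that $\E{D_1}\not\subseteq \R{11}(\E{D_1})$. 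The lemma as stated is therefore false, and the same example shows that Step $5$ is unsound if $\mathcal{B},\mathcal{C}$ are recomputed after \R{12} has created undirected edges: starting from $\PAG{\C{D_1}}$, applying \R{12} at $a$ to get $a\mbox{ --- }d$ and then revisiting $d$ orients $d\rightarrow c$, contradicting the equivalent \kclosure $\C{D_2}$ with $d\leftrightarrow c$.

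Where the argument \emph{is} valid is the case $K=\PAG{\C{D}}$ exactly, because a definite mark of the PAG is invariant across \emph{all} Markov equivalent MAGs, not merely across \kclosures; then a missing arrowhead at $a$ really does leave only a circle at $a$ and a circle at $w$ (a PAG tail at $a$ or PAG arrowhead at $w$ would contradict $w\rightarrow a$ in $M$), so $w\in\mathcal{C}$ and both your and the paper's conclusion go through. The clean repairs are therefore either to state the lemma for $K=\PAG{\C{D}}$ only, computing $\mathcal{B},\mathcal{C}$ once on that graph before any orientation is performed, or to enlarge $\mathcal{C}$ to contain every neighbor of $a$ whose edge to $a$ has no arrowhead at either endpoint. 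Your alternative repair --- that an invariant tail at $w$ forces an invariant arrowhead at $a$ --- rests on the same fallacy (treating tails of $\E{D}$ as invariant) and does not close the gap.
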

\begin{proof}
    For the sake of contradiction, suppose otherwise: \R{11} orients an edge $a\crightarrow b$ in $K$ as $a\rightarrow b$, and there is a DAG $D'$ with a \kclosure graph $\C{D'}$ that is Markov equivalent to $\C{D}$ and is consistent with $K$ where $a\leftrightarrow b$. This means $a,b$ are \kcovered in $D'$. Then from Lemma \ref{lem:replacement_bidirected}, conditioned on any subset $S$ of size at most $k$, there must be a d-connecting path that starts with an arrow into both $a$ and $b$ in $D$. By construction of the \kclosure graph, this path must also exist in $\C{D'}$. Therefore, there must be some node $w$ such that $a\leftarrow w$. Since $a$ has no incoming edges, it must be the case that $w\in C$. However, $b$ is chosen so that $b$ is non-adjacent to any node in $C$. Therefore, $b$ must be non-adjacent to $w$ in $K$. However, this creates the unshielded collider $w\rightarrow a\leftrightarrow b$. However, note that $w\circlecircle a\crightarrow b$ in $\PAG{\C{D}}$, and thus $\langle w,a,b\rangle$ is a non-collider in $\C{D}$. 
    Therefore, $\C{D'}$ cannot be Markov equivalent to $\C{D}$, which is a contradiction.   
\end{proof}

\begin{lemma}
\label{lem:R12}
    Let $K$ be a mixed graph that is sandwiched between $\E{D}$ and $\PAG{\C{D}}$, i.e., $\E{D}\subseteq K\subseteq \PAG{\C{D}}$. \R{12} is sound on $K$ for learning the \kessential graph, i.e., %
    if $K'=$\R{12}$(K)$, then $\E{D}\subseteq K'\subseteq K$. 
\end{lemma}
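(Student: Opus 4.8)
The plan is to verify the two inclusions $\E{D}\subseteq K'$ and $K'\subseteq K$ separately. The inclusion $K'\subseteq K$ is immediate: \R{12} only replaces the two circle marks of $a\circlecircle c$ by tails, so every definite mark of $K$ is retained and every circle of $K'$ is still a circle of $K$. All the real work is in $\E{D}\subseteq K'$. Since $K'$ now carries a tail at both $a$ and $c$ on this edge, this reduces to showing that the edge between $a$ and $c$ in $\E{D}$ is exactly the undirected edge $a\mbox{---}c$. By the edge-union convention used to define the \kessential graph (Definition~\ref{def:kessential}), $a\mbox{---}c$ appears precisely when two things hold: (A) no \kclosure graph Markov equivalent to $\C{D}$ carries $a\leftrightarrow c$, and (B) among the equivalent \kclosures both directed orientations $a\rightarrow c$ and $a\leftarrow c$ actually occur; equivalently, neither endpoint carries a \kclosure-invariant arrowhead.

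For (A) I would mirror the argument of Lemma~\ref{lem:R11}. Suppose some equivalent \kclosure $\C{D'}$ has $a\leftrightarrow c$. Then $a,c$ are \kcovered in $D'$ and neither is an ancestor of the other, so by Lemma~\ref{lem:replacement_bidirected} every conditioning set of size at most $k$ leaves a d-connecting path with arrowheads into both $a$ and $c$ in $D'$. The edge of this path incident to $a$ is a genuine directed edge $w\rightarrow a$ of the DAG $D'$ (with $w\neq c$, since $a,c$ are non-adjacent in $D'$), and it is preserved in $\C{D'}$. Because $a$ has no incoming edges in $K$, the $a$--$w$ edge must be a circle at $a$ in $K$; and since $w\rightarrow a$ has a tail at $w$ it cannot be of the form $a\crightarrow w$, so $w\in\mathcal{C}$. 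Now $c\in\mathcal{C}^*$ forces $w,c$ to be non-adjacent, hence $w\rightarrow a\sleftarrow c$ is an unshielded collider in $\C{D'}$. But $K\subseteq\PAG{\C{D}}$ shows circles at $a$ on both $w$--$a$ and $a$--$c$, so $\langle w,a,c\rangle$ is an unshielded \emph{non}-collider in $\C{D}$; this contradicts Markov equivalence by Corollary~\ref{cor:kclosureequivalence}. This settles (A), and in particular rules out any circle mark at $a$ or $c$ in $\E{D}$.

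For (B), having excluded bidirected edges, the $a$--$c$ edge is directed in every equivalent \kclosure, so an arrowhead at $a$ means $a\leftarrow c$ and a tail means $a\rightarrow c$. The plan is to start from an equivalent \kclosure realizing one direction (e.g.\ $\C{D}$) and reverse the $a$--$c$ edge, then argue the result is still a valid, Markov-equivalent \kclosure. By Corollary~\ref{cor:kclosureequivalence} equivalence reduces to checking that the reversal neither creates nor destroys an unshielded collider at $a$ or at $c$, and by Theorem~\ref{thm:kclosure-char} validity reduces to checking that no \kcovered-violating bidirected edge is introduced (none is, as we only move a directed edge). The intended role of the hypotheses is that ``$a$ has no incoming edges'' keeps $a$ source-like, so reversing cannot turn a non-collider at $a$ into a collider, while ``$c\in\mathcal{C}^*$'' controls which neighbours of $c$ lie in $\mathcal{C}$.

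The hard part will be (B). The subtlety is that ``$a$ has no incoming edges in $K$'' constrains only the partially oriented graph $K$: an equivalent \kclosure may still have arrowheads into $a$ coming from nodes of $\mathcal{C}$, because an arrowhead that is invariant across the \kclosures need not be invariant across all MAGs and hence need not surface in $\PAG{\C{D}}$ or in $K$. These are exactly the hidden incoming arrowheads that, being non-adjacent to $c$ by $c\in\mathcal{C}^*$, would flip the collider status of $a$ upon reversal and block the naive argument. The crux of the proof will therefore be to show that one can always select an equivalent \kclosure in which $a$ carries no such incoming arrowhead on a $c$-non-adjacent edge (so the $a$--$c$ edge can be freely reversed), or, dually, to argue directly that neither endpoint admits a \kclosure-invariant arrowhead. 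I expect this to require a careful triple-by-triple analysis through $a$ and $c$, combined with the ancestral control supplied by Lemmas~\ref{lem:replacement} and~\ref{lem:replacement_bidirected}, in much the same spirit as the discriminating-path analysis of Lemma~\ref{lem:kclosurenodisc}.
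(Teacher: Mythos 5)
Your part (A) is, nearly line for line, the paper's entire proof of this lemma: assume some Markov equivalent \kclosure $\C{D'}$ carries $a\leftrightarrow c$, invoke Lemma~\ref{lem:replacement_bidirected} to extract a DAG edge $w\rightarrow a$ of $D'$ that survives into $\C{D'}$, use ``$a$ has no incoming edges in $K$'' to place $w$ in $\mathcal{C}$, use $c\in\mathcal{C}^*$ to get $w,c$ non-adjacent, and conclude that $\langle w,a,c\rangle$ would be an unshielded collider in $\C{D'}$ but an unshielded non-collider in $\C{D}$, contradicting Corollary~\ref{cor:kclosureequivalence}. On that portion you have reproduced the paper's argument essentially verbatim.

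The divergence is your part (B), and here the comparison cuts both ways. Reading Definition~\ref{def:subset} together with the edge-union table literally, you are right: $\E{D}\subseteq K'$ demands tails at both ends of the $a$--$c$ edge in $\E{D}$, which happens only when the set of realizations over \emph{all} equivalent \kclosure graphs is exactly $\{\rightarrow,\leftarrow\}$; excluding $a\leftrightarrow c$ still leaves the singleton possibilities, in which case $\E{D}$ carries an arrowhead that $K'$ has erased. Note the asymmetry with \R{11}, where this issue does not arise: there the arrowhead at $b$ is already in $K$, hence in $\E{D}$, hence invariant across equivalent \kclosures, so excluding the bidirected realization pins the edge to $a\rightarrow b$; for \R{12} neither endpoint is pinned in advance, so excluding $\leftrightarrow$ alone does not determine the edge. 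The paper never addresses this: its proof stops after (A), implicitly adopting the weaker semantics it voices elsewhere (e.g.\ Section~\ref{sec:app-AnytimeFCI}), namely that the undirected edge asserts only ``in every equivalent graph one of the two causes the other.'' So you have identified a requirement that the paper's own proof does not discharge --- but your proposal does not discharge it either: your (B) is a plan, not a proof, and the obstruction you flag (arrowheads into $a$ that are invariant across \kclosures yet invisible in $\PAG{\C{D}}$, coming from $\mathcal{C}$-neighbours non-adjacent to $c$) is exactly what blocks the naive reversal of $a\rightarrow c$; neither you nor the paper supplies the global re-orientation argument needed to get around it. As submitted, then, the proposal is incomplete as a standalone proof: its completed content coincides with the paper's proof, and whether condition (B) always holds under the preconditions of \R{12} is left unresolved by both.
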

\begin{proof}
        For the sake of contradiction, suppose otherwise: \R{12} orients an edge $a\circlecircle c$ in $K$ as $a\mbox{---} c$, and there is a DAG $D'$ with a \kclosure graph $\C{D'}$ that is Markov equivalent to $\C{D}$ and is consistent with $K$ where $a\leftrightarrow c$. This means $a,c$ are \kcovered in $D'$. Then from Lemma \ref{lem:replacement_bidirected}, conditioned on any subset $S$ of size at most $k$, there must be a d-connecting path that starts with an arrow into both $a$ and $c$ in $D$. By construction of the \kclosure graph, this path must also exist in $\C{D'}$. Therefore, there must be some node $w$ such that $a\leftarrow w$. Since $a$ has no incoming edges, it must be the case that $w\in C$. However, $c$ is chosen so that $c$ is non-adjacent to any other node in $C$. Therefore, $c$ must be non-adjacent to $w$ in $K$. 
        However, note that $w\circlecircle a\circlecircle c$ in $\PAG{\C{D}}$, and thus $\langle w,a,c\rangle$ is a non-collider in $\C{D}$. Therefore, $\C{D'}$ cannot be Markov equivalent to $\C{D}$, which is a contradiction. %
\end{proof}

Now consider the execution of the algorithm \kPC. When the algorithm completes Step $4$, from Corollary \ref{cor:kPCstep4} we have that $K=\PAG{\C{D}}$. Since we start Step $5$ with $K=PAG(\C{D})$, from Lemma \ref{lem:R11} and \ref{lem:R12}, any arrowhead and tail orientation of the $K$ obtained at the end of step $5$ must be consistent with the \kessential graph of $D$. Therefore, we have that 
$\E{D}\subseteq K$. %

\hfill \qed
\clearpage
\section{FCI Orientation Rules}
\label{sec:FCI-rules}

\begin{algorithm}[h!]
\small
   \caption{FCI\_Orient}
   \label{alg:FCI}
\begin{algorithmic}
\STATE \textbf{Input:} Mixed graph $K$
\STATE Apply the orientation rules of \R{1}, \R{2}, \R{3} of \cite{zhang2008completeness} to $K$ %
until none applies.
\STATE Apply the orientation rules of \R{8}, \R{9}, \R{10} of \cite{zhang2008completeness}.
\STATE \textbf{Output:} $K$
\end{algorithmic}
\end{algorithm}

We restate the FCI orientation rules in detail and demonstrate how they are applicable for learning \kclosure graphs. The following definitions are from \cite{zhang2008completeness}.

\begin{definition}[Partial Mixed Graph (PMG)]
    Any graph that contains the edge marks arrowhead, tail, circle is called a partial mixed graph (PMG).
\end{definition}
\begin{definition}[Uncovered path]
    In a PMG, a path $\langle u_1,u_2\hdots u_m\rangle$ is called an uncovered path if $u_i,u_{i+2}$ are non-adjacent for all $i\in\{1,2,.\hdots m-2\}$.
\end{definition}
\begin{definition}[Potentially directed path] 
In a PMG, a path $\langle u_1,u_2\hdots u_m\rangle$ is called a potentially directed (p.d.) path if the edge between $u_i$ and $u_{i+1}$ does not have an arrowhead at $u_i$ for all $i\in\{1,2,.\hdots m-1\}$.
\end{definition}
\begin{definition}[Circle path] 
In a PMG, a path $\langle u_1,\hdots u_m\rangle$ is called a circle path if $u_i\circlecircle u_{i+1}$ for all $i\in\{1,2,.\hdots m-1\}$.
\end{definition}
Note that circle paths are special cases of p.d. paths.

Rules $1,2,3$ are straightforward extensions of the orientation rules for constraint-based learning to mixed graphs. For completeness, we restate them below. The star marks that appear both before and after the application of the rules are edge marks that remain unchanged by the rule. %

\underline{\R1:} If $a\srightarrow b\circlestar c$, and $a,c$ are not adjacent, then orient the triple as $a\srightarrow b\rightarrow c$.

\underline{\R2:} If $a\rightarrow b\srightarrow c$ or $a\srightarrow b\rightarrow c$ and $a\starcircle c$, then orient $a\starcircle c$ as $a\srightarrow c$. 

\underline{\R3:} If $a\srightarrow b\sleftarrow c$, $a\starcircle d\circlestar c$, $a,c$ are non-adjacent, and $d\starcircle b$ then orient $d\starcircle b$ are $d\srightarrow b$. 

We now restate FCI+ rules\footnote{This version was originally called A-FCI, short for augmented FCI rules by \cite{zhang2008completeness}. Augmented graphs are recently used in a different context in the causality literature, which is why in this work we are calling this version FCI+ to avoid confusion. } \R{8}, \R{9}, \R{10} and explain their relevance for learning \kclosure graphs. Note that the rules are simplified since we do not have undirected edges that represent selection bias, and our undirected edges are treated as if they are circle edges.  

\underline{\R{8}:} 
\quad If $a\rightarrow b\rightarrow c$ and $a\crightarrow c$, orient $a\crightarrow c$ as $a\rightarrow c$.  

\underline{\R{9}:}
\quad If $a\crightarrow c$, and $p=\langle a, b,u_1,u_2\hdots u_m, c\rangle$ is an uncovered p.d. path from $a$ to $c$ such that $b,c$ are non-adjacent, then orient $a\crightarrow c$ as $a\rightarrow c$. 

\underline{\R{10}:}
\quad Suppose $a\crightarrow c, b\rightarrow c, d\rightarrow c$, $p_1$ is an uncovered p.d. path from $a$ to $d$ and $p_2$ is an uncovered p.d. path from $a$ to $b$. Let $t_d$ be the node adjacent to $a$ on $p_1$ ($t_d$ can be $d$) and $t_c$ be the node adjacent to $a$ on $p_2$ ($t_c$ can be $c$). If $t_d, t_c$ are distinct and non-adjacent, then orient $a\crightarrow c$ as $a\rightarrow c$.  

\R{11} and \R{12} cannot replace any of the above rules. %
For example, consider Figure \ref{fig:R11example}. None of the FCI rules apply to the output of Step 3, thus we can only learn of the unshielded colliders at the end of Step 4 of the algorithm. The completeness of FCI implies that any edge $x\crightarrow y$ can be oriented as $x\leftrightarrow y$ or $x\rightarrow y$ and give a MAG consistent with the PAG. However, not all such MAGs are valid \kclosure graphs. \R{11} can be applied to orient several tails, which gives the graph in $(d)$. Similarly in Figure \ref{fig:R12example}, \R{12} helps orient the tail edges between $a,b$ which cannot be learned by FCI rules.

\clearpage
\section{Sample Runs of \kPC Algorithm}
\label{sec:alg-examples}
Consider the figures below for two sample runs of \kPC algorithm. Note that \kPC outputs the \kessential graph in these examples, i.e., it can orient every invariant arrowhead and tail mark in the \kclosure graph of $D$.

\begin{figure}[h!]
	\centering
	\begin{subfigure}[t]{.15\textwidth}
        \centering
		\begin{tikzpicture}
		\tikzset{vertex/.style = {shape=circle,draw,minimum size=1.5em}}
		\tikzset{edge/.style = {->,> = latex'}}
		\node (a) at (0,0) {$a$};
		\node (b) at (1.,0) {$b$};
		\node (c) at (2,0) {$c$};
		\node (u) at (1,-1)  {$u$};
		\node (v) at (2,-1)  {$v$};
		
		\draw[->] (a) to (b);
		\draw[->] (b) to (c);
		\draw[->] (u) to (b);
		\draw[->] (v) to (c);		
	\end{tikzpicture}		
		\caption{$D$}
	\end{subfigure}
    \begin{subfigure}[t]{.15\textwidth}
    \centering    		
    \begin{tikzpicture}
		\tikzset{vertex/.style = {shape=circle,draw,minimum size=1.5em}}
		\tikzset{edge/.style = {->,> = latex'}}
		\node (a) at (0,0) {$a$};
		\node (b) at (1.,0) {$b$};
		\node (c) at (2,0) {$c$};
		\node (u) at (1,-1)  {$u$};
		\node (v) at (2,-1)  {$v$};
		
		\draw[->] (a) to (b);
		\draw[->] (b) to (c);
		\draw[->] (u) to (b);
		\draw[->] (v) to (c);		
        \draw[->] (a.north) to [out=30, in=150](c.north);
        \draw[->] (u) to (c);
	\end{tikzpicture}		
            \caption{$\C{D}$}
      \end{subfigure}
      \begin{subfigure}[t]{.15\textwidth}
    \begin{tikzpicture}
		\tikzset{vertex/.style = {shape=circle,draw,minimum size=1.5em}}
		\tikzset{edge/.style = {->,> = latex'}}
		\node (a) at (0,0) {$a$};
		\node (b) at (1.,0) {$b$};
		\node (c) at (2,0) {$c$};
		\node (u) at (1,-1)  {$u$};
		\node (v) at (2,-1)  {$v$};

        \node (ae) at (0.17,0) [label=left:{$$},circ];		
        \node (an) at (0,0.15) [label=left:{$$},circ];		
        \node (un) at (1,-0.85) [label=left:{$$},circ];		
        \node (une) at (1.15,-0.85) [label=left:{$$},circ];		
        \node (vn) at (2,-0.85) [label=left:{$$},circ];
        \node (be) at (1.15,0) [label=left:{$$},circ];		
                
		\draw[->] (a) to (b);
		\draw[->] (b) to (c);
		\draw[->] (u) to (b);
		\draw[->] (v) to (c);		
        \draw[->] (a.north) to [out=30, in=150](c.north);
        \draw[->] (u) to (c);
	\end{tikzpicture}		
        \caption{$K$ After Step 4}  
	\end{subfigure} 
     \begin{subfigure}[t]{.15\textwidth}
    \begin{tikzpicture}
		\tikzset{vertex/.style = {shape=circle,draw,minimum size=1.5em}}
		\tikzset{edge/.style = {->,> = latex'}}
		\node (a) at (0,0) {$a$};
		\node (b) at (1.,0) {$b$};
		\node (c) at (2,0) {$c$};
		\node (u) at (1,-1)  {$u$};
		\node (v) at (2,-1)  {$v$};

        \node (be) at (1.15,0) [label=left:{$$},circ];		

		\draw[->] (a) to (b);
		\draw[->] (b) to (c);
		\draw[->] (u) to (b);
		\draw[->] (v) to (c);		
        \draw[->] (a.north) to [out=30, in=150](c.north);
        \draw[->] (u) to (c);
	\end{tikzpicture}		
        \caption{$K$ after Step $5$. \R{11} applied on $a,u,v$.}
	\end{subfigure} 
     \begin{subfigure}[t]{.15\textwidth}
    \begin{tikzpicture}
		\tikzset{vertex/.style = {shape=circle,draw,minimum size=1.5em}}
		\tikzset{edge/.style = {->,> = latex'}}
		\node (a) at (0,0) {$a$};
		\node (b) at (1.,0) {$b$};
		\node (c) at (2,0) {$c$};
		\node (u) at (1,-1)  {$u$};
		\node (v) at (2,-1)  {$v$};

		\draw[->] (a) to (b);
		\draw[->] (u) to (b);
		\draw[->] (v) to (c);		
        \draw[->] (a.north) to [out=30, in=150](c.north);
        \draw[->] (u) to (c);
	\end{tikzpicture}	
        \caption{$D'\sim_k D$}
	\end{subfigure} 
     \begin{subfigure}[t]{.15\textwidth}
    \begin{tikzpicture}
		\tikzset{vertex/.style = {shape=circle,draw,minimum size=1.5em}}
		\tikzset{edge/.style = {->,> = latex'}}
		\node (a) at (0,0) {$a$};
		\node (b) at (1.,0) {$b$};
		\node (c) at (2,0) {$c$};
		\node (u) at (1,-1)  {$u$};
		\node (v) at (2,-1)  {$v$};

		\draw[->] (a) to (b);
		\draw[<->] (b) to (c);
		\draw[->] (u) to (b);
		\draw[->] (v) to (c);		
        \draw[->] (a.north) to [out=30, in=150](c.north);
        \draw[->] (u) to (c);
	\end{tikzpicture}	
        \caption{$\C{D'}$ is consistent with the output of \kPC.}
	\end{subfigure}  %
 \caption{An example where \R{11} helps orient tails. %
 $(a)$ A DAG $D$. $(b)$ \kclosure graph of $D$ for $k=0$. $(c)$ $K$ after Step $4$ of $\kPC$, the same as $PAG(\C{D})$. $(d)$ \R{11} helps orient several tail edges. $(e)$ A DAG $D'$ that is \kmarkov to $D$. $(f)$ \kclosure graph of $D'$, which is Markov equivalent to $\C{D}$, showing that the circle at $b\crightarrow c$ is not an invariant tail. Thus \kPC outputs \kessential graph $\E{D}$ in this case. }   %
	\label{fig:R11example}
\end{figure}

\begin{figure}[h!]
	\centering
	\begin{subfigure}[t]{.15\textwidth}
        \centering
		\begin{tikzpicture}
		\tikzset{vertex/.style = {shape=circle,draw,minimum size=1.5em}}
		\tikzset{edge/.style = {->,> = latex'}}
		\node (a) at (0,1) {$a$};
		\node (b) at (0.,0) {$b$};
		\node (c) at (1,0) {$c$};
		\node (e) at (1,1)  {$e$};
		
		\draw[->] (a) to (b);
		\draw[->] (b) to (c);
		\draw[->] (e) to (c);		
	\end{tikzpicture}		
		\caption{$D$}
	\end{subfigure}
 \begin{subfigure}[t]{.15\textwidth}
        \centering
		\begin{tikzpicture}
		\tikzset{vertex/.style = {shape=circle,draw,minimum size=1.5em}}
		\tikzset{edge/.style = {->,> = latex'}}
		\node (a) at (0,1) {$a$};
		\node (b) at (0.,0) {$b$};
		\node (c) at (1,0) {$c$};
		\node (e) at (1,1)  {$e$};
		
		\draw[->] (a) to (b);
		\draw[->] (b) to (c);
		\draw[->] (e) to (c);		
        \draw[->] (a) to (c);
	\end{tikzpicture}		
		\caption{$\C{D}$}
	\end{subfigure}
 \begin{subfigure}[t]{.15\textwidth}
        \centering
		\begin{tikzpicture}
		\tikzset{vertex/.style = {shape=circle,draw,minimum size=1.5em}}
		\tikzset{edge/.style = {->,> = latex'}}
		\node (a) at (0,1) {$a$};
		\node (b) at (0.,0) {$b$};
		\node (c) at (1,0) {$c$};
		\node (e) at (1,1)  {$e$};
        \node (as) at (0,0.85) [label=left:{$$},circ];
        \node (ase) at (0.15,0.85) [label=left:{$$},circ];
        \node (bn) at (0,0.2) [label=left:{$$},circ];
        \node (be) at (0.15,0) [label=left:{$$},circ];
        \node (es) at (1,0.85) [label=left:{$$},circ];

		\draw[-] (a) to (b);
		\draw[->] (b) to (c);
		\draw[->] (e) to (c);		
        \draw[->] (a) to (c);
	\end{tikzpicture}		
		\caption{$K$ after Step $4$. }
	\end{subfigure}
 	\begin{subfigure}[t]{.15\textwidth}
        \centering
		\begin{tikzpicture}
		\tikzset{vertex/.style = {shape=circle,draw,minimum size=1.5em}}
		\tikzset{edge/.style = {->,> = latex'}}
		\node (a) at (0,1) {$a$};
		\node (b) at (0.,0) {$b$};
		\node (c) at (1,0) {$c$};
		\node (e) at (1,1)  {$e$};
        \node (ase) at (0.15,0.85) [label=left:{$$},circ];
        \node (be) at (0.15,0) [label=left:{$$},circ];

		\draw[-] (a) to (b);
		\draw[->] (b) to (c);
		\draw[->] (e) to (c);		
        \draw[->] (a) to (c);
	\end{tikzpicture}		
		\caption{$K$ at the end of Step 5. }
	\end{subfigure}
  \begin{subfigure}[t]{.15\textwidth}
        \centering
		\begin{tikzpicture}
		\tikzset{vertex/.style = {shape=circle,draw,minimum size=1.5em}}
		\tikzset{edge/.style = {->,> = latex'}}
		\node (a) at (0,1) {$a$};
		\node (b) at (0.,0) {$b$};
		\node (c) at (1,0) {$c$};
		\node (e) at (1,1)  {$e$};
		
		\draw[->] (b) to (a);
		\draw[->] (b) to (c);
		\draw[->] (e) to (c);		
	\end{tikzpicture}		
		\caption{$D'$}
	\end{subfigure}
 \begin{subfigure}[t]{.15\textwidth}
        \centering
		\begin{tikzpicture}
		\tikzset{vertex/.style = {shape=circle,draw,minimum size=1.5em}}
		\tikzset{edge/.style = {->,> = latex'}}
		\node (a) at (0,1) {$a$};
		\node (b) at (0.,0) {$b$};
		\node (c) at (1,0) {$c$};
		\node (e) at (1,1)  {$e$};
		
		\draw[->] (a) to (b);
		\draw[->] (e) to (c);		
        \draw[->] (a) to (c);
	\end{tikzpicture}		
		\caption{$D''$}
	\end{subfigure} 
 \caption{$(a)$ A causal graph $D$. $(b)$ \kclosure of $D$ for $k=0$. $(c)$ $K$ at the end of Step 4. $(d)$ Node $a$ has one edge $a\circlecircle$, $a\circlecircle b$. Thus we have $\mathcal{C}=\mathcal{C}^*$ since $b$ is non-adjacent to any other nodes in $C$ since there are no other nodes in $C$. Thus it is oriented as $a\mbox{---}b$ due to \R{12}. $e\rightarrow c$ is oriented due to \R{11}, similarly since $\mathcal{C}=\emptyset$ and the node $c$ is trivially non-adjacent to all nodes in $\mathcal{C}$. $(e,f)$ $D',D''$ are \kmarkov equivalent to $D$ and their \kclosure graphs contain $a\leftrightarrow c$ and $b\leftrightarrow c$, respectively. This shows that the graph in $(d)$ given by \kPC is the \kessential graph $\E{D}$.}
 \label{fig:R12example}
\end{figure}
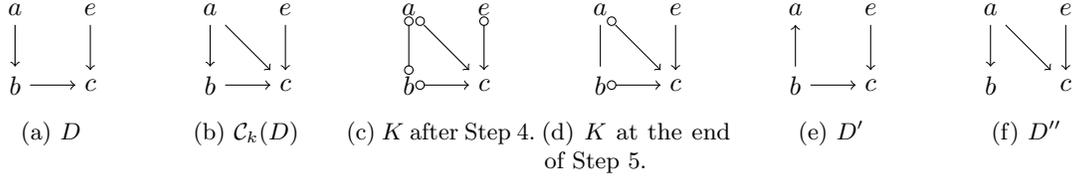

\clearpage

\clearpage
\section{Discussions}
\label{sec:app-discussions}
\subsection{No Local \kmarkov Equivalence Characterization on DAG Space}
\label{sec:app-non-local}

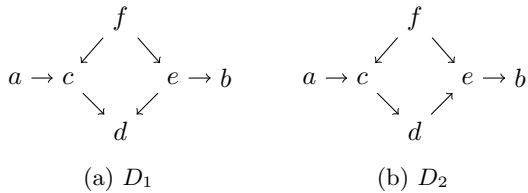
\begin{figure}[h!]
	\centering
	\begin{subfigure}[t]{.23\textwidth}
        \centering
		\begin{tikzpicture}
		\tikzset{vertex/.style = {shape=circle,draw,minimum size=1.5em}}
		\tikzset{edge/.style = {->,> = latex'}}
		
		\node (a) at (0.1,0.) {$a$};
		\node (c) at (0.8,0) {$c$};
		\node (d) at (1.5,-0.7) {$d$};
        \node (e) at (2.2,0) {$e$};  
		\node (b) at (2.9,0.)  {$b$};
		\node (f) at (1.5,0.8)  {$f$};		
  \draw[->] (a) to (c);
		\draw[->] (c) to (d);
		\draw[->] (e) to (d);
		\draw[->] (e) to (b);		
            \draw[->] (f) to (c);
            \draw[->] (f) to (e);  
	\end{tikzpicture}		
		\caption{$D_1$}
	\end{subfigure}        
	\begin{subfigure}[t]{.23\textwidth}
        \centering
		\begin{tikzpicture}
		\tikzset{vertex/.style = {shape=circle,draw,minimum size=1.5em}}
		\tikzset{edge/.style = {->,> = latex'}}
		\node (a) at (0.1,0.) {$a$};
		\node (c) at (0.8,0) {$c$};
		\node (d) at (1.5,-0.7) {$d$};
        \node (e) at (2.2,0) {$e$};  
		\node (b) at (2.9,0.)  {$b$};
		\node (f) at (1.5,0.8)  {$f$};		
  \draw[->] (a) to (c);
		\draw[->] (c) to (d);
		\draw[->] (d) to (e);
		\draw[->] (e) to (b);		
            \draw[->] (f) to (c);
            \draw[->] (f) to (e);
	\end{tikzpicture}		
		\caption{$D_2$}
	\end{subfigure}    
 \caption{As a collider, $d$ blocks the path $(a,c,d,e,b)$ in $D_3$, but not in $D_4$. Accordingly, $(a\indep b)_{D_1}$ $(a\not\indep b)_{D_2}$ and $D_1,D_2$ are not \kmarkov for $k=0$. However, this does not appear as a local condition since $c,e$ are not separable by conditioning sets of size up to $0$ in both graphs. Therefore, a local characterization of equivalence like Verma and Pearl is not possible when we are only allowed to check \dkci tests.}
\label{fig:nonlocality}
\end{figure}
Consider the graphs in Figure \ref{fig:nonlocality}. The only difference is the orientation of the edge between $e,d$. Due to the collider $c\rightarrow d\leftarrow e$, we have $a\indep b$ in $D_1$ but not in $D_2$. Therefore for $k=0$, we have that $D_1,D_2$ are not \kmarkov equivalent. However, this is not detectable locally: The endpoints of the collider responsible for the change of the \kmarkov equivalence class cannot be d-separated. The effect of the collider can be detected only farther out in the graph between $a,b$. This shows that a local characterization similar to Theorem \ref{thm:vermapearl} is not possible for \kmarkov equivalence. 

\subsection{\kclosure Graphs vs. MAGs}
\label{sec:app-MAG_not_kclosure}

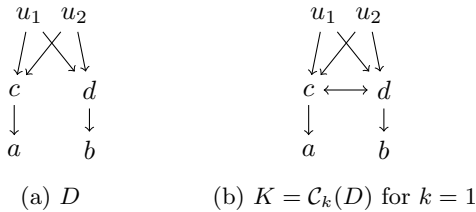
\begin{figure}[h!]
	\centering
	\begin{subfigure}[t]{.23\textwidth}
        \centering
		\begin{tikzpicture}
		\tikzset{vertex/.style = {shape=circle,draw,minimum size=1.5em}}
		\tikzset{edge/.style = {->,> = latex'}}
		
		\node (a) at (0,0.2) {$a$};
		\node (b) at (1.,0.2) {$b$};
		\node (c) at (0,1) {$c$};
		\node (d) at (1,1.)  {$d$};
        \node (u1) at (0.2,2) {$u_1$};
        \node (u2) at (0.8,2) {$u_2$};
		
		\draw[->] (c) to (a);
		\draw[->] (d) to (b);
		\draw[->] (u1) to (c);
		\draw[->] (u1) to (d);
		\draw[->] (u2) to (c);
		\draw[->] (u2) to (d);
	
	\end{tikzpicture}		
		\caption{$D$}
	\end{subfigure}
    \begin{subfigure}[t]{.23\textwidth}
    \centering
		\begin{tikzpicture}
		\tikzset{vertex/.style = {shape=circle,draw,minimum size=1.5em}}
		\tikzset{edge/.style = {->,> = latex'}}
		
		\node (a) at (0,0.2) {$a$};
		\node (b) at (1.,0.2) {$b$};
		\node (c) at (0,1) {$c$};
		\node (d) at (1,1.)  {$d$};
        \node (u1) at (0.2,2) {$u_1$};
        \node (u2) at (0.8,2) {$u_2$};
		
		\draw[->] (c) to (a);
        \draw[<->] (c) to (d);
		\draw[->] (d) to (b);
		\draw[->] (u1) to (c);
		\draw[->] (u1) to (d);
		\draw[->] (u2) to (c);
		\draw[->] (u2) to (d);
	\end{tikzpicture}
            \caption{$K=\C{D}$ for $k=1$}
	\end{subfigure} 
         	
 \caption{The mixed graph on the right is a valid \kclosure graph for $k=1$: It  is the \kclosure graph of the DAG in $(a)$. However, it is not a valid \kclosure graph for $k=2$. Because after removing $c\leftrightarrow d$, it is possible to d-separate $c,d$ by conditioning on $u_1,u_2$. }
	\label{fig:validkclosure}
\end{figure}

In this section, we give an example for a MAG that is not a valid \kclosure graph. Consider the graph in Figure \ref{fig:validkclosure}. $K$ in $(b)$ is a valid \kclosure graph for $k=1$ since it is the \kclosure graph of $D$. However, $K$ is not a valid \kclosure graph for $k=2$. This is because the bidirected edge $c\leftrightarrow d$ is added between a pair that is not \kcovered for $k=2$: We have $\ci{c}{d}{u_1,u_2}$ in $D$. In fact, using the if and only if characterization in Theorem \ref{thm:kclosure-char}, we can show that there does not exist any $D'$ with the given \kclosure graph where $c,d$ have a bidirected edge. 

\subsection{Bidirected Edge in \kessential Graphs}
\label{sec:app-size1kess}

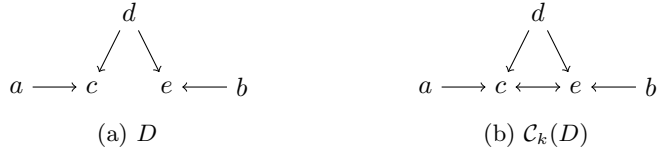
\begin{figure}[h!]
	\centering
	\begin{subfigure}[t]{.23\textwidth}
        \centering
		\begin{tikzpicture}
		\tikzset{vertex/.style = {shape=circle,draw,minimum size=1.5em}}
		\tikzset{edge/.style = {->,> = latex'}}
		
		\node  (a) at (0,0) {$a$};
		\node  (c) at (1.,0) {$c$};
		\node (e) at (2,0) {$e$};
		\node (d) at (1.5,1.)  {$d$};
            \node  (b) at (3,0){$b$};
		
		\draw[->] (a) to (c);
		\draw[->] (d) to (c);
		\draw[->] (d) to (e);		
            \draw[->] (b) to (e);
	\end{tikzpicture}		
		\caption{$D$}
	\end{subfigure}
 \hspace{4em}
    \begin{subfigure}[t]{.23\textwidth}
    \centering
    \begin{tikzpicture}
		\tikzset{vertex/.style = {shape=circle,draw,minimum size=1.5em}}
		\tikzset{edge/.style = {->,> = latex'}}
		
		\node (a) at (0,0) {$a$};
		\node  (c) at (1.,0) {$c$};
		\node  (e) at (2,0) {$e$};
		\node  (d) at (1.5,1.)  {$d$};
            \node  (b) at (3,0){$b$};
		
		\draw[->] (a) to (c);
		\draw[->] (d) to (c);
		\draw[->] (d) to (e);		
            \draw[->] (b) to (e);
            \draw[<->] (c) to (e);
	\end{tikzpicture}	
        \caption{$\C{D}$}
	\end{subfigure} 
 \caption{A DAG with a size-1 \kmarkov equivalence class for $k=1$. Observe that $\C{D}$ only has unshielded colliders and thus there is no other \kclosure graph that is Markov equivalent. Thus, this \kclosure is at the same time the \kessential graph of $D$ and can be learned from data. In this case, we can learn $c,e$ do not cause each other despite not being separable in the data. }
	\label{fig:size1kmec}
\end{figure}

In Figure~\ref{fig:size1kmec}, since every endpoint is an arrowhead and is part of an unshielded collider, there is no other Markov equivalent \kclosure graphs, which implies that the \kessential graph is the same as the \kclosure graph. Thus, the edge $c\leftrightarrow e$ is in $\E{D}$. This example shows that we can learn that two nodes do not cause each other using \emph{conditional independence tests that are not even powerful enough to make them conditionally independent}. It is worth noting that LOCI \cite{wienobst2020recovering} can also infer this fact by removing this edge.

\subsection{\kPC is Incomplete}
\label{sec:incompleteness}
One might hope that \kPC is complete and outputs the \kessential graph $\E{D}$. This, however, is not true. We discuss an example where \kPC cannot orient an invariant tail mark. %

First, observe that \kPC does not leverage the value of $k$. If we had an efficient way to answer the question ``\emph{Is there a \kclosure graph that is consistent with $K$ in which $a,b$ are \kcovered\!?"} then we could leverage this to orient more $\crightarrow $ edges as $\rightarrow$ edges. As an example, consider the causal graph in Figure \ref{fig:incompleteness}. $d$ may have an incoming edge that prevents us from eliminating the possibility that $d\crightarrow b$ is a bidirected edge $d\leftrightarrow b$. However, we can only have a single d-connecting path between $d,b$. Since $d$ is a non-collider along $a\mbox{---}d\mbox{---}c$, one of the edges must be out of $d$. Suppose $d\leftarrow a$ and $d\rightarrow c$. For $c$ to not block this path it has to be a non-collider, and thus $c\rightarrow b$. This is the only way we can have two d-connecting paths between $d,b$ in the underlying DAG. However, now we have the path $d\rightarrow c\rightarrow b$, which makes $d$ an ancestor of $b$. Therefore the edge $d\leftrightarrow b$ is inconsistent. Thus in \kessential graph of $D$, we must have a tail at $d$ as $d\rightarrow b$. This cannot be learned by \kPC. 

\begin{figure}[h!]
	\centering
	\begin{subfigure}[t]{.3\textwidth}
        \centering
		\begin{tikzpicture}
		\tikzset{vertex/.style = {shape=circle,draw,minimum size=1.5em}}
		\tikzset{edge/.style = {->,> = latex'}}
		\node[below] (d) at (0,0) {$d$};
		\node[below] (a) at (1.,0.5) {$a$};
		\node[below] (c) at (1,-0.5) {$c$};
		\node[below] (b) at (2,0)  {$b$};
		
		\draw[->] (d) to (c);
		\draw[->] (a) to (d);
		\draw[->] (a) to (b);
		\draw[->] (c) to (b);		
	\end{tikzpicture}		
		\caption{$D_1$}
	\end{subfigure}
	\begin{subfigure}[t]{.3\textwidth}
        \centering
		\begin{tikzpicture}
		\tikzset{vertex/.style = {shape=circle,draw,minimum size=1.5em}}
		\tikzset{edge/.style = {->,> = latex'}}
		\node[below] (d) at (0,0) {$d$};
		\node[below] (a) at (1.,0.5) {$a$};
		\node[below] (c) at (1,-0.5) {$c$};
		\node[below] (b) at (2,0)  {$b$};
        
		\draw[->] (d) to (c);
		\draw[->] (a) to (d);
		\draw[->] (a) to (b);
		\draw[->] (c) to (b);	\draw[->] (d) to (b);	
	\end{tikzpicture}		
		\caption{$\C{D_1}$}
	\end{subfigure}
	\begin{subfigure}[t]{.3\textwidth}
        \centering
		\begin{tikzpicture}
		\tikzset{vertex/.style = {shape=circle,draw,minimum size=1.5em}}
		\tikzset{edge/.style = {->,> = latex'}}
		\node (d) at (0,0) {$d$};
		\node (a) at (1.,0.5) {$a$};
		\node (c) at (1,-0.5) {$c$};
		\node (b) at (2,0)  {$b$};
		\node (de) at (0.15,0) [label=left:{$$},circ];		
		\node (ae) at (1.15,0.42) [label=left:{$$},circ];		
		\node (ce) at (1.15,-0.42) [label=left:{$$},circ];

		\draw[-] (d) to (c);
		\draw[-] (a) to (d);
		\draw[->] (a) to (b);
		\draw[->] (c) to (b);	\draw[->] (d) to (b);	
	\end{tikzpicture}		
		\caption{\kPC Output}
	\end{subfigure}
		
 \caption{A graph where \kessential graph contains an invariant tail that cannot be learned by \kPC algorithm. $k=1$; thus $d,b$ are \kcovered but not $a,c$, which gives the \kclosure graph on the right. \kPC algorithm outputs the graph in $(c)$. However, there is no \kclosure in the Markov equivalence class where $d\leftrightarrow b$. Thus the edge in the \kessential graph should be $d\rightarrow b$. Similarly, $a\crightarrow b$ should be $a\rightarrow b$, however this requires reasoning about the number of d-connecting paths between $a,b$ in $K$ that is not captured in \kPC.}
 \label{fig:incompleteness}
 \end{figure}

We observe that a local algorithm such as \kPC cannot be used to assess if there is some \kclosure graph that is consistent with the current graph in which %
two nodes %
are %
\kcovered without the edge between them. %
One practical strategy would be to take the output of the \kPC algorithm and list all MAGs consistent with the circle edges, and then check if they are valid \kclosure graphs by pruning every edge using Theorem \ref{thm:kclosure-char}. For small or sparse \kclosure graphs, \kPC could be a practical way to reduce the search space efficiently, and then we can conduct an exhaustive search as the next step to obtain  %
a sound and complete algorithm for learning the \kessential graph.

\subsection{Heuristic Uses of Bounded Size Conditioning Sets}
For large number of variables, and except for very sparse graphs, constraint-based algorithms take significant time to complete. This is because the progressive nature of such tests is not able to sufficiently sparsify the graph with low-degree conditional independence tests, and they have to perform many tests: If the neighbor size is $\mathcal{O}(n)$, where $n$ is the number of nodes, algorithm needs to check exponentially many subsets of nodes in the conditioning set. 

To prevent this issue, several implementations of these algorithms have the added functionality to restrict this search by limiting the size of the conditioning set. For example, causal-learn package %
has this functionality. However, this is a heuristic that simply prematurely stops the search algorithms. The results of our paper build the theoretical understanding of what is learnable in this setting with a new equivalence class and its graphical representation. 

\subsection{Comparison of \kPC Output to LOCI Output}
\label{sec:app-vsLOCI}
\begin{lemma}
\label{lem:app-vsLOCI}
Consider three nodes $a,b,c$. Suppose $\nci{a}{b}{S},\nci{b}{c}{S},\ci{a}{c}{S}$ for some set $S:\lvert S\rvert\leq k$ and $b\notin S$. If $b,c$ are \kcovered, then \kPC orients the edge between $b,c$ as $b\sleftarrow c$.
\end{lemma}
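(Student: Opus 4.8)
The plan is to show that the arrowhead at $b$ on the edge between $b$ and $c$ is an \emph{invariant} arrowhead of $\C{D}$, i.e.\ that it belongs to $\PAG{\C{D}}$, and then to invoke the soundness and completeness of the first four steps of \kPC.

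First I would transfer the hypotheses to the level of \kclosure graphs and fix the skeleton. Since $\lvert S\rvert\leq k$, Lemma~\ref{lem:kclosure} guarantees that $\ci{a}{c}{S}$, $\nci{a}{b}{S}$ and $\nci{b}{c}{S}$ hold in $\C{D}$ exactly as in $D$. Hence $a,c$ are d-separated by $S$ in $\C{D}$ and therefore non-adjacent, whereas $b,c$ are adjacent in $\C{D}$ because they are \kcovered. By Corollary~\ref{cor:kPCstep4}, the graph $K$ obtained at the end of Step~$4$ equals $\PAG{\C{D}}$, so it already has the correct skeleton and every invariant arrowhead. It therefore suffices to prove that every maximal ancestral graph $M'$ that is Markov equivalent to $\C{D}$ has an arrowhead at $b$ on the $b$--$c$ edge.

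The core of the argument I would carry out inside an arbitrary such $M'$. Markov equivalence preserves both the skeleton and all d-separation statements, so in $M'$ we still have $b\notin S$, $\ci{a}{c}{S}$, $\nci{a}{b}{S}$, the pair $a,c$ non-adjacent, and $b,c$ joined by a single edge. Suppose for contradiction that this edge is $b\rightarrow c$, i.e.\ it has a tail at $b$. Because $\nci{a}{b}{S}$, there is a d-connecting path $\pi$ from $a$ to $b$ given $S$. Concatenating $\pi$ with the single edge $b\rightarrow c$ produces a walk from $a$ to $c$: at the junction $b$ the appended edge has a tail, so $b$ is a non-collider on this walk, and since $b\notin S$ it does not block; moreover the edge $b\rightarrow c$ contributes no intermediate vertices. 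Thus every collider of the walk still lies in $An(S)$ and every non-collider remains outside $S$, so the walk is d-connecting. By the walk-to-path reduction used in the proof of Lemma~\ref{lem:no_overlap} (which also removes any repeated occurrence of $c$), we obtain a d-connecting path from $a$ to $c$ given $S$, contradicting $\ci{a}{c}{S}$. Hence the $b$--$c$ edge must be $b\leftarrow c$ or $b\leftrightarrow c$; in either case it carries an arrowhead at $b$.

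Since this holds in every Markov-equivalent MAG, the arrowhead at $b$ is invariant and lies in $\PAG{\C{D}}$, so by Corollary~\ref{cor:kPCstep4} it is already present in $K$ at the end of Step~$4$. The only further orientations, rules \R{11} and \R{12} of Step~$5$, merely turn circle marks into tails and never create or erase arrowheads, so the arrowhead at $b$ persists and \kPC returns $b\sleftarrow c$. The step I expect to be the main obstacle is precisely this \emph{invariance}: \kPC can output only arrowheads contained in $\PAG{\C{D}}$, so orienting $b$ as a collider in $\C{D}$ alone would not be enough. What makes the invariance argument clean is that $b$ and $c$ are joined by a \emph{single} edge---guaranteed by the \kcovered hypothesis---so appending it introduces no blockable intermediate vertex; the only technical subtlety is the walk-to-path correspondence for mixed graphs, which mirrors the DAG version inside the proof of Lemma~\ref{lem:no_overlap}. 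If one prefers to argue entirely with d-separation on DAGs, the same conclusion for $\C{D}$ follows by concatenating an active $a$-to-$b$ path in $D$ with a directed $b$-to-$c$ path and using $b\notin An(S)$ (which Lemma~\ref{lem:no_overlap} yields) to discard the case where that directed path is blocked by $S$.
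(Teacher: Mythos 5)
Your proof is correct, but it takes a genuinely different route from the paper's. The paper argues \emph{operationally}: it traces d-connecting paths in the underlying DAG, case-splits on which pairs along them are \kcovered, and shows that in every case the orientation $b\sleftarrow c$ is forced by the unshielded-collider step followed by repeated applications of rules \R{1} and \R{2} (the Meek-type rules) during Step 4 --- a long but self-contained case analysis. You instead argue \emph{semantically}: every MAG Markov equivalent to $\C{D}$ must carry an arrowhead at $b$ on the $b$--$c$ edge (else concatenating a d-connecting $a$--$b$ path with the tail edge $b\rightarrow c$ contradicts $\ci{a}{c}{S}$, using $b\notin S$), so the arrowhead is invariant, hence present in $\PAG{\C{D}}$, hence present at the end of Step 4 by Corollary \ref{cor:kPCstep4}, and \R{11}/\R{12} never erase arrowheads. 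Your approach is much shorter and cleaner, but it leans on the full arrowhead-completeness of FCI hidden inside Corollary \ref{cor:kPCstep4}; the paper's constructive proof shows more, namely that these arrowheads already follow from unshielded colliders plus \R{1} and \R{2} alone --- exactly the rules LOCI itself uses --- which is what makes the subsequent corollary ("\kPC orients every arrowhead LOCI orients") immediate. Two minor technical remarks on your write-up: (i) you invoke the walk-to-path reduction of Lemma \ref{lem:no_overlap}, which the paper proves for DAGs, inside an arbitrary MAG $M'$; this extension is standard but unproven here, and you can avoid walks altogether --- if $c$ lies on $\pi$, the prefix of $\pi$ up to the first occurrence of $c$ is already a d-connecting $a$--$c$ path (subpaths of d-connecting paths are d-connecting), and otherwise the concatenation is itself a path; (ii) the hypothesis $\nci{b}{c}{S}$ is never used in your argument, which is harmless since it is implied by $b,c$ being \kcovered.
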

\begin{proof}    
In the following, we show that given the CI pattern that LOCI uses to orient edges, \kPC also orients the same edges. Consider the CI pattern that LOCI uses between three nodes $a,b,c$.

Conditioned on $S$, there is a d-connecting path between $a,b$; let us call this path $p$. Conditioned on $S$, there is a d-connecting path between $b,c$; let us call this path $q$. Consider the path obtained by concatenating $p,q$. Since this path must be d-separating, $b$ must be a collider on it and it must be the case that $b\notin An(S)$. Let the node that is adjacent to $b$ along $p$ be $a'$ and the node adjacent to $b$ along $q$ be $c'$. Thus we have $a'\rightarrow b\leftarrow c'$.

\emph{\textbf{Case 1: }} Now suppose that $a'$ and $c$ are separable by some $T:\lvert T\rvert \leq k$. \kPC would then orient $a'\srightarrow b\sleftarrow c$ since $b,c$ remains adjacent throughout the execution of \kPC. Thus $b\sleftarrow c$ would be oriented in this case. 

\emph{\textbf{Case 2: }} Suppose that $a',c$ are \kcovered. Then, given $S$, there is a d-connecting path between $a',c$. Now consider the path obtained by concatenating the subpath of $p$ between $a, a'$ and this d-connecting path. Since $a,c$ are d-separated given $S$, $a'$ must be a collider along this path and a non-ancestor of $S$. Thus we must have $a''\rightarrow a'\rightarrow b$ as the last three nodes of path $p$. 

\emph{\textbf{Case 2.a: }} Suppose $a'',c$ are separable by some $T:\lvert T\rvert\leq k$. Since $a',c$ are \kcovered, they remain adjacent throughout the execution of \kPC. Thus, \kPC would orient $a''\srightarrow a'\sleftarrow c$. Now we consider two sub-cases: 

\emph{\textbf{Case 2.a.i: }} Suppose $a'',b$ are separable by some set of size at most $k$. Then, $a''\srightarrow a' -b$ is an unshielded triple and it would be oriented by the first Meek rule of \kPC as $a''\srightarrow a' \rightarrow b$. Now we have the following edges oriented: $b\sleftarrow a'\sleftarrow c$, and $b,c$ adjacent. By the second Meek rule, \kPC will then orient $b\sleftarrow c$. This establishes that in this sub-case, \kPC would also orient the arrowhead adjacent to $b$ for the edge between $b,c$. 

\emph{\textbf{Case 2.a.ii: }} Now consider the second sub-case: Suppose $a'',b$ are \kcovered. Thus, throughout the execution of \kPC, $a'',b$ are adjacent. In this case, $a''\circlecircle b\circlecircle c$ forms an unshielded collider and the algorithm would orient them as $a''\srightarrow b\sleftarrow c$ due to the independence statement $\ci{a''}{c}{T}$. Therefore, the arrowhead at $b$ would be oriented in this case as well. 

\emph{\textbf{Case 2.b: }} Now suppose $a'',c$ are \kcovered. Thus, there must be a d-connecting path between $a'',c$ given $S$. Now consider the path obtained by concatenating the subpath of $p$ between $a, a''$ and this d-connecting path. Since $\ci{a}{c}{S}$, it must be that $a''$ is a collider along this path and that $a''\notin An(S)$. Thus, along this path we have $a'''\rightarrow a''\leftarrow \hdots$. Therefore, the last four nodes of the path $p$ is $a'''\rightarrow a''\rightarrow a'\rightarrow b$, and $a',c$ are \kcovered, $a'',c$ are \kcovered. 

\emph{\textbf{Case 2.b.i: }} Now suppose $a''',c$ are separable by some $T:\lvert T \rvert\leq k$. 

\emph{\textbf{Case 2.b.i.$\alpha$: }} If the pair $a''',b$ is \kcovered, following the above argument, we would orient $a'''\srightarrow b\sleftarrow c$ with the statement $\ci{a'''}{c}{S}$, and the arrowhead adjacent to $b$ along the edge between $b,c$ would have been oriented. 

\emph{\textbf{Case 2.b.i.$\beta$: }}Suppose $a''',b$ pair is separable. 

\emph{\textbf{Case 2.b.i.$\beta$.1: }} Suppose $a''',a'$ is \kcovered. We would orient $a'''\srightarrow a'\sleftarrow c$ due to the statement $\ci{a'''}{c}{S}$. And since $a''',b$ are not \kcovered, Meek rule one would orient the subgraph $a'''\srightarrow a'\circlecircle b$ as $a'''\srightarrow a'\srightarrow b$. Since $b\sleftarrow a'\sleftarrow c$ and $b\circlecircle c$, second Meek rule would orient the arrowhead at $b$ along the edge between $b,c$. 

\emph{\textbf{Case 2.b.i.$\beta$.2: }}Finally, suppose $a''',b$ and $a''',a'$ are both separable. Now \kPC applies Meek rule one directly to orient $a'''\srightarrow a''\srightarrow a'$. Now that we have $a'\sleftarrow a''\sleftarrow c$, and that $a',c$ are adjacent, Meek rule two will orient $a'\sleftarrow c$. Another application of Meek rule one would give $a''\srightarrow a'\srightarrow b$ and now since we have $b\sleftarrow a'\sleftarrow c$ and that $b,c$ are adjacent, Meek rule two would orient $b\sleftarrow c$. 

\emph{\textbf{Case 2.b.j and beyond: }} Finally, if $a''',c$ are \kcovered, following a similar argument, either the node adjacent to $a'''$ along $p$ (towards $a$) is separable with $c$, in which case following a similar argument as above would orient $b\sleftarrow c$, or we continue until $a, c$ become adjacent. The latter cannot happen since that contradicts with the fact that $\ci{a}{c}{S}$. Thus, there must exist some node $u$ along $p$ that is separable from $c$, and the subpath of $p$ between $u,b$ is directed. Following the argument above, repeated application of Meek rules one and two will result in the orientation of the edge between $b,c$ as $b\sleftarrow c$. This establishes that \kPC orients at least as much arrowheads as LOCI. %
\end{proof}

The corollary of this lemma is that \kPC orients all arrowheads oriented by LOCI:
\begin{corollary}
    Any arrowhead oriented by LOCI is also oriented by \kPC.
\end{corollary}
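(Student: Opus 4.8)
The plan is to obtain the corollary as an immediate consequence of Lemma~\ref{lem:app-vsLOCI}, by matching the hypotheses of that lemma to the precise condition under which LOCI places an arrowhead. LOCI orients the arrowhead at $b$ on an edge $b\starstar c$ exactly when it detects a bounded-order collider witness: a third node $a$ together with a set $S$ satisfying $\lvert S\rvert\le k$ and $b\notin S$ for which $\nci{a}{b}{S}$, $\nci{b}{c}{S}$, and $\ci{a}{c}{S}$. So the first step is to record this characterization of LOCI's arrowhead rule and to observe that the CI pattern it relies on is \emph{verbatim} the premise of Lemma~\ref{lem:app-vsLOCI}.

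The second step is to reconcile the two skeletons. Both algorithms decide adjacency purely by testing separation with conditioning sets of size at most $k$: in \kPC this is Steps~1--2, and an edge $b\starstar c$ survives iff no such set separates $b,c$, i.e.\ iff $b,c$ are \kcovered (Definition of \kcovered). LOCI keeps exactly the same set of edges for the same reason. Hence whenever LOCI has an edge $b\starstar c$ on which it places an arrowhead, the pair $b,c$ is \kcovered, which supplies the one remaining hypothesis of the lemma. With the CI witness $(a,S)$ and the \kcovered-ness of $b,c$ both in hand, Lemma~\ref{lem:app-vsLOCI} yields that \kPC orients the edge as $b\sleftarrow c$, reproducing precisely the arrowhead LOCI placed. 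Since this argument applies to every arrowhead LOCI orients, \kPC recovers all of them, which is the claim.

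The main obstacle is not the final deduction, which is essentially a one-line invocation once the two facts above are established; rather, it is ensuring that LOCI's orientation rule is \emph{faithfully} captured by the lemma's premise and that the two algorithms genuinely agree on the skeleton. All of the genuinely technical content has already been absorbed into the case analysis proving Lemma~\ref{lem:app-vsLOCI}, which must cope with the possibility that the d-connecting paths witnessing $\nci{a}{b}{S}$ and $\nci{b}{c}{S}$ are long: it extracts an unshielded collider along these paths and then propagates the resulting arrowhead to the edge between $b$ and $c$ through repeated applications of the Meek-style rules \R{1} and \R{2}. Thus the corollary is where the work pays off, and the proof itself reduces to citing the lemma after checking the skeleton agreement.
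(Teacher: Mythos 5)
Your overall strategy---reduce the corollary to Lemma~\ref{lem:app-vsLOCI}---is the same as the paper's, and the part of your argument covering collider-witness arrowheads is sound: an arrowhead placed by LOCI sits on an edge that survived LOCI's bounded-order skeleton phase, so that pair is \kcovered, and the lemma then gives the orientation $b\sleftarrow c$ in \kPC. (One side remark: your claim that the two algorithms ``keep exactly the same set of edges'' is too strong. LOCI can additionally delete edges whose both endpoints receive arrowheads, which \kPC instead keeps as $\leftrightarrow$; this is exactly the content of the paper's companion corollary on adjacencies. You only use the harmless direction ``edge present in LOCI output $\Rightarrow$ pair is \kcovered,'' so this overstatement does not damage your argument, but it should not be asserted as skeleton equality.)

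The genuine gap is in your first step: it is not true that LOCI orients an arrowhead at $b$ on $b\,\starstar\,c$ \emph{exactly} when it has a witness $(a,S)$ with $\nci{a}{b}{S}$, $\nci{b}{c}{S}$, $\ci{a}{c}{S}$, $\lvert S\rvert\leq k$, $b\notin S$. That CI pattern is only LOCI's \emph{initial} (v-structure--style) orientation rule; after applying it, LOCI runs the three Meek rules to closure, and these propagate further arrowheads (for instance, orienting $b\srightarrow c$ from $a\srightarrow b\,\starstar\,c$ with $a,c$ non-adjacent) that need not come with any collider witness on the edge they orient. As written, your proof says nothing about these propagated arrowheads, so it establishes containment only for the initially oriented ones. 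The paper closes precisely this hole with one extra observation: the rule set applied by \kPC contains the three Meek rules used by LOCI, and by the lemma \kPC starts its rule-application phase from a graph that already carries every collider-witness arrowhead of LOCI; since these orientation rules only add orientations as more arrowheads become available, the closure computed by \kPC contains the closure computed by LOCI. Adding this final step (or some equivalent treatment of LOCI's Meek phase) is what your proof needs in order to be complete.
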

\begin{proof}
    Suppose $\nci{a}{b}{S},\nci{b}{c}{S},\ci{a}{c}{S}$ for some set $S:\lvert S\rvert\leq k$ and $b\notin S$. Observe that if $a,b$ and $b,c$ are \kcovered then \kPC would orient the edges between them as $a\srightarrow b\sleftarrow c$ due to the independence statement $\ci{a}{c}{S}$. Moreover, if $a,b$ and $b,c$ are both separable by some sets of size at most $k$, then both LOCI and \kPC would make $a,b$ and $b,c$ non-adjacent and thus neither algorithm orients an edge between them. Therefore, the only non-trivial case is when only one of the two pairs is \kcovered. For this case, 
    since the pre-condition of Lemma \ref{lem:app-vsLOCI} is identical to the condition of LOCI to orient any edge. 

    LOCI applies the three Meek rules after orienting these arrowhead marks. Since \kPC repeatedly applies a set of Meek rules that include these three rules, \kPC orients at least as many arrowheads as LOCI. Thus, the corollary follows. 
\end{proof}

Next, we show that they both carry the same adjacency information.
\begin{corollary}
Any pair that is non-adjacent in LOCI output is either non-adjacent, or adjacent via a bidirected edge in \kPC output.
\end{corollary}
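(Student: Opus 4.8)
The plan is to reduce the statement to a comparison of which edges each algorithm keeps, and then to show that the only pairs \kPC keeps but LOCI drops are exactly the bidirected ones. First I would pin down the two skeletons. By Corollary \ref{cor:kPCstep4}, Steps $1$--$4$ of \kPC recover the skeleton of $\C{D}$, and Step $5$ only orients marks, so the \kPC skeleton is precisely the set of \kcovered pairs. LOCI, on the other hand, outputs the graph union over the class $[D]_k$, so a pair is non-adjacent in LOCI iff it is non-adjacent in \emph{every} $D'\in[D]_k$. Since being \kcovered is a degree-$k$ CI property, it is invariant across $[D]_k$; hence any pair adjacent in some $D'\in[D]_k$ is \kcovered and therefore adjacent in \kPC. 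This already disposes of the ``non-adjacent'' alternative: a pair that is not \kcovered is dropped by both algorithms. The work is thus confined to \kcovered pairs $a,b$ that LOCI drops, i.e.\ that are non-adjacent in every $D'\in[D]_k$.

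Next I would identify these pairs as exactly the bidirected ones. A \kcovered pair is adjacent in every \kclosure $\C{D'}$, where the edge is directed or bidirected according to ancestrality. I claim $a,b$ are adjacent in some $D'\in[D]_k$ iff some $\C{D'}$ carries a \emph{directed} edge between them. The forward direction is immediate, since adjacency in a DAG forces a directed edge in its \kclosure. For the converse I would invoke Lemma \ref{lem:helper-theorem-kclosure-char}: deleting all bidirected edges from $\C{D'}$ yields a DAG $D''$ that is \kmarkov equivalent to $D$ and retains every directed edge of $\C{D'}$, so a directed edge $a\rightarrow b$ in $\C{D'}$ makes $a,b$ adjacent in $D''\in[D]_k$. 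Consequently LOCI drops $a,b$ iff every $\C{D'}$ has $a\leftrightarrow b$, which by the edge-union definition is exactly the statement that $\E{D}$ contains $a\leftrightarrow b$.

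It then remains to show that \kPC actually outputs a bidirected edge on such a pair, i.e.\ that both arrowheads of $a\leftrightarrow b$ survive into the output $K$. Since $\E{D}\subseteq K$ only transfers marks from $K$ into $\E{D}$, I cannot read this directly off Definition \ref{def:subset}; I must argue that \kPC recovers both arrowheads. Here I would use Lemma \ref{lem:replacement_bidirected}: because $a,b$ are \kcovered and ancestrally incomparable, conditioning on any witnessing set of size at most $k$ leaves a d-connecting path entering both $a$ and $b$. Taking a shortest such path and inspecting the neighbour of each endpoint produces, after the shortest-path/shielding recursion already used in Lemma \ref{lem:nodiscpath} and Lemma \ref{lem:app-vsLOCI}, an unshielded collider at $a$ and one at $b$ whose separating sets exclude the collider. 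Step $3$ of \kPC orients these, and Step $4$ propagates them via \R{1} and \R{2}, so both ends of the $a,b$ edge become arrowheads and $K$ contains $a\leftrightarrow b$.

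The main obstacle is this last step: the neighbour of an endpoint along the guaranteed d-connecting path need not be non-adjacent to the other endpoint, so the unshielded collider is not immediate and one must recurse inward along the path, shielding colliders and reapplying \R{1} and \R{2} exactly as in the discriminating-path arguments already developed in the paper, with termination guaranteed because the path has finite length and cannot close into a directed path between two ancestrally incomparable nodes. Modulo that recursion, the three reductions combine to give the corollary: a pair non-adjacent in LOCI is either not \kcovered, hence non-adjacent in \kPC, or it satisfies $a\leftrightarrow b$ in $\E{D}$ and hence appears as a bidirected edge in the \kPC output.
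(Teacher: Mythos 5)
Your first two reductions are correct and genuinely different from the paper's route: identifying the \kPC skeleton with the \kcovered pairs (Corollary \ref{cor:kPCstep4}), characterizing LOCI's non-adjacencies semantically as non-adjacency throughout $[D]_k$, and using Lemma \ref{lem:helper-theorem-kclosure-char} to show that a \kcovered pair is adjacent in some member of $[D]_k$ if and only if some Markov equivalent \kclosure carries a directed edge between them --- hence LOCI deletes a \kcovered pair exactly when $a\leftrightarrow b$ appears in $\E{D}$. The proof then stands or falls on your last step, and that is where there is a genuine gap. Since \R{11} and \R{12} only convert circles into tails, every arrowhead in the final output of \kPC is already present at the end of Step~4, which by Corollary \ref{cor:kPCstep4} equals $\PAG{\C{D}}$. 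Your claim is therefore equivalent to: $a\leftrightarrow b$ in $\E{D}$ implies $a\leftrightarrow b$ in $\PAG{\C{D}}$, i.e.\ \emph{every} Markov equivalent MAG --- not merely every equivalent \kclosure --- has arrowheads at both ends of this edge. This does not follow from soundness: the inclusion $\E{D}\subseteq K$ of Theorem \ref{thm:kPCsoundness} transfers committed marks of $K$ into $\E{D}$, not conversely, and the paper stresses (Section \ref{sec:incompleteness}, Figure \ref{fig:incompleteness}) that invariant marks of $\E{D}$ can be invisible to \kPC precisely because $\E{D}$ can strictly refine $\PAG{\C{D}}$. So this implication needs its own proof, and it is the heart of the matter.

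Your sketch does not supply one. The recursions you invoke are anchored by a concrete non-adjacent pair together with a known separating set: in Lemma \ref{lem:nodiscpath}, the endpoints $a,v$ of the discriminating path with a set $S$ that must contain all the colliders; in Lemma \ref{lem:app-vsLOCI}, the witnessing triple with its given set $S$. In your setting no such anchor exists. Lemma \ref{lem:replacement_bidirected} guarantees, for every conditioning set of size at most $k$, a d-connecting path into both $a$ and $b$, but for Step~3 of \kPC to fire you need a neighbor $x$ of $a$ such that $x,b$ are separable by a set of size at most $k$ \emph{and} the set $S_{x,b}$ found in Step~1 excludes $a$; nothing in your argument produces such a witness, and your ``shield and recurse'' step has no termination guarantee here, because shielded colliders along an arbitrary replacement path do not force the cascade of orientations that the discriminating-path structure forces. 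The paper avoids the issue entirely by arguing from LOCI's operational rule rather than from its semantic description: LOCI deletes the edge of a \kcovered pair only when it has observed the double collider pattern $\nci{u}{a}{S_1},\,\nci{a}{b}{S_1},\,\ci{u}{b}{S_1}$ and $\nci{a}{b}{S_2},\,\nci{b}{v}{S_2},\,\ci{a}{v}{S_2}$, and Lemma \ref{lem:app-vsLOCI}, applied once at each endpoint, converts precisely these witnesses into the two arrowheads in the \kPC output. Those CI witnesses are exactly what your approach is missing; recovering them from the union-graph description of LOCI's output would require invoking LOCI's completeness theorem, at which point you have essentially reproduced the paper's argument.
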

\begin{proof}
    LOCI makes a pair non-adjacent in two ways. If LOCI makes a pair non-adjacent since they are separable, \kPC also will make them non-adjacent. Suppose LOCI makes a pair $a,b$ non-adjacent due to the following CI pattern, which are otherwise \kcovered: $\nci{u}{a}{S_1},\nci{a}{b}{S_1},\ci{u}{b}{S_1}$ and $\nci{a}{b}{S_2},\nci{b}{v}{S_2},\ci{a}{v}{S_2}$ for some $u,v, S_1,S_2$. Note that by Lemma \ref{lem:app-vsLOCI}, \kPC marks both endpoints of the edge between $a,b$ as arrowheads. This concludes the proof.
\end{proof}
These results can be combined with the example in Figure \ref{fig:kess} to show that the \kPC output carries strictly more information about the set of causal graphs that entail the set of \dkci statements than the output of LOCI, even though \kPC is not complete for learning our equivalence class as discussed in Section \ref{sec:incompleteness}.

\subsection{Comparison of \kPC Output to AnytimeFCI Output}
\label{sec:app-AnytimeFCI}
The undirected edges in our representation do not represent selection bias. We use them to represent a different graph union operation. The lack of latents gives us this flexibility, which allows us to distinguish different sets of graphs by using both $\mbox{ --- }$ and $\circlecircle$ edge marks for this.

For a concrete example on why AnytimeFCI is less informative than \kPC in our setting, consider the causal graph in Figure~\ref{fig:kess}. Here, \kPC will learn the representation on the right: We can infer that either $a$ causes $d$ or $d$ causes $a$, noted by the undirected edge $a\mbox{ --- }d$ in our representation, rather than the circle edge $\circlecircle$, which would allow $a,d$ to be non-adjacent in some DAGs. Anytime FCI here will instead output circle edges between $a$ and $d$, i.e., $a\circlecircle d$. Even if we try to incorporate the knowledge that the underlying graph is causally sufficient, it is not obvious how one can conclude that $a$ and $d$ must be adjacent in \emph{every DAG that induces the same degree-$k$ d-separation statements} from the Anytime FCI output. One way to do this would be to remove the edge from this PAG and check whether $a,d$ can now be d-separated by some conditioning set of size at most $k$. If yes, this changes the equivalence class, which means the edge must be there in any DAG. Our local rule \R{12} can instead orient this as an undirected edge, signifying that $a$ and $d$ must be adjacent in every underlying DAG by leveraging our equivalence class representation, without having to run this type of post-processing which might be computationally intensive for larger graphs.
 
\clearpage

\section{Additional Experiments}
\label{sec:app-experiments}
\subsection{Experiments vs. LOCI}
\label{sec:app-LOCI}
\begin{figure}[h!]
	\centering
	\begin{subfigure}[b]{0.32\textwidth}
		\includegraphics[width=\textwidth]{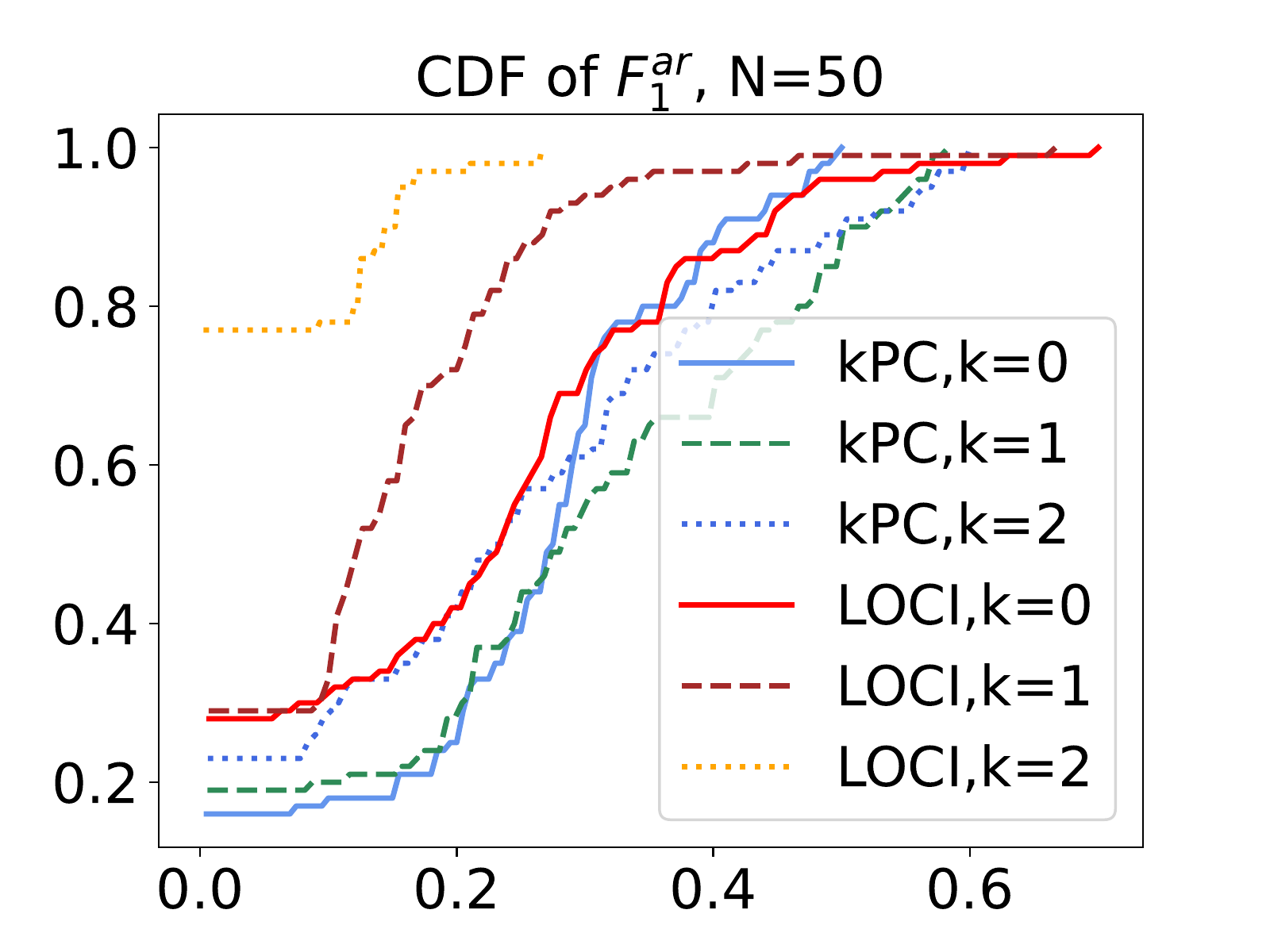}
		\caption{Arrowhead $F_1$ Score}
		\label{fig:latentsearch_performance}
	\end{subfigure}
	\begin{subfigure}[b]{0.32\textwidth}
		\includegraphics[width=\textwidth]{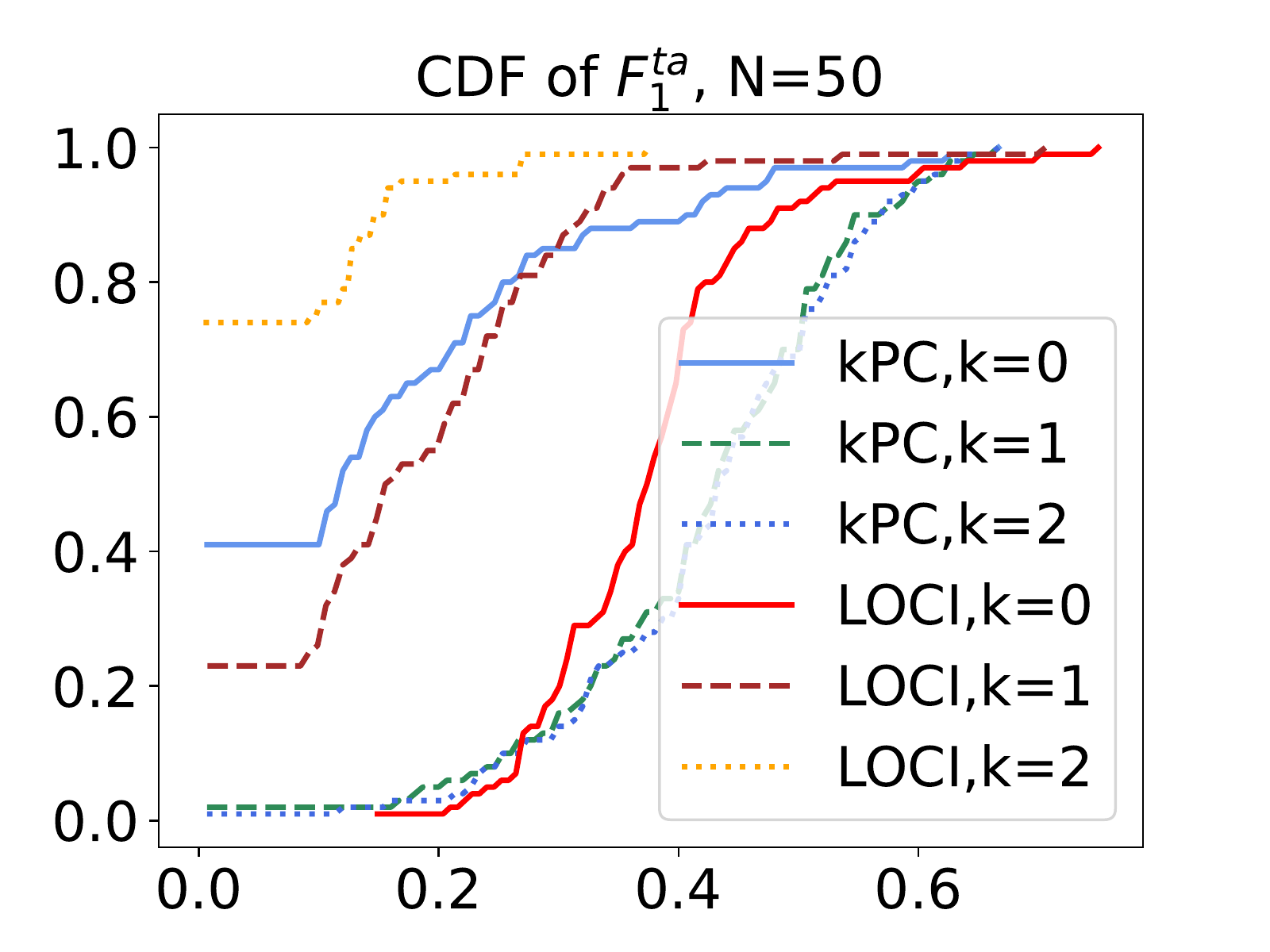}
		\caption{Tail $F_1$ Score}
		\label{fig:synthetic_conjecture}
	\end{subfigure}
	\begin{subfigure}[b]{0.32\textwidth}
		\includegraphics[width=\textwidth]{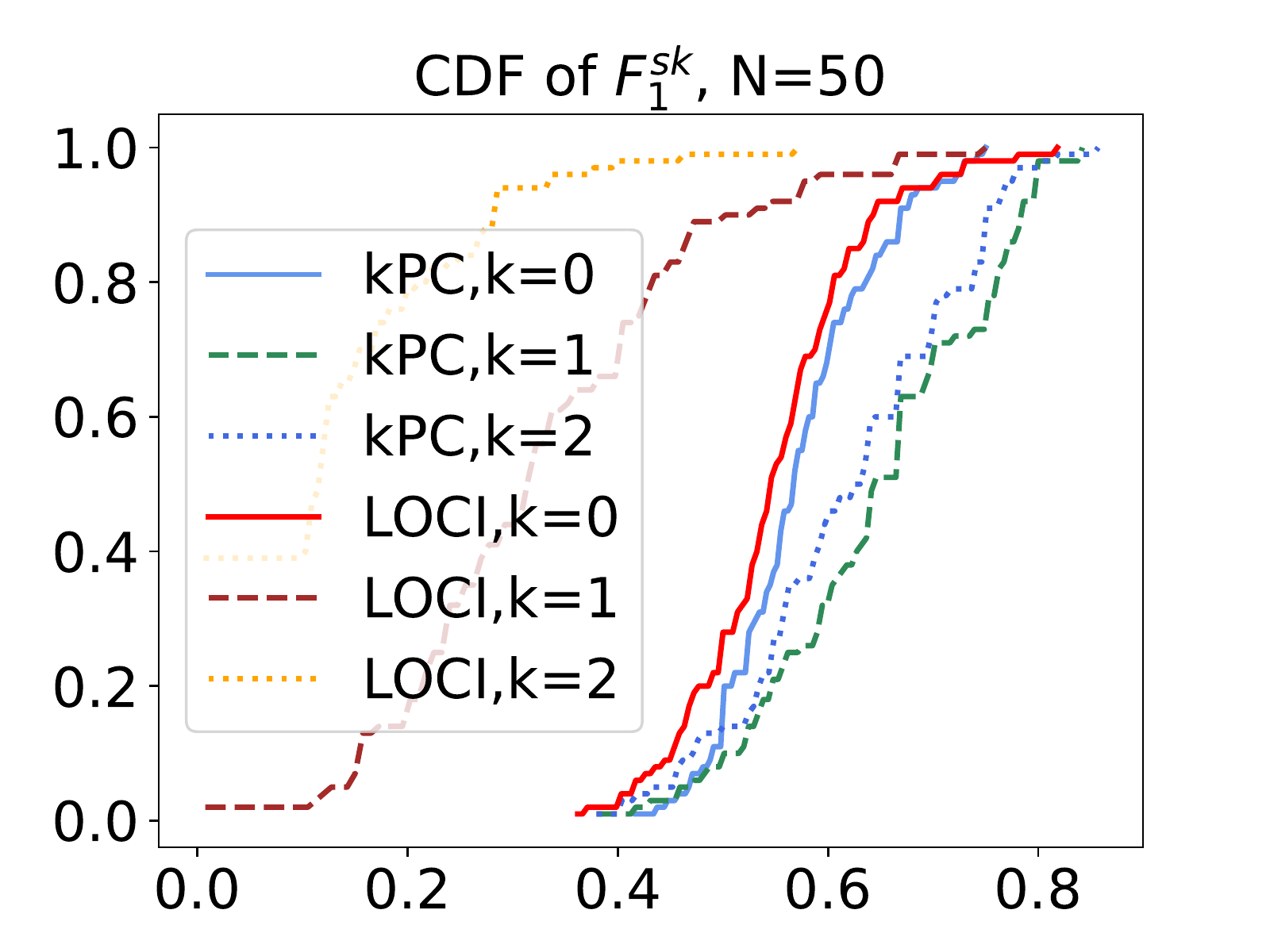}
		\caption{Skeleton $F_1$ Score}
		\label{fig:synthetic_identifiability}
	\end{subfigure}
	\begin{subfigure}[b]{0.32\textwidth}
		\includegraphics[width=\textwidth]{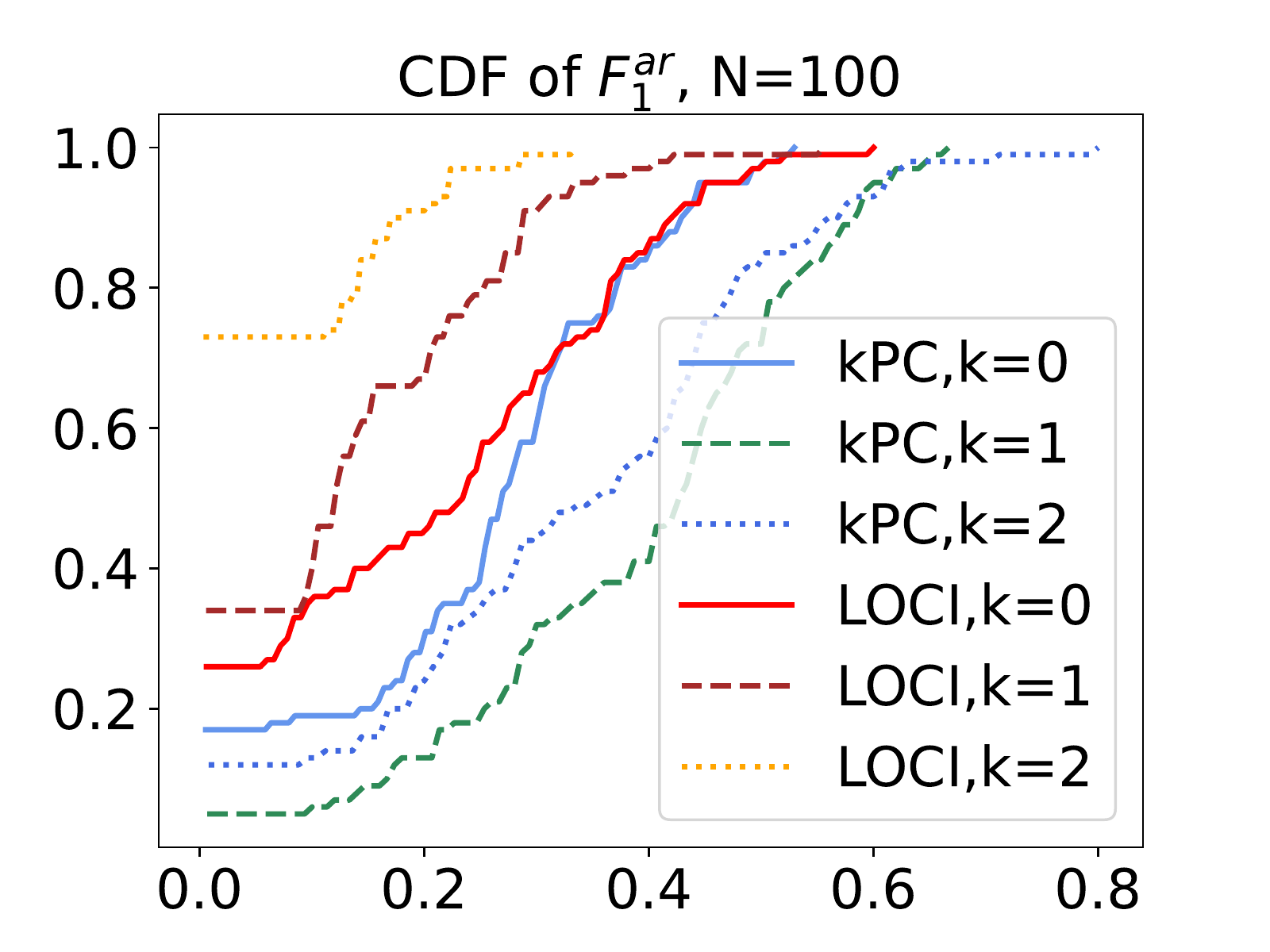}
		\caption{Arrowhead $F_1$ Score}
		\label{fig:latentsearch_performance}
	\end{subfigure}
	\begin{subfigure}[b]{0.32\textwidth}
		\includegraphics[width=\textwidth]{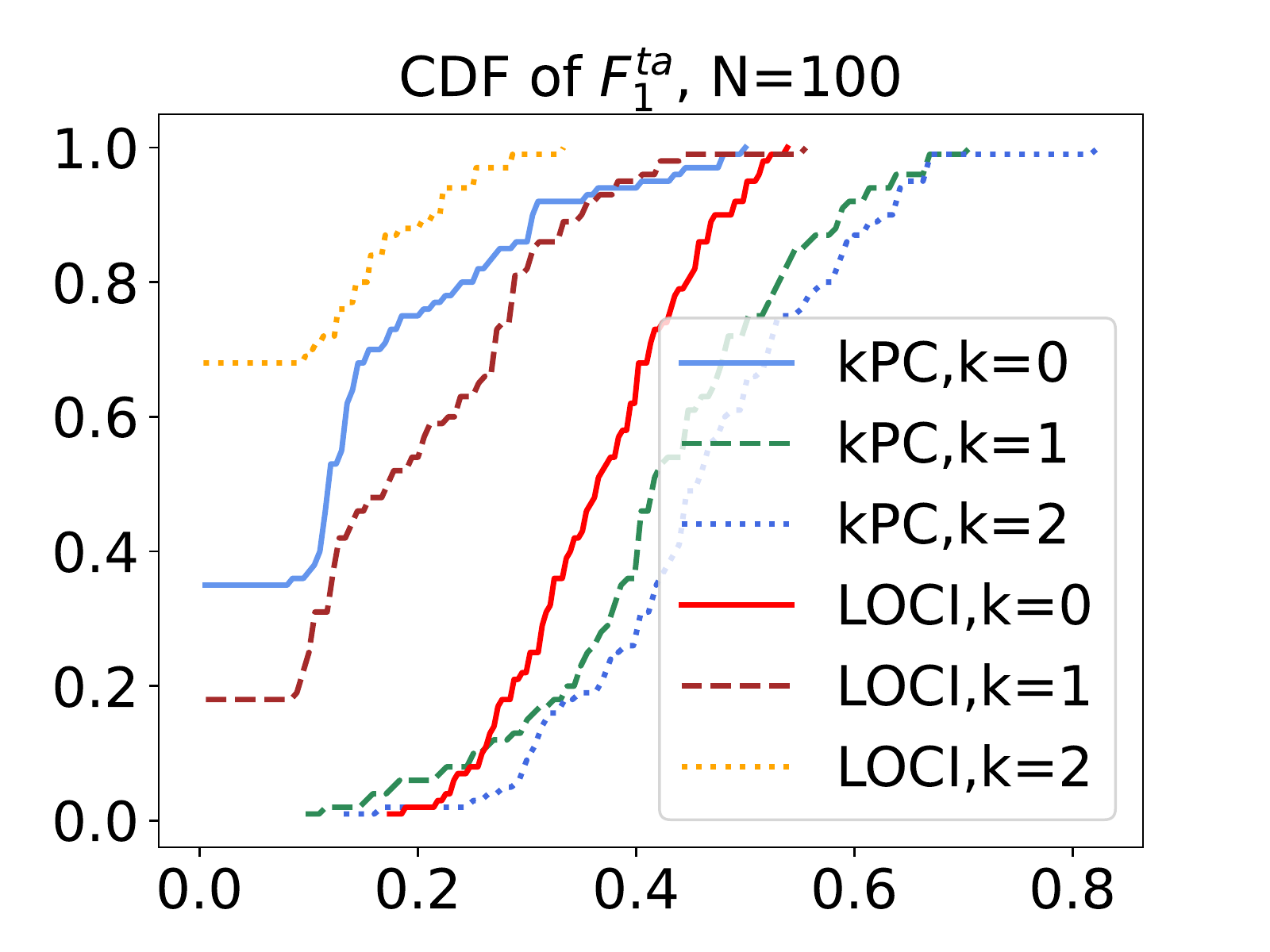}
		\caption{Tail $F_1$ Score}
		\label{fig:synthetic_conjecture}
	\end{subfigure}
	\begin{subfigure}[b]{0.32\textwidth}
		\includegraphics[width=\textwidth]{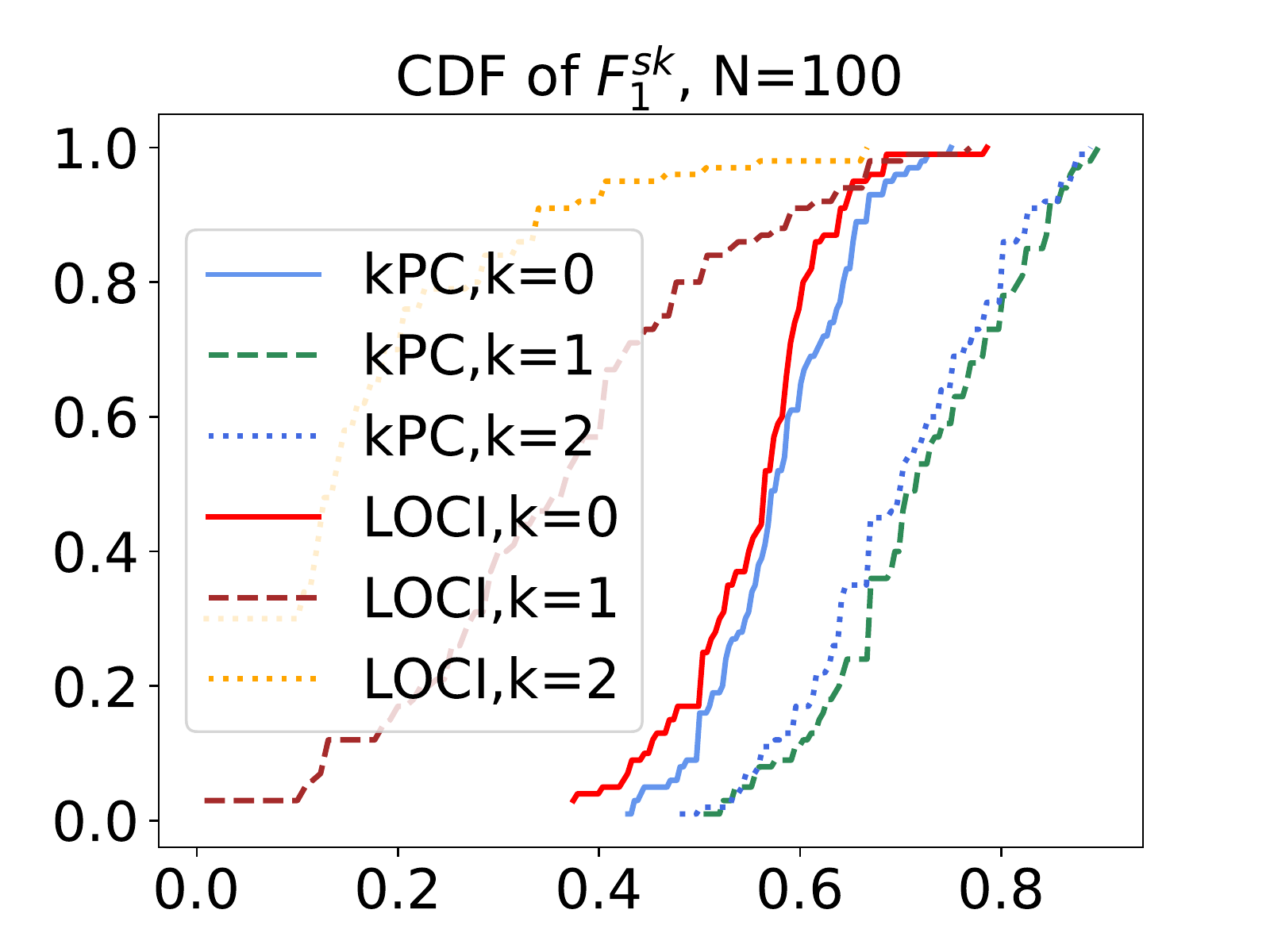}
		\caption{Skeleton $F_1$ Score}
		\label{fig:synthetic_identifiability}
	\end{subfigure}
 \caption{Empirical cumulative distribution function of various $F_1$ scores on $100$ random DAGs on $10$ nodes. We pick the parents of each node in a fixed total order randomly so that the expected value of parents is $3$. The conditional distribution of every node given any configuration of its parent set is sampled independently, uniformly randomly from the corresponding dimensional probability simplex. Three datasets are sampled per instance. Performance of \kPC and LOCI~\cite{wienobst2020recovering} are similar as expected. We still observe a similar trend as PC that arrowhead score of \kPC is better. For tail accuracy, the result depends on the value of $k$. For small $k$, LOCI outperforms \kPC whereas for larger $k$, \kPC outperforms LOCI in the small sample regime. %
 Different from Figure~\ref{fig:vsPC}, we compare the outputs to the true DAG instead of essential graph since no algorithm in this comparison can achieve essential graph. }%
	\label{fig:vsLOCI}
\end{figure}

\clearpage
\subsection{Experiments vs. NOTEARS}
\label{sec:app-NOTEARS}
\begin{figure}[h!]
	\centering
	\begin{subfigure}[b]{0.32\textwidth}
		\includegraphics[width=\textwidth]{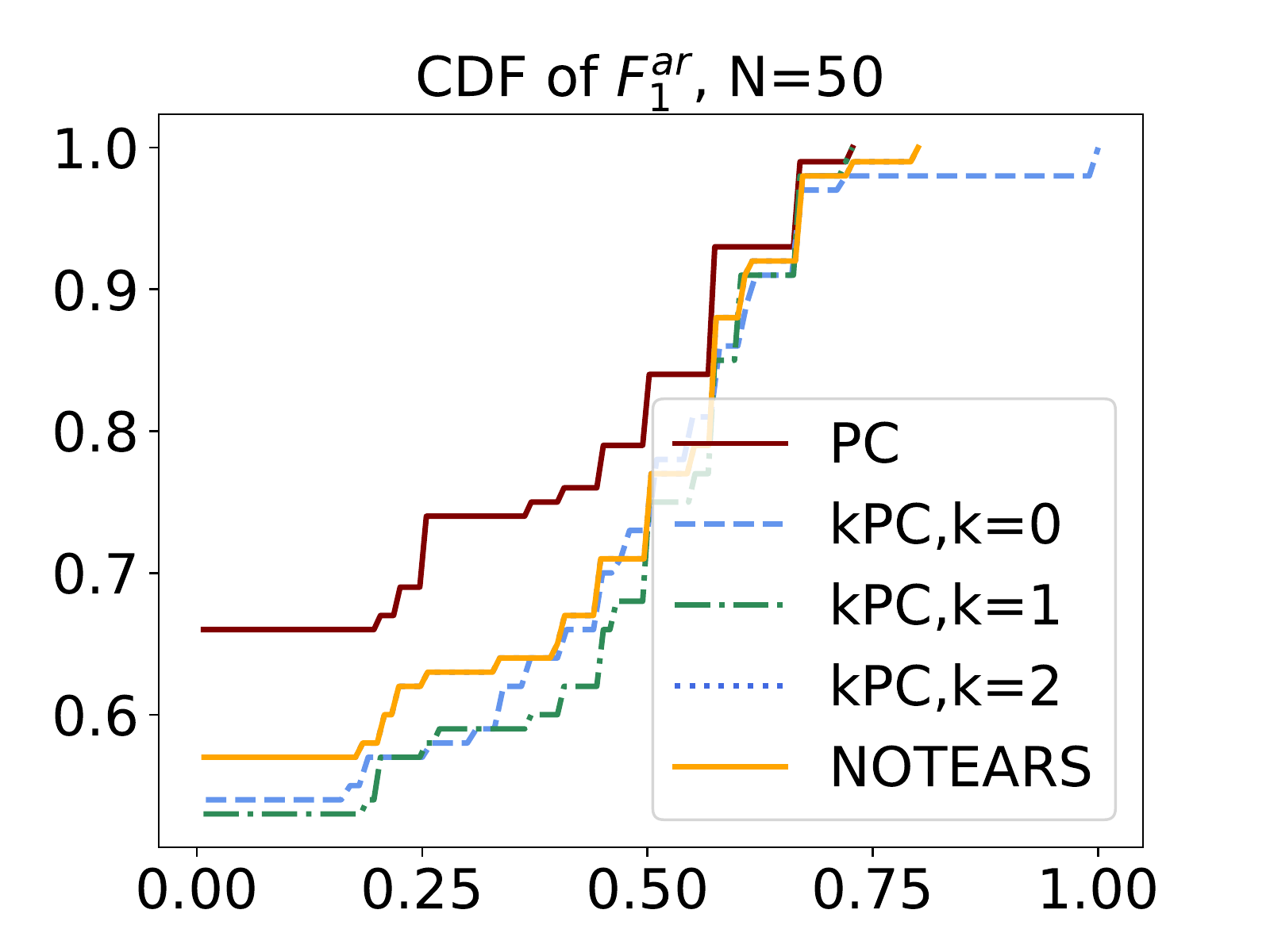}
		\caption{Arrowhead $F_1$ Score}
		\label{fig:latentsearch_performance}
	\end{subfigure}
	\begin{subfigure}[b]{0.32\textwidth}
		\includegraphics[width=\textwidth]{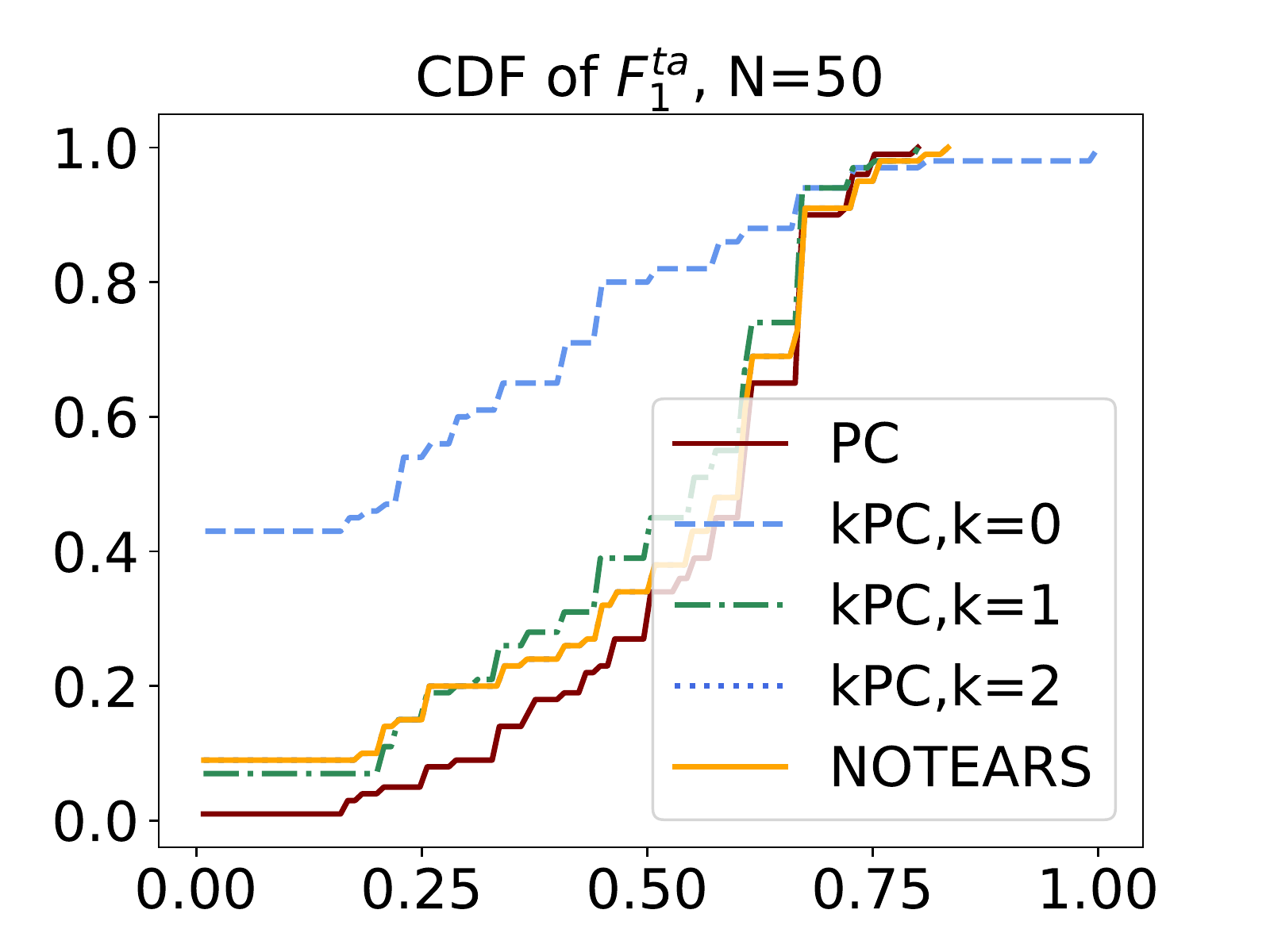}
		\caption{Tail $F_1$ Score}
		\label{fig:synthetic_conjecture}
	\end{subfigure}
	\begin{subfigure}[b]{0.32\textwidth}
		\includegraphics[width=\textwidth]{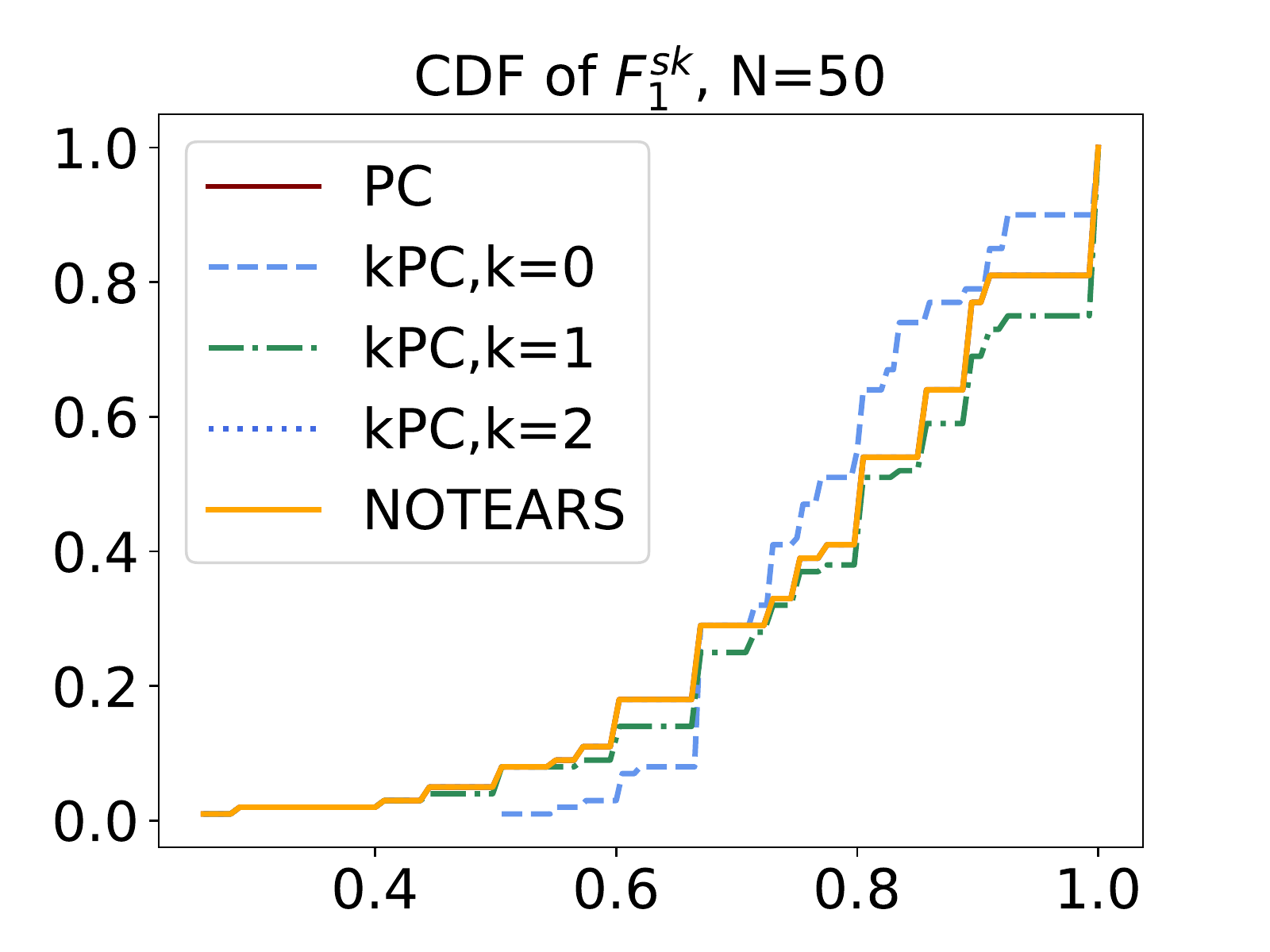}
		\caption{Skeleton $F_1$ Score}
		\label{fig:synthetic_identifiability}
	\end{subfigure}
	\begin{subfigure}[b]{0.32\textwidth}
		\includegraphics[width=\textwidth]{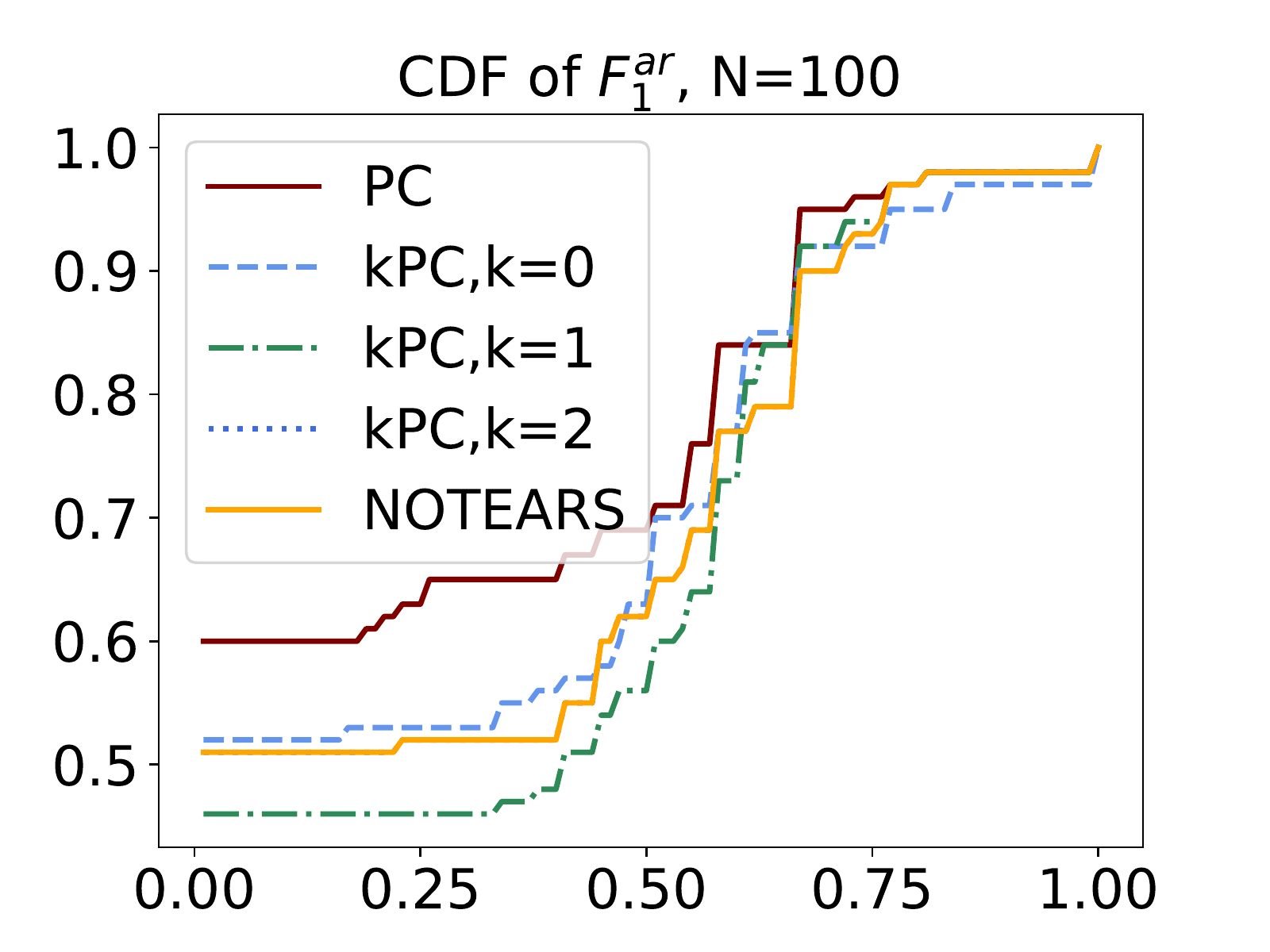}
		\caption{Arrowhead $F_1$ Score}
		\label{fig:latentsearch_performance}
	\end{subfigure}
	\begin{subfigure}[b]{0.32\textwidth}
		\includegraphics[width=\textwidth]{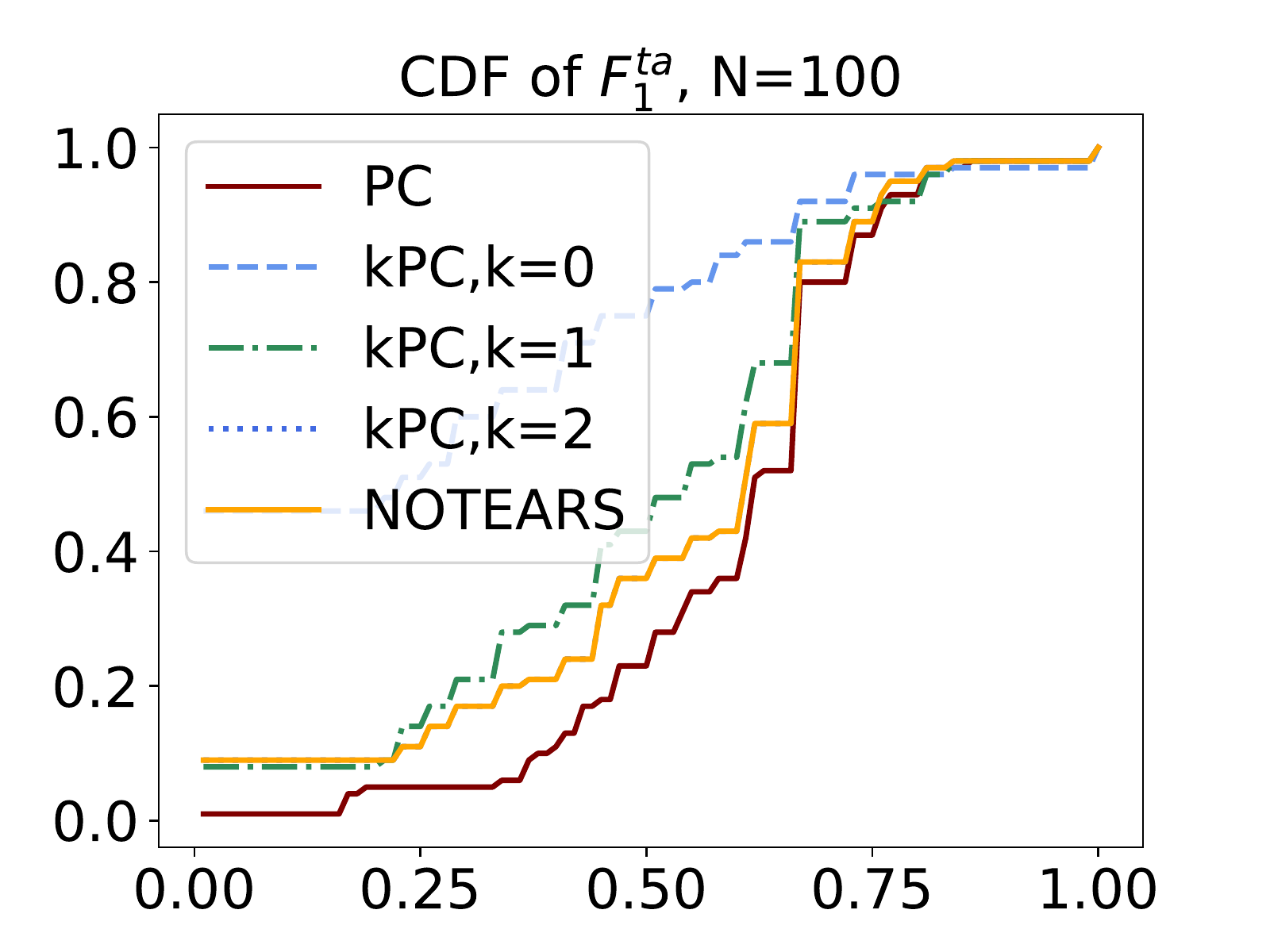}
		\caption{Tail $F_1$ Score}
		\label{fig:synthetic_conjecture}
	\end{subfigure}
	\begin{subfigure}[b]{0.32\textwidth}
		\includegraphics[width=\textwidth]{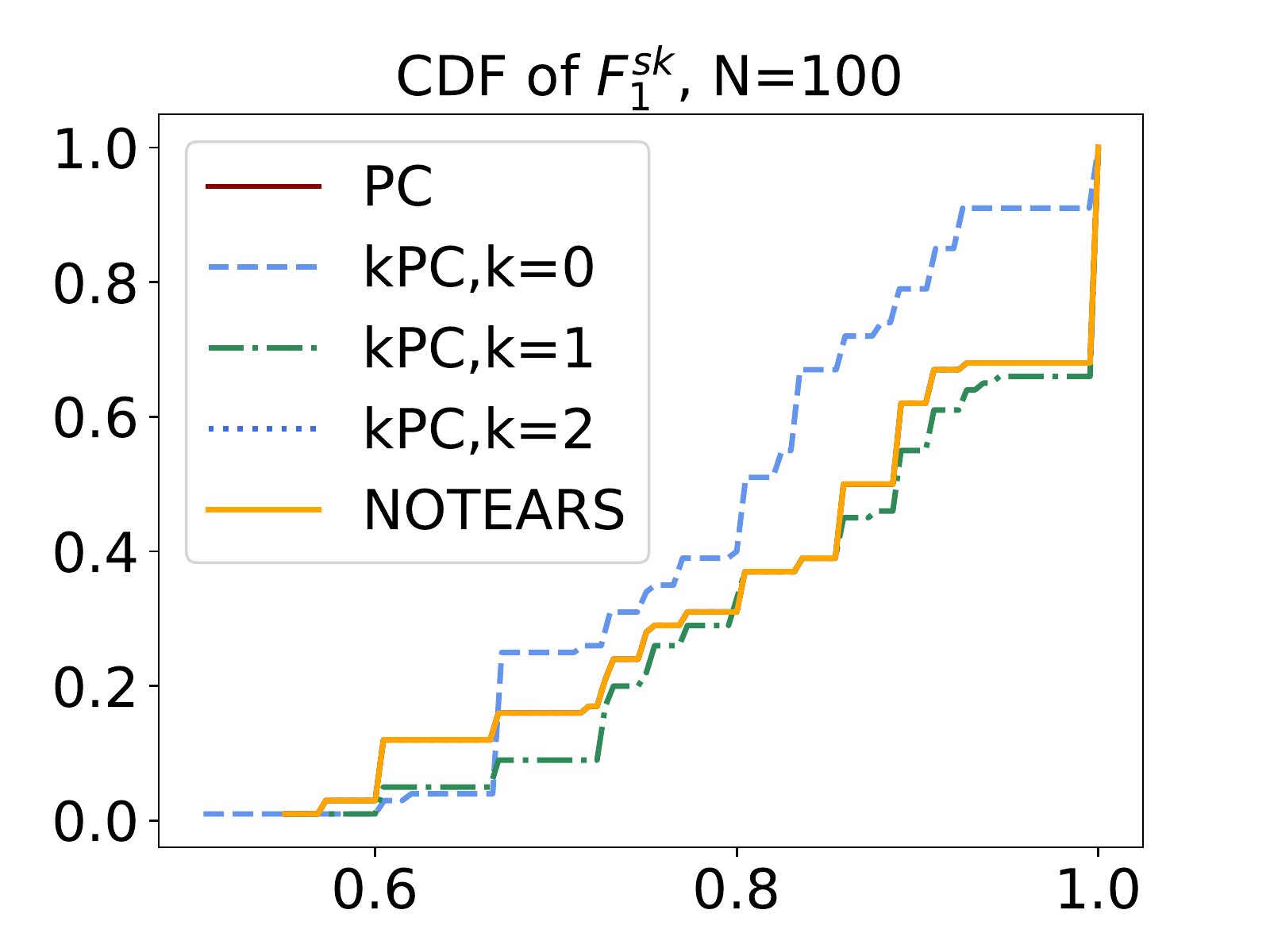}
		\caption{Skeleton $F_1$ Score}
		\label{fig:vsNOTEARS}
	\end{subfigure}
 \caption{Empirical cumulative distribution function of various $F_1$ scores on $100$ random DAGs on $5$ nodes. For each DAG, a linear SCM is sampled as follows: Each coefficient is chosen randomly in the range $[-3,3]$. Exogenous noise terms are jointly independent unit Gaussian. Performance of \kPC vs. NOTEARS~\cite{zheng2018dags}. We observe a similar trend as PC. NOTEARS is slightly better than PC consistently. Despite this, \kPC outperforms both in the low-sample regime. Metrics are computed against the true DAG. }%
	\label{fig:vsNOTEARS}
\end{figure}

\clearpage
\subsection{More Experiments vs. PC}
\label{sec:vsMorePC}
In this section, we show a larger range of $N$ (number of samples). We also explore the behaivor for graphs with different edge densities and higher number of nodes ($10$).

\begin{figure}[h!]
	\centering
	\begin{subfigure}[b]{0.32\textwidth}
		\includegraphics[width=\textwidth]{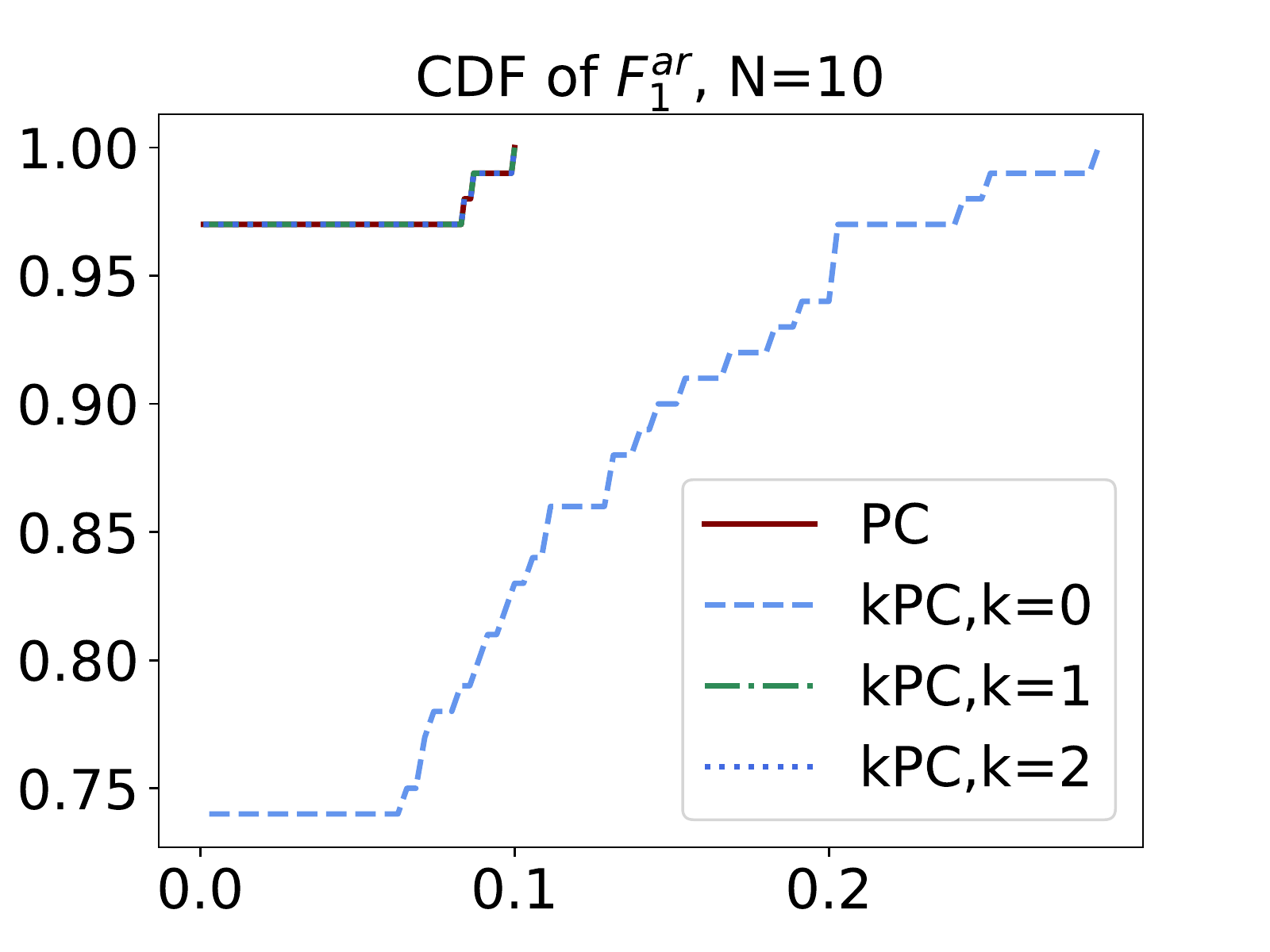}
		\caption{Arrowhead $F_1$ Score}
		\label{fig:latentsearch_performance}
	\end{subfigure}
	\begin{subfigure}[b]{0.32\textwidth}
		\includegraphics[width=\textwidth]{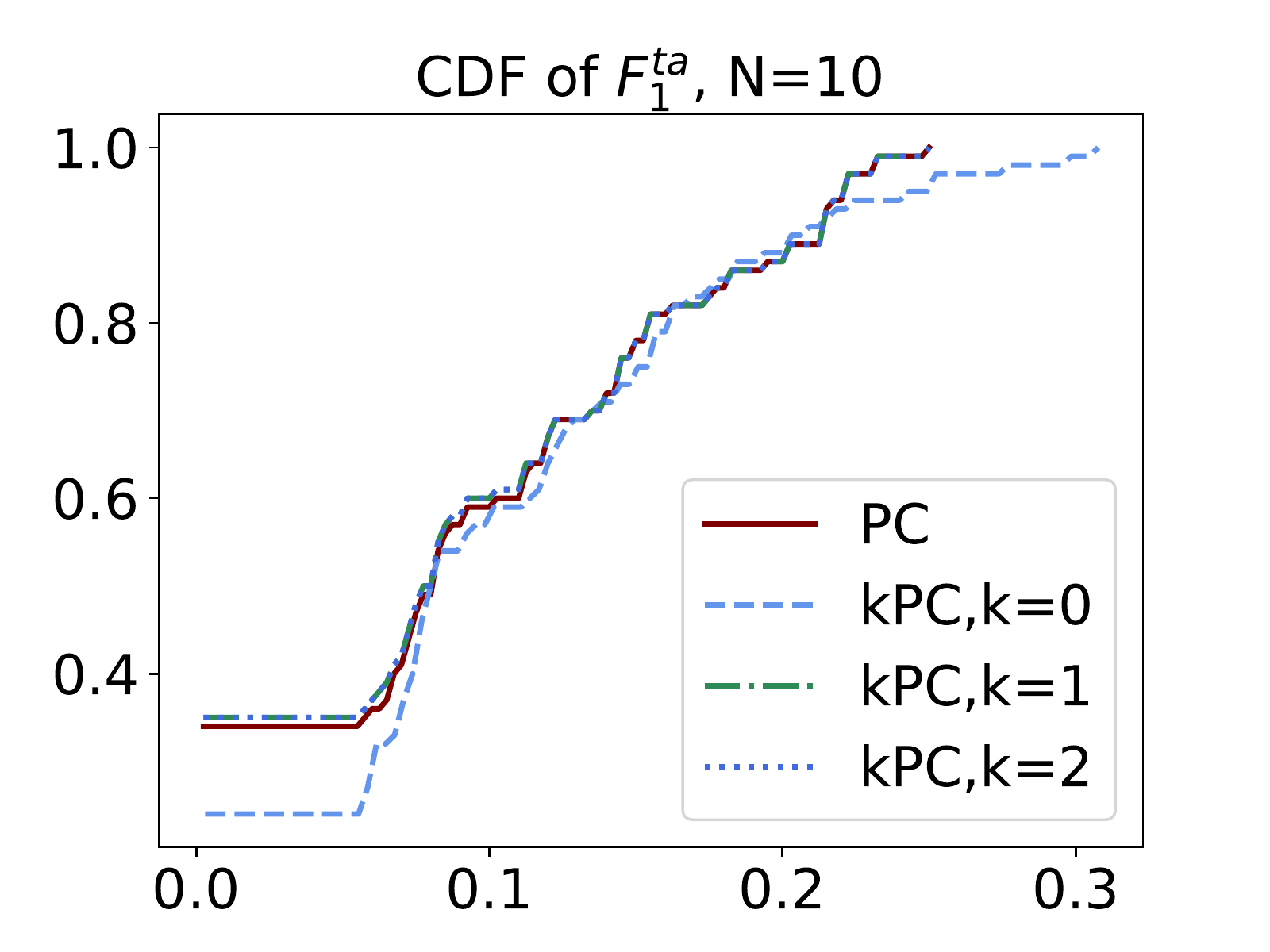}
		\caption{Tail $F_1$ Score}
		\label{fig:synthetic_conjecture}
	\end{subfigure}
	\begin{subfigure}[b]{0.32\textwidth}
		\includegraphics[width=\textwidth]{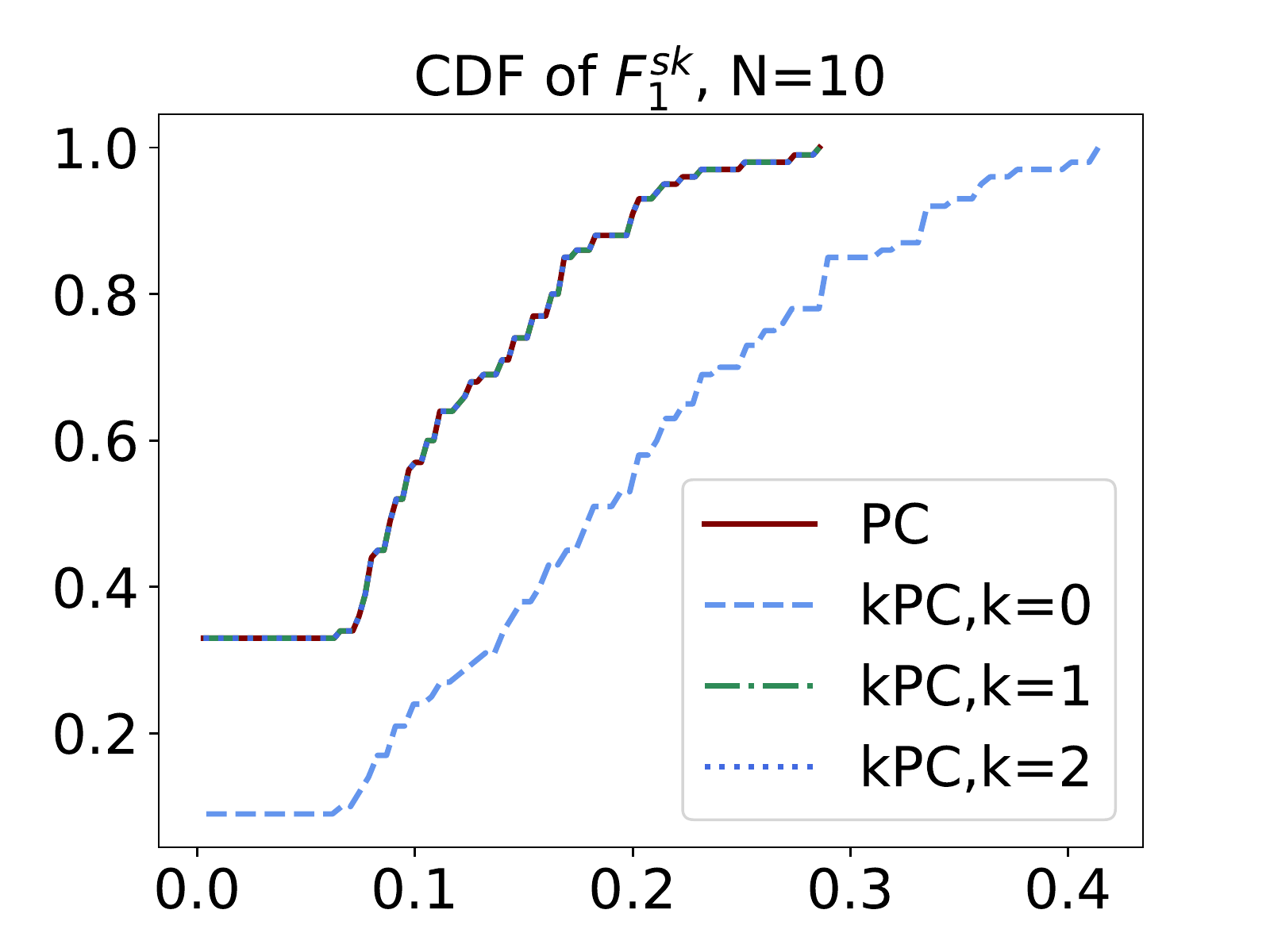}
		\caption{Skeleton $F_1$ Score}
		\label{fig:synthetic_identifiability}
	\end{subfigure}
	\begin{subfigure}[b]{0.32\textwidth}
		\includegraphics[width=\textwidth]{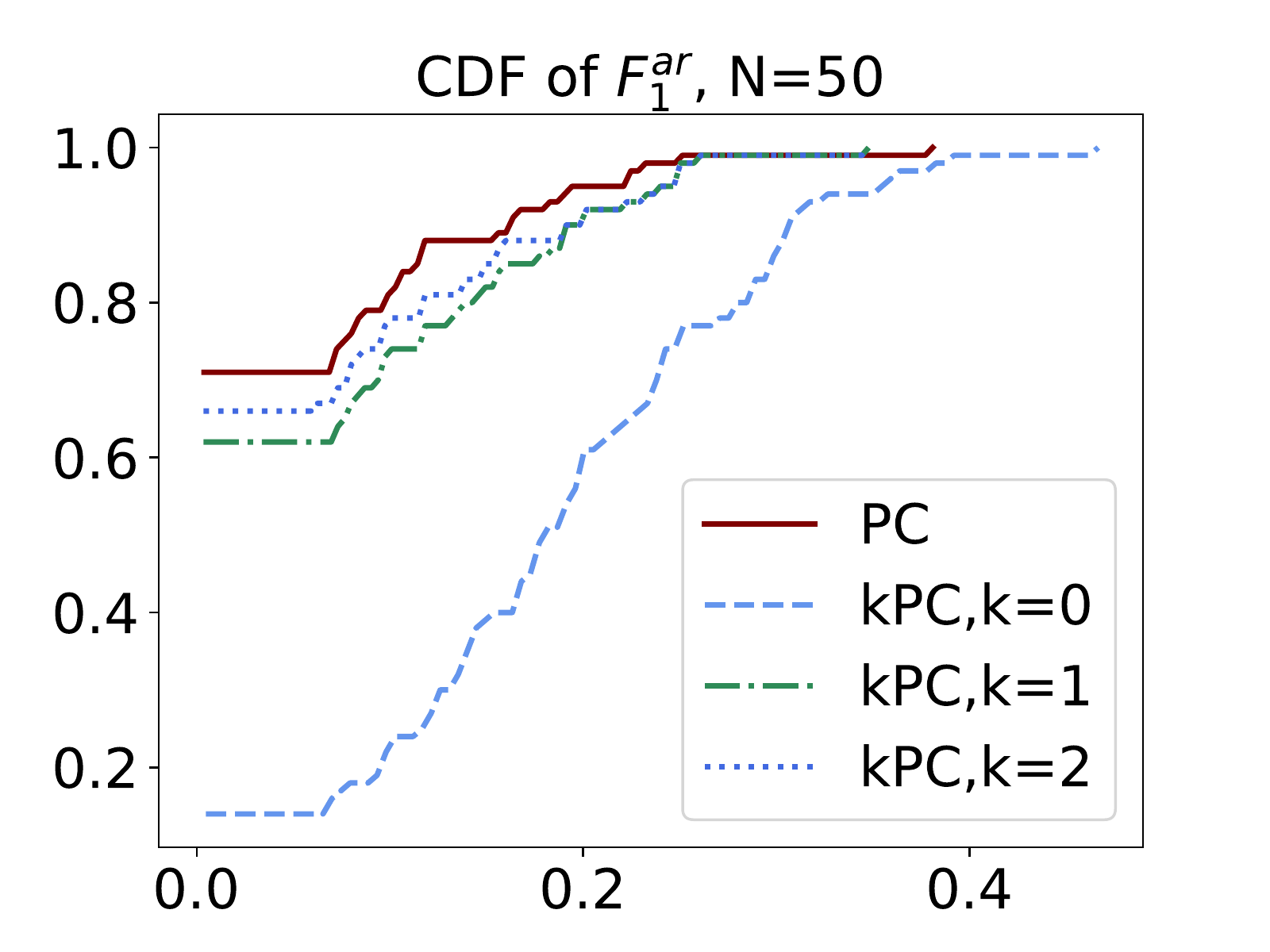}
		\caption{Arrowhead $F_1$ Score}
		\label{fig:latentsearch_performance}
	\end{subfigure}
	\begin{subfigure}[b]{0.32\textwidth}
		\includegraphics[width=\textwidth]{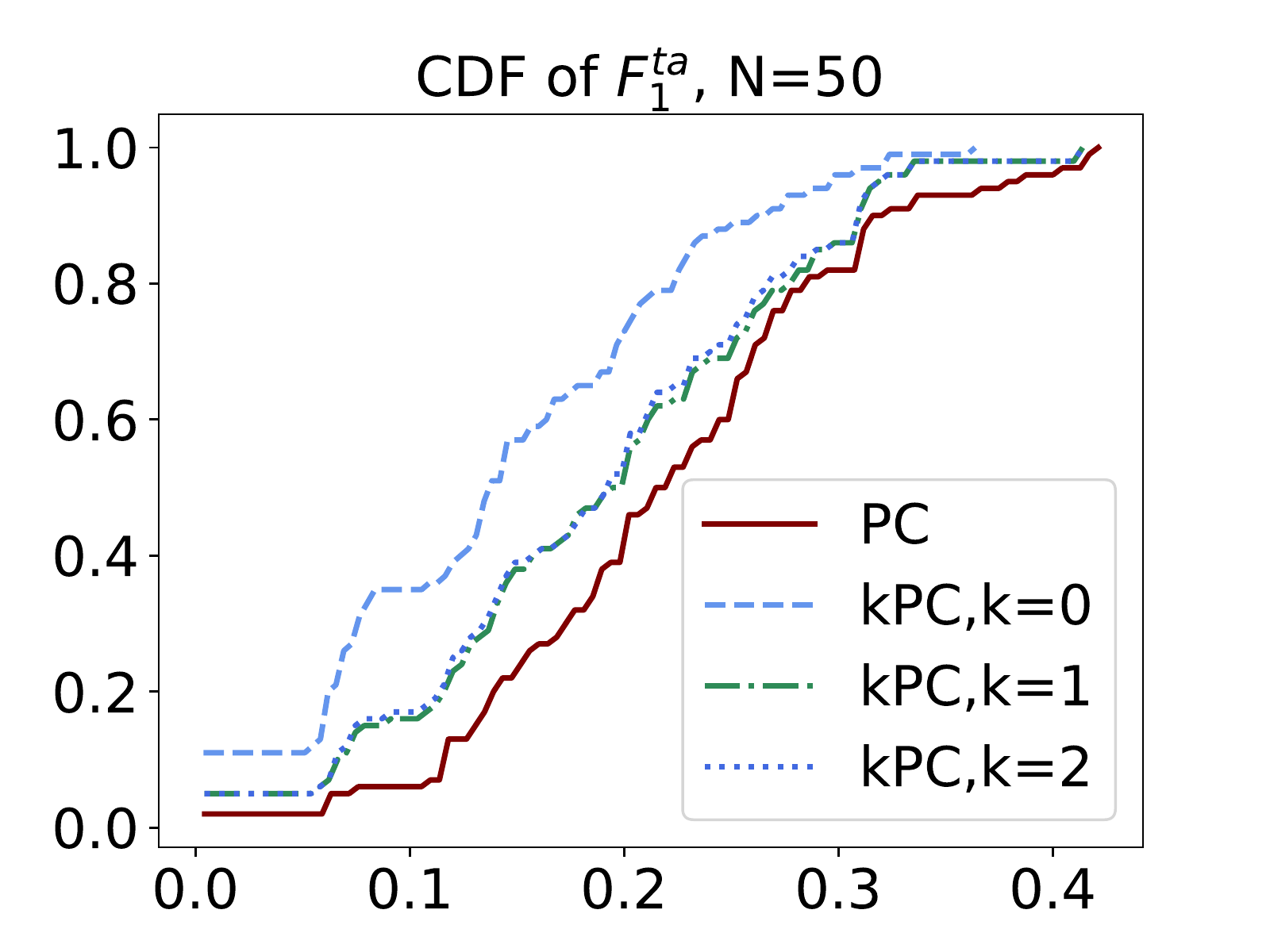}
		\caption{Tail $F_1$ Score}
		\label{fig:synthetic_conjecture}
	\end{subfigure}
	\begin{subfigure}[b]{0.32\textwidth}
		\includegraphics[width=\textwidth]{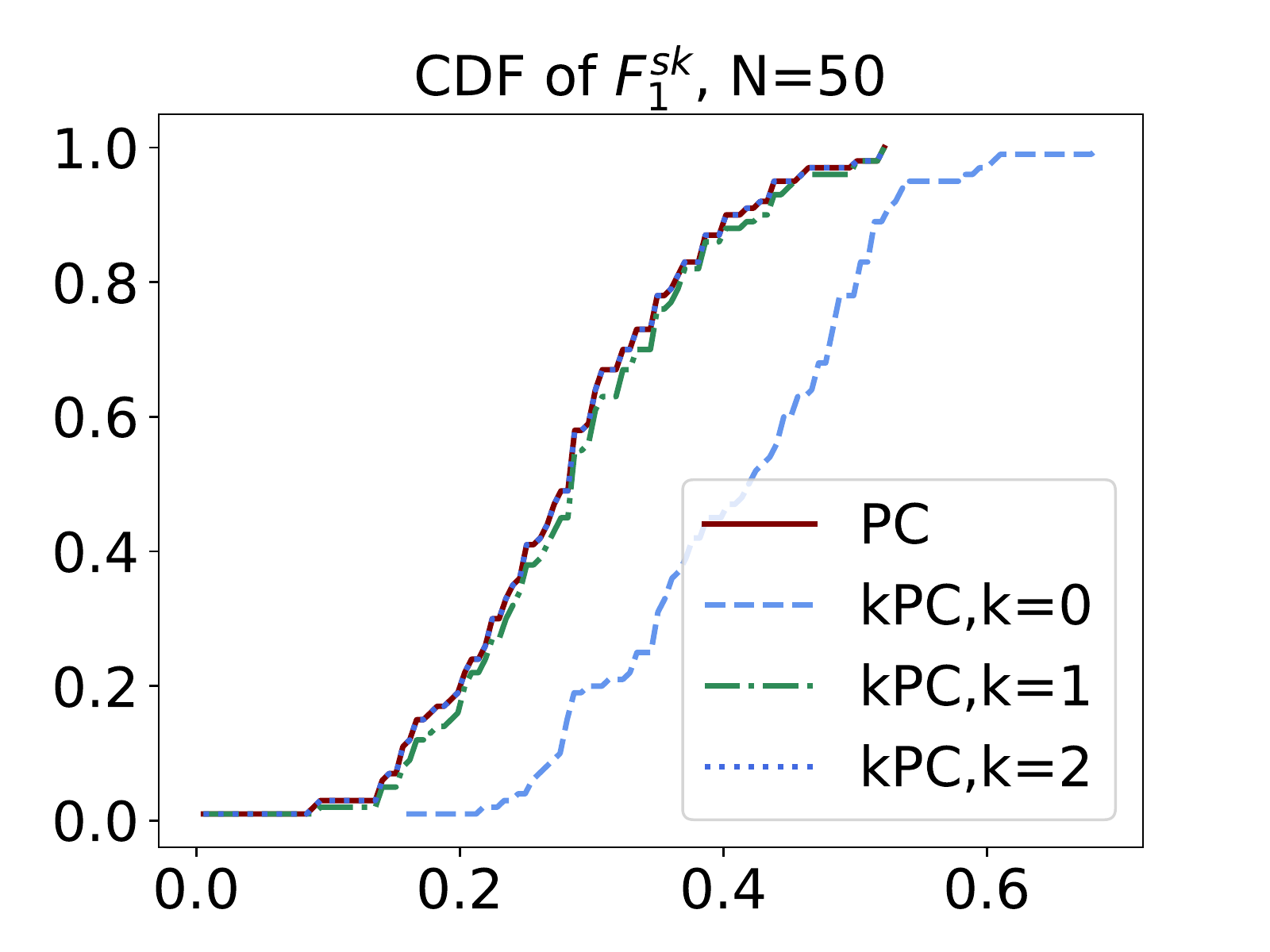}
		\caption{Skeleton $F_1$ Score}
		\label{fig:vs_}
	\end{subfigure}
 	\begin{subfigure}[b]{0.32\textwidth}
		\includegraphics[width=\textwidth]{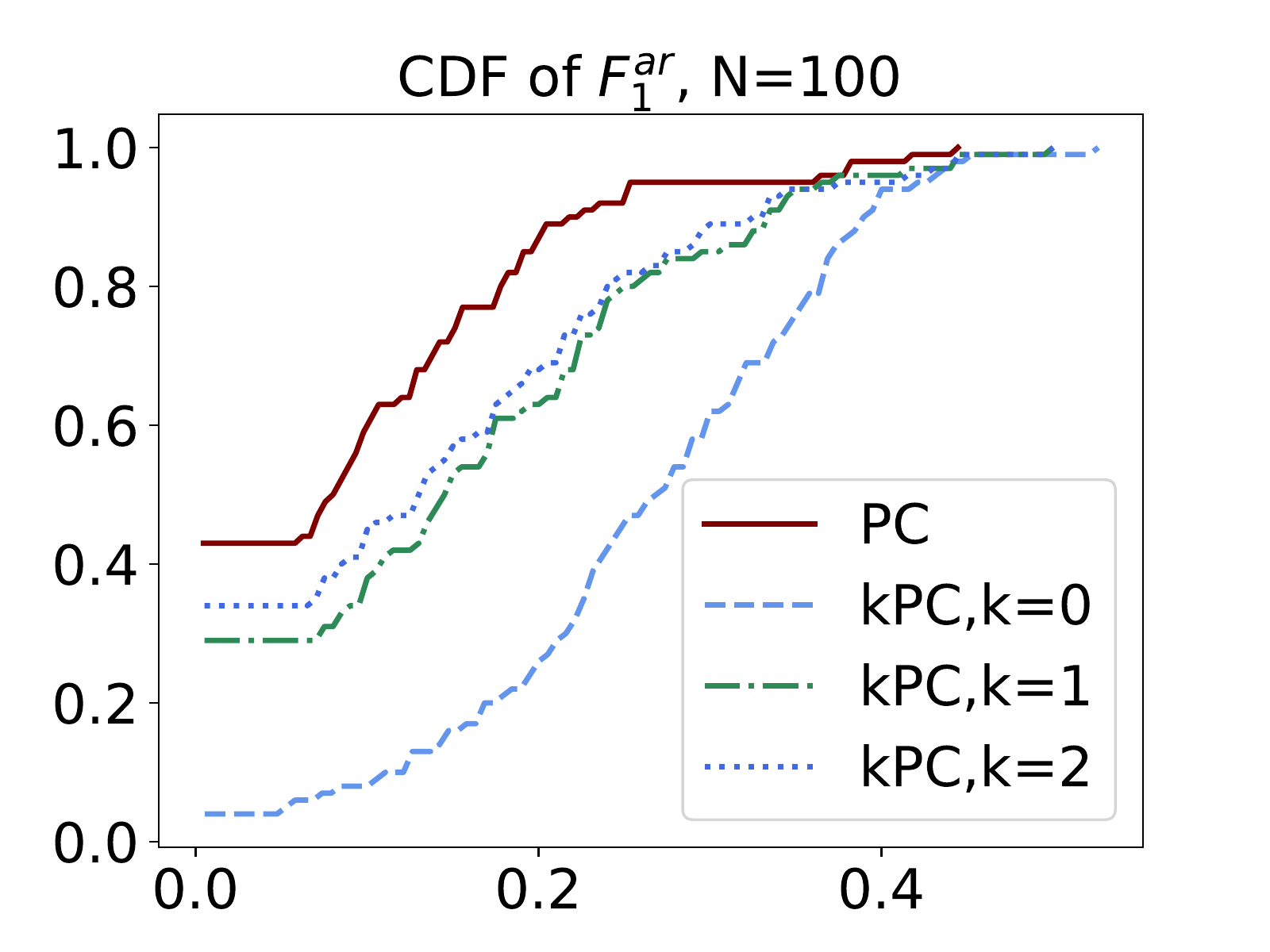}
		\caption{Arrowhead $F_1$ Score}
		\label{fig:latentsearch_performance}
	\end{subfigure}
	\begin{subfigure}[b]{0.32\textwidth}
		\includegraphics[width=\textwidth]{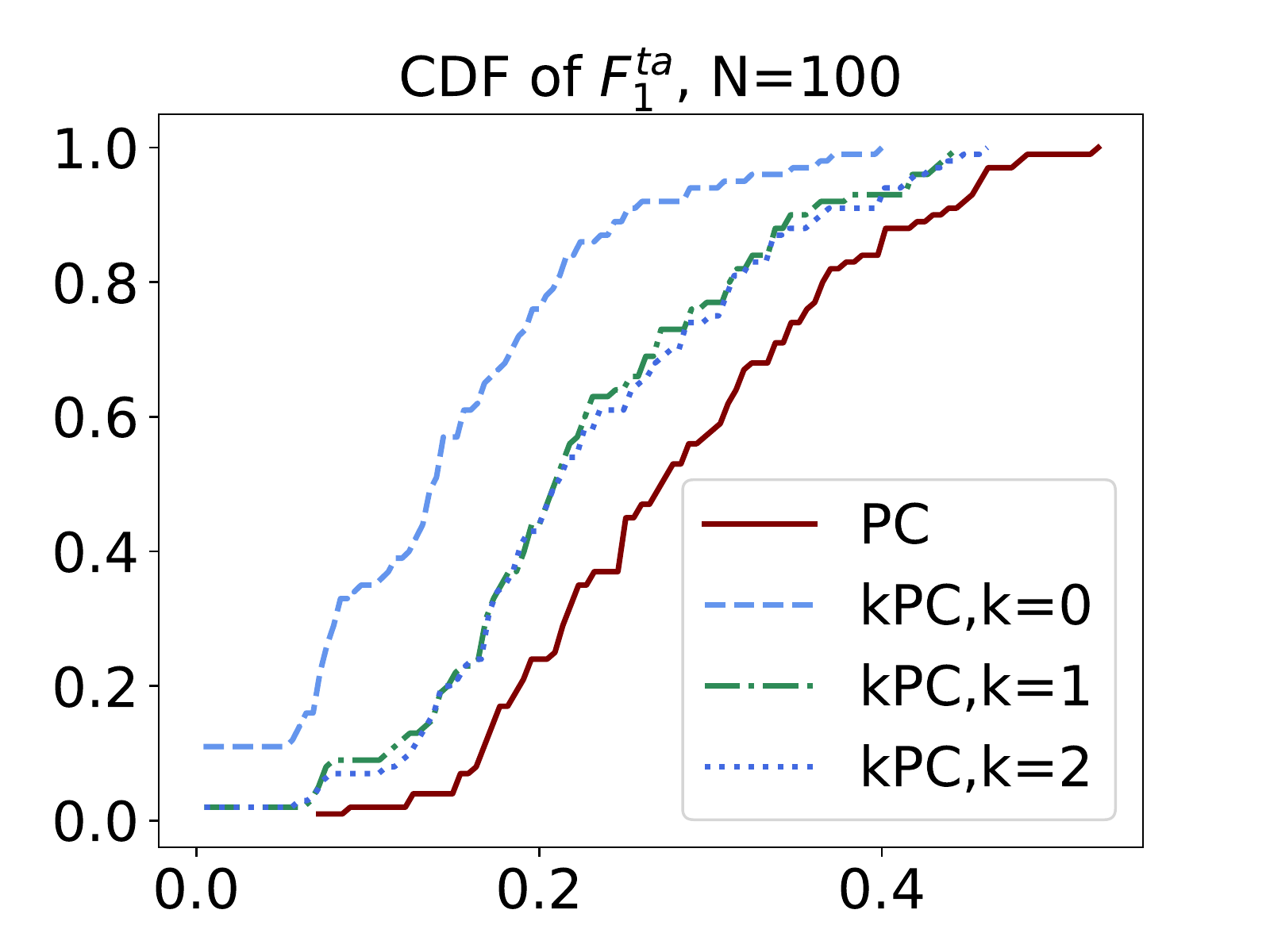}
		\caption{Tail $F_1$ Score}
		\label{fig:synthetic_conjecture}
	\end{subfigure}
	\begin{subfigure}[b]{0.32\textwidth}
		\includegraphics[width=\textwidth]{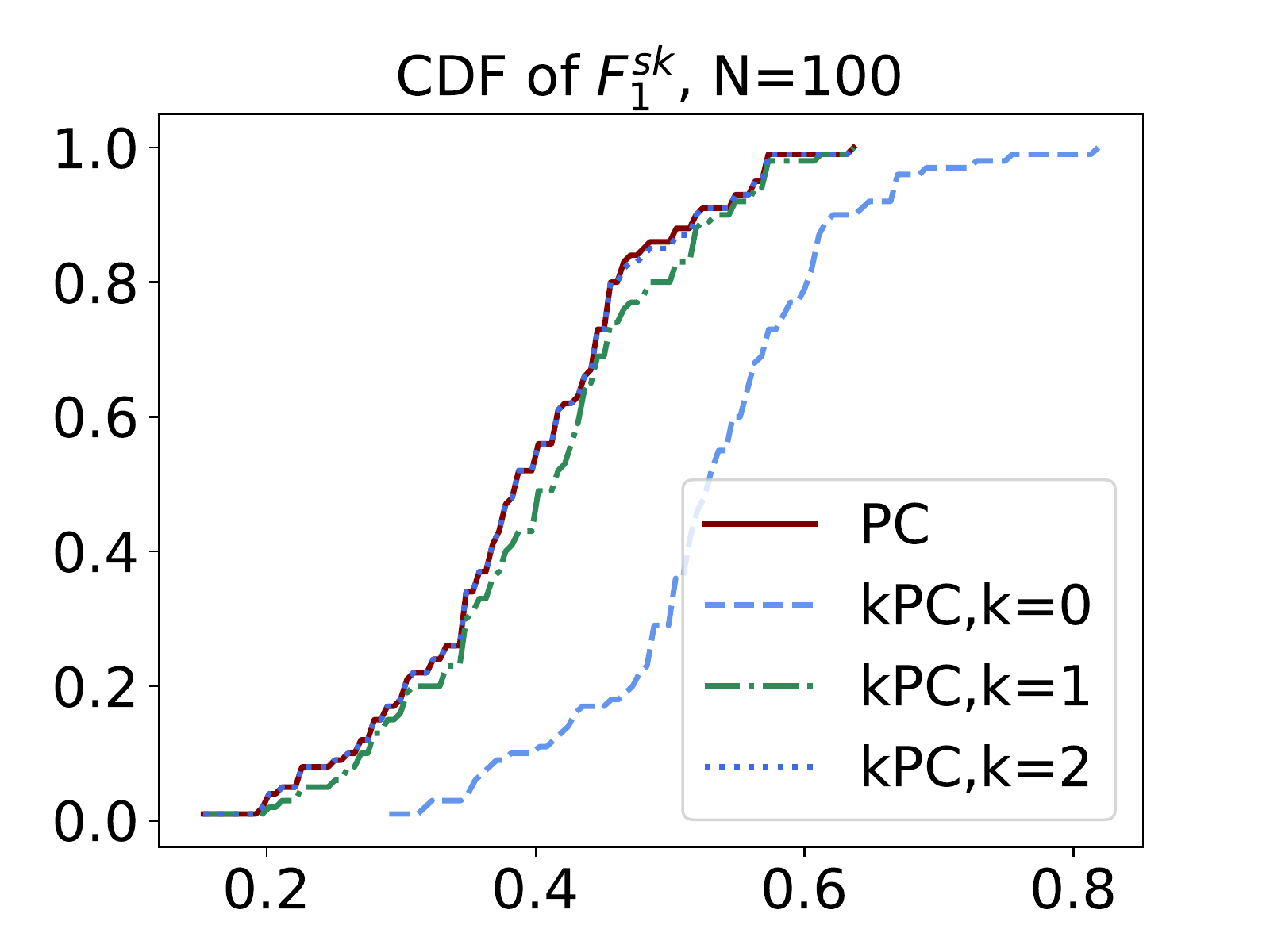}
		\caption{Skeleton $F_1$ Score}
		\label{fig:vs_}
	\end{subfigure}
  	\begin{subfigure}[b]{0.32\textwidth}
		\includegraphics[width=\textwidth]{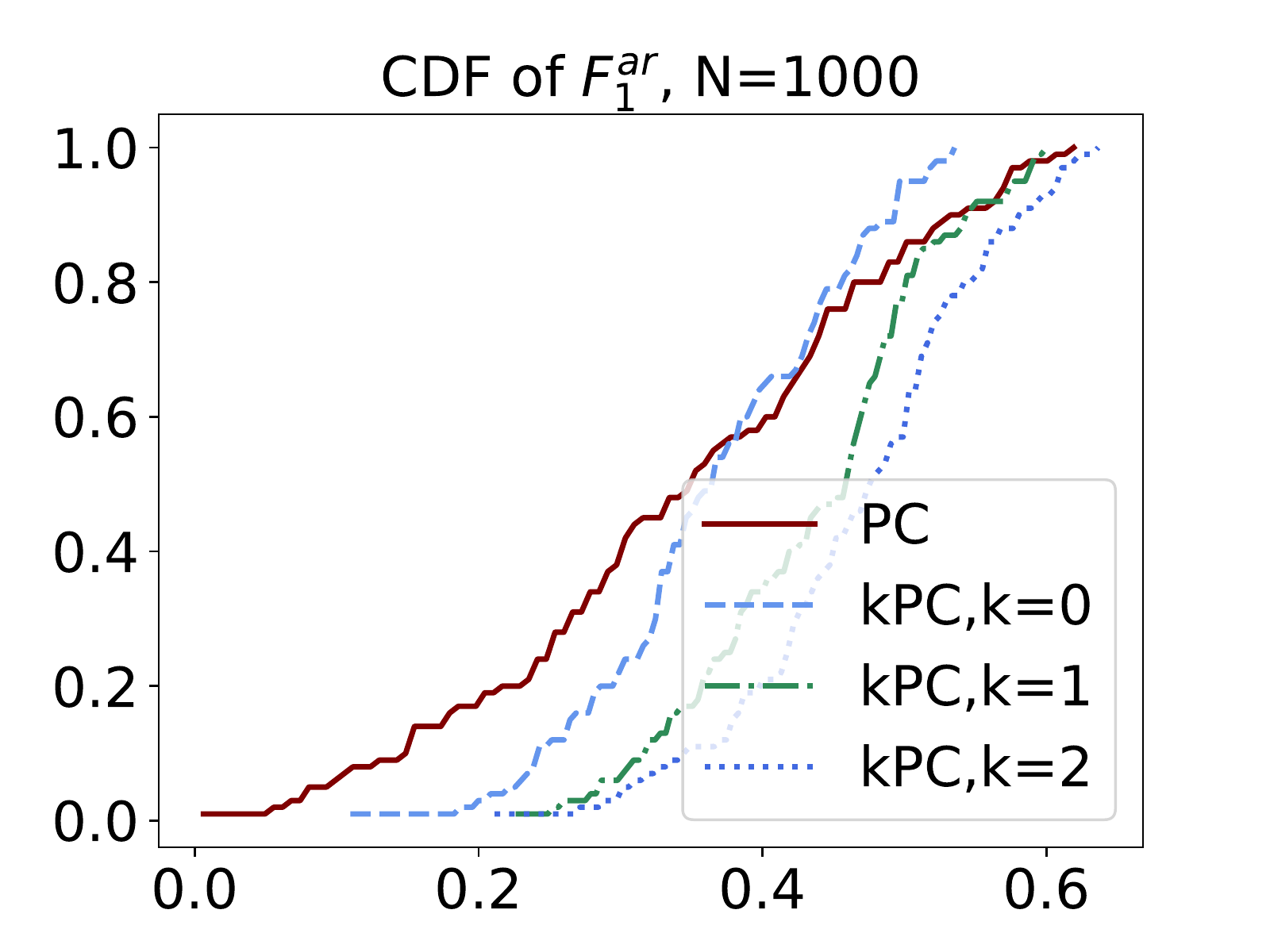}
		\caption{Arrowhead $F_1$ Score}
		\label{fig:latentsearch_performance}
	\end{subfigure}
	\begin{subfigure}[b]{0.32\textwidth}
		\includegraphics[width=\textwidth]{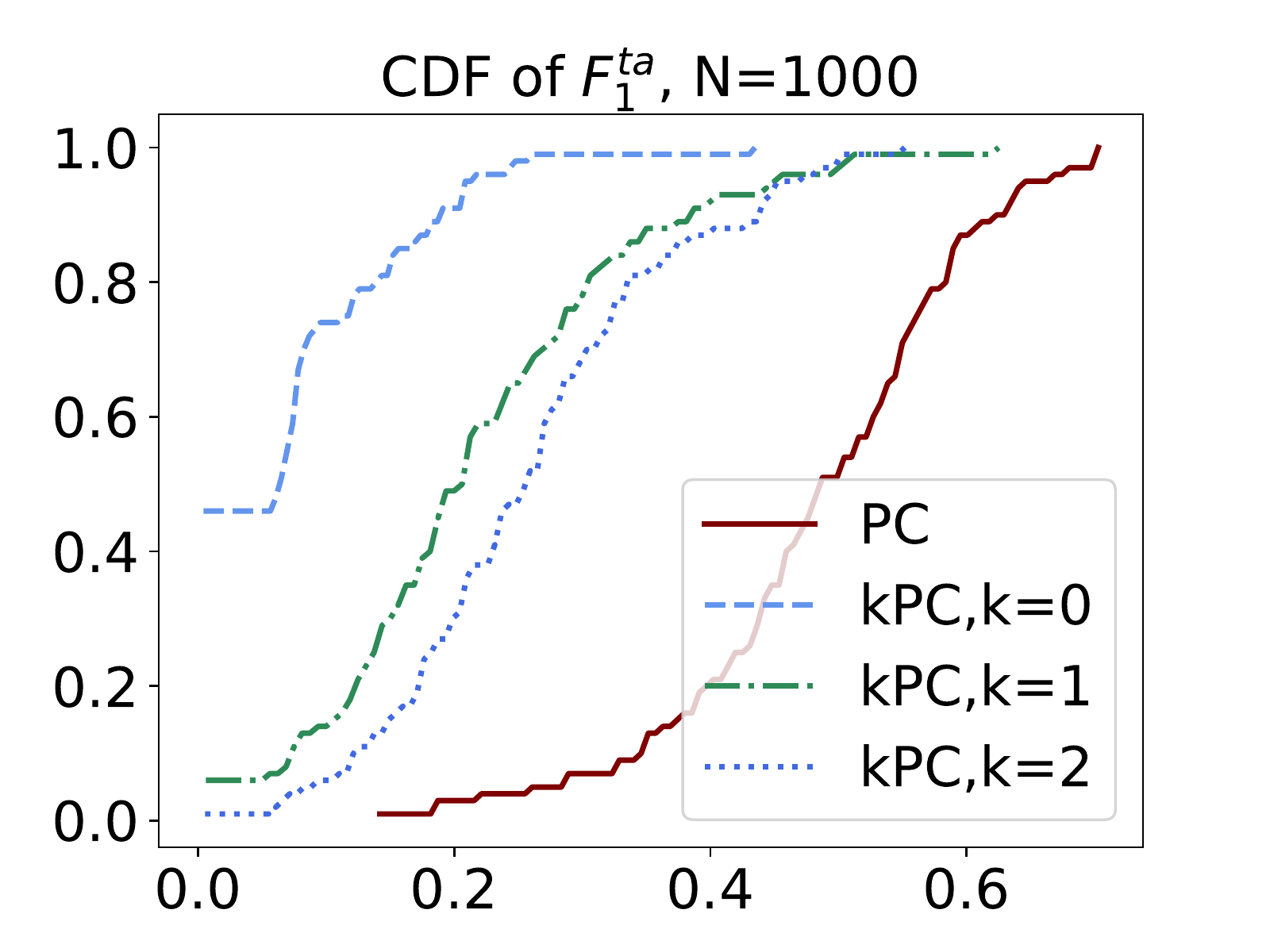}
		\caption{Tail $F_1$ Score}
		\label{fig:synthetic_conjecture}
	\end{subfigure}
	\begin{subfigure}[b]{0.32\textwidth}
		\includegraphics[width=\textwidth]{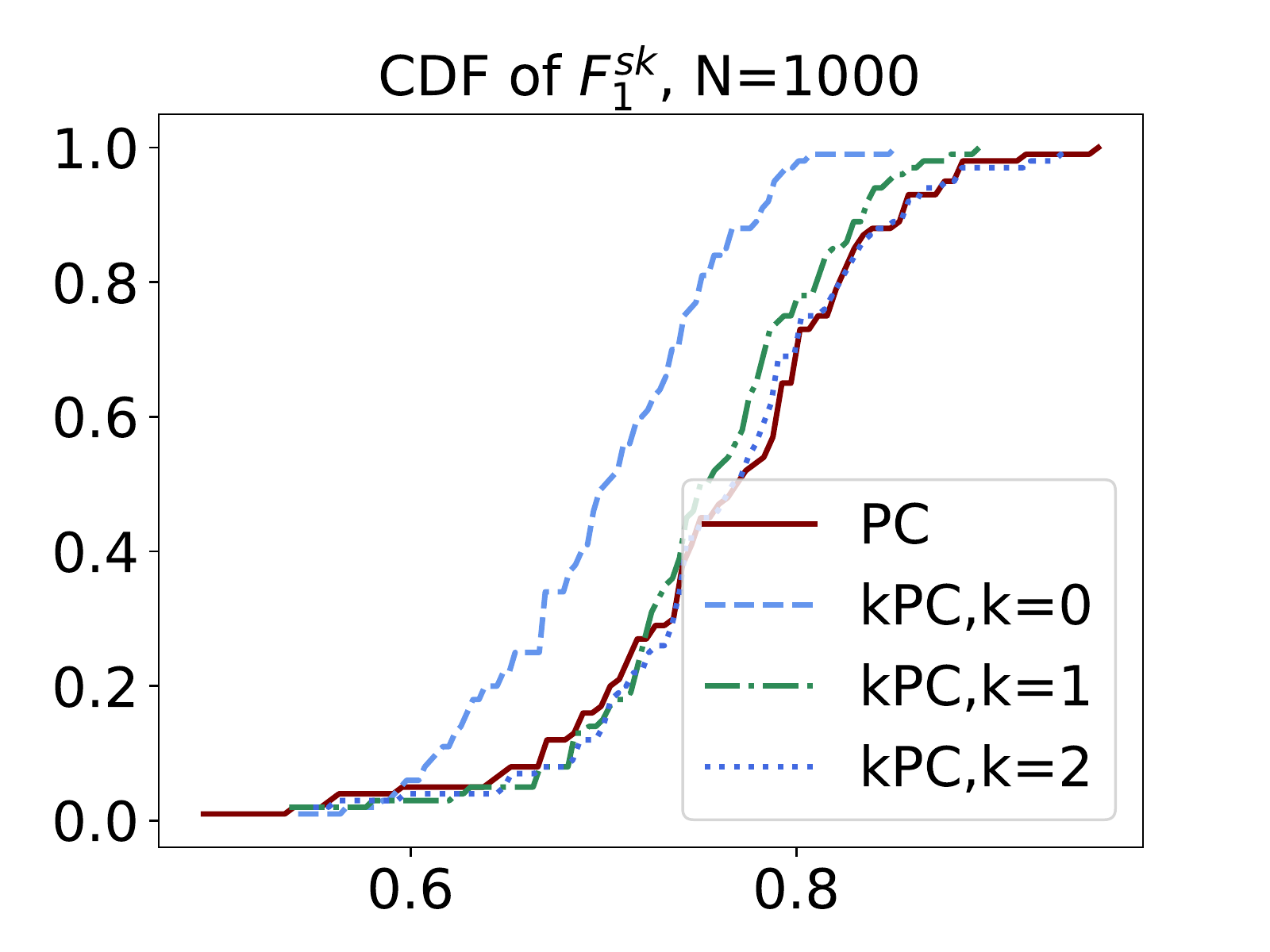}
		\caption{Skeleton $F_1$ Score}
		\label{fig:vs_}
	\end{subfigure}
 \caption{Empirical cumulative distribution function of various $F_1$ scores on $100$ random DAGs on $10$ nodes. For each DAG, conditional probability tables are independently and uniformly randomly filled from the corresponding probability simplex. Three datasets are sampled per instance. The lower the curve the better. The maximum number of edges is $30$. Even in the extreme case of just $10$ samples ($10$ node-graphs), \kPC for $k=0$ provides improvement to all scores. $k$ should be gradually increased as more samples are available to make best use of the available data. For example, for $1000$ samples, $k=2$ provides the best arrowhead score while not giving up as much tail score as $k=0$.}
	\label{fig:vsMorePC}
\end{figure}

Next, we present combined metrics for this same setup. Namely, we show the advantage of our algorithm in terms of the sum of arrowhead and tail $F_1$ scores, and the sum of arrowhead, tail and skeleton $F_1$ scores. 
\clearpage

\subsection{Experiments of Section \ref{sec:vsMorePC}}
\label{sec:app-vsPC_combined}
\begin{figure}[h!]
	\centering
	\begin{subfigure}[b]{0.34\textwidth}
		\includegraphics[width=\textwidth]{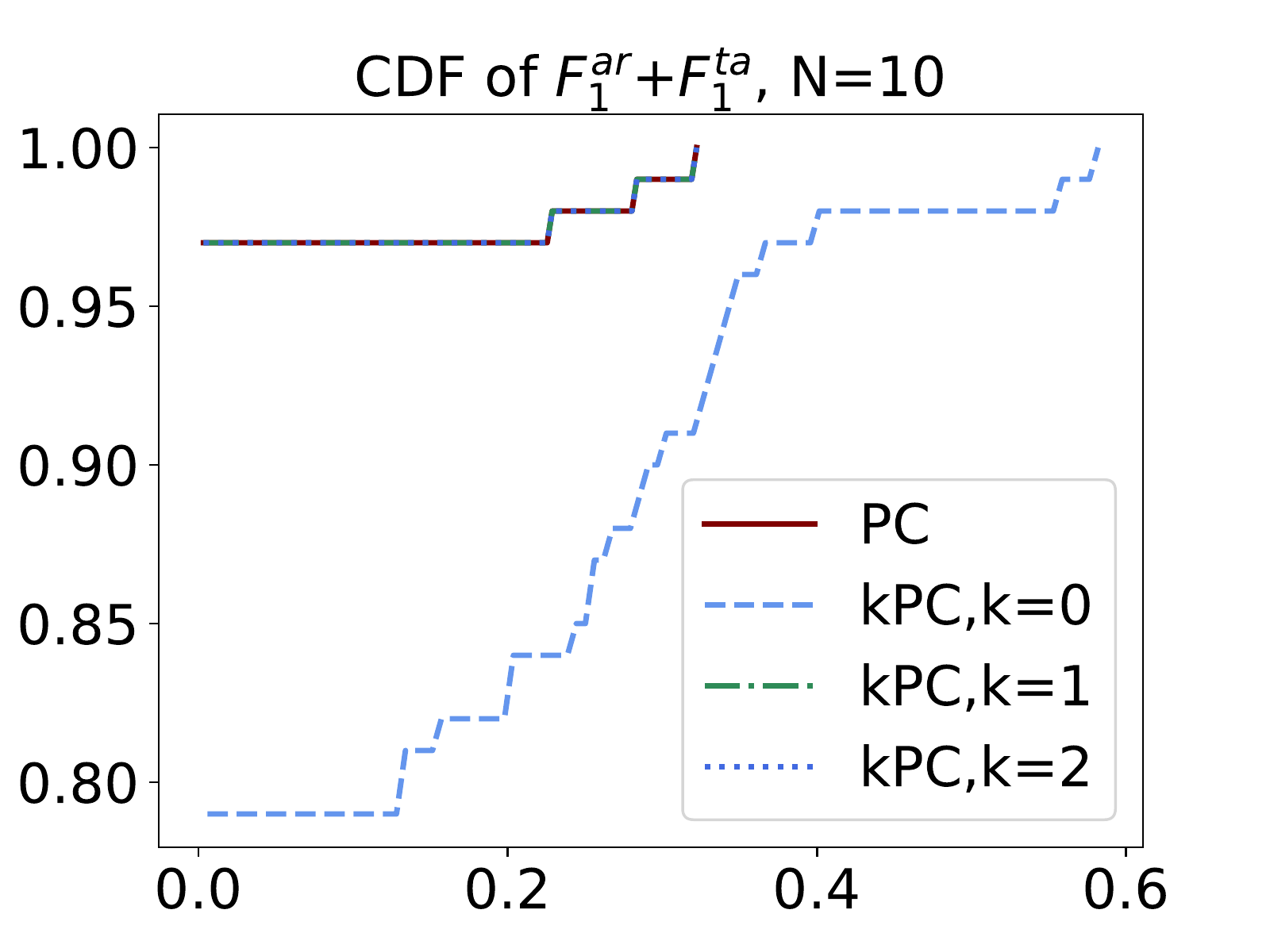}
		\caption{Arrowhead $+$ Tail $F_1$ }
		\label{fig:latentsearch_performance}
	\end{subfigure}
    \hspace{4em}
	\begin{subfigure}[b]{0.34\textwidth}
		\includegraphics[width=\textwidth]{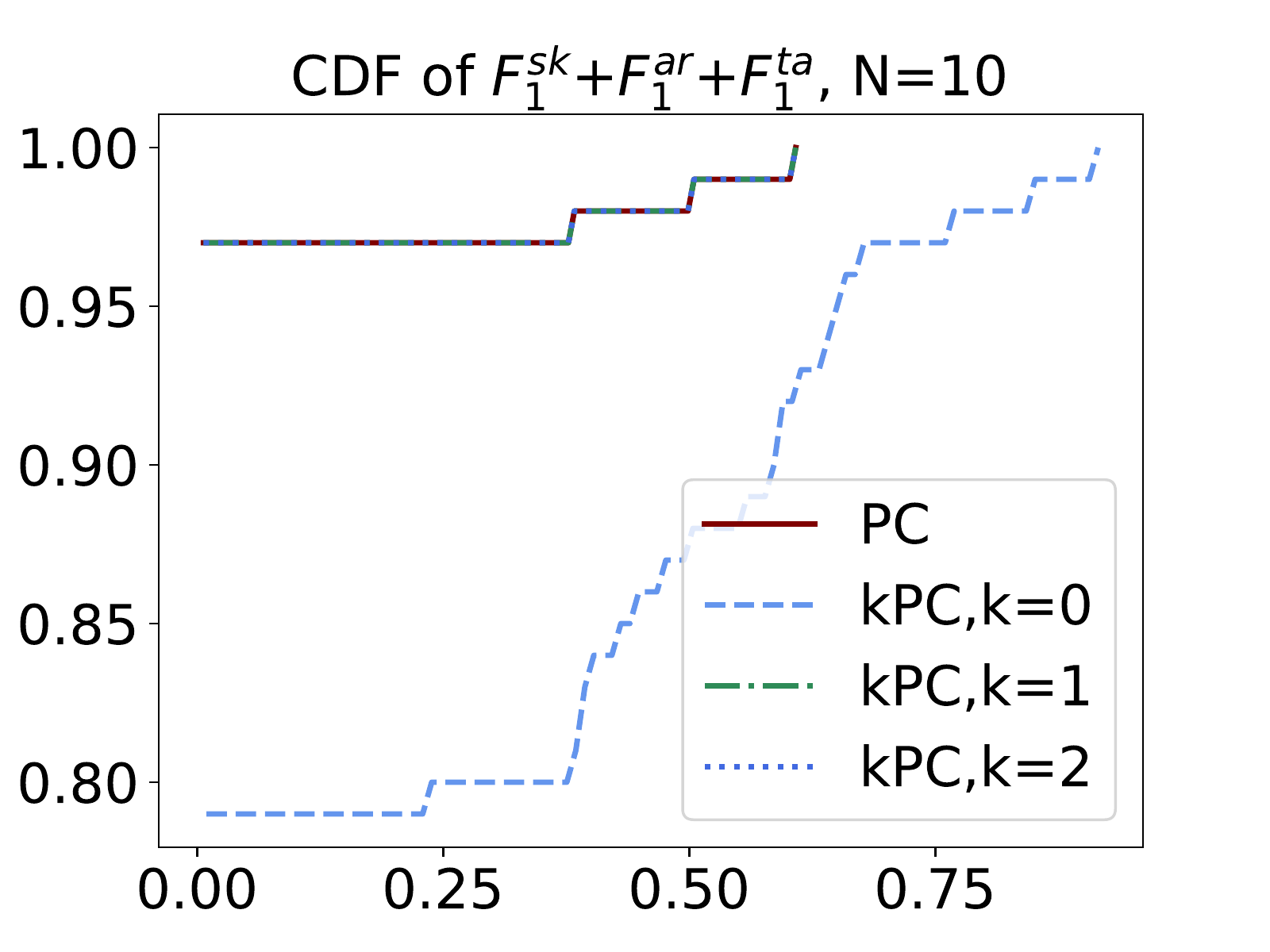}
		\caption{Arrowhead $+$ Tail $+$ Skeleton $F_1$ }
		\label{fig:synthetic_conjecture}
	\end{subfigure}
    \begin{subfigure}[b]{0.34\textwidth}
		\includegraphics[width=\textwidth]{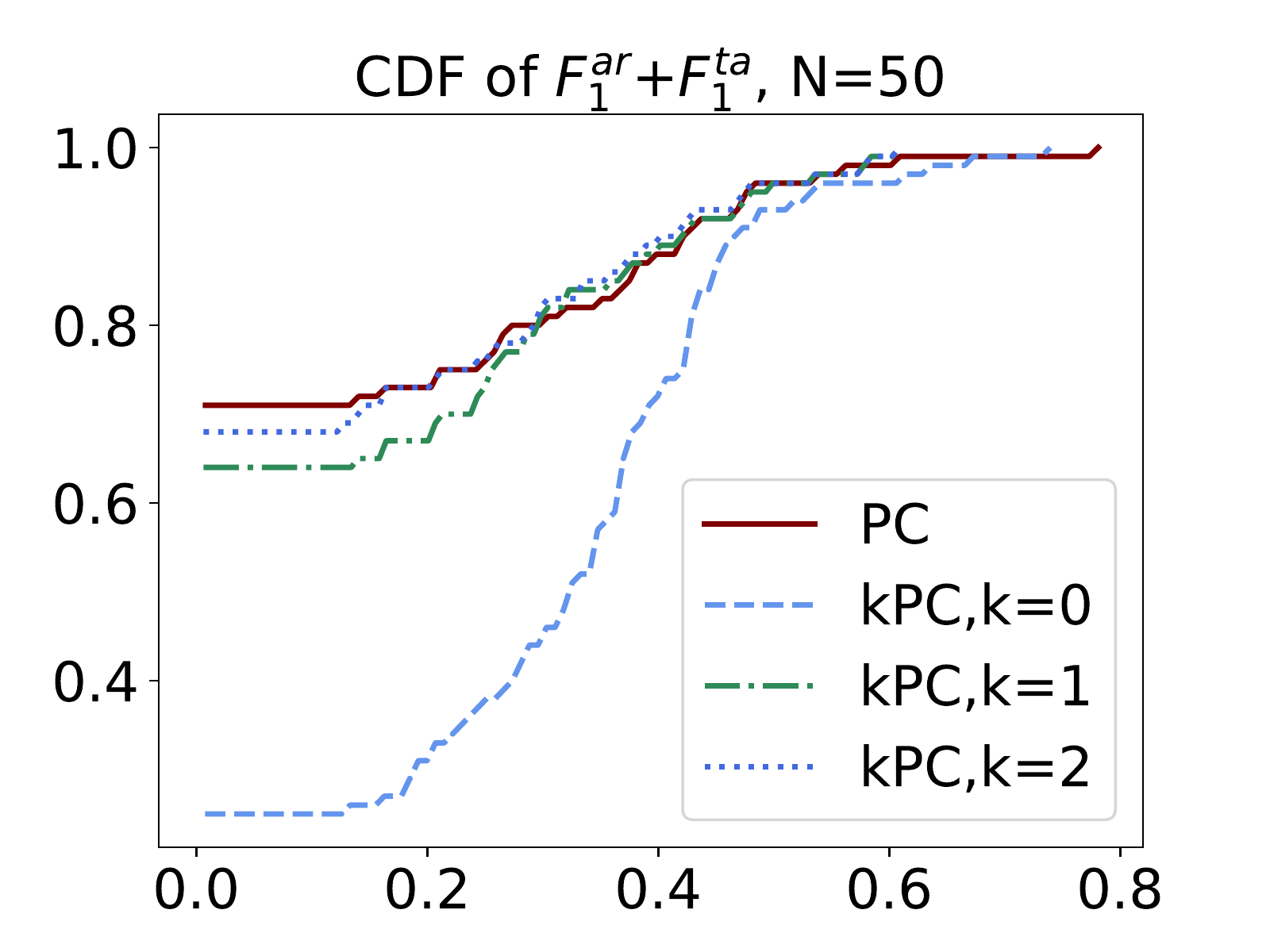}
		\caption{Arrowhead $+$ Tail $F_1$ }
		\label{fig:latentsearch_performance}
	\end{subfigure}
    \hspace{4em}
	\begin{subfigure}[b]{0.34\textwidth}
		\includegraphics[width=\textwidth]{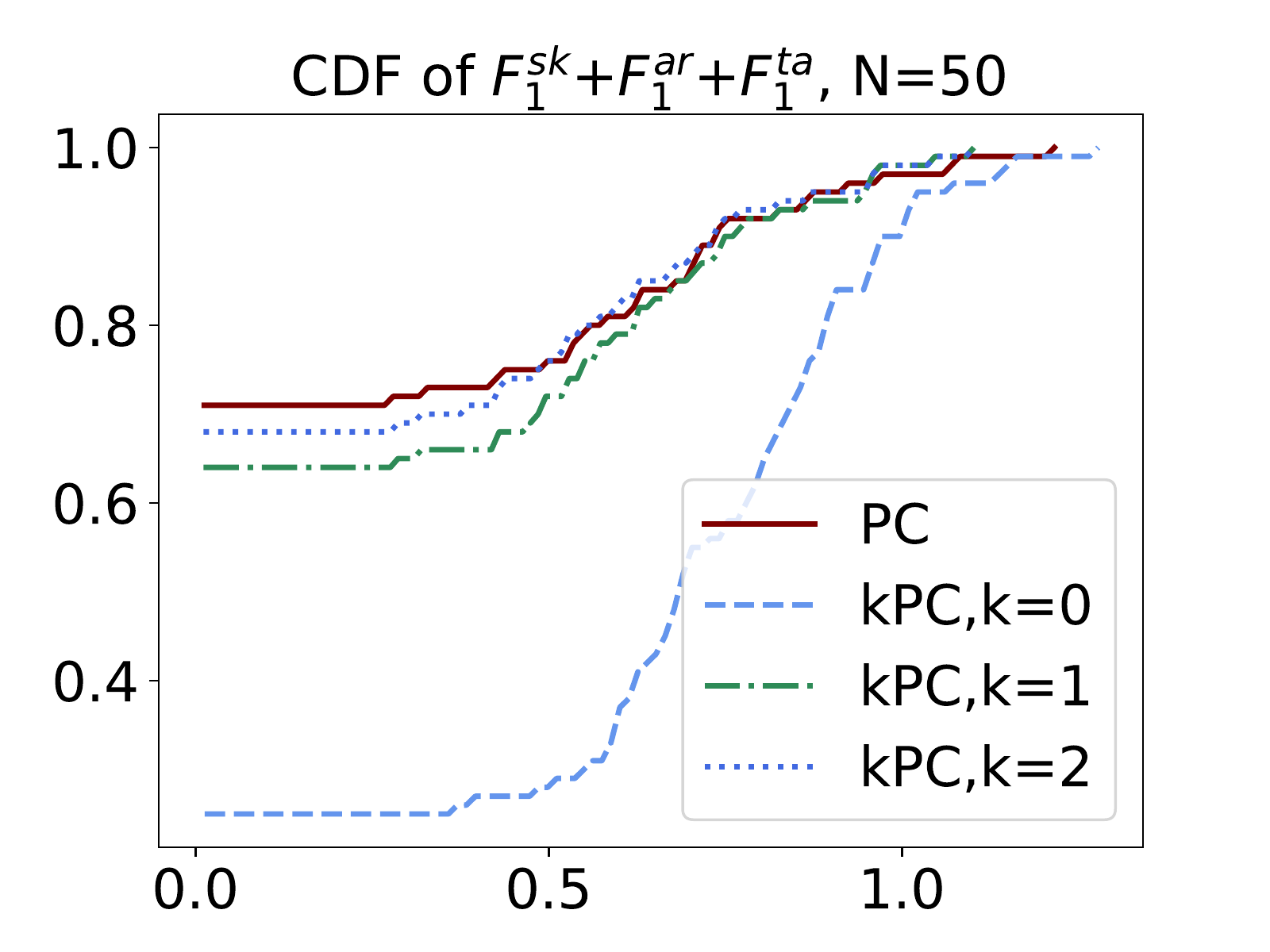}
		\caption{Arrowhead $+$ Tail $+$ Skeleton $F_1$ }
		\label{fig:synthetic_conjecture}
	\end{subfigure}
    \begin{subfigure}[b]{0.34\textwidth}
		\includegraphics[width=\textwidth]{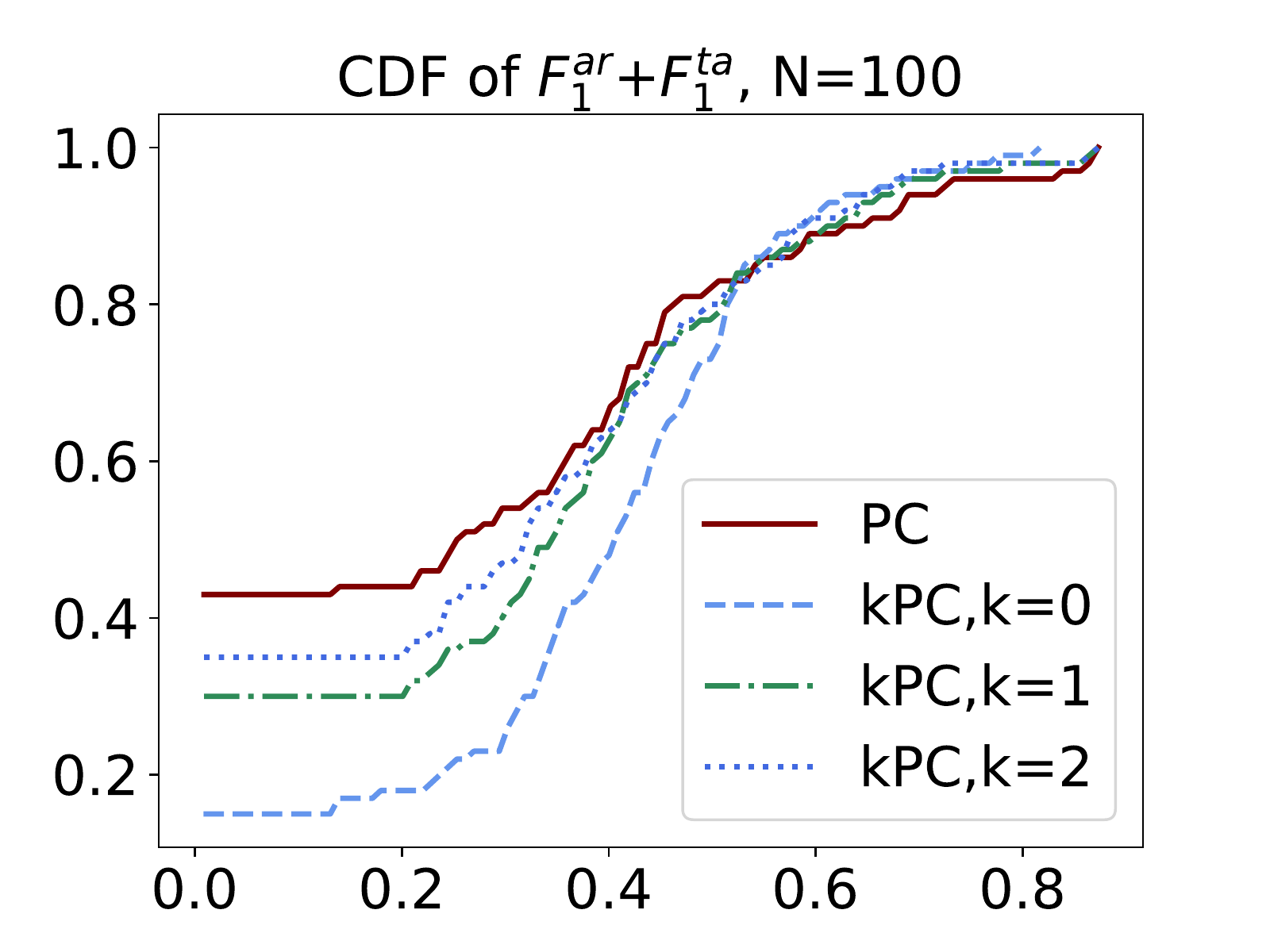}
		\caption{Arrowhead $+$ Tail $F_1$ }
		\label{fig:latentsearch_performance}
	\end{subfigure}
    \hspace{4em}
	\begin{subfigure}[b]{0.34\textwidth}
		\includegraphics[width=\textwidth]{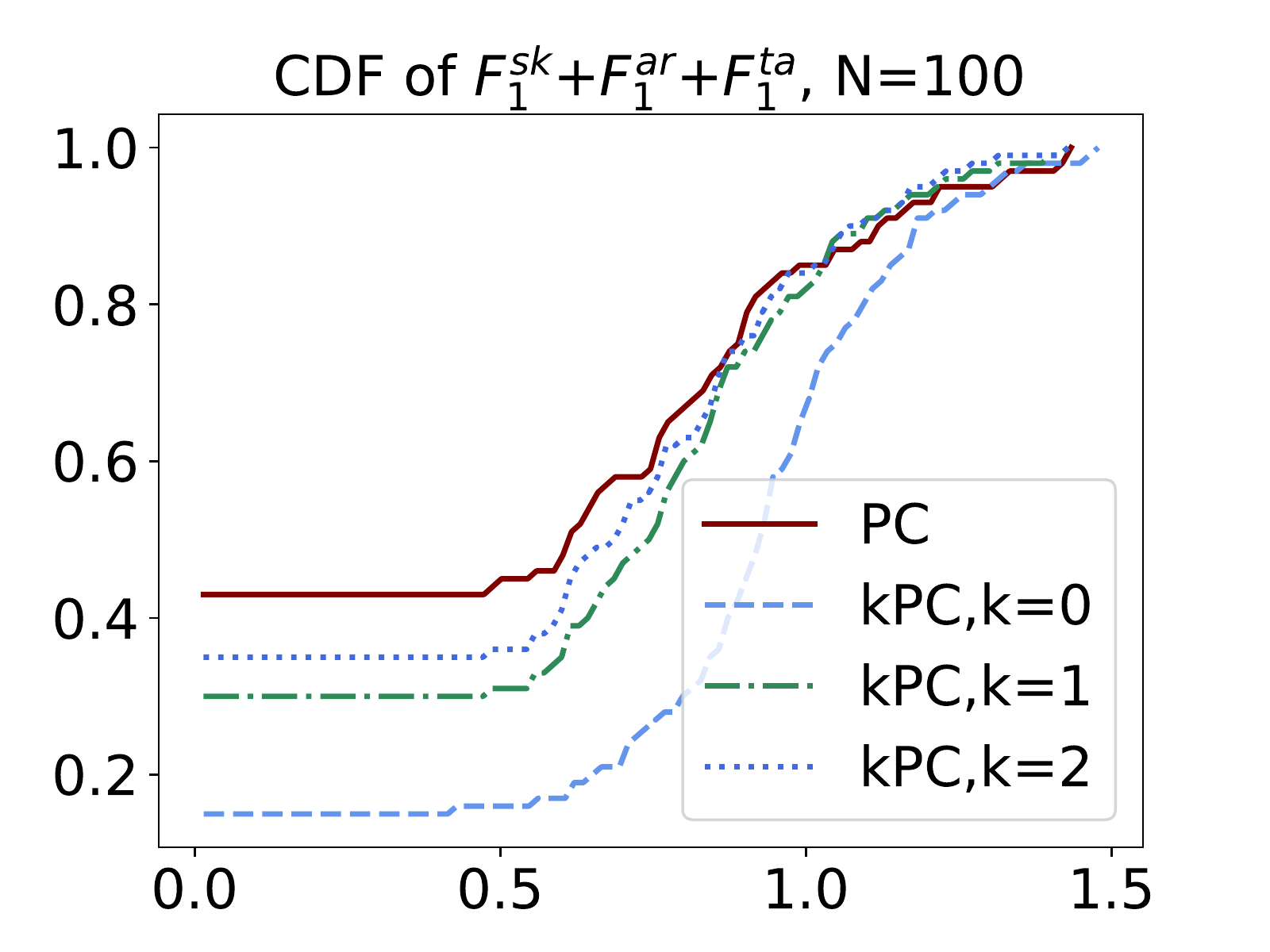}
		\caption{Arrowhead $+$ Tail $+$ Skeleton $F_1$ }
		\label{fig:synthetic_conjecture}
	\end{subfigure} 
    \begin{subfigure}[b]{0.34\textwidth}
		\includegraphics[width=\textwidth]{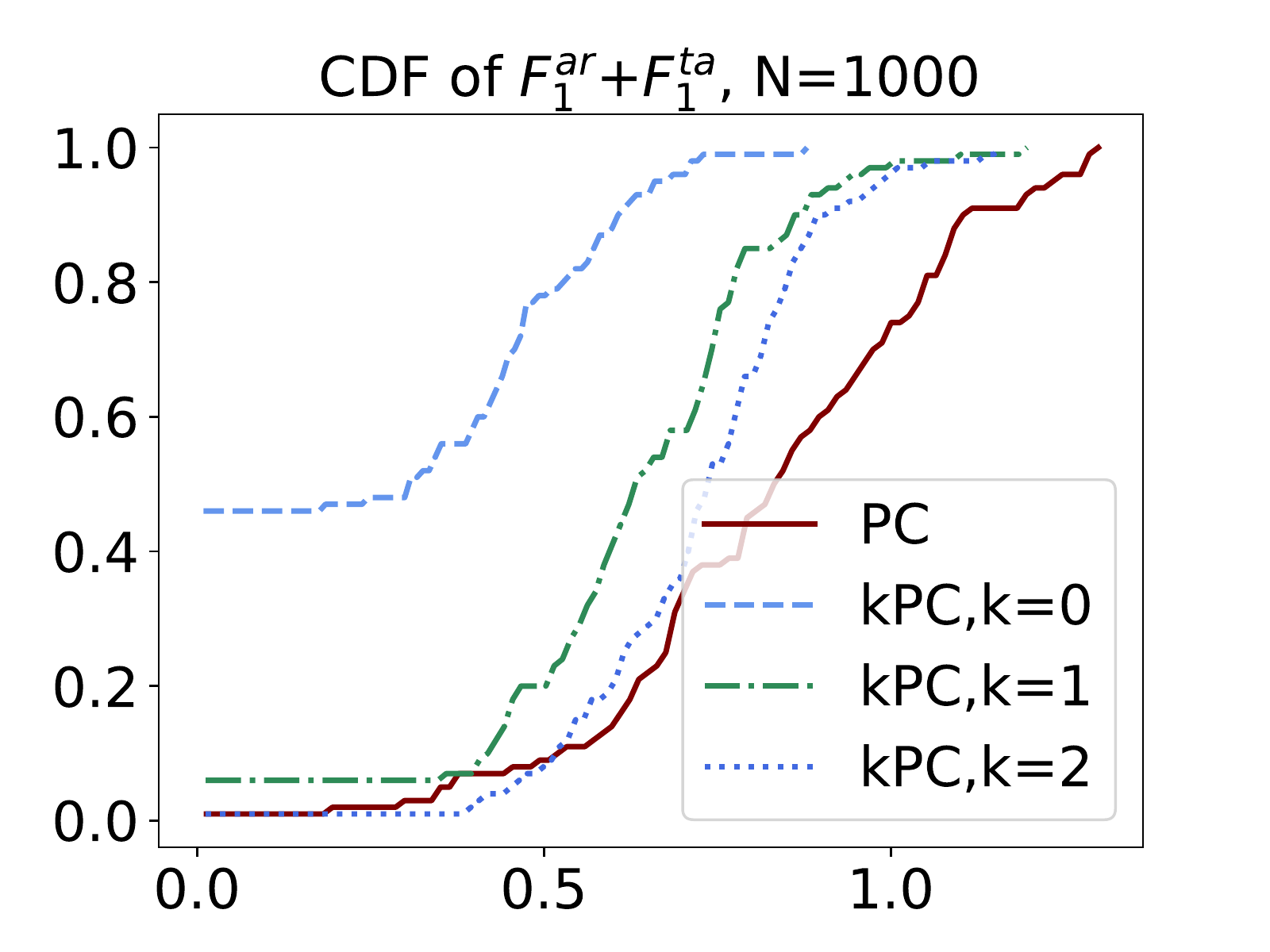}
		\caption{Arrowhead $+$ Tail $F_1$ }
		\label{fig:latentsearch_performance}
	\end{subfigure}
    \hspace{4em}
	\begin{subfigure}[b]{0.34\textwidth}
		\includegraphics[width=\textwidth]{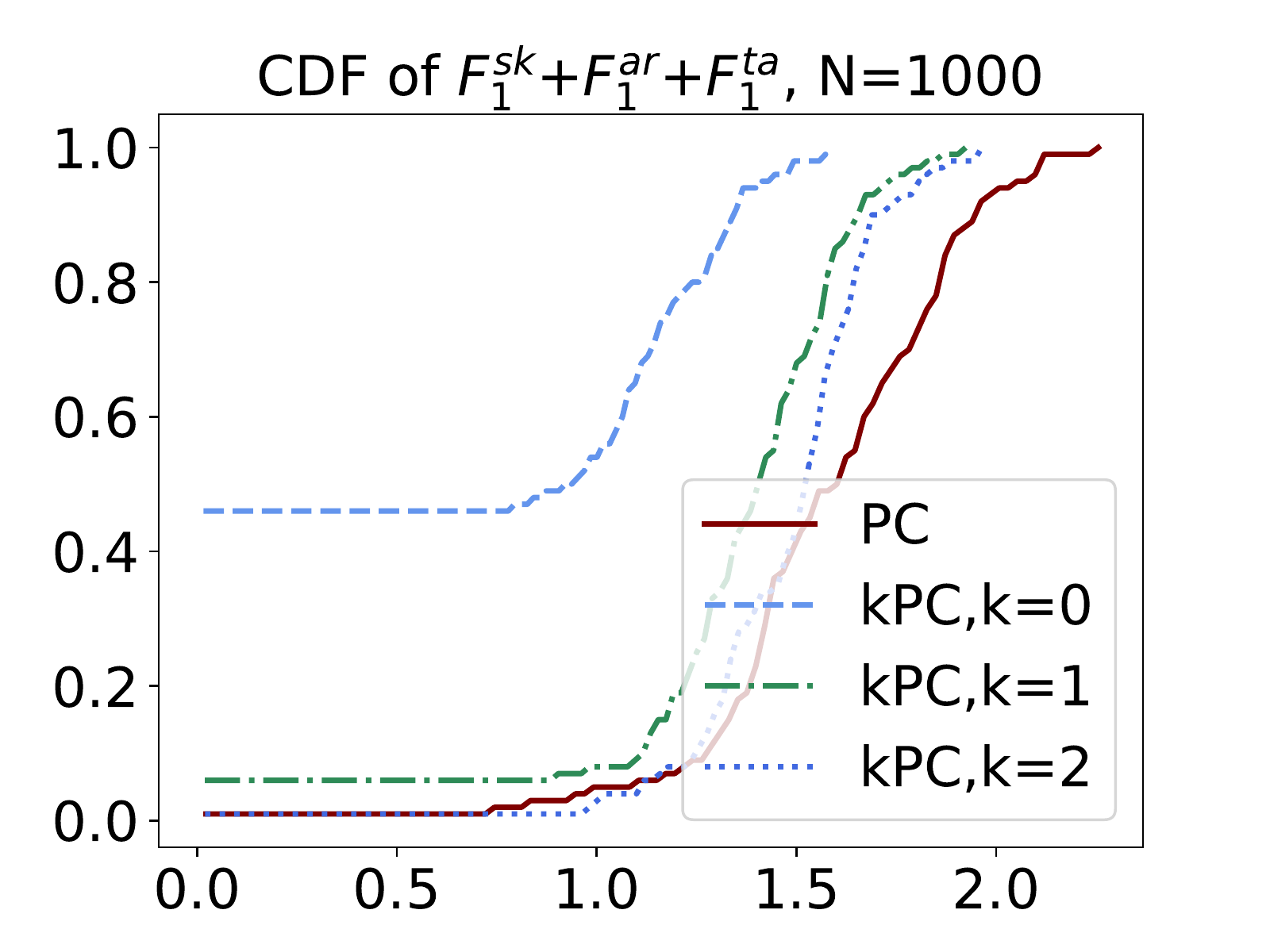}
		\caption{Arrowhead $+$ Tail $+$ Skeleton $F_1$ }
		\label{fig:synthetic_conjecture}
	\end{subfigure}
\caption{Results of the experiments in Section \ref{sec:vsMorePC} in terms of combined scores. For each instance, arrowhead and tail $F_1$ scores are added before computing CDFs on the left. On the right, arrowhead, tail and skeleton $F_1$ scores are added together.}
\label{sec:vsMorePCCombined}
\end{figure}

\clearpage

\subsection{Experiments vs. Conservative PC}
\label{sec:app-ConsPC}
We use pcalg package in R as the implementation for conservative PC~\cite{ramsey2006adjacency}.
\begin{figure}[h!]
	\centering
	\begin{subfigure}[b]{0.32\textwidth}
		\includegraphics[width=\textwidth]{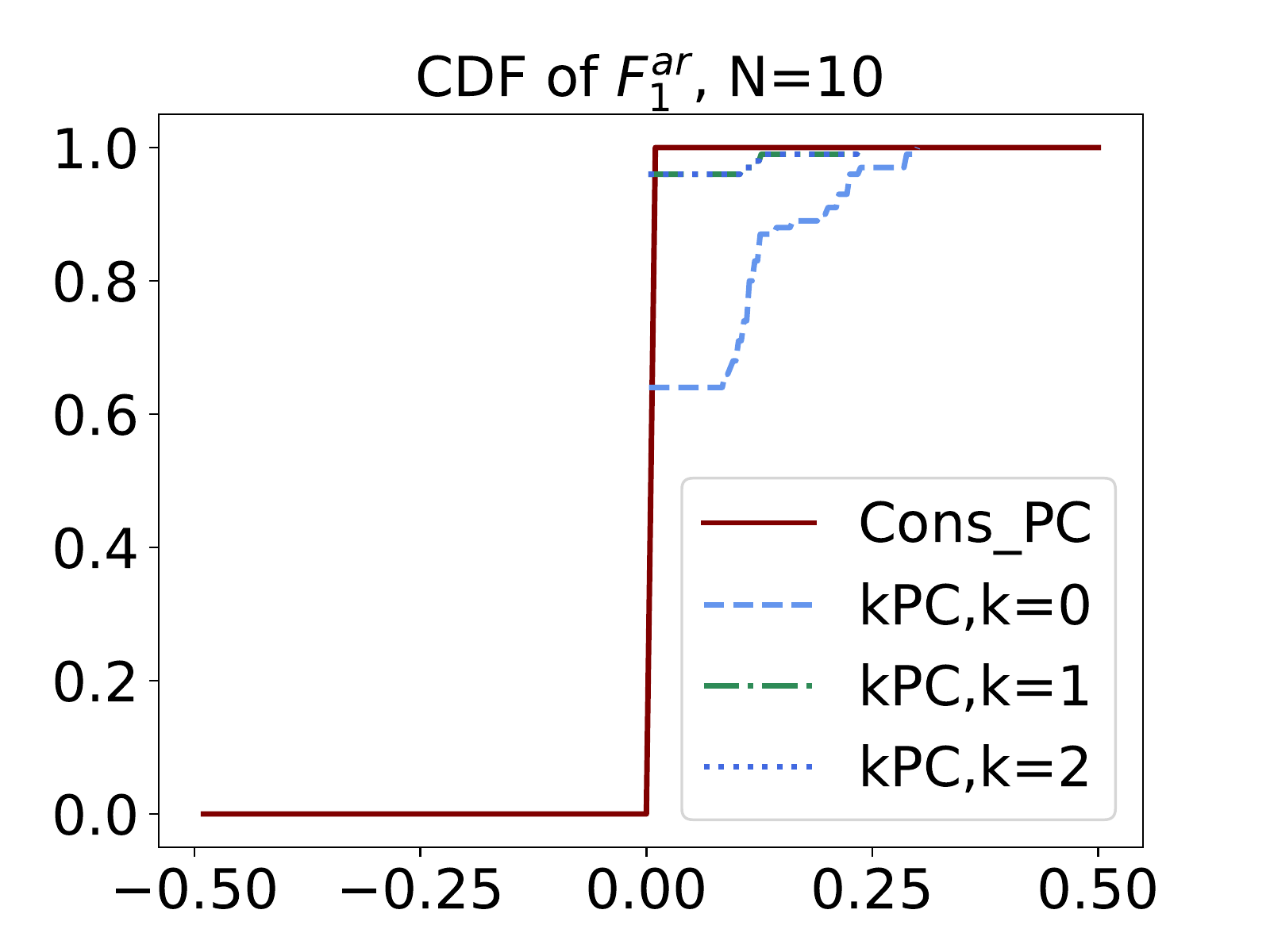}
		\caption{Arrowhead $F_1$ Score}
		\label{fig:latentsearch_performance}
	\end{subfigure}
	\begin{subfigure}[b]{0.32\textwidth}
		\includegraphics[width=\textwidth]{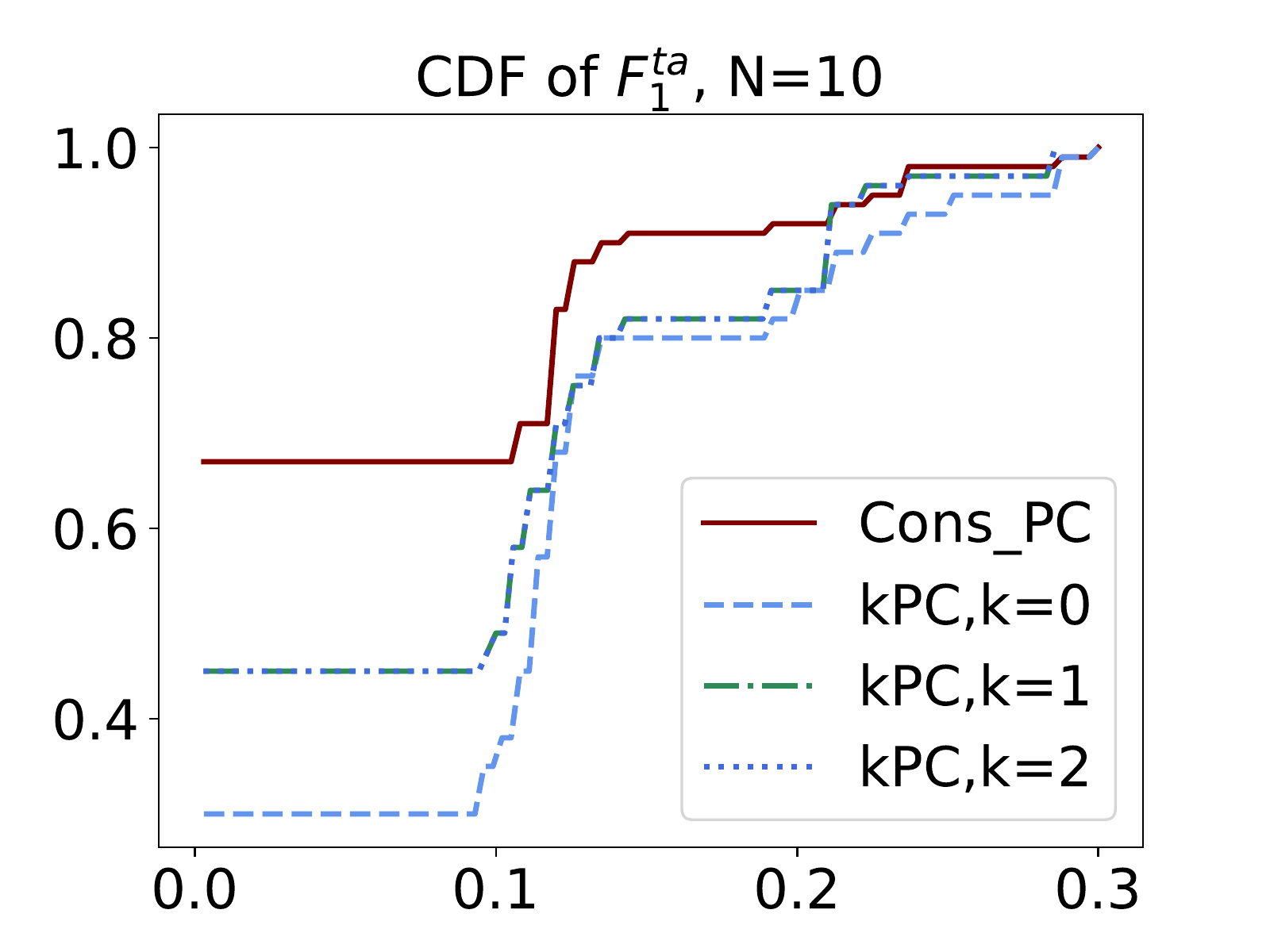}
		\caption{Tail $F_1$ Score}
		\label{fig:synthetic_conjecture}
	\end{subfigure}
	\begin{subfigure}[b]{0.32\textwidth}
		\includegraphics[width=\textwidth]{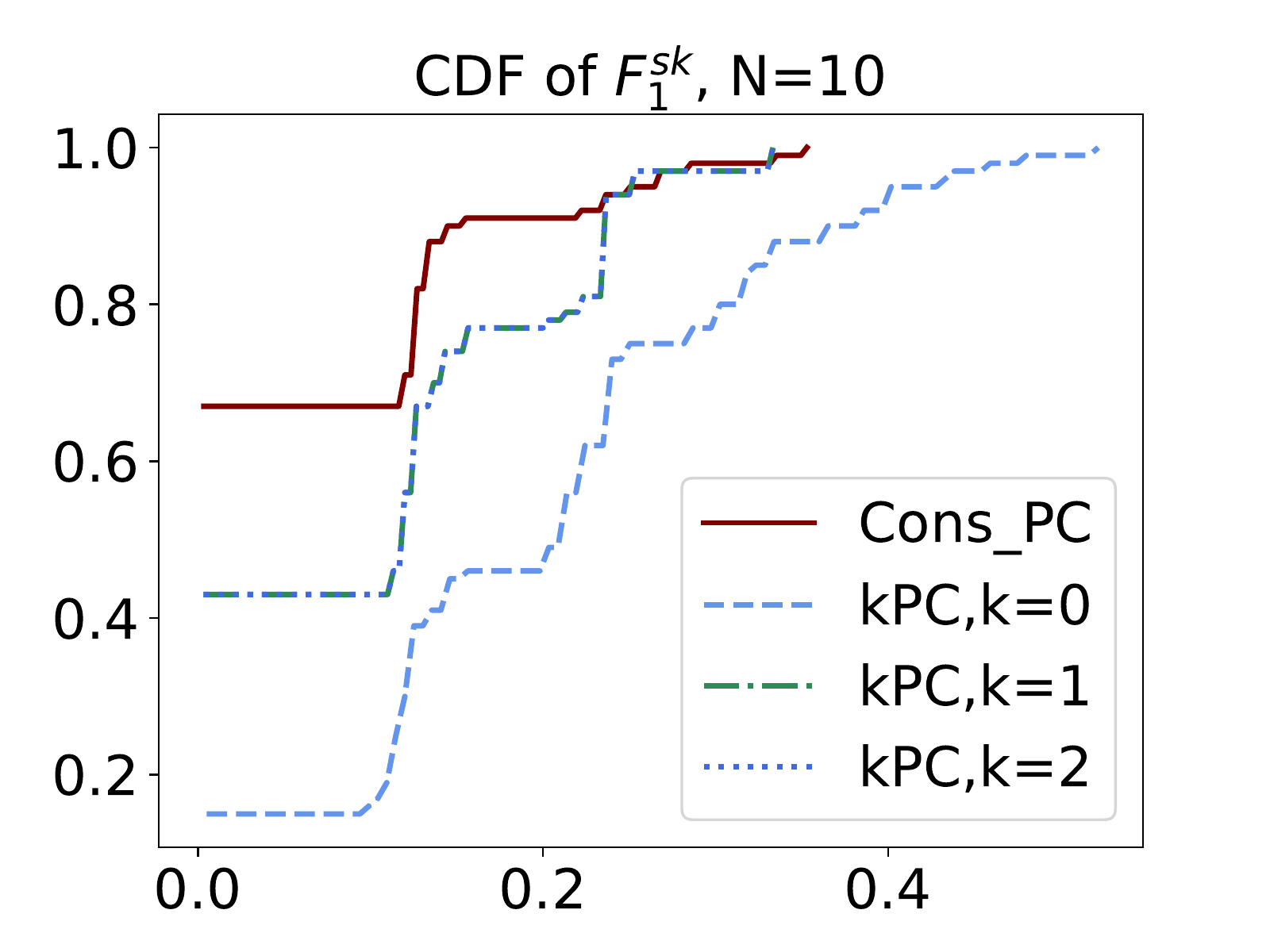}
		\caption{Skeleton $F_1$ Score}
		\label{fig:synthetic_identifiability}
	\end{subfigure}
	\begin{subfigure}[b]{0.32\textwidth}
		\includegraphics[width=\textwidth]{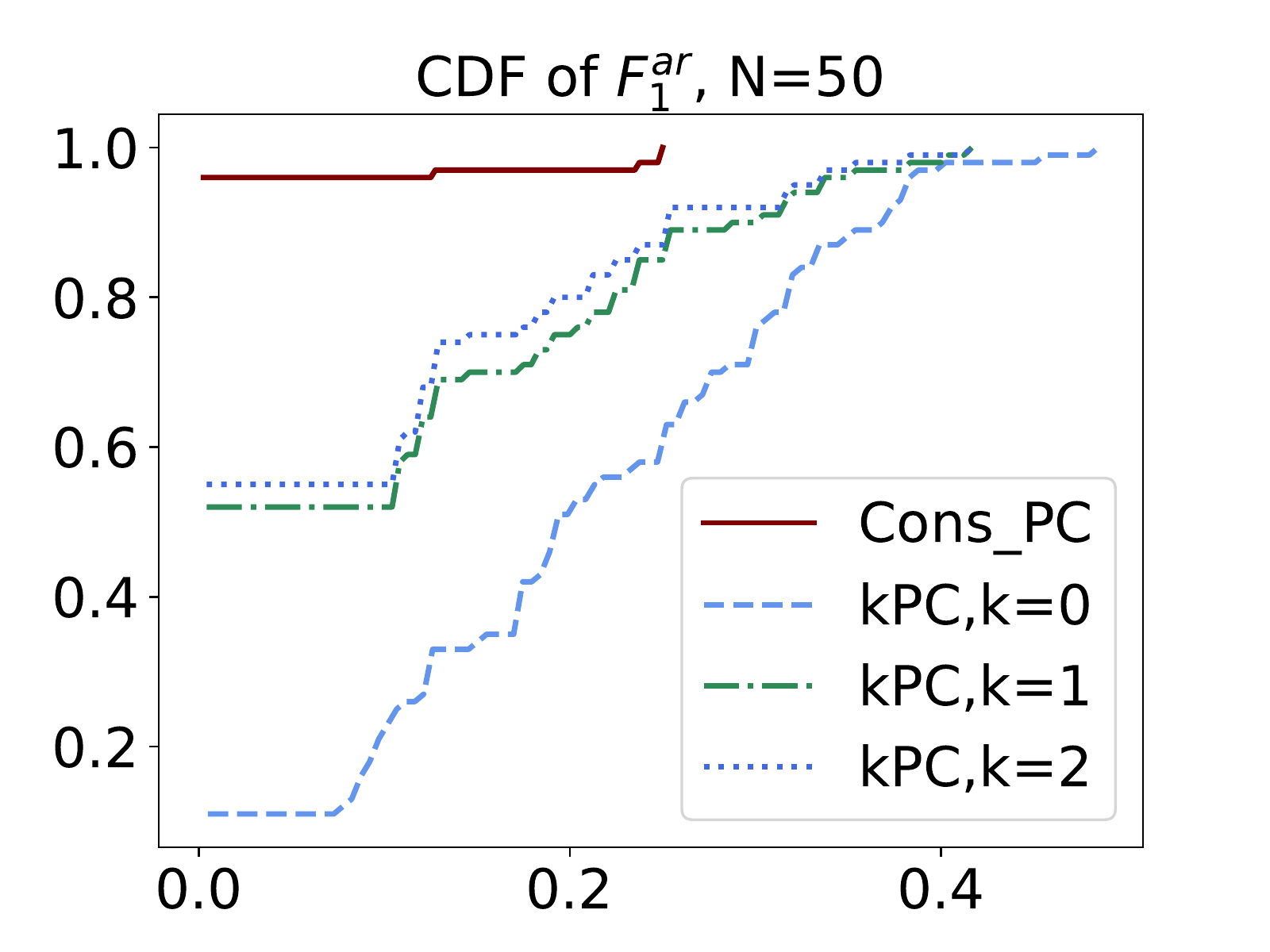}
		\caption{Arrowhead $F_1$ Score}
		\label{fig:latentsearch_performance}
	\end{subfigure}
	\begin{subfigure}[b]{0.32\textwidth}
		\includegraphics[width=\textwidth]{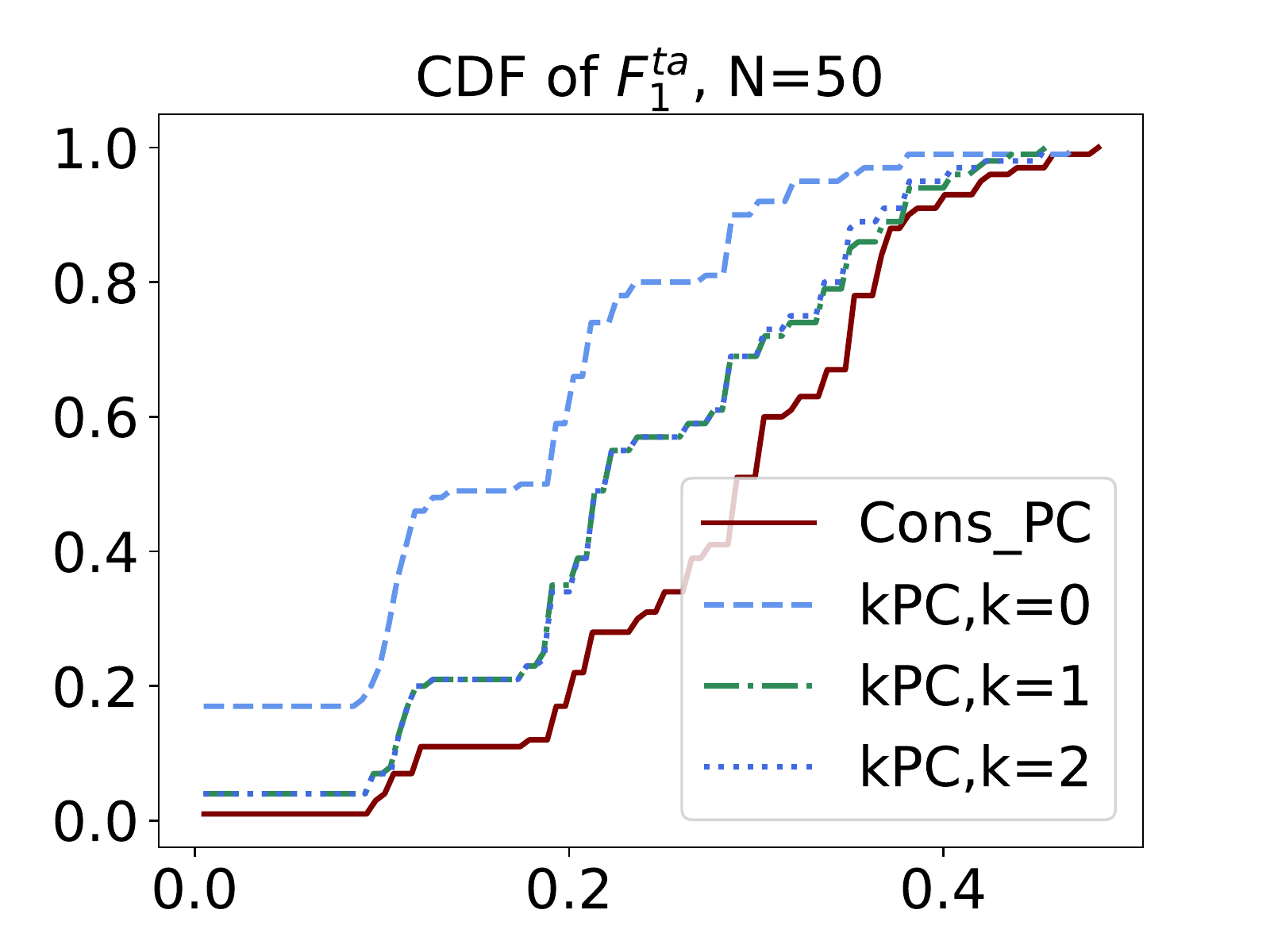}
		\caption{Tail $F_1$ Score}
		\label{fig:synthetic_conjecture}
	\end{subfigure}
	\begin{subfigure}[b]{0.32\textwidth}
		\includegraphics[width=\textwidth]{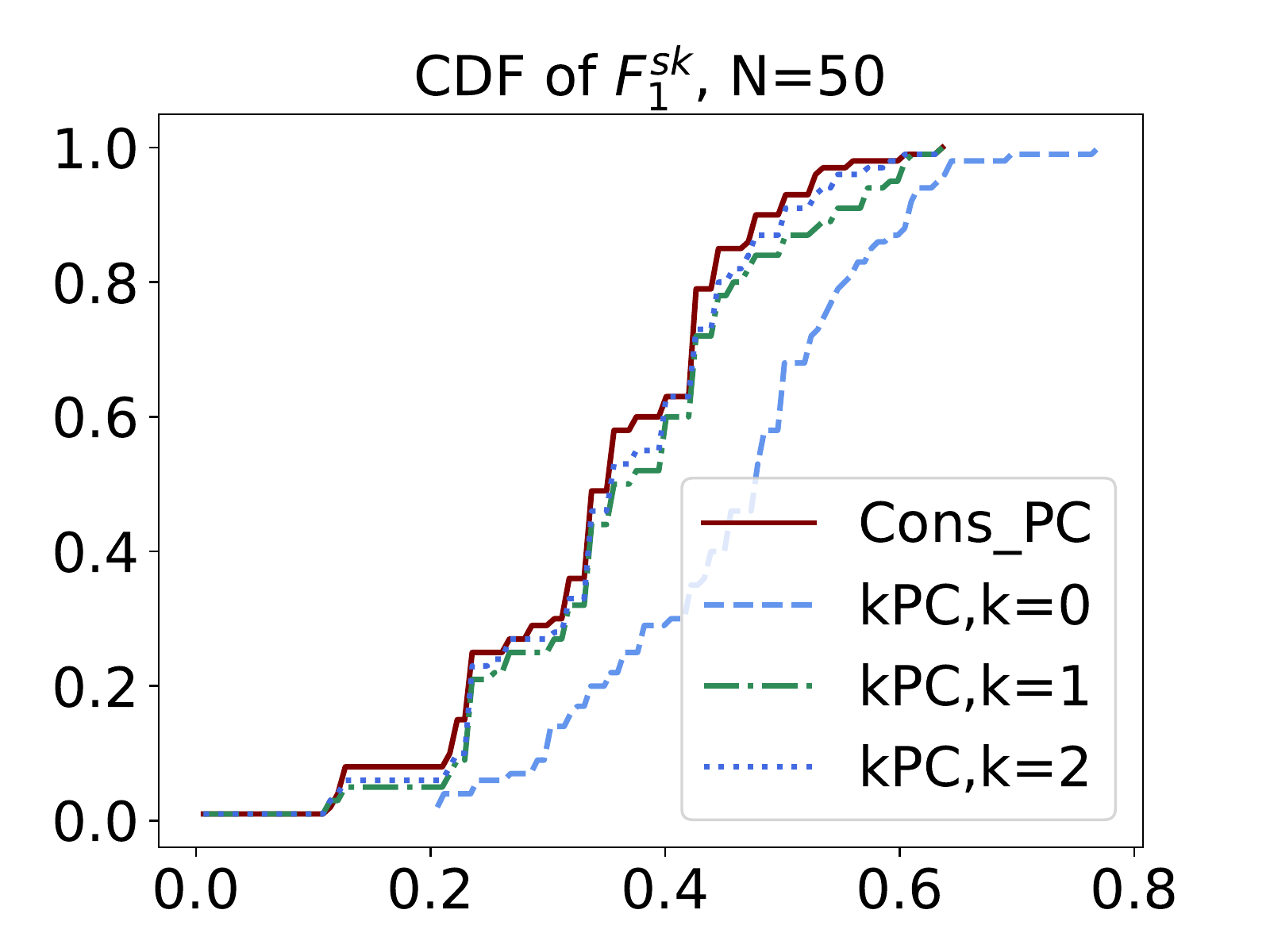}
		\caption{Skeleton $F_1$ Score}
		\label{fig:vs_}
	\end{subfigure}
 	\begin{subfigure}[b]{0.32\textwidth}
		\includegraphics[width=\textwidth]{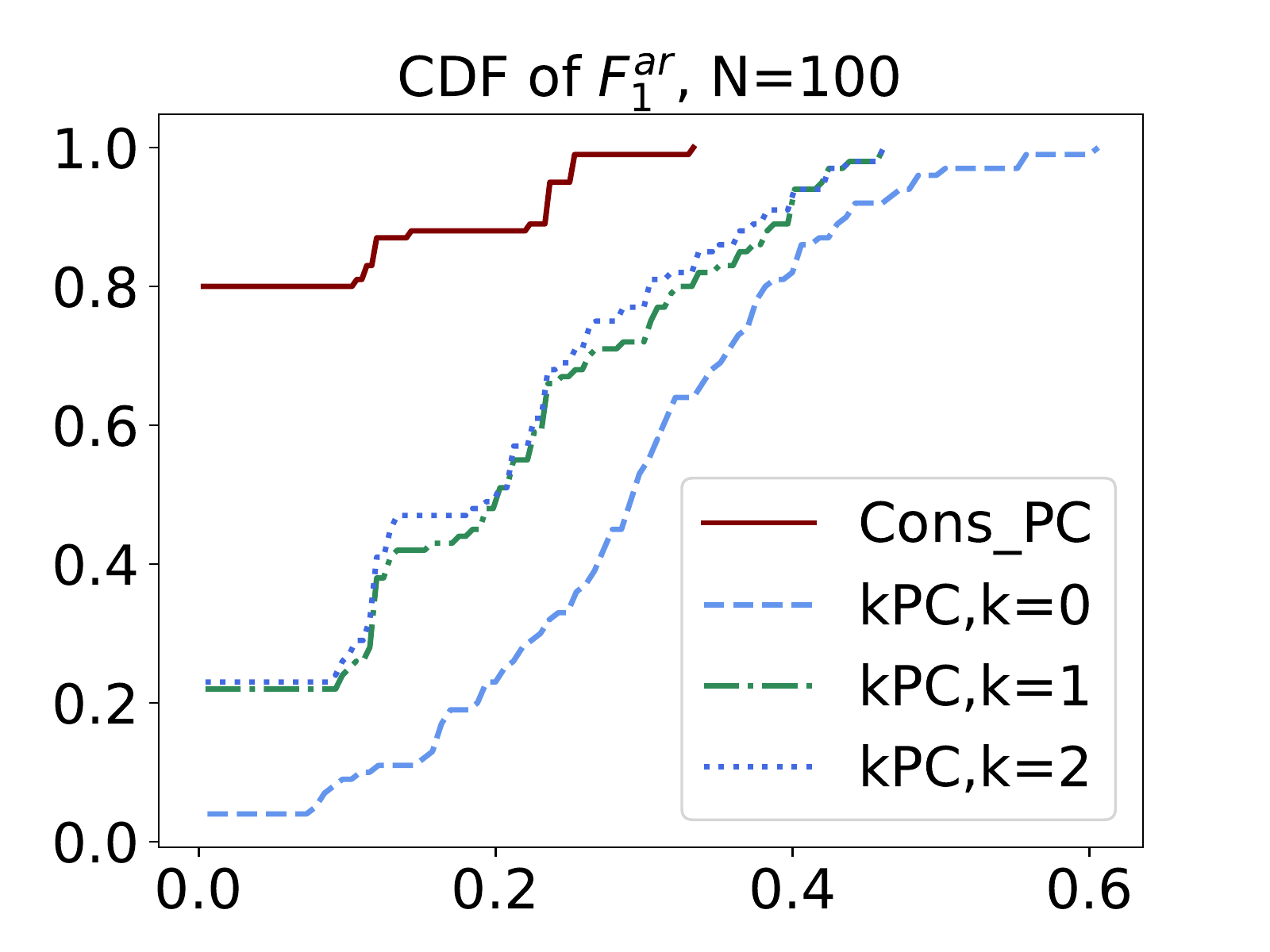}
		\caption{Arrowhead $F_1$ Score}
		\label{fig:latentsearch_performance}
	\end{subfigure}
	\begin{subfigure}[b]{0.32\textwidth}
		\includegraphics[width=\textwidth]{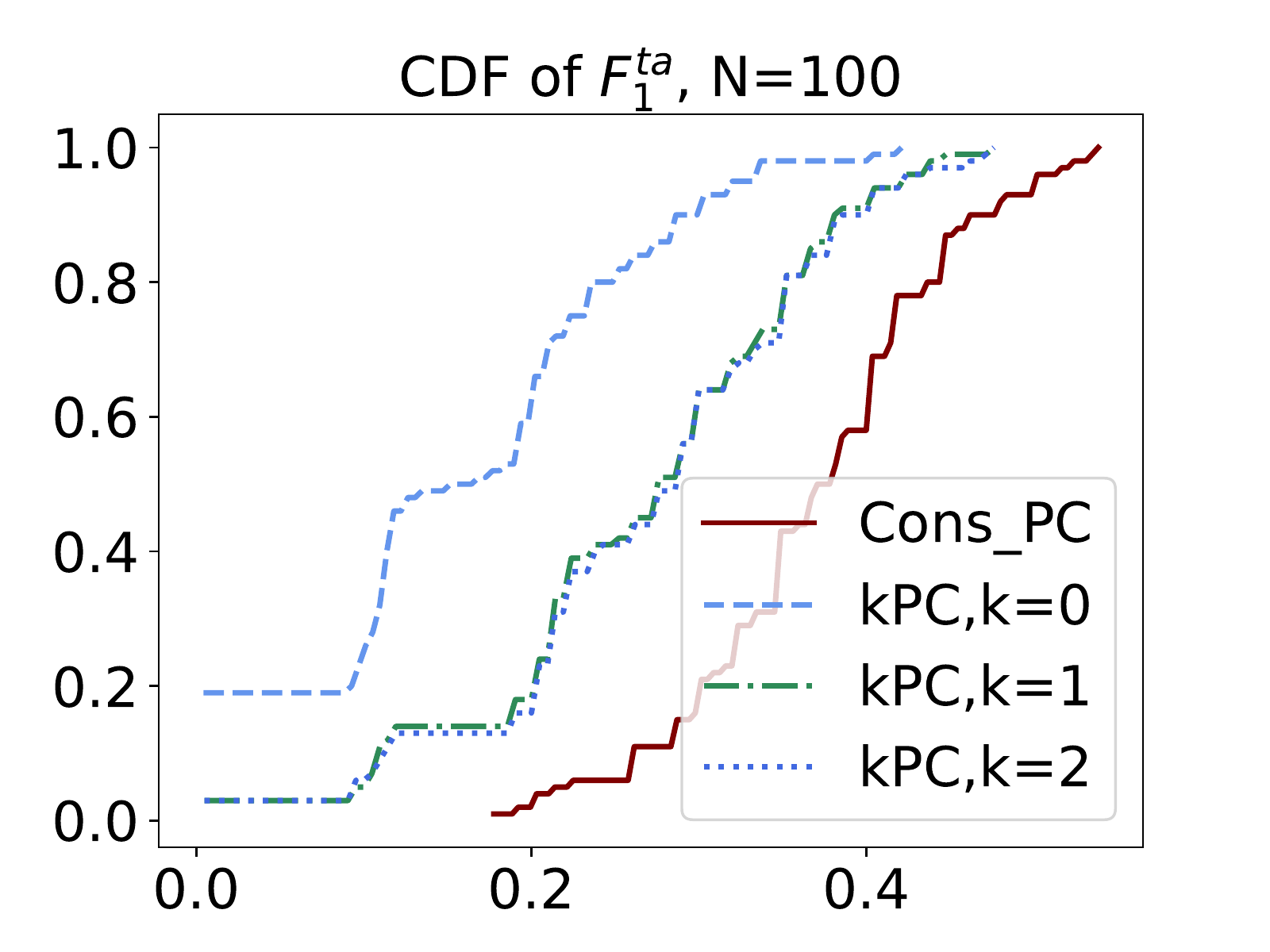}
		\caption{Tail $F_1$ Score}
		\label{fig:synthetic_conjecture}
	\end{subfigure}
	\begin{subfigure}[b]{0.32\textwidth}
		\includegraphics[width=\textwidth]{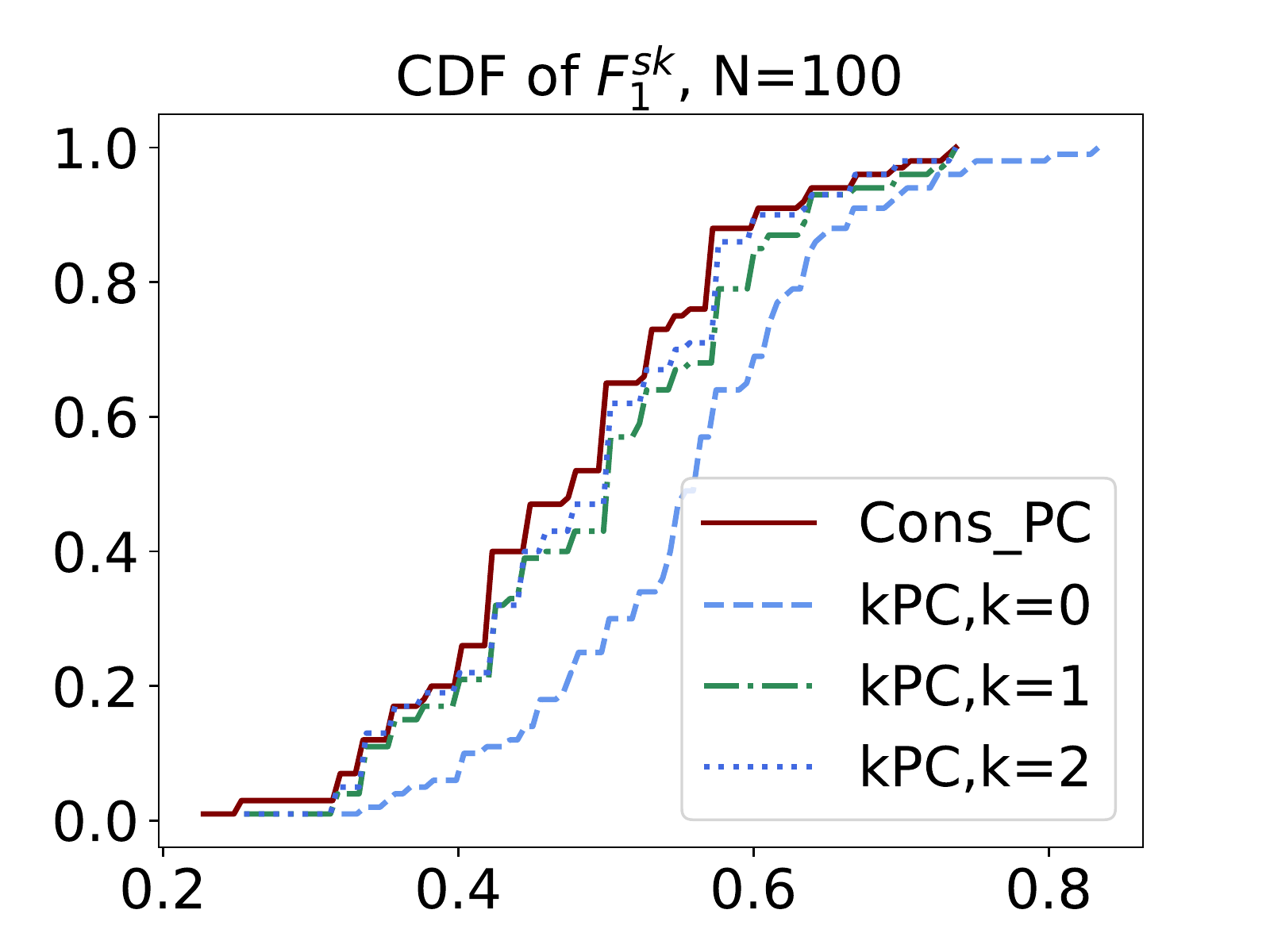}
		\caption{Skeleton $F_1$ Score}
		\label{fig:vs_}
	\end{subfigure}
  	\begin{subfigure}[b]{0.32\textwidth}
		\includegraphics[width=\textwidth]{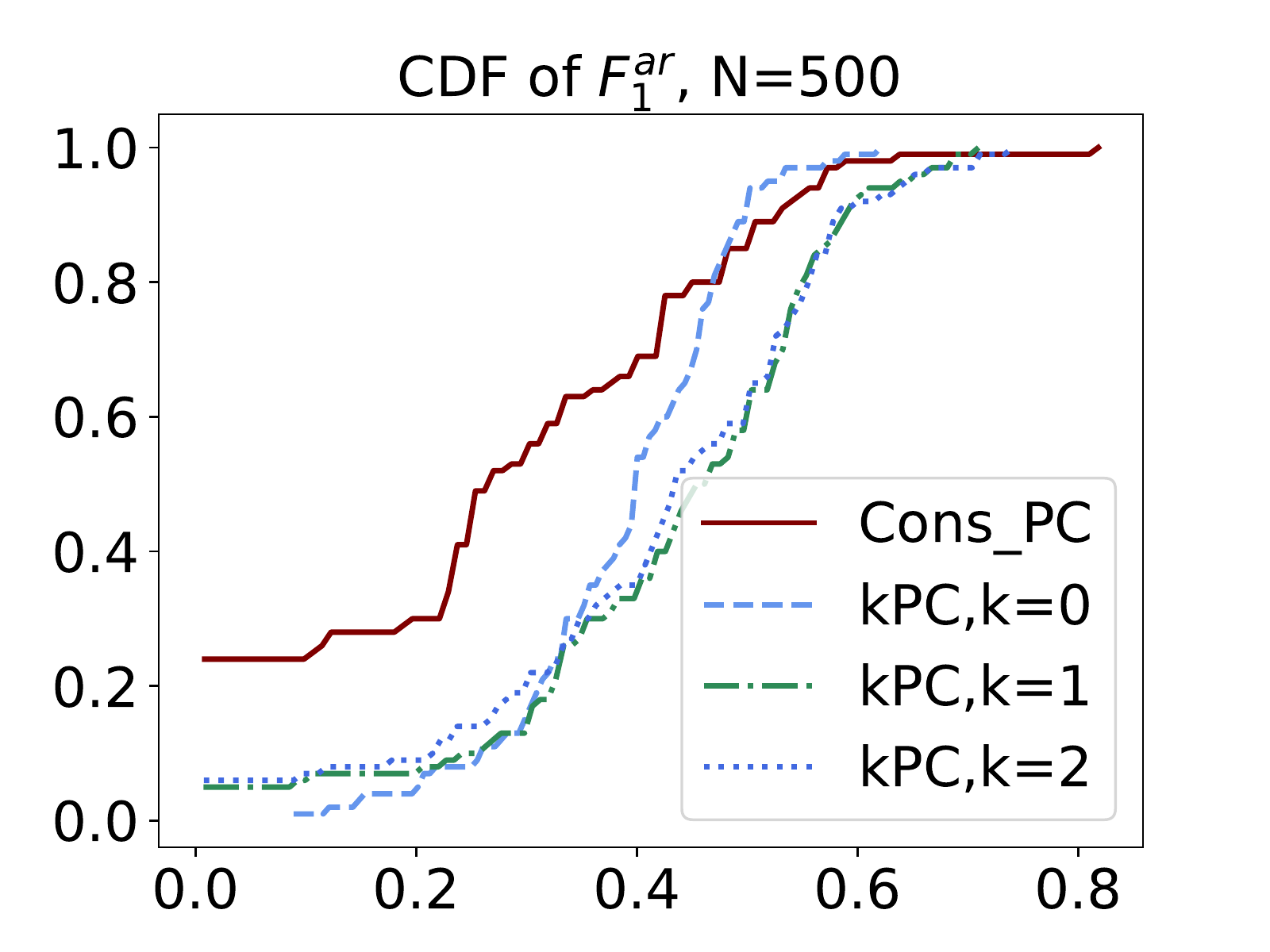}
		\caption{Arrowhead $F_1$ Score}
		\label{fig:latentsearch_performance}
	\end{subfigure}
	\begin{subfigure}[b]{0.32\textwidth}
		\includegraphics[width=\textwidth]{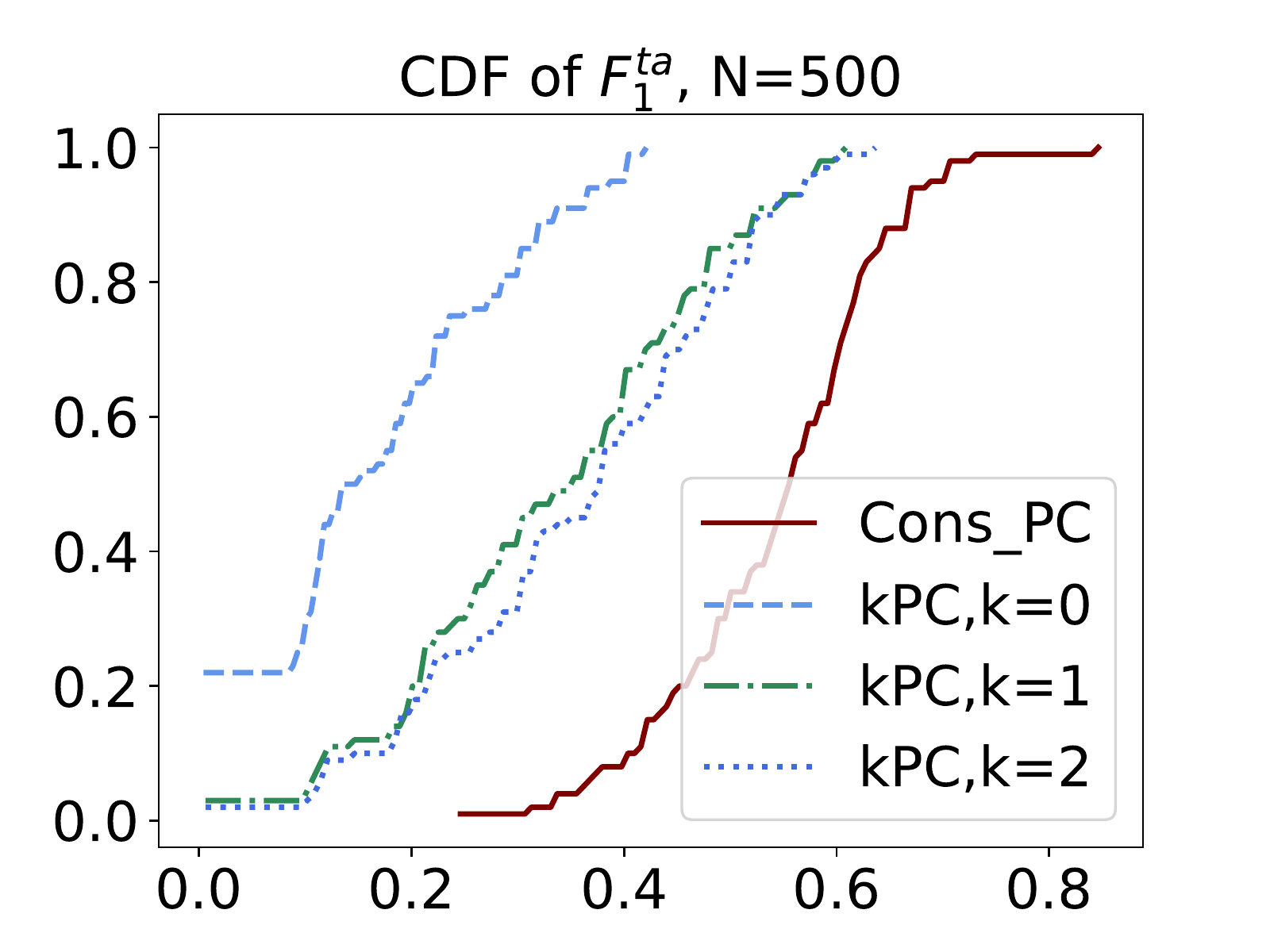}
		\caption{Tail $F_1$ Score}
		\label{fig:synthetic_conjecture}
	\end{subfigure}
	\begin{subfigure}[b]{0.32\textwidth}
		\includegraphics[width=\textwidth]{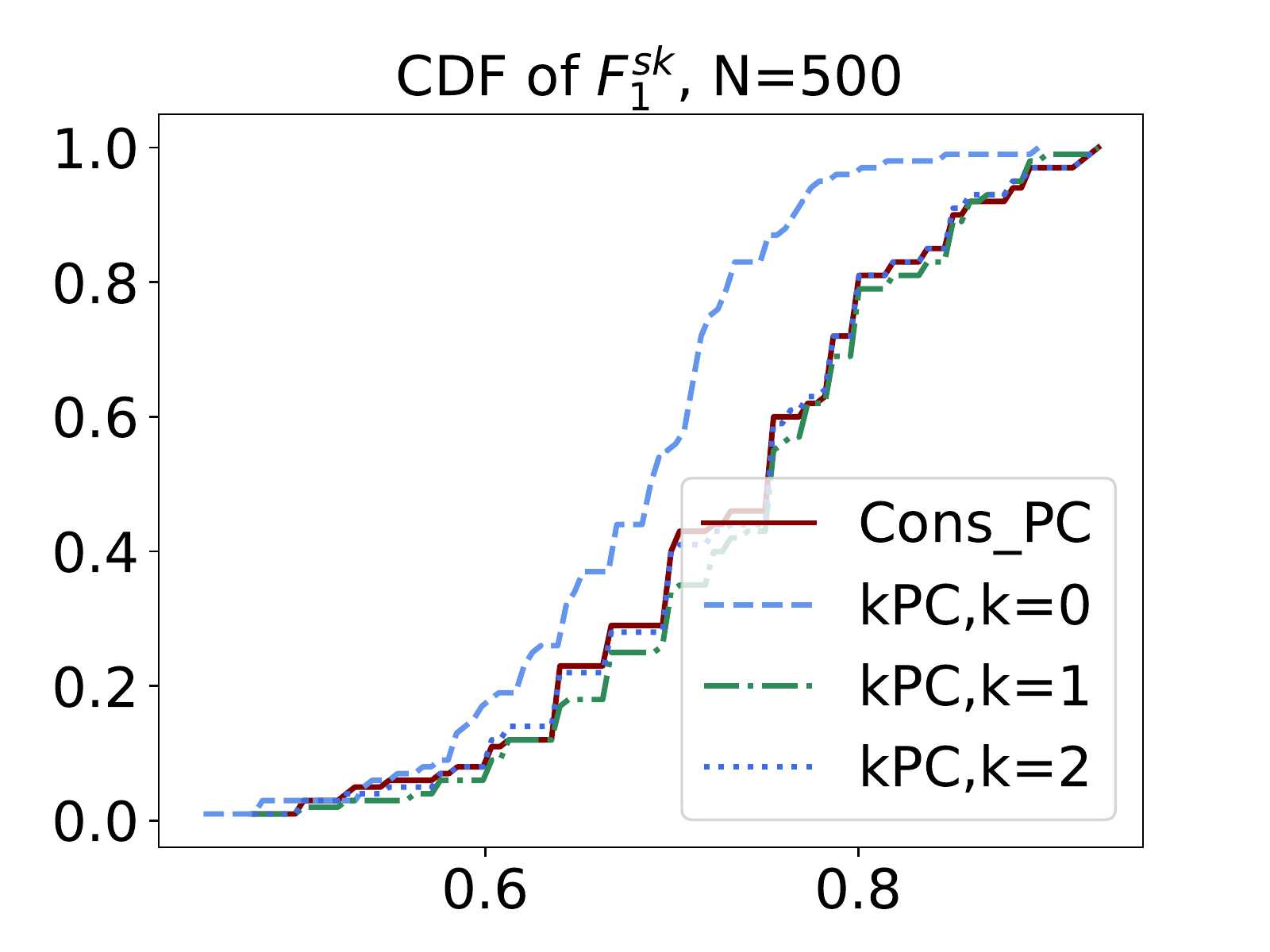}
		\caption{Skeleton $F_1$ Score}
		\label{fig:vs_}
	\end{subfigure}
 \caption{Empirical cumulative distribution function of various $F_1$ scores on $100$ random DAGs on $10$ nodes. For each DAG, conditional probability tables are independently and uniformly randomly filled from the corresponding probability simplex. One dataset is sampled per instance. The lower the curve the better. The maximum number of edges is $15$. \kPC maintains an advantage against conservative PC in the arrowhead and skeleton $F_1$ scores in the low-sample regime.}
	\label{fig:vsConsPC}
\end{figure}

\end{document}